\begin{document}

\title{Tractability from overparametrization:\\
The example of the negative perceptron}

\author{Andrea Montanari\thanks{Department of Statistics 
and Department of Electrical Engineering, Stanford University},\;\;
 Yiqiao Zhong\footnotemark[1], \;\; Kangjie Zhou\thanks{Department of Statistics, Stanford University}}
\date{\today}
\maketitle

\begin{abstract}
In the negative perceptron problem we are given $n$ data points 
$(\xx_i,y_i)$, where $\xx_i$ is a $d$-dimensional vector and $y_i\in\{+1,-1\}$
is a binary label. The data are not linearly separable and hence we content ourselves 
to find a linear classifier with the largest possible \emph{negative}
margin. In other words, we want to find
a unit norm vector $\btheta$ that maximizes $\min_{i\le n}y_i\<\btheta,\xx_i\>$.
This is a non-convex optimization problem (it is equivalent to finding a maximum norm
vector in a polytope), and we study its typical properties under two 
random models for the data. 

We consider the proportional asymptotics in which $n,d\to \infty$
with $n/d\to\delta$, and prove upper and lower bounds on the maximum margin $\kappa_{\sus}(\delta)$
or ---equivalently--- on its inverse function $\delta_{\sus}(\kappa)$.
In other words, $\delta_{\sus}(\kappa)$ is the overparametrization threshold:
for $n/d\le \delta_{\sus}(\kappa)-\eps$ a classifier achieving vanishing training error exists
with high probability, 
while for $n/d\ge \delta_{\sus}(\kappa)+\eps$ it does not.
Our bounds on $\delta_{\sus}(\kappa)$ match to the leading order as $\kappa\to -\infty$.
We then analyze a linear programming algorithm to find a solution, and characterize
the corresponding threshold $\delta_{\slin}(\kappa)$. We observe a gap
between the interpolation threshold $\delta_{\sus}(\kappa)$ and the linear programming threshold 
$\delta_{\slin}(\kappa)$, raising the question of the behavior of other algorithms.
\end{abstract}

\tableofcontents

\section{Introduction}
\label{sec:Introduction}

\subsection{Motivation: Overparametrized statistical models}
\label{sec:Motivation1}

In the last five years, an impressive amount of theoretical work has been devoted
to the study of overparametrized models in machine learning.
A small subsample of this literature includes \cite{jacot2018neural,du2018gradient,zou2018stochastic,
soudry2018implicit,allen2018convergence,chizat2019lazy,hastie2019surprises,liang2018just,belkin2018reconciling,
belkin2019two,huang2019dynamics,bartlett2020benign,montanari2019generalization,tsigler2020benign,chinot2022robustness}.

Overparametrized models are statistical models whose complexity 
is large enough to perfectly fit the training data, and that are indeed optimized
as to achieve vanishing training error. Classical statistical theory suggests that
using models that perfectly fit the training data can be dangerous if the data is noisy:
the model will adjust to interpolate the noise hence leading to poor predictions on unseen data.
Despite this theoretical prescription, overparametrization has become a 
widespread practice in deep learning \cite{zhang2021understanding}, thus
motivating a broad theoretical effort to understand its successes.

This theoretical effort has led to some important insights.
It was discovered that optimization algorithms of common use typically converge
to special models among the ones interpolating the data, a phenomenon known as 
``implicit regularization". Further, in high dimensions, a model can interpolate noise 
and yet be smooth on most of the space. As a consequence, interpolation or overfitting
can coexist with good generalization behavior.
We refer to \cite{bartlett2021deep,belkin2021review} for recent reviews on these
theoretical advances.

These mathematical insights were nearly all obtained in a
special setting: linear models under quadratic or convex loss. 
On the other hand, modern machine learning models  are highly 
non-linear and their optimization is a highly non-convex problem. In some
cases  \cite{jacot2018neural,du2018gradient,zou2018stochastic,allen2018convergence,chizat2019lazy},
it was possible to show that the linear setting is a good approximation of the 
non-linear one. However, in many cases of interest, the connection between
linear and non-linear models is just a loose analogy.

In particular, the optimization aspect of actual machine learning systems
is poorly modeled  within the linear/convex setting. 
Overparametrized models became popular because it was observed empirically that
they are easier to optimize than underparametrized ones, and they  can be optimized
using simple gradient-based methods, such as stochastic gradient descent
(SGD) \cite{Salakhutdinov}. A widespread rule of thumb among practitioners 
is to increase the number of model
 parameters until SGD converges to vanishing training error.
In some sense (which needs to be suitably formalized) optimization passes from
being  ``hard" to ``easy" as  as the number of parameters is increased.
Of course this transition is conspicuously absent in convex models which are tractable 
independently of the overparametrization.

A major motivation for the present paper is to initiate the rigorous study of a simple setting
in which the easy/hard phase transition in overparametrized models can be 
formalized and rigorously studied.

We will consider the standard setup of statistical learning, whereby
we are given $n$ data points $(\xx_i,y_i)\in \reals^d\times \{+1,-1\}$
(with $\xx_i$  a feature vector and $y_i$ a label). We want to learn
a model $f(\,\cdot\,;\btheta):\reals^d\to\reals$, parametrized by a vector $\btheta\in\Theta$. 
Empirical risk minimization  (ERM) proposes to find $\btheta$
 by minimizing the  training error (a.k.a. \emph{empirical risk}):
\begin{align}
\hR_n(\btheta) = \frac{1}{n}\sum_{i=1}^n\ell\big(y_i;f(\xx_i;\btheta)\big)\, ,\;\;\;\;\;
 \btheta\in\Theta\, .
\label{eq:GeneralERM}
\end{align}
Here $\ell:\{+1,-1\}\times\reals\to\reals$ is a loss function\footnote{If this matters
to the reader, they can assume that $\Theta\subseteq \reals^p$ is a closed set and
$\ell(\pm 1;\,\cdot\,)$  are lower semicontinuous fuctions, although we will
soon restrict ourselves to a very specific example.} such that
$\min_{x\in\reals}\ell(+1;x) = \min_{x\in\reals}\ell(-1;x)=0$.

By construction $\min_{\btheta\in\Theta}\hR_n(\btheta)\ge 0$. We will say that the model
is overparametrized if there is a choice of parameters that achieves vanishing training error,
i.e., $\min_{\btheta\in\Theta}\hR_n(\btheta) = 0$ \cite{neyshabur2015search}. 
When this happens, we define the set of interpolators (empirical risk minimizers):
\begin{align}
\ERM_0 :=\big\{\btheta\in\Theta:\;   \ell\big(y_i;f(\xx_i;\btheta)\big) =0 \;\;\forall i\le n\big\} 
\, .\label{eq:ERM0}
\end{align}
This definition can be of course generalized by considering the
set  $\ERM_\eps$ of vector parameters $\btheta$ such that $\hR_n(\btheta)\le \eps$.
In this paper we will focus on $\ERM_0$.

Classical statistical learning theory focuses on cases in which the empirical risk 
converges uniformly to the population (expected) risk  $R(\btheta):= \E\{\ell(y;f(\xx;\btheta))\}$ 
(possibly allowing for parameter spaces $\Theta(n)$ of growing complexity) 
\cite{geer2000empirical,shalev2014understanding}.
In this regime vanishing empirical risk can only be achieved if the population risk
 is vanishing:
\begin{align*}
&\lim_{n\to\infty}\sup_{\btheta\in\Theta(n)} \big|\hR_n(\btheta) -R(\btheta)\big| = 0 \;\;\mbox{and}
\;\; \ERM_0 \neq \emptyset,\\
&\phantom{AAAA} \Rightarrow \phantom{AAAA}\exists \theta_{*}(n)\in\Theta(n) \mbox{ s.t. } R(\btheta_*(n))\to 0\, .
\end{align*}
In contrast, we are interested here in cases in which labels are noisy (i.e. the conditional
distribution of $y_i$ given $\xx_i$ is non deterministic) and therefore
$\E\{\ell(y;f_*(\xx))\}>0$ holds strictly for any fixed $f_*$.
In these cases, vanishing training error is achieved because of a breakdown of uniform convergence.

A few important questions are:
\begin{itemize}
\item[{\sf Q1.}] \emph{Existence:} Is the set $\ERM_0$ non-empty?
\item[{\sf Q2.}] \emph{Tractability:} Can we find $\btheta\in \ERM_0$ using efficient
algorithms?
\item[{\sf Q3.}] \emph{Implicit regularization:} What are the properties of the 
special point $\hbtheta\in\ERM_0$ which is selected by a specific optimization algorithm of common 
use?
 \item[{\sf Q4.}] \emph{Generalization:} What is the error of the model $\hbtheta$
 thus selected on unseen data?
\end{itemize}
In this paper we study these questions in the simplest non-convex learning problem
we can think of: linear classification with a negative margin.

In linear classification, the function class is linear $f(\xx;\btheta) =\<\btheta,\xx\>$,
and we predict the label at a test point $\xx$ using $\sign(\<\btheta,\xx\>)$. Since the modulus of $\btheta$
is irrelevant, we will constrain throughout $\btheta\in\Theta=\S^{d-1}$ (the unit sphere in $d$
dimensions.)
If data are linearly separable (i.e., there exists $\btheta_*\in \S^{d-1}$ such that 
$y_i\<\xx_i,\btheta_*\> >0$  for all $i\le n$), a classical approach selects $\btheta$
by maximizing the margin. Namely, we define 
\begin{align}
\ERM_0(\kappa) :=\big\{\btheta\in\S^{d-1}:\;   y_i\<\xx_i,\btheta\>\ge \kappa  \;\;\forall i\le n\big\} 
\, .\label{eq:ERM0-Kappa}
\end{align}
and seek the largest $\kappa$ (the largest margin) such that this set is non-empty.
Although superficially non-convex, the problem of finding $\btheta\in\ERM_0(\kappa)$
is equivalent  to the one in which the constraint $\|\btheta\|_2=1$
is replaced by $\|\btheta\|_2\le 1$. Indeed, if a positive margin 
solution is found with $\|\btheta\|_2<1$, $\btheta$ can be rescaled to yield an even better margin. 
The set $\{\btheta: \, \|\btheta\|_2\le 1, \; 
y_i\<\xx_i,\btheta\>\ge \kappa \;\; \forall i\le n\}$ is convex, and hence is always computationally 
tractable.  
We refer to \cite{shalev2014understanding} for fundamental background on max-margin classification,
 and to \cite{montanari2019generalization} for a recent analysis in the overparametrized setting.

If data is not linearly separable, 
a popular method is to minimize the empirical risk  \eqref{eq:GeneralERM}, where $\ell$ 
is the hinge loss: $\ell(y;f) = (1-yf)_+$. However, an equally reasonable approach is to
seek $\hbtheta\in\ERM_0(\kappa)$ where now $\kappa$ is allowed to be negative 
(while $\ERM_0(\kappa)$ is still defined by Eq.~\eqref{eq:ERM0-Kappa}).
The remark below outlines a concrete statistical scenario in which selecting
a negative margin solution is a natural statistical procedure. 

We emphasize that our interest in this problem does not stem from 
a specific application. We are instead interested in understanding which tools and phenomena
can be expected in general non-convex overparametrized models.
\begin{rem}
We outline a concrete data distribution for which maximizing a negative margin arises
as a natural option. 
Pairs $(\bu_i,y_i)\in \reals^{d}\times\{+1,-1\}$, 
$i\le n$ are generated i.i.d. with joint distribution such that the two classes are 
linearly separable in $u$-space, with a positive magin $\kappa_0>0$. 
For instance, defining
the conditional distributions $\prob(\bu_i\in \,\cdot\, |y_i=\pm 1) = \rP_{\pm}(\,\cdot\,)$,
we could assume that
$\rP_{+}, \rP_-$ have disjoint and linearly separable supports. We denote by
$\btheta_0\in\S^{d-1}$ the direction along which the two classes are separable.

Instead of observing the latent vectors $\bu_i$, we observe a corrupted versions of the same
vectors $\bx_i=\bu_i+\beps_i$ with $\|\beps_i\|_2\le r_0$ with probability one. If $r_0 >\kappa_0$,
the resulting data $(\bx_i,y_i)$, $i\le n$ are no longer (in general) linearly separable.
On the other hand, we have $y_i\<\bx_i,\btheta_0\>\ge \kappa_0-r_0$, and hence
$\ERM_0(\kappa)$ in non-empty for all $\kappa\le \kappa_0-r_0$. It is reasonable to seek
$\hbtheta\in \ERM_0(\kappa)$ for such a negative $\kappa$.
\end{rem}

The present paper is devoted to the study of negative margin classification
under two simple models for the data distribution.

In a remarkable experiment,  \cite{zhang2021understanding} showed that state-of-the-art 
neural networks for computer vision could achieve vanishing 
training error even if actual labels $y_i$ were replaced by purely random ones.
In other words, the ability of such networks to interpolate data is not limited to
data that are nearly linearly separable at the population level.

The two data distributions that we will consider are partly motivated by these experiments.
In both cases, the feature vector $\xx_i$ have no structure: we take them to be isotropic Gaussians.
In the first data distribution, labels $y_i$ are pure noise,
i.e., uniformly random independently of $\xx_i$.
In the second they are noisy but not independent of the $\xx_i$'s
(more precisely, they depend on a one-dimensional projection of the $\xx_i$).

The two  distributions are dramatically different from the statistical viewpoint.
In particular, in the first one, no non-trivial learning is possible (because
no model can predict better than random guessing). However, we are interested in understanding 
whether they behave similarly from the optimization viewpoint as is the case 
in the experiment of  \cite{zhang2021understanding}.

\subsection{An additional motivation: Continuous constraint satisfaction problems}

In the overparametrized setting, learning is equivalent to solving a random constraint satisfaction 
problem (CSP) with continuous variables, as defined in Eq.~\eqref{eq:ERM0}.
We want to assign values to the variables $\btheta=(\theta_1,\dots,\theta_d)$
as to satisfy the $n$ constraints $\ell(y_i;f(\xx_i;\btheta))=0$, $i\le n$.
In the case studied here, each constraint removes a random spherical cap from the sphere $\S^{d-1}$.
Questions ${\sf Q1}$ (existence of solutions)  and ${\sf Q2}$ 
(computational tractability of the problem of finding solutions)
have been extensively studied for some canonical 
random CSPs over small alphabets (i.e., with decision variables $\btheta\in \{1,\dots, q\}^d$
for small constant $q$). Examples include random $K$-satisfiability, coloring of random graphs, and
independent sets of random graphs. We refer to
 \cite{achlioptas2006random,MezardMontanari,ding2022proof} and references therein. 

In contrast, much less is known rigorously for 
random CSPs with continuous variables, in particular when each constraint is
non-convex.

Among others, sphere packing problems can be formulated as constraint satisfaction
problems with continuous variables (the sphere centers).  These have attracted considerable 
attention within the statistical physics community.
Physicists derived a complete mean field theory of dense random packings in high 
dimension, which is however not proven rigorously,  see e.g.
\cite{parisi2020theory}.

Recently, Franz and Parisi \cite{franz2016simplest} studied the set
$\ERM_0(\kappa)$ of Eq.~\eqref{eq:ERM0-Kappa} (i.e., the linear classification 
for negative margin $\kappa<0$)
as toy model for the space of configurations of hard spheres in high 
dimension (they work within the `random labels' model that we also study here). 
One intuitive way to grasp the connection between  $\ERM_0(\kappa)$ and
hard spheres is as follows. $\ERM_0(\kappa)$ is 
obtained by carving $n$ spherical caps of geodesic radius $\arccos(|\kappa|)$
out of $\S^{d-1}$. This is the same as the set of 
allowed positions of the center of one hard sphere (of geodesic radius $r_*=\arccos(|\kappa|)/2$) 
in $\S^{d-1}$
when it is constrained not to overlap with 
 $n$  spheres of the same radius $r_*$ at fixed positions. Of course,
 the full sphere packing problem is more complex (since the other $n$ sphere are not
 centered at uniformly random positions) but \cite{franz2016simplest,franz2017universality} 
 argue that the two problems share some
 qualitative features. They use the replica method to derive the phase diagram of the
 negative margin problem, and the asymptotic law of the margins $y_i\<\xx_i,\btheta\>-\kappa$ as $\delta\uparrow \delta_{\sus}(\kappa)$.
 We refer to \cite{franz2019jamming, franz2019critical,sclocchi2022high} for further statistical physics 
 results on this problem.

Our work thus contributes to the rigorization of the results of this line of work.

\section{Main results}\label{sec:SumResults}

\subsection{Summary of results}
We consider the negative perceptron problem under two simple
data distributions.
\begin{description}
\item[Random labels.] We assume data to be isotropic Gaussian, and labels to be pure noise.
\begin{align}
 \xx_i\sim\normal(\bzero,\bI_d)\;\; \perp \;\;\; y_i\sim\Unif(\{+1,-1\})\, .
\end{align}	
\item[Labels correlated with a linear signal.] We assume the same simple model
for the feature vectors, but now  the labels depend on a one-dimensional projection
of the feature vectors:
\begin{equation*}
\xx_i\sim \normal(\bzero,\bI_d)\,\;\;\;\; \P( y_i = +1 \big\vert \xx_i ) = 
\varphi \big( \< \xx_i, \btheta_* \>\big)\, .
\end{equation*} 	
Here, without loss of generality $\|\btheta_*\|_2=1$.
Further, we will assume $\varphi:\reals\to [0,1]$ to be monotone increasing, 
with $\varphi(t)$ approaching 
$0$ or $1$ exponentially fast as $t\to\pm\infty$.
\end{description}
As mentioned above, the motivation for studying two different data model comes from 
recent experiments with deep learning systems \cite{zhang2021understanding}. In these experiments,
replacing true class labels 
(which contain a signal) with purely random labels led to some quantitative change
in the convergence of optimization algorithms, but no obvious qualitative change. 

In order to summarize our results,  
it is useful to define an interpolation threshold $\delta_{\sus}(\kappa)$
(the subscript `s' stands for satisfiability)
and an algorithmic threshold $\delta_{\salg}(\kappa)$ as follows.
The satisfiability threshold is the largest value of $n/d$ such that $\ERM_0$ is
non-empty with probability bounded away from zero.
Denoting by $\prob_{n,d}(\,\cdot\,)$  the probability distributions over instances
of negative perceptron:
\begin{align}
\delta_{\sus}(\kappa) := \sup\Big\{\delta\,: \; 
\liminf_{n\to\infty}\prob_{n,n/\delta}\Big(\ERM_0\neq \emptyset\Big)>0
\Big\}\, .\label{eq:Delta_S_def}
\end{align}

The algorithmic threshold is the largest value of $n/d$ such that there exists a 
polynomial-time algorithm  $\hbtheta^{\salg}$ that takes as input the data $\yy,\XX$
and outputs a global empirical risk minimizer (i.e., an element of $\ERM_0$), with 
probability bounded away from zero. Namely
\begin{align}
\delta_{\salg}(\kappa) := \sup\Big\{\delta\, : \;\exists \,\hbtheta^{\salg}\in{\sf Poly} 
\;\;\mbox{ s.t. }\;\;\liminf_{n\to\infty}\prob_{n,n/\delta}\Big(\hbtheta^{\salg}(\yy,\XX)\in \ERM_0\Big)>0
\Big\}\, ,\label{eq:Delta_Alg_def}
\end{align}
(We denote by {\sf Poly} the class of estimators that can be implemented by polynomial-time algorithms\footnote{This requires a slight modification of the problem in which 
a finite-precision  $\xx_i$'s are accepted as input.}.)
We expect $\delta_{\sus}(\kappa)$, $\delta_{\salg}(\kappa)$ to be sharp thresholds 
in the sense that for $\delta<\delta_{\sus}(\kappa)$ (or $\delta<\delta_{\salg}(\kappa)$),
the above probabilities are not only bounded away from zero but actually converge to one. 
We will show that this is indeed true for $\delta < \delta_{\rm lb} (\kappa)$, where 
$\delta_{\rm lb} (\kappa)$ is a lower bound on $\delta_{\sus} (\kappa)$ to be specified later, 
thus partially establishing the sharp threshold property.

\begin{figure}[t!]
    \centering
	\includegraphics[width=30em]{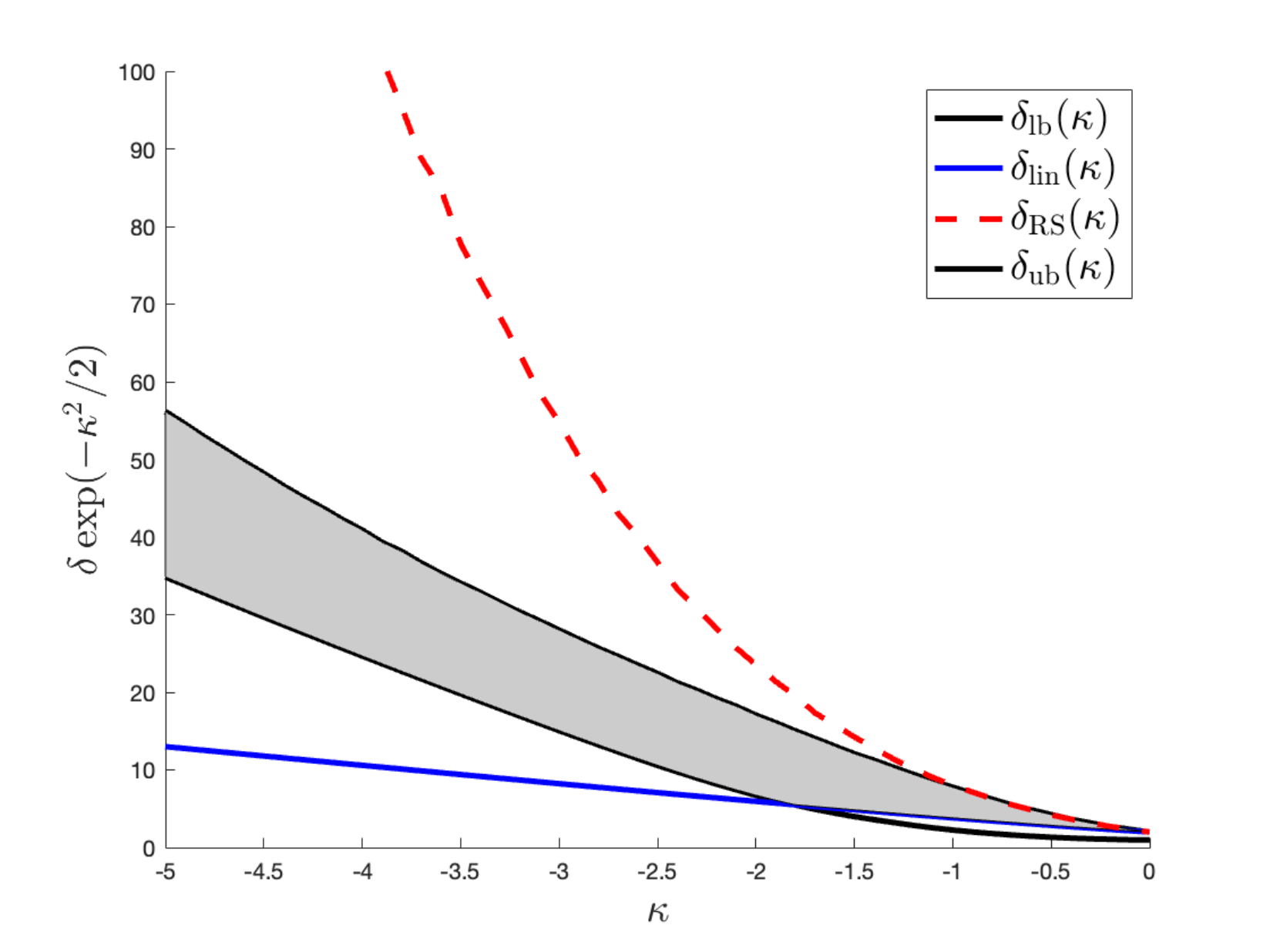}
	\caption{Phase diagram for the negative perceptron. The `replica symmetric' 
	prediction $\delta_{\mathrm{RS}} (\kappa)$ 
	coincides with the satisfiability threshold $\delta_{\sus}(\kappa)$ for $\kappa \ge 0$ but is only an upper bound for $\kappa < 0$. Our improved 
	$\delta_{\su} (\kappa)$ is strictly better than $\delta_{\mathrm{RS}} (\kappa)$ 
	for all negative values of $\kappa$. The lower bound $\delta_{\sl} (\kappa)$ 
	is inferior to the linear programming threshold $\delta_{\slin} (\kappa)$ (which is a lower bound on $\delta_{\salg} (\kappa)$; see its precise definition in Eq.~\eqref{cond:delta1}) when 
	$\vert \kappa \vert$ is small. As $\kappa$ decreases, $\delta_{\sl} (\kappa)$ 
	surpasses $\delta_{\slin} (\kappa)$. Finally, $\delta_{\sl}(\kappa)$,
	$\delta_{\sus}(\kappa)$ and $\delta_{\su}(\kappa)$ become asymptotically equivalent as 
	$\kappa \to -\infty$. The phase transition for the existence of $\kappa$-margin solution 
	occurs in the region delimited by 
	$\max \{\delta_{\sl} (\kappa),\delta_{\slin} (\kappa) \}$ and 
	$\delta_{\su} (\kappa)$ (gray area).}\label{fig:PhaseDiagram}
\end{figure}
We can now summarize our results following the blueprint of questions {\sf Q1}-- {\sf Q4}
of Section \ref{sec:Motivation1}.

\vspace{0.1cm}

\noindent{\sf Q1.} \emph{Existence.} 
We establish upper and lower bounds on the threshold
$\delta_{\sus}(\kappa)$.
These bounds are within a factor $1+o_{\kappa}(1)$ of each other for large 
negative $\kappa$. 

More in detail, denote by $\Phi(x) :=\int_{-\infty}^{x}\phi(t)\, \de t$ the Gaussian
distribution function and by $\phi(x) :=\exp(-x^2/2)/\sqrt{2\pi}$ the Gaussian density.
For \emph{random labels} we prove that, as $\kappa$ becomes large a negative (of course
after $n,d\to\infty$)
\begin{align}
\delta_{\sus}(\kappa) = \frac{\log|\kappa|}{\Phi(\kappa)}\big(1+o_{\kappa}(1)\big)\, .
\label{eq:ResultSummary_1}
\end{align}
(Recall that for large $\kappa<0$, $1/\Phi(\kappa)=(1+o_{\kappa}(1))|\kappa|/\phi(\kappa)$.)

For linear signals, let $\alpha>0$ be such that $\varphi(x) = 1 - C_{\tail} e^{-\alpha x}+o(e^{-\alpha x})$
as $x\to\infty$,  $\varphi(x) = C_{\tail} e^{\alpha x}+o(e^{\alpha x})$
as $x \to -\infty$, where $C_{\tail} > 0$ is a constant. Then we prove that
\begin{align}
\delta_{\sus}(\kappa) = \frac{e^{\alpha|\kappa|}\log|\kappa|}{4 C_{\tail} \Phi(\kappa)}\big(1+o_{\kappa}(1)\big)\, .
\end{align}

\noindent{\sf Q2.} \emph{Tractability.} We  obtain lower bounds on $\delta_{\salg}(\kappa)$
by constructing simple linear programming surrogates of the original non-convex problem.
For \emph{random labels}
\begin{align}
 \delta_{\salg}(\kappa) \ge \frac{1}{\Phi(\kappa)}\big(1+o_{\kappa}(1)\big)\, .
\label{eq:ResultSummary_1_ALG}
\end{align}
For linear signals, we prove that
\begin{align}
\delta_{\salg}(\kappa) \ge \frac{e^{\alpha|\kappa|}}{2 C_{\tail} \Phi(\kappa)}\big(1+o_{\kappa}(1)\big)\, .
\end{align}

\noindent{\sf Q3.} \emph{Implicit regularization.} As in 
the positive margin case \cite{soudry2018implicit,ji2018risk}, we construct a differentiable logistic-like loss
such that gradient flow with that loss converges to $\kappa$-margin
solutions, with negative $\kappa$, on non separable data, see Section \ref{sec:GD}.

\vspace{0.1cm}

\noindent{\sf Q4.} \emph{Generalization.} This question only makes sense within 
the linear signal model. 
In this case, we prove bounds on the values of correlation $\<\btheta,\btheta_*\>$
that are achieved by empirical risk minimizers $\btheta\in\ERM_0$,
and on the ones generated by our algorithm $\<\hbtheta^{\salg},\btheta_*\>$. These of 
course are equivalent to generalization bounds, as the generalization error is 
an explicit monotone function of  $\<\btheta,\btheta_*\>$.
We refer to Section \ref{sec:NonRandomLabels} for  precise statements.

\vspace{0.1cm}

While we summarized the behavior of our bounds for large negative $\kappa$,
we obtain explicit (although sometimes cumbersome) expressions for all $\kappa<0$. 
These bounds are plotted in Figure \ref{fig:PhaseDiagram} for the pure noise case,
and compared with the only previously known upper bound, which is
given by the so called `replica symmetric' formula.
The latter gives the correct phase transition for $\kappa>0$
and is an upper bound for all $\kappa$ \cite{stojnic2013another}.

It is perhaps useful to compare the results on $\delta_{\sus}(\kappa)$ 
with a natural `volume heuristics.'
$\Phi(\kappa)$ is the asymptotic normalized
volume of the spherical cap $\{\btheta\in\S^{d-1}:\; y_i\<\btheta,\xx_i\>\le \kappa\}$.
This is the fraction of the space of the parameters ruled out by each of
the $n$ interpolation constraints. 
Any fixed parameter vector  $\btheta\in\S^{d-1}$  
 satisfies all constraints with probability  $(1-\Phi(\kappa))^n$. 
One would guess the problem becomes unsatisfiable when this probability becomes
exponentially small in $d$, i.e., $n\log(1-\Phi(\kappa))\approx -Cd$. This heuristic
gives the rough order of the threshold in Eq.~\eqref{eq:ResultSummary_1}, 
namely $n/d\approx C'/\Phi(\kappa)$ but misses the $\log|\kappa|$ factor.

We next comment on the relevance of these results for our original statistical motivation.
\begin{description}
\item[Tractability from overparametrization.] The model becomes more overparametrized as $d/n$ increases (moving
down in Figure \ref{fig:PhaseDiagram}), or $\kappa$ decreases (moving left in Figure
 \ref{fig:PhaseDiagram}). Our results on $\delta_{\salg}(\kappa)$ confirm the empirical
 observation mentioned in the introduction, which is common wisdom in applied machine learning:
  if the model is sufficiently overparametrized, the problem of finding a global empirical risk minimizer becomes tractable. 

Our proofs use a very simple optimization algorithm, which 
replaces the original non-convex optimization problem by a linear programming surrogate;
see Section \ref{sec:AlgoRLabels} and Section \ref{sec:AlgoNRLabels}. We
expect a similar or potentially better behavior for gradient descent
or stochastic gradient descent, as suggested by numerical simulations of 
Section \ref{sec:numerical}. 
\item[Fundamental barriers for continuous optimization?] There 
is a large multiplicative gap, of order $\log|\kappa|$ between the satisfiability threshold
$\delta_{\sus}(\kappa)$ and the lower bound on the algorithmic threshold $\delta_{\salg}(\kappa)$. 
In words,
for most values of the overparametrization, solutions exist but our simple linear programming algorithm is not able to find them.

It is natural to ask whether better optimization algorithms can 
be constructed for this problem, or there is a fundamental computational barrier
and indeed $\delta_{\sus}(\kappa)\neq \delta_{\salg}(\kappa)$. Such gaps are indeed quite common 
in discrete random CSPs \cite{achlioptas2008algorithmic,coja2017walksat,bresler2021algorithmic}.
Precisely characterizing such gaps for overparametrized statistical learning problems
is a promising research problem. We took a first step in that direction.
\item[Beyond the neural tangent regime.] Over the last few years, considerable attention has been devoted to 
the remark that vanishing training error can be achieved generically in overparametrized
nonlinear models \cite{jacot2018neural,chizat2019lazy}.  In a nutshell, 
upon linearizing the model $f(\,\cdot\,;\btheta)$ 
around the initialization $\btheta_0$, interpolating the $n$ datapoints amounts to solving
the system of $n$ linear equations: 
$\bD f(\XX;\btheta_0)(\btheta-\btheta_0) = \yy-f(\XX;\btheta_0)$. 
(Here $f(\XX;\btheta)\in\reals^n$ is the vector with entries $f(\xx_i;\btheta)$.) 
For a non-degenerate Jacobian $\bD f(\XX;\btheta_0)$, this has a solution as soon as the number
of parameters exceeds the sample size. The analysis of neural networks based on this insight 
is also referred to as `neural tangent theory'.

Specializing this generic argument to the negative perceptron, we deduce that
 interpolation is possible for $n< d$. In other words, this argument implies 
 that $\delta_{\sus}(\kappa)\ge 1$. Explicitly, for $n<d$,
 we can solve the linear system $y_i\<\xx_i,\btheta\>=1$ for all $i\le n$, and
 hence obtain a linear separator by rescaling $\btheta$.
 
 Our results imply that (for negative $\kappa$) the interpolation threshold 
 $\delta_{\sus}(\kappa)$ is in fact much larger. Also, linearization arguments cannot 
 capture the gap between satisfiability and algorithmic threshold. 
 \item[Implicit regularization and generalization.] The generalization error 
 of a model with vanishing empirical risk, i.e., $\btheta\in\ERM_0(\kappa)$
 depends in general on the choice of the specific point. Section
 \ref{sec:Error} provides upper and lower bounds valid over the whole set $\ERM_0(\kappa)$
 under the linear signal model, as well as the asymptotic error for the linear programming algorithm. 
 
 Several features of these results are interesting from the statistics point of view:
 $(i)$~The set $\ERM_0(\kappa)$ appears to include models with a broad range of generalization errors;
 $(ii)$~The specific model selected by linear programming has good generalization properties;
 $(iii)$~Its behavior improves for smaller $\kappa$ (i.e., when the model is further overparametrized.)
 \end{description}

\subsection{Connection with random polytope geometry}
\label{sec:ConnectionsPolytope}

Given a dataset $\XX,\yy$ that is not linearly separable, the associated
maximum margin is 
\begin{align*}
\kappa_*(\XX,\yy):= \sup\big\{\kappa\in\reals:\; \exists \btheta \in \S^{d - 1} \;\; \text{s.t.} \;\; y_i\<\xx_i,\btheta\>\ge \kappa 
\;\; \forall i\le n\big\}\, ,
\end{align*}
This is of course related to the threshold $\delta_{\sus}(\kappa)$ introduced in 
Eq.~\eqref{eq:Delta_S_def}. If $\delta<\delta_{\sus}(\kappa)$ then $\kappa_*(\XX,\yy)\ge\kappa$
with probability bounded away from zero as $n,d\to\infty$, $n/d=\delta$.
If $\delta>\delta_{\sus}(\kappa)$ then $\kappa_*(\XX,\yy)\le \kappa$
with high probability.

By a simple change of variables,   $\kappa_*(\XX,\yy)$ is directly related to 
the radius of a certain polytope. Given $\ZZ\in\reals^{n\times d}$, a matrix
with rows $\zz_i$, denote by $\radius(\Polyt_\ZZ)$ the radius of the polytope
defined by the inequalities $\<\zz_i,\bxi\>\le 1$:
\begin{align}
\radius(\Polyt_\ZZ):= \max\big\{\|\bxi\|_2:\; \bxi\in\Polyt_{\ZZ}\big\}\, ,\;\;\;
\Polyt_\ZZ := \big\{ \bxi\in\reals^d: \<\zz_i,\bxi\>\le 1\;\; \forall i\le n\big\}\, .
\label{eq:RadiusDef}
\end{align}
Then 
\begin{align*}
\kappa_*(\XX,\yy)= -\frac{1}{\radius(\Polyt_{-\yy\odot \XX})}\, ,
\end{align*}
where $-\yy\odot \XX$ is the matrix with rows $-y_i\xx_i^\top$.
Hence, finding a maximum negative margin linear classifier
is equivalent to finding a maximum norm vector in a polytope (a non-convex problem). 
This can be contrasted with 
the case of positive margin that is equivalent to finding a minimum norm vector in a polytope
that does not contain the origin (a convex problem).

As a consequence, our results have implications on the geometry of random polytopes.
In particular, if $\yy,\XX$ are distributed according 
to either of the pure noise or linear signal models introduced above, then
our results imply that the radius satisfies the following
with probability bounded away from zero:
\begin{align}
\radius(\Polyt_{\ZZ})= \frac{1}{\sqrt{2\log\delta}} +\frac{\alpha}{2\log \delta}+o
\big((2\log \delta)^{-1}\big)\, .\label{eq:FirstRadius}
\end{align}
Here $\ZZ:=-\yy\odot \XX$ and it is understood that $\alpha=0$ for the pure noise model.
Note that, under the pure noise model, the rows of $\ZZ$ are i.i.d. $\normal(\bzero,\bI_d)$,
hence corresponding to an isotropic random polytope. We also point out that our actual
estimates on $\delta_{\sus}(\kappa)$ yield indeed a more precise characterization of
$\radius(\Polyt_{\ZZ})$ than in Eq.~\eqref{eq:FirstRadius}.
The best earlier result on $\radius(\Polyt_{\ZZ})$ (for the pure noise model)
was proved in the recent paper \cite{baldi2021theory}: We refer to Section~\ref{sec:Polytope} 
for a comparison.

The rest of this paper is organized as follows. The next section reviews 
related work. Sections \ref{sec:RandomLabels} and \ref{sec:NonRandomLabels} 
present our results for the random labels models and the linear signal model.
In Section \ref{sec:GD}, we discuss an alternative algorithmic approach based on gradient descent.
We then outline the proof techniques in Section \ref{sec:ProofIdea} 
and discuss the general picture and future directions in Section \ref{sec:Conclusion}.

\section{Further related work}
\label{sec:Related}

For $\kappa\ge 0$, the phase transition for the existence of $\kappa$-margin 
solutions has been well studied for several decades. For purely random 
labels, the maximum number of data points $n$ that can be classified correctly is also
known as the model memorization capacity. The model we study is sometimes
referred to as the `spherical perceptron', to emphasize the fact that $\btheta$ can take values on a sphere.

With an elegant combinatorial 
argument, Tom Cover \cite{cover1965geometrical} proved that $n$ random patterns in
$d$ dimensions can be linearly separated with high probability 
if $n < 2d-o(d)$, and can not if $n > 2d+o(d)$. In our language, this
implies that $\delta_{\sus}(\kappa = 0) = 2$. 
For $\kappa>0$, Elizabeth Gardner \cite{gardner1988space} employed the physicists' 
replica method to obtain the replica-symmetric prediction $\delta_{\mathrm{RS}} (\kappa)$ 
for the phase transition threshold. Gardner's prediction was rigorously confirmed 
in \cite{shcherbina2003rigorous, stojnic2013another}. In fact, 
the analysis of \cite{stojnic2013another} implies that 
$\delta_{\sus}(\kappa) \le\delta_{\mathrm{RS}} (\kappa)$ also for negative $\kappa$.

Recent work has extended the above results, and determined the phase transition boundary
$\delta_{\sus}(\kappa)$ for labels $y_i$ dependent on a linear signal $\<\btheta_*,\xx_i\>$, 
and for non-isotropic Gaussian covariates \cite{candes2020phase, sur2019modern,montanari2019generalization}.
Further, \cite{candes2020phase, sur2019modern} considered the underparametrized regime
$\delta>\delta_{\sus}(\kappa)$ and studied parameter estimation using logistic regression, while 
\cite{montanari2019generalization}  characterizes the generalization
 error of max-margin classification  within the over-parameterized regime $\delta<\delta_{\sus}(\kappa)$.

The negative margin problem $\kappa < 0$ has been much less studied 
by mathematicians or statisticians. From a technical point of view, it is more difficult than 
the original spherical perceptron problem since one cannot rely on convexity. 
  Talagrand conjectured in \cite{talagrand2010mean} that the replica symmetric formula
  $\delta_{\mathrm{RS}} (\kappa)$  is an upper bound for the phase transition 
  $\delta_{\sus}(\kappa)$ for all $\kappa\in \R$.  Stojnic \cite{stojnic2013another}
  used Gordon's  comparison inequality to prove that this is is indeed 
  the case.
  However, for $\kappa$ sufficiently negative, $\delta_{\sus} (\kappa)<\delta_{\mathrm{RS}} (\kappa)$ 
  strictly as also implied by our upper bound, cf. Fig.~\ref{fig:PhaseDiagram}, 
  due to the replica symmetry breaking (RSB) phenomenon. 
  Also note that Guerra's interpolation techniques from spin glass theory 
  \cite{guerra2003broken}
  cannot be applied to this case because the model is not symmetric
  (it is akin to a `multi-species' model \cite{panchenko2015free}).
  
  Our upper bound is also based on Gordon's inequality. However, following
  an approach introduced again by Stojnic \cite{stojnic2013negative},
  we apply the inequality to an exponential of the cost function.
  We prove that this yields an upper bound which captures the correct behavior at large negative $\kappa$.

  To conclude the survey of mathematical results on the negative spherical perceptron, the recent paper
  \cite{alaoui2020algorithmic} develops 
  algorithm  to find $\kappa$-margin solutions with $\kappa<0$. This algorithm 
  is based on the IAMP (incremental approximate message passing) strategy of 
  \cite{montanari2021optimization,alaoui2021optimization}
  and is  guaranteed to succeed under a certain no overlap gap condition on the order parameter.
  It is currently unknown for which values of $\kappa$ this condition holds.
  Finally \cite{baldi2021theory} proves a lower bound on the inner radius of 
  the convex hull of random points in high dimension, a problem which relates via duality to 
  the problem of Section \ref{sec:ConnectionsPolytope}. We provide further comparison
  in~Section \ref{sec:Polytope}.

As mentioned in the introduction, the negative spherical perceptron model has recently attracted attention
within the physics community as a toy model for the jamming transition
  \cite{franz2016simplest, franz2017universality,franz2019jamming, franz2019critical,sclocchi2022high}.

Our lower bound on $\delta_{\sus}(\kappa)$ is based on the   
``second moment method", which has been broadly applied to proving existence theorems 
in probability theory and computer science. Examples include the random $k$-satisfiability problem 
\cite{achlioptas2002asymptotic, achlioptas2003threshold, achlioptas2006random},
and the Ising perceptron \cite{ding2019capacity,aubin2019storage,abbe2021proof}, where we search for 
$\btheta \in \{-1, +1 \}^d$ instead of the unit sphere. Unlike in these applications, we deal
with a model with continuous decision variables. For this reason, we cannot upper bound
$\delta_{\sus}(\kappa)$ using the first moment method. As mentioned above,
we develop instead an alternative approach based 
on Gordon's comparison inequality.

Finally, our work is related to the question of determining the interpolation
threshold (or memorization capacity) of neural networks. Since the seminal work of Baum 
 \cite{baum1988capabilities}, several papers have been devoted to 
 estimating the interpolation threshold of neural networks, or showing that interpolators 
 can be constructed efficiently \cite{sakurai1992nh, kowalczyk1993counting, daniely2019neural, 
 daniely2020memorizing, bubeck2020network, montanari2020interpolation}.
 However earlier work is either based on constructing special networks that 
 are not typically produced by learning algorithms, or they apply to very high overparametrization
 ratios (e.g. in the linear regime).

\subsection{Notations}\label{sec:notation}
For any positive integer $n$, we let $[n] = \{1,2,\ldots, n\}$. For a scalar $a$, we 
write $a_+ = \max\{a, 0\}$ and $a_- = \max\{-a, 0\}$. We use $\| \uu \|$ or $\| \uu \|_2$ to denote the $\ell_2$ norm of a vector $\uu$, and we use $\| \bM \|_\op$ to denote the operator norm of a matrix $\bM$. We denote by $\S^{d-1}$ the unit sphere in $d$ dimensions.

We will use $O_n(\cdot)$ and $o_n(\cdot)$ for the standard big-$O$ and small-$o$ notation, 
where $n$ is the asymptotic variable. We occasionally write $a_n \gg b_n$ if $b_n = o_n(a_n)$. 
We write
$\xi_1(n) = o_{n,\P}(\xi_2(n))$ if $\xi_1(n) / \xi_2(n)$ converges to $0$ in probability.
 Additionally,  $\breve{o}_{\kappa} (1)$ denotes a
 quantity that vanishes as $\kappa \to -\infty$, uniformly in all other variables mentioned 
 in the context. Hence, 
 the meaning of $\breve{o}_{\kappa} (1)$ may change from line to line. 
 For example, if $(\rho, \kappa) \in [-1, 1] \times \R$ are the variables of interest, and 
 $s: [-1, 1] \times \R \to \R$ is a function, then
\begin{equation*}
	s (\rho, \kappa) = \breve{o}_{\kappa} (1) \iff \lim_{\kappa \to -\infty} \sup_{\rho \in [-1, 1]} \vert s(\rho, \kappa) \vert = 0.
\end{equation*}
Similarly, the notation $\breve{O}_{\kappa} (1)$ is defined via:
\begin{equation*}
	s(\rho, \kappa) = \breve{O}_{\kappa} (1) \iff \limsup_{\kappa \to -\infty} \sup_{\rho \in [-1, 1]} 
	\vert s(\rho, \kappa) \vert < +\infty.
\end{equation*}

For two vectors $\uu, \vv$ of the same dimension, we may write $\uu \ge \vv$ or $\uu \le \vv$ to mean entry-wise inequality. We always use $\Phi$ and $\phi$ to denote the c.d.f. and p.d.f. of standard normal distribution, respectively. 
We write $X \perp Y$ if $X$ and $Y$ are two independent random variables (or random vectors). Throughout this paper, we assume the asymptotics $\lim_{n \to +\infty} (n/d) = \delta \in (0, +\infty)$.

\section{The case of random labels}
\label{sec:RandomLabels}

Throughout this section, we assume that the data $\{(\xx_i,y_i)\}_{i \le n}$ are i.i.d.,
 with isotropic covariates $\xx_i \sim_{\iid} \normal(\bzero,\bI_d)$ and random labels 
 $y_i \sim_{\iid} \Unif(\{+1,-1\})$ independent of $\xx_i$. 

To the best of our knowledge, the best available rigorous result in this context 
is the replica symmetric upper bound in \cite{stojnic2013another}, which yields 
\begin{equation}
	\delta_{\sus} (\kappa) \le \delta_{\mathrm{RS}} (\kappa) =  \E\{(\kappa - G)_+^2\}^{-1} 
	= \frac{|\kappa|^2}{2\Phi(\kappa)}\big(1+o_{\kappa}(1)\big)\, .
\end{equation}
(Here expectation is with respect to $G\sim\normal(0,1)$.)
As demonstrated below, this upper bound has the wrong behavior for large negative $\kappa$.

\subsection{Existence of $\kappa$-margin classifiers}
\label{sec:Existence}
We begin by stating our lower bound.
\begin{defn}\label{def:pure_lower}
	For $\kappa < 0$, let $c_*=c_*(\kappa) > 0$ be the unique positive solution of the equation 
	\begin{align*}
	c (1 - \Phi(\kappa + c)) = \phi (\kappa + c)\, .
	\end{align*}
	  (Proof of existence and uniqueness is given in Lemma~\ref{lem:property_of_e_q}. $(a)$ in the appendix.)
	  
	 For $q \in [-1, 1]$, define
	\begin{equation}
	\Psi(q) = - \log \E_q \left[ \exp(-c_*(\kappa) (G_1 + G_2)) \bone \left\{ G_1 \ge \kappa, G_2 \ge \kappa \right\} \right] - \frac{1}{2\delta} \log \left( 1-q^2 \right),
	\end{equation}
	where $\E_q$ denotes expectation with respect to
	$(G_1, G_2)^\top \sim \normal \left( \bzero , \begin{bmatrix}
    	1 & q \\
    	q & 1
	\end{bmatrix} \right)$. Finally, we define 
\begin{align}
	\delta_{\sl} (\kappa):=\sup\Big\{
	\delta > 0 \mbox{ s.t.  } \Psi(q) \mbox{ is uniquely minimized at } 
	q = 0 \mbox{ and } \Psi''(0)> 0\Big\}\,.
\end{align}	
\end{defn}

Our first theorem provides a lower bound on $\delta_{\sus}(\kappa)$.
\begin{thm}\label{thm:pure_noise_lower_bound}
    If $\delta < \delta_{\sl}(\kappa)$, then with high probability there is a $\kappa$-margin solution,
	i.e.,
	\begin{equation}\label{eq:pure_noise_lower_bound}
		\delta_{\sl}(\kappa)\le \delta_{\sus}(\kappa)\, .
	\end{equation}
	Moreover, $\delta_{\sl}(\kappa)= (1+o_{\kappa}(1))(\log |\kappa|)/\Phi(\kappa)$ as 
	$\kappa \to -\infty$. Hence, for any $\veps > 0$, there exists 
	a $\underline \kappa = \underline \kappa(\veps) < 0$, such that for all 
	$\kappa < \underline \kappa$, 
	\begin{equation*}
		\delta < (1 - \veps) \frac{\log |\kappa| }{\Phi(\kappa)}\;\;
		\Rightarrow\;\; \lim_{n\to\infty}\prob(\ERM_0\neq\emptyset)=1\, .
	\end{equation*}
\end{thm}

\begin{defn}\label{def:pure_upper}
	For any $\kappa < 0$, define
	\begin{equation}
		\delta_{\su} (\kappa) = \inf \left\{ \delta > 0: \exists c>0, \ \emph{s.t.} \ \forall t>0, \frac{1}{\sqrt{c^2+4}+c} + \frac{1}{c} \log \frac{\sqrt{c^2+4} + c}{2} < \frac{1}{4t} - \frac{\delta}{c} \log \psi_{\kappa} (-ct) \right\},
	\end{equation}
	where, for $t\ge 0$,
	\begin{equation}
    	\psi_{\kappa} (-t) := \E \Big\{ \exp \left(-t \left( \kappa - G \right)_+^2 \right) 
    	\Big\}, \;\;\;\; G \sim \normal(0, 1).
	\end{equation}
\end{defn}

We next state our upper bound on $\delta_{\sus}(\kappa)$. The first part 
of this theorem was proved already in \cite{stojnic2013negative}, which however relied on
numerical evaluations to argue that the resulting bound was superior to the replica symmetric one.
\begin{thm}\label{thm:pure_noise_upper_bound}
	If $\delta > \delta_{\su} (\kappa)$, then with high probability there is no $\kappa$-margin 
	solution, i.e.,
	\begin{equation}\label{eq:pure_noise_upper_bound}
	\delta_{\su}(\kappa) \ge \delta_{\sus}(\kappa)\, .
	\end{equation}
	Moreover, $\delta_{\su}(\kappa)= (1+o_{\kappa}(1))(\log |\kappa|)/\Phi(\kappa)$ as 
	$\kappa \to -\infty$. Hence, for any $\veps > 0$, there exists 
	a $\underline \kappa = \underline \kappa(\veps) < 0$, such that for all 
	$\kappa < \underline \kappa$,
	\begin{equation*}
		\delta > (1 + \veps) \frac{\log |\kappa| }{\Phi(\kappa)}\;\;
		\Rightarrow \;\;\lim_{n\to\infty}\prob(\ERM_0\neq\emptyset)=0\, .
	\end{equation*}
\end{thm}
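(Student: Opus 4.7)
The plan is to establish part~(i) by Gordon's comparison inequality applied to an exponentiated cost (Stojnic's trick from~\cite{stojnic2013negative}), and then extract the asymptotic of part~(ii) via a careful saddle-point analysis of the resulting scalar condition in Definition~\ref{def:pure_upper}. By symmetry of the labels we absorb $y_i$ into $\xx_i$ and write $\bG \in \R^{n \times d}$ with iid $\normal(0, 1)$ entries; feasibility of $\btheta$ becomes $(\bG\btheta)_i \ge \kappa$ for all $i$, and nonexistence of a feasible point is equivalent to $\min_{\btheta \in \S^{d-1}} \sum_i(\kappa - (\bG\btheta)_i)_+^2 > 0$. Direct application of Gordon to the bilinear min-max $\min_\btheta \max_{\bs \ge 0}[2\bs^\top(\kappa\bone - \bG\btheta) - \|\bs\|^2]$ yields only the replica-symmetric bound; the key is to exponentiate first.

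For part~(i), introduce the partition function $Z(c, t) := \int_{\S^{d-1}}\exp\bigl(-c t \sum_i (\kappa - (\bG\btheta)_i)_+^2\bigr)\,\d\sigma(\btheta)$ for $c, t > 0$. The Fenchel identity $(u_+)^2 \ge 2su - s^2$ together with $u_+ \ge u$ (for $s \ge 0$) gives $e^{-ct(\kappa - z)_+^2} \le e^{c t s^2 - 2 c t s(\kappa - z)}$; taking products over $i$ and integrating over the sphere,
\begin{equation*}
Z(c, t) \le \inf_{\bs \ge 0}\; e^{c t\|\bs\|^2 - 2 c t \kappa\bone^\top\bs}\int_{\S^{d-1}} e^{2 c t\, \bs^\top\bG\btheta}\,\d\sigma(\btheta).
\end{equation*}
Gordon's inequality applied to the bilinear form $\bs^\top\bG\btheta$ (with $\btheta \in \S^{d - 1}$) replaces $\bG^\top\bs$ by the Gaussian surrogate $\|\bs\|\bh + \bg$ (for independent $\bh \sim \normal(0, \bI_d)$ and $\bg \sim \normal(0, \bI_n)$) with high probability; combined with a Laplace expansion of the sphere partition function $\int_{\S^{d-1}}e^{\langle\bw, \btheta\rangle}\d\sigma(\btheta)$, this produces the scalar LHS term $\tfrac{1}{\sqrt{c^2+4}+c} + \tfrac{1}{c}\log\tfrac{\sqrt{c^2+4}+c}{2}$ of Definition~\ref{def:pure_upper}, which arises as the Legendre transform of $\tfrac{1}{2}\log(1-u^2)$ at the saddle point $u^* = 2/(\sqrt{c^2+4}+c)$. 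The RHS term $-(\delta/c)\log\psi_\kappa(-c t)$ comes from the first-moment identity $\E[Z(c, t)] = \psi_\kappa(-c t)^n$ (using that $(\bG\btheta)_i \sim \normal(0, 1)$ independently for fixed $\btheta \in \S^{d-1}$), and the residual $1/(4 t)$ emerges from a small-ball lower bound on the contribution any putative feasible $\btheta^*$ would make to $Z(c, t)$; when the condition in Definition~\ref{def:pure_upper} holds, $Z(c, t)$ is too small to accommodate a feasible point.

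For part~(ii), the asymptotic, I would analyze the defining inequality at the scaling $c \sim |\kappa|$ and $t \to \infty$ with $c t \to \infty$. Then $\psi_\kappa(-c t) \to 1 - \Phi(\kappa)$, so $-\log\psi_\kappa(-c t) = \Phi(\kappa)(1 + o_\kappa(1))$; for large $c$, $L(c) := \tfrac{1}{\sqrt{c^2+4}+c} + \tfrac{1}{c}\log\tfrac{\sqrt{c^2+4}+c}{2} = \tfrac{\log c}{c}(1 + o_c(1))$, which at $c \sim |\kappa|$ equals $\tfrac{\log|\kappa|}{|\kappa|}(1 + o_\kappa(1))$. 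The binding condition $L(c) < \tfrac{1}{4t} - \tfrac{\delta}{c}\log\psi_\kappa(-c t)$ then reduces (with $1/(4 t) \to 0$) to $\tfrac{\log|\kappa|}{|\kappa|} < \tfrac{\delta\Phi(\kappa)}{|\kappa|}$, i.e., $\delta > \tfrac{\log|\kappa|}{\Phi(\kappa)}$. This gives $\delta_{\su}(\kappa) \le (1 + o_\kappa(1))\log|\kappa|/\Phi(\kappa)$; combining with the matching lower bound $\delta_{\sl}(\kappa) \ge (1 - o_\kappa(1))\log|\kappa|/\Phi(\kappa)$ from Theorem~\ref{thm:pure_noise_lower_bound} and the ordering $\delta_{\sl} \le \delta_{\sus} \le \delta_{\su}$ yields the asymptotic.

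The main obstacle lies in the saddle-point analysis of part~(ii): the infimum in $t$ of the RHS of Definition~\ref{def:pure_upper} is attained where neither the quadratic expansion of $\psi_\kappa$ (valid for $c t \ll 1$) nor the fully saturated asymptotic $\psi_\kappa \to 1 - \Phi(\kappa)$ (valid for $c t \gg \kappa^2$) is sharp, so uniform control of the tail expansion of $\psi_\kappa$ across the intermediate regime $1 \ll c t \ll \kappa^2$ is needed to certify that the minimizing $t^*$ gives the claimed $\log|\kappa|/\Phi(\kappa)$ scaling, rather than a more stringent constant or exponent. For part~(i), a secondary difficulty is making the Laplace expansion on $\S^{d-1}$ rigorous with the exact constants (rather than sub-Gaussian upper bounds), coupling it with the one-sided Gordon comparison, and quantifying the small-ball contribution of any feasible $\btheta^*$ uniformly in the parameters $(c, t, \bs)$.
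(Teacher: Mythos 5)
Your proposal for part (i) differs structurally from the paper's proof, and the difference hides a genuine gap. The paper works directly with the min-max
\begin{equation*}
\xi_{n,\kappa} = \min_{\|\btheta\|_2=1}\max_{\|\blambda\|_2=1,\,\blambda\ge\bzero}\frac{1}{\sqrt{d}}\blambda^\top(\kappa\bone - \XX\btheta),
\end{equation*}
whose sign exactly detects feasibility, so $\P(\ERM_0\neq\emptyset)=\P(\xi_{n,\kappa}\le 0)\le\E[e^{-cn\xi_{n,\kappa}}]$ by Markov, and then applies the monotone version of Gordon (Theorem 2.3 of \cite{gordon1985some}) with $\psi(x)=-e^{-cnx}$ directly to this quantity. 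Your partition function $Z(c,t)=\int_{\S^{d-1}}\exp\bigl(-ct\sum_i(\kappa-(\bG\btheta)_i)_+^2\bigr)\d\sigma(\btheta)$ does not have this detection property: a feasible point $\btheta^*$ makes the integrand equal to $1$ at a single point of measure zero, and the contribution from a small cap around $\btheta^*$ requires a Lipschitz/small-ball bound whose dimension-dependent constants you have not controlled. You flag this yourself (``quantifying the small-ball contribution''), but it is not a technical detail—it is precisely the step that makes the partition-function route harder than the min-max route, and there is no reason a priori that the constants one gets from a Lipschitz estimate on $\btheta\mapsto\sum_i(\kappa-(\bG\btheta)_i)_+^2$ (which involves $\|\bG\|_{\op}\sim\sqrt{n}+\sqrt{d}$) would reproduce the threshold in Definition~\ref{def:pure_upper} with the correct constants. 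The Varadhan/Legendre identification of the scalar LHS term is plausible, but the overall logical chain from ``$Z$ is small'' to ``no feasible point'' is not closed.

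Part (ii) has a quantitative gap at exactly the place you identify. With your scaling $c\sim|\kappa|$ and $L(c)\sim\frac{\log|\kappa|}{|\kappa|}$, the minimizing $t^*$ balances $\frac{1}{4t}$ against $L(c)$ at $t^*\approx\frac{|\kappa|}{4\log|\kappa|}$, giving $ct^*\approx\frac{|\kappa|^2}{4\log|\kappa|}$. At this value of $u=ct^*$, the explicit formula
\begin{equation*}
\psi_\kappa(-u)=\Phi(|\kappa|)+\frac{\exp\bigl(-u\kappa^2/(1+2u)\bigr)}{\sqrt{1+2u}}\Bigl(1-\Phi\bigl(\tfrac{|\kappa|}{\sqrt{1+2u}}\bigr)\Bigr)
\end{equation*}
shows that the correction term is of the \emph{same order} as $\Phi(\kappa)\sim\frac{e^{-\kappa^2/2}}{\sqrt{2\pi}|\kappa|}$ (the exponent $u\kappa^2/(1+2u)\approx\kappa^2/2-\log|\kappa|$ exactly cancels the Gaussian tail gain). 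So the approximation $-\log\psi_\kappa(-ct^*)\approx\Phi(\kappa)$ is off by a nontrivial constant at your scaling, and the argument only yields $\delta_{\su}(\kappa)\le C\log|\kappa|/\Phi(\kappa)$ for some constant $C>1$, not $(1+o_\kappa(1))$. The paper's fix is to take $c$ slightly larger, $c\asymp|\kappa|^{1+\eta}$ for a small $\eta>0$, so that $ct^*\asymp|\kappa|^{2(1+\eta)}\gg\kappa^2$ and $\psi_\kappa$ is fully saturated at the crossover scale; the factor $(1+\eta)$ is then absorbed by sending $\eta\to 0$ after $\kappa\to-\infty$. Your proposal does not supply this resolution, so the asymptotic claim $\delta_{\su}(\kappa)=(1+o_\kappa(1))\log|\kappa|/\Phi(\kappa)$ remains unproved by your argument.
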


Figure \ref{fig:PhaseDiagram} reports the upper and lower bounds $\delta_{\su}(\kappa), 
\delta_{\sl}(\kappa)$ as  functions of $\kappa$.
The proofs for Theorem~\ref{thm:pure_noise_lower_bound} and Theorem~\ref{thm:pure_noise_upper_bound} 
are deferred to Appendix~\ref{sec:ExistencePureNoise} with outlines in Section \ref{sec:ProofIdea}.

\subsection{A linear programming algorithm}
\label{sec:AlgoRLabels}

As mentioned in Section \ref{sec:ConnectionsPolytope}, finding a 
$\kappa$-margin solution is equivalent to solving the following non-convex optimization
problem:
\begin{align}
\mbox{maximize} &\;\;\;\|\btheta\|_2^2\, ,\nonumber\\
\mbox{subject to} &\;\;\; y_i \langle \xx_i, \btheta \rangle \ge \kappa ~\text{for all}~i \in [n], 
\label{opt:non-convex-1}\\
&\;\;\; \| \btheta \|_2 \le 1\, . \nonumber
\end{align} 
If the solution to this problem has unit norm, then it provides a $\kappa$-margin solution.

Imagine now trying to maximize this cost function, starting at a random  initialization
near the origin, for instance $\btheta_0= \eps\vv$ with $\vv\sim\Unif(\S^{d-1})$.
Linearizing the objective around the initialization yields 
$\|\btheta\|_2^2= \|\btheta_0\|_2^2+2\<\btheta_0,\btheta-\btheta_0\>+
O(\|\btheta-\btheta_0\|_2^2)$. It is natural to try to maximize the linearized cost.
We are therefore led to the following algorithm:
\begin{enumerate}
\item Draw $\vv \sim \Unif(\S^{d-1})$.
\item Solve the following convex optimization problem
\begin{align}
\mbox{maximize} &\;\;\; \<\vv,\btheta\>\, , \nonumber\\
\mbox{subject to} &\;\;\; y_i \< \xx_i, \btheta \> \ge \kappa ~\text{for all}~i \in [n], 
\label{opt:convex-1}\\
&\;\;\; \| \btheta \|_2 \le 1. \nonumber
\end{align} 
\end{enumerate}
In optimization language, this algorithm executes the first step of 
Frank-Wolfe algorithm \cite{frank1956algorithm}.

We denote by $\hat \btheta$ the output of this algorithm.
(If the optimization problem \eqref{opt:convex-1} has more than one maximizer, one 
of them can be selected arbitrarily.)
Of course, if $\|\hbtheta\|_2=1$, then $\hbtheta$ is a $\kappa$-margin solution.
The next theorem establishes a phase transition boundary for this algorithm.
\begin{thm}\label{thm:algpure}
Under the pure noise model, assume $n,d\to\infty$ with $n/d\to\delta$. If 
\begin{equation}\label{cond:delta1}
\delta < \delta_{\slin}(\kappa) := \frac{1}{\Phi(\kappa)}\, ,
\end{equation}
then $\|\hbtheta\|_2=1$ with high probability and therefore $\hbtheta$
 is, with high probability, a $\kappa$-margin solution.
 On the other hand, if $\delta > \delta_{\slin}(\kappa)$ then,
 with high probability $\|\hbtheta\|_2<1$.
 
In particular, for any constant $\veps \in (0,1)$ there exists
$\underline \kappa = \underline \kappa(\veps) <0 $ such that for all
 $\kappa < \underline \kappa(\veps)$ the following holds. 
 If 
\begin{equation}\label{cond:delta2}
\delta \le (1-\veps) \sqrt{2\pi}\, |\kappa| \exp\left( \frac{\kappa^2}{2} \right),
\end{equation}
then $\hat \btheta$ is a $\kappa$-margin solution with high probability.
\end{thm}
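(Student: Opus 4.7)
By rotational invariance of $\cN(\bzero,\bI_d)$ and the independence of $\vv$ from the data, I would without loss of generality take $\vv=\be_1$. Using sign-symmetry of the covariates, the vectors $\zz_i := y_i\xx_i$ are i.i.d.\ $\cN(\bzero,\bI_d)$, so the program \eqref{opt:convex-1} becomes the problem of maximizing $\theta_1$ over $\{\btheta : \|\btheta\|_2 \le 1,\ \langle \zz_i,\btheta\rangle \ge \kappa\ \forall i\}$. The pivot is the dichotomy that $\|\hat\btheta\|_2 = 1$ if and only if the ball constraint is active at the optimum, which by the KKT conditions happens precisely when the \emph{unconstrained} LP (dropping the ball) is either unbounded or has its maximum attained at some $\btheta^\star$ with $\|\btheta^\star\|_2 > 1$. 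My plan is to pinpoint the threshold for this via the Convex Gaussian Min-Max Theorem (CGMT), based on Gordon's comparison inequality.

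After writing the LP in min-max form and applying CGMT (with an \emph{a priori} truncation $\|\blambda\|_2 \le L$ to be removed in a limiting step), I would reduce the Auxiliary Optimization to a one-dimensional problem in $s := \|\btheta\|_2$. Namely, eliminating $\blambda$ via the identity $\min_{\tilde\blambda\ge\bzero,\,\|\tilde\blambda\|_2=1}\ba^\top\tilde\blambda = -\|(-\ba)_+\|_2$, then optimizing the direction of $\btheta$ by aligning its projection onto $\be_1^\perp$ with that of $\bh$, and finally using $\|\bh\|_2/\sqrt d\to 1$ together with the law of large numbers $\tfrac1n\sum_i (\kappa-sg_i)_+^2\to\E[(\kappa-sG)_+^2]$, the AO value concentrates around $\sup_s \sqrt{\Psi(s)}$, where
\[
\Psi(s) := s^2 - \delta\,\E\bigl[(\kappa - sG)_+^2\bigr], \qquad G \sim \cN(0,1),
\]
and the supremum is over $s \in [0,1]$ for the ball-constrained primal, or $s \in [0,\infty)$ for the unconstrained LP.

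Using the identity $\tfrac{d}{ds}\E[(\kappa-sG)_+^2] = 2s\,\Phi(\kappa/s)$, I find $\Psi'(s) = 2s\bigl(1-\delta\,\Phi(\kappa/s)\bigr)$; hence $\Psi$ is strictly unimodal on $(0,\infty)$ with maximum at the unique solution $s^\star = \kappa/\Phi^{-1}(1/\delta)$ of $\Psi'=0$ when $\delta > 2$ (for $\delta \le 2$, $\Psi$ is monotonically increasing, so the unconstrained LP is a.s.\ unbounded, trivially forcing $\|\hat\btheta\|_2 = 1$). The inequality $s^\star > 1$ is equivalent to $\Phi^{-1}(1/\delta) < \kappa$, i.e., $\delta < 1/\Phi(\kappa)$. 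Applying the CGMT transfer: for $\delta < 1/\Phi(\kappa)$, the unconstrained LP's optimizer has norm $s^\star > 1$ with high probability, so the ball constraint is active in the original problem and $\|\hat\btheta\|_2 = 1$ w.h.p.; for $\delta > 1/\Phi(\kappa)$, $s^\star < 1$ and the unconstrained LP's optimizer already lies strictly inside the ball, so $\|\hat\btheta\|_2 = s^\star + o_\P(1) < 1$.

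The asymptotic bound \eqref{cond:delta2} then follows from Mills' ratio $\Phi(\kappa) = (1 + o_\kappa(1))\phi(\kappa)/|\kappa|$ as $\kappa\to-\infty$, which gives $1/\Phi(\kappa) = (1 + o_\kappa(1))\sqrt{2\pi}\,|\kappa|\,e^{\kappa^2/2}$, so any $\delta \le (1-\veps)\sqrt{2\pi}|\kappa|e^{\kappa^2/2}$ lies below $1/\Phi(\kappa)$ for $\kappa$ sufficiently negative. The hard part will be making the CGMT application rigorous, since $\blambda$ ranges over the unbounded cone $\R_+^n$: I would truncate to $\|\blambda\|_2 \le L$, apply CGMT on the resulting compact set, and then let $L \to \infty$ using a priori boundedness of the optimal Lagrange multipliers (which follows from complementary slackness once one controls the number of active inequality constraints). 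A secondary subtlety is upgrading the CGMT-style convergence $\|\hat\btheta\|_2 \to 1$ in probability to the exact event $\{\|\hat\btheta\|_2 = 1\}$ with high probability; this uses the binary nature of ball-constraint activation combined with the fact that the Lagrange multiplier for the ball is bounded away from $0$ in the AO (since $\Psi'(1) = 2(1 - \delta\Phi(\kappa)) > 0$ for $\delta < 1/\Phi(\kappa)$), which transfers to a strictly positive multiplier in the primal.
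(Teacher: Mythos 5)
Your proposal is correct and takes essentially the same route as the paper: reduce the max-min formulation of the LP via Gordon's comparison inequality to a one-dimensional auxiliary problem in $r=\|\btheta\|_2$, identify the scalar function $F(r)=r^2-\delta\,\E[(\kappa-rG)_+^2]$ (the paper's Lemma~\ref{lem:F}) with derivative $F'(r)=2r\bigl(1-\delta\,\Phi(\kappa/r)\bigr)$, and read off the threshold $\delta=1/\Phi(\kappa)$ from the sign of $F'(1)$, with the Mills-ratio asymptotics supplying \eqref{cond:delta2}. The one implementation difference worth flagging: to upgrade from value convergence to the exact event $\{\|\hbtheta\|_2=1\}$, the paper compares the optimal value over $\|\btheta\|_2\le 1$ with that over the enlarged ball $\|\btheta\|_2\le 1+\veps_1$ and uses a convex-interpolation contradiction (Proposition~\ref{prop:noise}), and for $\delta>1/\Phi(\kappa)$ it localizes $\hbtheta$ by applying the one-sided Gordon inequality on the non-convex annulus $1-\veps_1\le\|\btheta\|_2\le 1$; your Lagrange-multiplier positivity argument is morally equivalent, but the nested-ball comparison is cleaner to push through CGMT, which controls optimal values rather than dual variables.
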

Since the algorithm described above can be implemented in polynomial time, the above
result yields a lower bound on the polynomial threshold: 
$\delta_{\salg}(\kappa)\ge \delta_{\slin}(\kappa)$ for all $\kappa$.

\begin{rem}
A slightly different formulation of the same approach removes the norm constraint 
$\|\btheta\|_2\le 1$ in the convex program \eqref{opt:convex-1}. Denote the solution
of the resulting linear program by $\btheta_{\slin}$. If $\|\btheta_{\slin}\|_2\ge 1$,
then a $\kappa$ margin solution is given by $\hbtheta_{\slin}:=\btheta_{\slin}/\|\btheta_{\slin}\|_2$.

It is easy to see that this version of the algorithm succeeds (i.e., $\|\btheta_{\slin}\|_2\ge 1$)
if and only if the previous does (i.e., $\|\hbtheta\|_2= 1$). Indeed, if $\|\hbtheta\|_2=1$,
then $\|\btheta_{\slin}\|_2 \ge 1$ because $\hbtheta$ is feasible for the linear program.
On the other hand, we show that $\|\btheta_{\slin}\|_2\ge 1$ implies $\|\hbtheta\|_2 \ge 1$. Assume by contradiction $\|\hbtheta\|_2 < 1$.
Then defining $\hbtheta_t := (1-t)\hbtheta+t\btheta_{\slin}$, there exists 
$t_*\in(0,1]$ such that $\|\hbtheta_{t_*}\|_2=1$. This is a feasible point for
\eqref{opt:convex-1}, thus leading to a contradiction.
\end{rem}

\section{Labels correlated with a linear signal}
\label{sec:NonRandomLabels}

In this section, we assume that the labels are correlated with a one-dimensional projection of the data.
As before, $\{(\xx_i,y_i)\}_{i \le n}$ are i.i.d.~data, $\xx_i \sim \normal(\bzero,\bI_d)$ and the labels $y_i \in \{\pm 1 \}$ satisfy
\begin{equation*}
\P \left( y_i = 1 \big\vert \left\langle \xx_i, \btheta_* \right\rangle \right) = 1 - \P \left(y_i = -1 \big\vert \left\langle \xx_i  , \btheta_* \right\rangle \right) = \varphi \left( \left\langle \xx_i, \btheta_* \right\rangle \right),
\end{equation*}
where $\btheta_* \in \S^{d-1}$ is the true signal, and $\varphi: \R \to [0,1]$ is a monotone 
increasing function. We make the following assumption on the tail behavior 
of the link function $\varphi$.
\begin{ass}[Exponential tail]\label{ass:exponential_tail_link}
There exist constants $\alpha > 0$ and $C_{\tail}>0$, such that as $x \to - \infty$,
\begin{equation*}
	\frac{1 + \varphi(x) - \varphi(-x)}{2C_{\tail} \exp(\alpha x)} \to 1.
\end{equation*}	
\end{ass}

\begin{rem}
	The above assumption holds for any link function $\varphi(x)$ satisfying $\varphi(x) = 1 - C_{\tail}^{+} e^{-\alpha x}+o(e^{-\alpha x})$
	as $x\to\infty$, and $\varphi(x) = C_{\tail}^{-} e^{\alpha x}+o(e^{\alpha x})$
	as $x \to -\infty$, where $C_{\tail}^{+} + C_{\tail}^{-} = 2 C_{\tail}$. In particular, it holds for the logistic link function, i.e., $\varphi (x) = (1 + e^{- \alpha x})^{-1}$ with $C_{\tail} = 1$. Therefore, we see that Assumption~\ref{ass:exponential_tail_link} slightly generalizes our requirement on the link function in Section~\ref{sec:SumResults}. 
\end{rem}

Throughout this section, we denote by $(Y, G, W)$ a vector of random variables 
with joint distribution:
	\begin{equation}
	    \left\{\begin{array}{l}
	     G \sim \normal(0, 1),  \;\;\; \P(Y = 1 \vert G) = \varphi(G) = 1 - \P(Y = - 1 \vert G), \; Y\in\{+1,-1\}\, ,\\
	     (Y, G) \perp W\, ,\;\; W \sim \normal(0, 1) \, .
        \end{array}\right.
	\end{equation} 
	
\subsection{Existence of $\kappa$-margin classifiers}

We start with a definition that plays an important role in the phase transition lower bound.
\begin{defn}\label{def:sig_lower}
We define auxiliary function $\delta_{\mathrm{sec}}( \rho, \kappa, \kappa_{\mathrm{t}} )$ 
on the set
	\begin{equation*}
		\left\{ \left( \rho, \kappa, \kappa_{\mathrm{t}} \right): 0 < \rho < 1, \ 
		0 > \rho \kappa_{\mathrm{t}} > \kappa \right\}
	\end{equation*}
	through the following steps:
	\begin{enumerate}
		\item For $s \ge \kappa_{\mathrm{t}}$, define $\kappa (s) = (\kappa - \rho s)/\sqrt{1 - \rho^2}$.
		 Then we know $\kappa (s) < 0$, hence by Lemma~\ref{lem:property_of_e_q}.$(a)$ 
		 there exists $c (s) > 0$ such that
		\begin{equation}
			c (s) \left( 1 - \Phi (\kappa (s) + c (s)) \right) = \phi (\kappa (s) + c (s)).
		\end{equation}
	    \item For $q \in [-1, 1]$, define
	    \begin{equation}
	    	e (q, s) = \E_q \left[ \exp \left( - c (s) (G_1 + G_2) \right)
	    	 \bone \left\{ G_1 \ge \kappa (s), G_2 \ge \kappa (s) \right\} \right]
	    \end{equation}
	    and
	    \begin{equation}
	    	e (q) = 1 + \int_{\kappa_{\mathrm{t}}}^{+\infty} p_{YG} (s) 
	    	\left( \frac{e (q, s)}{e (0, s)} - 1 \right) \d s,
	    \end{equation}
	    where $p_{YG}$ denotes the density function of $YG$.
	    \item Now set $F (q) = - \log (e (q))$ and $I(q) = - \log (1-q^2)/2$. 
	    Similarly as in Definition~\ref{def:pure_lower}, let 
	    $\delta_{\mathrm{sec}}( \rho, \kappa, \kappa_{\mathrm{t}} )$ be the supremum of all
	     $\delta > 0$ such that $F''(0) + I''(0)/\delta > 0 $, and $F (q) + I(q)/\delta$ achieves 
	     unique minimum at $q=0$.
	\end{enumerate}
	Now we are in position to define $\delta_{\sl} (\kappa;\varphi)$ as the supremum of all $\delta > 0$, such that there exist parameters
	\begin{equation*}
		\rho \in (0, 1), \ 0 > \kappa_1 > \kappa_2 > \kappa_0 = \frac{\kappa}{\sqrt{1 - \rho^2}}
	\end{equation*}
	satisfying the inequality below:
	\begin{footnotesize}
	\begin{align*}
		& \frac{1}{\delta} > \min \Bigg\{ \max \left\{ \P \left( \rho YG + \sqrt{1 - \rho^2} W \le \kappa \right), \E \left[ \left( \kappa_0 - \frac{\rho YG}{\sqrt{1-\rho^2}} - W \right)_+^2 \right] \right\}, \ \delta_{\mathrm{sec}} \left( \rho, \sqrt{1 - \rho^2} \kappa_2, \frac{\sqrt{1-\rho^2}}{\rho} \kappa_1 \right)^{-1} \\
		+ & \inf_{c \ge 0} \left\{ \P \left(YG \ge \frac{\sqrt{1-\rho^2}}{\rho} \kappa_1 \right) \E \left[ \left( \frac{\kappa_0 - \kappa_2}{c} - W \right)_+^2 \right] + \frac{1}{c^2}\E \left[ \bone \left\{ YG < \frac{\sqrt{1-\rho^2}}{\rho} \kappa_1 \right\} \left( \kappa_0 - \frac{\rho YG}{\sqrt{1-\rho^2}} - \sqrt{1 + c^2} W \right)_+^2 \right] \right\} \Bigg\}.
	\end{align*}
	\end{footnotesize}
\end{defn}

The next theorem establishes that $\delta_{\sl}(\kappa;\varphi)$ 
is indeed a lower bound on the satisfiability threshold, and establishes
an approximation for this lower bound which holds for $\kappa$ large and negative.
\begin{thm}\label{thm:logistic_lower_bound}
Under the linear signal model, if $\delta < \delta_{\sl} (\kappa; \varphi)$, then with high probability, there is a $\kappa$-margin solution, i.e., we have
	\begin{equation}\label{eq:signal_lower_bound}
		 \delta_{\sus}(\kappa) \ge \delta_{\sl}(\kappa;\varphi)\, .
	\end{equation}
Moreover, under Assumption~\ref{ass:exponential_tail_link}, 
$\delta_{\sl}(\kappa;\varphi)=(1+o_{\kappa}(1))e^{\alpha|\kappa|}\log|\kappa|/(4C_{\tail}\Phi(\kappa))$
as $\kappa\to -\infty$. Hence, for any $\veps > 0$, there exists a 
$\underline \kappa = \underline \kappa(\veps; \varphi) < 0$, such that for all $\kappa < \underline \kappa$, as long as
	\begin{equation*}
		\delta < (1 - \veps) \frac{e^{\alpha|\kappa|}\log|\kappa|}{4C_{\tail}\Phi(\kappa)}
		\;\;
		\Rightarrow \;\;\lim_{n\to\infty}\prob(\ERM_0\neq\emptyset)=1\, .
	\end{equation*}
\end{thm}

\begin{figure}[t!]
\centering
\includegraphics[width=30em]{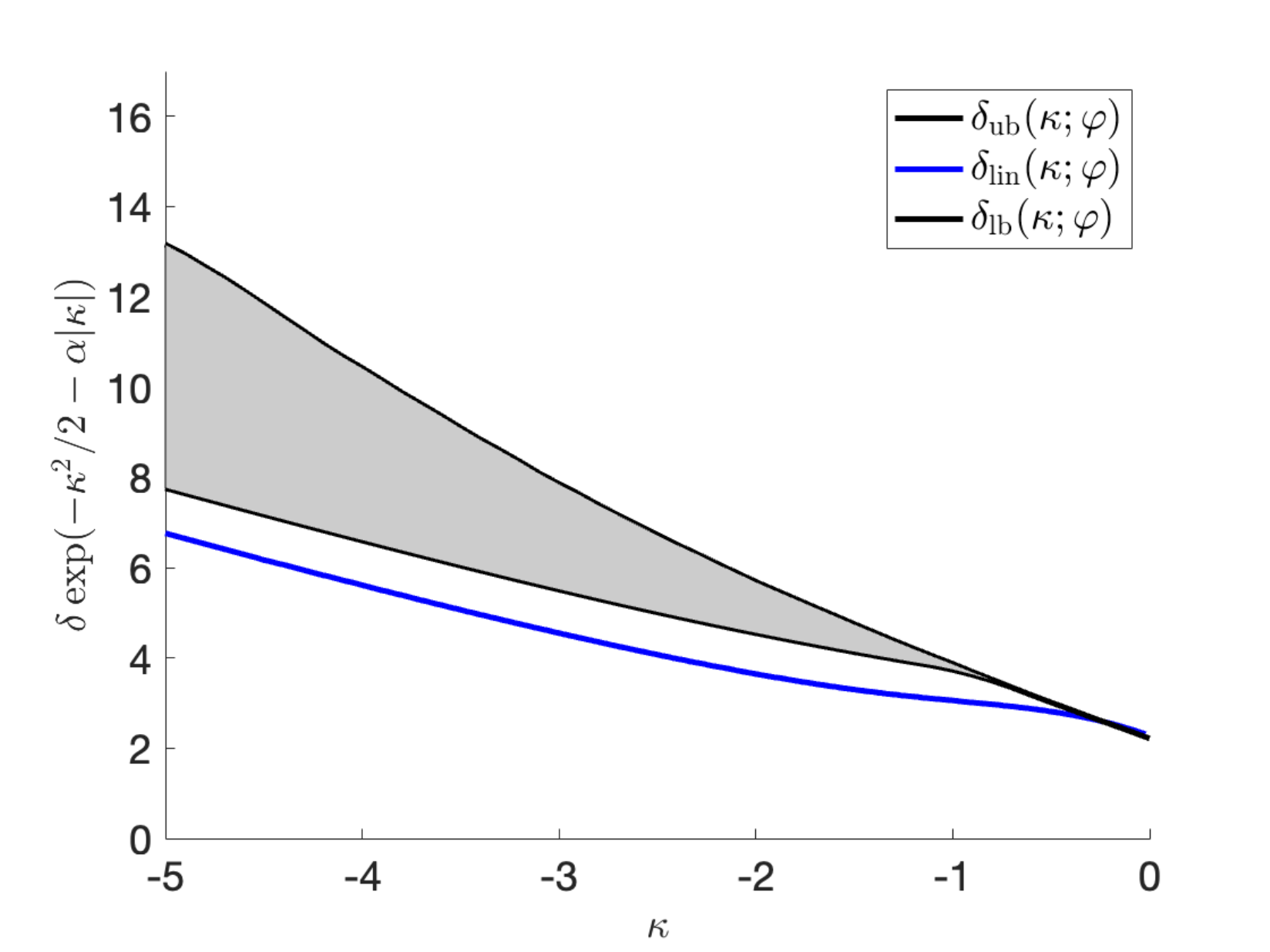}
\caption{Theoretical predictions of the phase transition thresholds and the linear programming lower bound
for labels correlated to a linear signal. Here the link function is
logistic: $\varphi(t) = (1+e^{-t})^{-1}$. The phase transition for the existence of $\kappa$-margin solution occurs in 
the region delimited by $\delta_{\sl}(\kappa;\varphi)$ and $\delta_{\su} (\kappa;\varphi)$ (gray area).}
\end{figure}
We next introduce and state our upper bound on the threshold $\delta_{\sus}(\kappa)$
in the linear signal model.
\begin{defn}\label{def:sig_upper}
	For any $\kappa < 0$ and $\rho \in [-1, 1]$, set
	\begin{scriptsize}
	\begin{equation*}
		\delta_{\su} ( \kappa, \rho ;\varphi) := \inf \left\{ \delta > 0: \inf_{c > 0} \left( \frac{\sqrt{1 - \rho^2}}{c \sqrt{1 - \rho^2} + \sqrt{c^2 (1 - \rho^2) + 4}} + \frac{1}{c} \log \frac{c \sqrt{1 - \rho^2} + \sqrt{c^2 (1 - \rho^2) + 4}}{2} - \inf_{u > 0} \left\{ \frac{c}{4 u} - \frac{\delta}{c} \log \psi_{\kappa, \rho} (-u) \right\} \right) < 0 \right\},
	\end{equation*}
	\end{scriptsize}
	where
	\begin{equation}
		\psi_{\kappa, \rho} (- u) = \E \left[ \exp \left( -u \left( \kappa - \rho YG - \sqrt{1 - \rho^2} W \right)_+^2 \right) \right].
	\end{equation}
	Then we define $\delta_{\su} (\kappa;\varphi) = \sup_{\rho \in [-1, 1]} \delta_{\su} ( \kappa, \rho;\varphi )$.
\end{defn}

\begin{thm}\label{thm:logistic_upper_bound}
	Under the linear signal model, if $\delta > \delta_{\su} (\kappa;\varphi)$, then with high 
	probability there is no $\kappa$-margin solution, i.e., we have
	\begin{equation}\label{eq:signal_upper_bound}
		 \delta_{\sus}(\kappa) \le \delta_{\su}(\kappa;\varphi)\, .
	\end{equation}
	Moreover, under Assumption~\ref{ass:exponential_tail_link}, 
	$\delta_{\su} (\kappa; \varphi)=(1+o_{\kappa}(1))e^{\alpha|\kappa|}\log|\kappa|/(4C_{\tail}Phi(\kappa))$ as $\kappa\to -\infty$.
	 That is to say, for any $\veps > 0$, there exists a $\underline \kappa = \underline \kappa(\veps; \varphi) < 0$, such that for all $\kappa < \underline \kappa$,
	\begin{equation*}
		\delta > (1 + \veps)\frac{e^{\alpha|\kappa|}\log|\kappa|}{4C_{\tail}\Phi(\kappa)}
		\;\;
		\Rightarrow \;\;\lim_{n\to\infty}\prob(\ERM_0\neq\emptyset)=0\, .
	\end{equation*}
\end{thm}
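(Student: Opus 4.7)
The plan is to follow the exponentiated Gordon-inequality approach of Stojnic \cite{stojnic2013negative} used to establish Theorem~\ref{thm:pure_noise_upper_bound}, now adapted to handle the signal direction $\btheta_*$. Existence of a $\kappa$-margin classifier is equivalent to $\min_{\btheta\in\S^{d-1}} L_n(\btheta)=0$, where $L_n(\btheta):=\sum_{i=1}^n(\kappa-y_i\<\xx_i,\btheta\>)_+^2$; so non-existence with high probability reduces to a linear-in-$d$ lower bound on this minimum. Using the Fenchel dual $(a)_+^2=\max_{\mu\ge 0}(2\mu a-\mu^2)$ converts the hinge into a bilinear form in $\btheta$ and $\bmu\ge 0$, and introducing the exponential parameter $u>0$ via $u L_n(\btheta)=-\log\max_\btheta e^{-uL_n(\btheta)}$ turns the desired lower bound into an upper bound on a saddle-point value to which Gordon's inequality applies (unlike CGMT, Gordon's inequality does not require convexity of the feasible set).

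The key structural step for the signal model is the decomposition $\btheta=\rho\btheta_*+\sqrt{1-\rho^2}\,\bu$ with $\bu\in\S^{d-1}\cap\btheta_*^\perp$ and $\rho\in[-1,1]$, which gives
$$y_i\<\xx_i,\btheta\>=\rho\, y_iG_i+\sqrt{1-\rho^2}\,y_i\<(\bI-\btheta_*\btheta_*^\top)\xx_i,\bu\>,$$
where $G_i=\<\xx_i,\btheta_*\>$ and the projected features $(\bI-\btheta_*\btheta_*^\top)\xx_i$ are i.i.d.\ $\normal(\bzero,\bI)$ on $\btheta_*^\perp$, independent of $\{(y_i,G_i)\}$. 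Conditional on $\rho$ and on $\{(y_i,G_i)\}$, the residual problem is driven by a standard $n\times(d-1)$ Gaussian matrix acting on $\bu\in\S^{d-2}$---the exact setup for Gordon. After Gordon replaces the Gaussian bilinear form by the sum $\|\bmu\|\<\bg,\bu\>+\|\bu\|\<\bh,\bmu\>$ and one performs the inner optimization on $\S^{d-2}$, the problem collapses to a scalar min-max in the parameters $c>0$ (a multiplier encoding the $\bu$-sphere constraint after scalarization) and $u>0$. Sphere concentration and the law of large numbers over the $n$ i.i.d.\ triples $(y_i,G_i,W_i)$ turn the empirical sum into $\log\psi_{\kappa,\rho}(-u)$, reproducing exactly the third term in the definition of $\delta_{\su}(\kappa,\rho;\varphi)$, while the first two terms arise as the deterministic Gaussian contributions from integrating on $\S^{d-2}$. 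A union bound over a fine grid of $\rho\in[-1,1]$ (using Lipschitz continuity of the resulting bound in $\rho$) upgrades the $\rho$-conditional estimate to the uniform statement $\delta_{\su}(\kappa;\varphi)=\sup_\rho\delta_{\su}(\kappa,\rho;\varphi)\ge\delta_{\sus}(\kappa)$.

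The hard part is the second assertion, the asymptotic $\delta_{\su}(\kappa;\varphi)=(1+o_\kappa(1))\,e^{\alpha|\kappa|}\log|\kappa|/(4\Phi(\kappa))$ as $\kappa\to-\infty$. Under Assumption~\ref{ass:exponential_tail_link}, a direct tail computation gives $\P(YG\le-s)=(1+o_s(1))\,e^{-\alpha s}\Phi(-s)$ as $s\to+\infty$, so correlating $\btheta$ with $\btheta_*$ suppresses the tail controlling margin violations by the multiplicative factor $e^{-\alpha|\kappa|}$ relative to the isotropic case of Theorem~\ref{thm:pure_noise_upper_bound}. The saddle-point analysis must therefore identify an optimal $\rho_*(\kappa)\to 1$ at the correct rate (so that the effective noise $\sqrt{1-\rho_*^2}\,W$ is small compared with the exponentially-suppressed signal tail) and simultaneously tune $c,u$ via Laplace-type asymptotics on $\psi_{\kappa,\rho}(-u)$, which is not a Gaussian MGF because of the squared hinge. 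Matching the multiplicative constant $1/4$ requires propagating subleading corrections through both the deterministic Gaussian terms and the $\psi$-term; this is the analog in the signal setting of the pure-noise asymptotic derivation, and the matching lower bound of Theorem~\ref{thm:logistic_lower_bound} serves as a consistency check on the constant.
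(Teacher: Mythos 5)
Your proposal follows the paper's route essentially verbatim: decompose $\btheta = \rho\btheta_* + \sqrt{1-\rho^2}\,\bu$ with $\bu\in\S^{d-2}$, apply Gordon's comparison with the exponential weight $\psi(x)=-e^{-cnx}$ conditionally on $\{(y_i,G_i)\}_{i\le n}$, reduce via Varadhan's lemma to the scalar expression defining $\delta_{\su}(\kappa,\rho;\varphi)$, and then cover $\rho\in[-1,1]$ using the Lipschitz bound on $\xi_{n,\kappa,\rho}$. Your Fenchel-dual reformulation $(a)_+^2 = \max_{\mu\ge 0}(2\mu a-\mu^2)$ is equivalent to the paper's $\max_{\|\blambda\|_2=1,\,\blambda\ge\bzero}\blambda^\top \aa = \|\aa_+\|_2$, so the differences are cosmetic. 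Two small caveats: (i) the covering argument needs a slack $\eta>0$ that works simultaneously for all $\rho$, which the paper extracts from upper semicontinuity in $\rho$ of the Gordon-limit functional (Lemma~\ref{lem:continuity_of_f_delta}) together with compactness of $[-1,1]$---your sketch is silent on this; (ii) the asymptotic direction $\delta_{\su}(\kappa;\varphi)\le (1+\veps)\,e^{\alpha|\kappa|}\log|\kappa|/(4\Phi(\kappa))$ is proved by exhibiting a single $c=c(\kappa)$ that makes the minimax inequality negative \emph{uniformly} over $\rho$ (with a case split near $\rho\approx 1-|\kappa|^{-1+\nu}$), rather than by locating an optimal $\rho_*(\kappa)$; the $\rho$-resolved picture you gesture at is closer to what Theorem~\ref{thm:asymptotic_upper_bound} establishes, and the matching $\ge$ direction indeed comes for free from Theorem~\ref{thm:logistic_lower_bound} as you note.
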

The proofs of Theorem~\ref{thm:logistic_lower_bound} and Theorem~\ref{thm:logistic_upper_bound} 
are deferred, respectively, to Appendix~\ref{sec:signal-lower} and~\ref{sec:signal-upper}.

\subsection{Linear programming algorithm}
\label{sec:AlgoNRLabels}

Our bounds imply that the interpolation threshold $\delta_{\sus}(\kappa)$ is
significantly higher in the linear signal model as compared to the case of purely random labels.
Indeed the threshold changed from  $\delta_{\sus}(\kappa)= \exp\{\kappa^2/2+O(\log|\kappa|)\}$
to  $\delta_{\sus}(\kappa)= \exp\{\kappa^2/2+\alpha|\kappa|+O(\log|\kappa|)\}$.
On the other hand, the algorithm of Section \ref{sec:AlgoRLabels},
if applied without modifications, has a similar threshold under the current model 
as for pure noise labels. The underlying reason is that the random direction $\vv$
is nearly orthogonal to the signal $\btheta_*$, and hence the algorithm is insensitive
to the change in data distribution.

In order to overcome this limitation, we introduce the following modification of the algorithm
of Section \ref{sec:AlgoRLabels}.
\begin{enumerate}
\item Compute
\begin{align}
\vv := \frac{1}{n}\sum_{i=1}^n y_i\xx_i \, .
\end{align}
\item Solve the following convex optimization problem
\begin{align}
\mbox{maximize} &\;\;\; \<\vv,\btheta\>\, , \nonumber\\
\mbox{subject to} &\;\;\; y_i \< \xx_i, \btheta \> \ge \kappa ~\text{for all}~i \in [n], 
\label{opt:convex-2}\\
&\;\;\; \| \btheta \|_2 \le 1. \nonumber
\end{align} 
\end{enumerate}
As in the case of purely random labels, we could remove the norm constraint $\|\btheta\|_2 \le 1$,
and normalize the solution of the resulting linear program.

\begin{figure}[t!]
    \centering
    \begin{subfigure}[t]{0.5\textwidth}
        \centering
        \includegraphics[width=\linewidth]{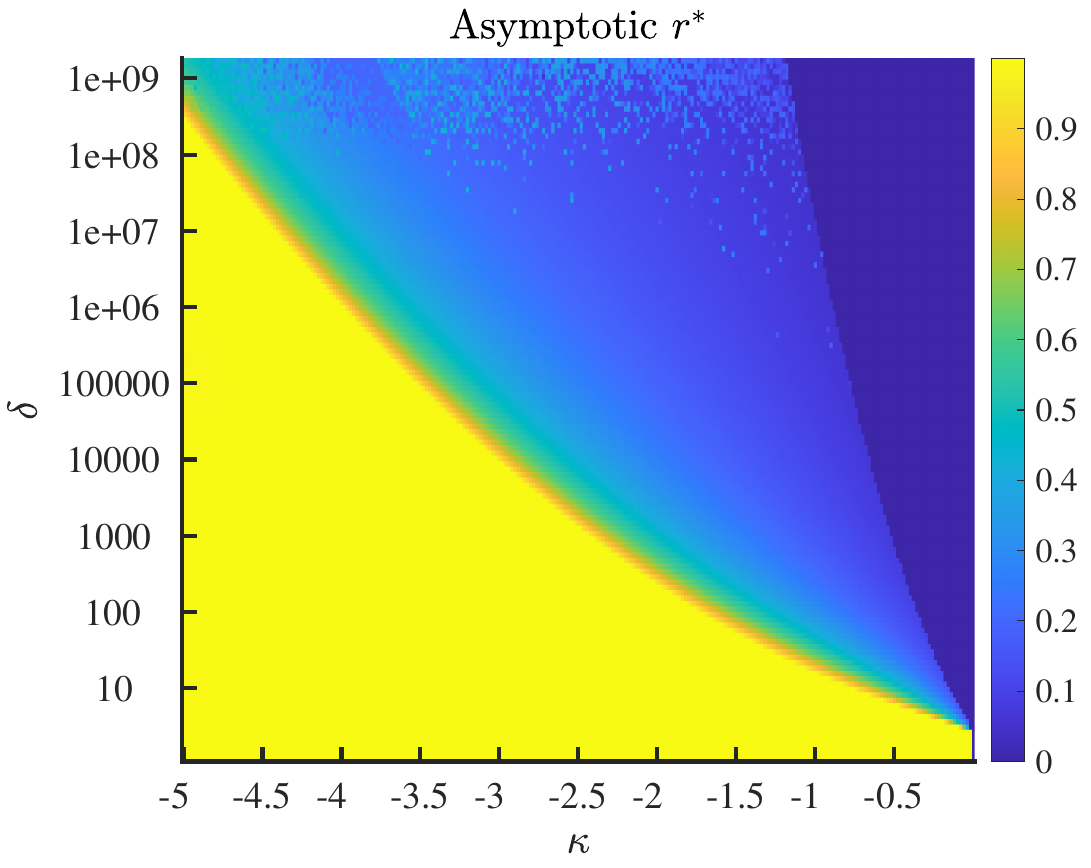}
    \end{subfigure}%
    ~ 
    \begin{subfigure}[t]{0.5\textwidth}
        \centering
        \includegraphics[width=\linewidth]{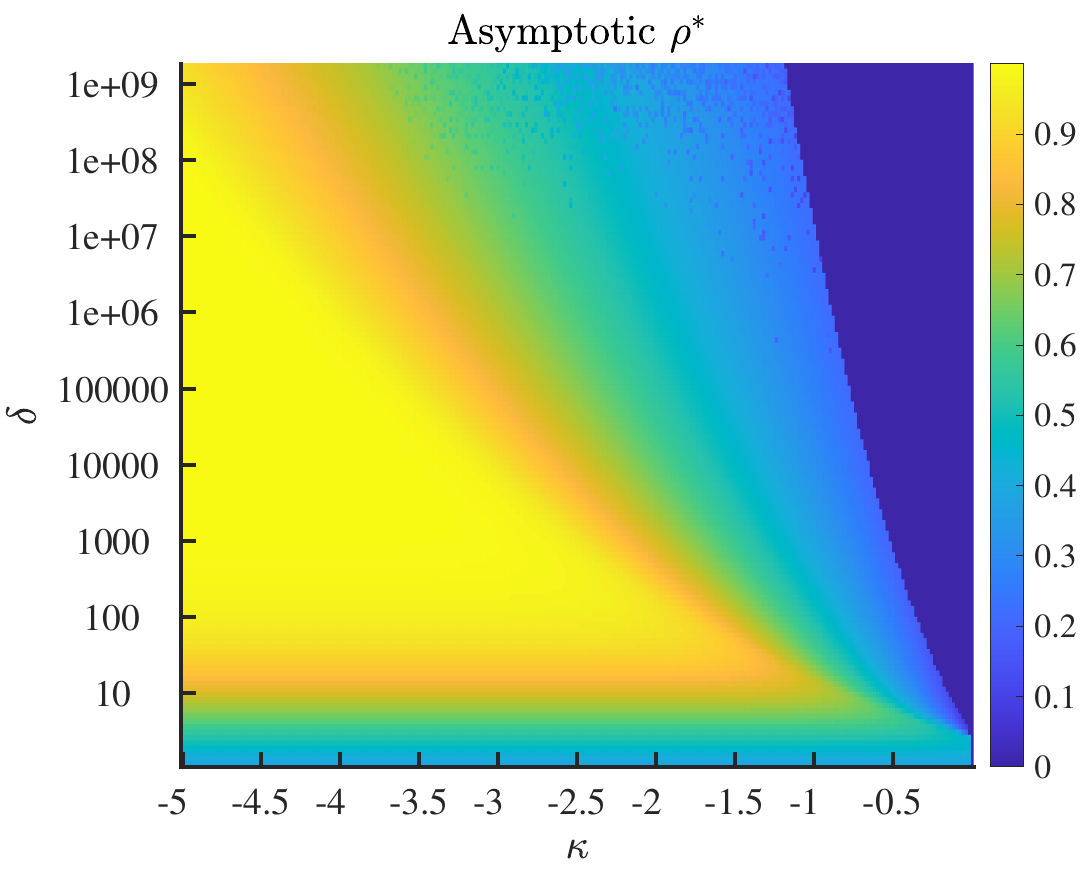}
    \end{subfigure}
    \caption{Maximizing the function $M(\rho,r)$ as defined in \eqref{def:Mrho} over $\rho \in [-1,1], r \in [0,1]$ numerically gives the maximizer $(\rho^*, r^*)$. The heatmaps show the values of $r^*$ (left) and $\rho^*$ (right) under varying $\kappa$ and $\delta$.  
\textbf{Left:} Yellow region indicates the regime where a $\kappa$-margin solution exists. \textbf{Right:} $\rho^*$ gives the asymptotic correlation $\langle \hat \btheta, \btheta_* \rangle$.}
\end{figure}

In order to state our results for this algorithm, recall the triple of random variables $(Y,G,W)$ 
introduced at the beginning of this section. It is useful to introduce the 
random variable $Z_{\rho,r}$ whose distribution depends on $\rho\in [-1,+1]$ and $r\ge 0$:
\begin{equation}\label{def:Z}
Z_{\rho,r} = \rho Y G + \sqrt{1-\rho^2}\, r W -   \kappa\, .
\end{equation}
We also define the domain $\Omega_{\ge}\subseteq \reals^2$, via:
\begin{align}\label{def:omegage}
\Omega_{\ge} = \Big\{(\rho, r) \in [-1,1] \times \reals_{\ge 0}: (1-\rho^2) r^2 \delta^{-1} \ge \E[Z_{\rho,r}^2 ; Z_{\rho,r} < 0 ] \Big\}\, .
\end{align}
We also denote by $\Omega_{>}$ the analogous set in which the inequality is satisfied strictly.

\begin{thm}\label{thm:signal}
Let $\varphi$ satisfy Assumption~\ref{ass:exponential_tail_link}. For $(\rho,r)\in \Omega_{\ge}$,
let $s_*(\rho,r)$ be the only non-negative solution of the equation
\begin{equation}\label{def:s}
(1-\rho^2) r^2 \delta^{-1} = \E[ \max\{s, -Z_{\rho,r}\}^2 ]\, ,
\end{equation}
and define 
\begin{equation}\label{def:Mrho}
M(\rho,r) = \E \big[ \big( Z_{\rho,r} + s_*(\rho,r)\big)_+ \big] + \kappa.
\end{equation}
Finally, for  $r_0>0$  define $M_*(r_0) = \max\{M(\rho,r): (\rho,r) \in \Omega_{\ge}, r\le r_0\}$ and
\begin{equation}\label{def:deltaalg}
\delta_{\mathrm{lin}}(\kappa;\varphi) :=\sup\Big\{ \bar \delta: \forall\, \delta < \bar \delta: 
\Omega_> \cap\{r\le 1\}\neq \emptyset, ~M_*(\infty) > M_*(1) \Big\}\, .
\end{equation}
If $\delta < \delta_{\mathrm{lin}}(\kappa;\varphi)$, then the linear programming 
algorithm outputs $\hbtheta$ such that $\|\hbtheta\|_2=1$ with high probability.
Therefore $\hbtheta$
 is, with high probability, a $\kappa$-margin solution.

In particular, for any constant $\veps \in (0,1)$ there exists
$\underline \kappa = \underline \kappa(\veps; \varphi) <0 $ such that for all
 $\kappa < \underline \kappa(\veps; \varphi)$ the following holds. 
 If 
\begin{equation}\label{cond:delta}
 \delta \le (1-\veps)\frac{e^{\alpha|\kappa|}}{2C_{\tail} \Phi(\kappa)} \, ,
\end{equation}
then $\hat \btheta$ is a $\kappa$-margin solution with high probability.
\end{thm}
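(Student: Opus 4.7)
The strategy is to apply the Convex Gaussian Min-Max Theorem (CGMT) to the dual of the linear program~\eqref{opt:convex-2}, and then to show that under $\delta<\delta_{\mathrm{lin}}(\kappa;\varphi)$ the norm constraint $\|\btheta\|_2\le 1$ is active at the primal optimum, forcing $\|\hbtheta\|_2=1$. I would start by dualizing: introducing Lagrange multipliers $\blambda\ge\bzero$ for the margin constraints and $\mu\ge 0$ for $\|\btheta\|_2^2\le 1$, and eliminating $\btheta$ as the maximizer of a strictly concave quadratic, the LP value equals $\Phi_n=\min_{\bw\ge\bone/n}\|\bY^\top\bw\|_2+\kappa(1-\bone^\top\bw)$, where $\bw:=\bone/n+\blambda$ and $\bY$ has rows $y_i\xx_i^\top$. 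The primal optimum then reads $\hbtheta=\bY^\top\bw^\star/\|\bY^\top\bw^\star\|_2$ whenever $\bY^\top\bw^\star\neq\bzero$, in which case $\|\hbtheta\|_2=1$ is automatic; it thus suffices to rule out $\bY^\top\bw^\star=\bzero$. Decomposing $\bY=\bff\btheta_*^\top+\bY^\perp$ with $\bff_i=y_i\<\xx_i,\btheta_*\>$ and $\bY^\perp$ the projection orthogonal to $\btheta_*$, conditionally on $\{(y_i,g_i)\}_{i\le n}$ (with $g_i:=\<\xx_i,\btheta_*\>$) the matrix $\bY^\perp$ has i.i.d.\ $\normal(0,1)$ entries in a basis of $\btheta_*^\perp$ and is independent of $\bff$.

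I would then apply CGMT. Writing $\|\bY^\top\bw\|_2=\max_{\|\bu\|_2\le 1}\bw^\top\bY\bu$ and decomposing $\bu=u_1\btheta_*+\bu^\perp$, the Gaussian bilinear form $\bw^\top\bY^\perp\bu^\perp$ gets replaced (conditionally on $(y_i,g_i)_i$) by its Gordon surrogate $\|\bw\|_2\bg^\top\bu^\perp+\|\bu^\perp\|_2\bh^\top\bw$ with fresh independent Gaussians $\bg\in\reals^{d-1}$, $\bh\in\reals^n$. Performing the inner maximizations over $u_1\in[-1,1]$ and $\bu^\perp$ yields the auxiliary optimization $\Phi_n^{\mathrm{AO}}=\min_{\bw\ge\bone/n}\sqrt{(\bff^\top\bw)^2+(\|\bw\|_2\|\bg\|_2+\bh^\top\bw)_+^2}+\kappa(1-\bone^\top\bw)$. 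Substituting $\bw=\balpha/n$ with $\balpha\ge\bone$ and exploiting strict convexity, the optimum is attained at $\alpha_i=\alpha^\star(y_i,g_i,h_i)$ for a single scalar function; the law of large numbers would then replace the empirical averages $\bff^\top\bw$, $\|\bw\|_2^2$, $\bh^\top\bw$, $\bone^\top\bw$ by expectations over $(Y,G,H)$ with $H\sim\normal(0,1)$ independent of $(Y,G)$. Introducing $\rho$ and $\sqrt{1-\rho^2}$ as dual variables for the two terms under the square root, and a scalar $s$ as the dual for $\|\bw\|_2$, recovers precisely the fixed point~\eqref{def:s} and identifies the limit of $\Phi_n^{\mathrm{AO}}$ with $M_*(1)$ via~\eqref{def:Mrho}; here $\rho$ and $r$ parameterize, respectively, the $\btheta_*$-component and the normalized orthogonal component of the primal $\btheta$.

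Running the same analysis for the variant of the LP with norm bound $\|\btheta\|_2\le r_0$, for $r_0\in[1,\infty]$, would give asymptotic LP value $M_*(r_0)$, which is non-decreasing in $r_0$. Under the hypothesis $M_*(\infty)>M_*(1)$ the norm constraint at $r_0=1$ must be active---for otherwise the LP value would be invariant to increasing $r_0$---so $\|\hbtheta\|_2=1$ with high probability, establishing the first assertion. For the asymptotic statement, it suffices to check $(1-\veps)|\kappa|e^{\alpha|\kappa|}/(2\Phi(\kappa))<\delta_{\mathrm{lin}}(\kappa;\varphi)$ for all $\kappa<\underline\kappa(\veps;\varphi)$. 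Under Assumption~\ref{ass:exponential_tail_link}, the ansatz $1-\rho^2\sim 1/\kappa^2$ with $r$ of order one makes $\P(Z_{\rho,r}<0)$ of order $\Phi(\kappa)$ on the correct-label event and of order $e^{-\alpha|\kappa|}\Phi(\kappa)$ on the wrong-label event; standard Gaussian tail estimates would then give both $\Omega_>\cap\{r\le 1\}\neq\emptyset$ and $M_*(\infty)>M_*(1)$ for any $\delta$ up to the claimed bound.

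The main obstacle will be the scalarization step: rigorously identifying the limit of the $n$-dimensional AO with the two-parameter optimum of $M(\rho,r)$ requires uniform-in-$(\rho,r)$ concentration, careful handling of the positive-part nonlinearity, and verification that the minimizer $\bw^\star$ has the conjectured data-separable form. A secondary difficulty is that the standard CGMT setup requires compact constraint sets, so a truncation of $\{\bw\ge\bone/n\}$ followed by a limit is needed; strict convexity of the square root combined with the $\kappa(1-\bone^\top\bw)$ penalty (recall $\kappa<0$) provides the coercivity that justifies this truncation.
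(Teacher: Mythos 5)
Your proposal takes essentially the same approach as the paper: the dual reformulation $\Phi_n=\min_{\bw\ge\bone/n}\|\bY^\top\bw\|_2+\kappa(1-\bone^\top\bw)$ is the Sion-swapped form of the paper's $\bar M_n$, Gordon applied to $\bw^\top\bY^\perp\bu^\perp$ conditionally on $(y_i,g_i)_i$ is the content of Lemma~\ref{lem:Gordon2}, the scalarization to $M(\rho,r)$ with the fixed point $s_*$ is Lemmas~\ref{lem:simpleMrho}--\ref{lem:alpha2}, and the comparison of LP values at norm bound $1$ and $1+\veps'$ (to detect an active constraint) is exactly how Theorem~\ref{thm:Mlimit} is used to conclude. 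The asymptotic argument via the ansatz $1-\rho^2\sim|\kappa|^{-2}$, $r=1$ matches Lemma~\ref{lem:rholbnd} and Proposition~\ref{prop:Mmonotone}; one small slip in your sketch is that at this ansatz the dominant contribution to $\P(Z_{\rho,r}<0)$ comes from the wrong-label tail $YG\lesssim\kappa$, which is of order $e^{-\alpha|\kappa|}\Phi(\kappa)$, while the correct-label event contributes only $O(\Phi(-\kappa^2))$, far smaller than the $\Phi(\kappa)$ you wrote.
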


\subsection{Asymptotic estimation error}
\label{sec:Error}

What is the geometry of the set of empirical risk minimizers? 
In this section we explore one specific aspect of this question, namely what are the possible 
distances between the signal $\btheta_*$ and $\btheta\in\ERM_0(\kappa)$,
and where in that spectrum is the solution $\hbtheta$ found by the linear programming 
algorithm. 

We begin by bounding the interval of values of $\<\btheta, \btheta_*\>$. 
%
\begin{thm}\label{cor:rho_min_max}
Let Assumption \ref{ass:exponential_tail_link} hold.
Define $\rho_{\min},\rho_{\max}$ with $-1\le \rho_{\min}\le \rho_{\max}\le +1$ via
\begin{align}	
\rho_{\min}&:=\sup\big\{\overline{\rho}\in [-1,1]: \delta_{\su} (\kappa, \rho;\varphi) < \delta\, ,\;
\forall \rho\in [-1,\overline\rho]\big\}\, ,\\
\rho_{\max}&:=\inf\big\{\underline{\rho}\in [-1,1]: \delta_{\su} (\kappa, \rho;\varphi) < \delta\, ,\;
\forall \rho\in [\underline\rho,1]\big\}\, .
\end{align}		
(If  $\delta_{\su} (\kappa, \rho;\varphi) < \delta$ for all $\rho\in [-1,1]$, the statement
below is empty and therefore the definition of $\rho_{\min}, \rho_{\max}$ is immaterial.)

Then we have, with probability converging to one as $n,d\to\infty$, $n/d\to\delta$:
\begin{align}
\big\{\<\btheta,\btheta_*\> :\; \btheta\in\ERM_0(\kappa)\big\}\subseteq 
[\rho_{\min},\rho_{\max}]\, .
\end{align}
\end{thm}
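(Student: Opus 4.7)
The plan is to adapt the proof of Theorem~\ref{thm:logistic_upper_bound} into a \emph{sliced} upper bound, in which the optimization over the sphere is restricted to vectors of prescribed correlation with $\btheta_*$, and then to cover the complement of $[\rho_{\min},\rho_{\max}]$ by finitely many such slices. The key observation is that $\delta_{\su}(\kappa,\rho;\varphi)$ in Definition~\ref{def:sig_upper} is precisely what Gordon's comparison inequality yields when the sphere is replaced by the slice $\{\btheta\in\S^{d-1}:\<\btheta,\btheta_*\>=\rho\}$; the overall bound $\delta_{\su}(\kappa;\varphi)=\sup_{\rho}\delta_{\su}(\kappa,\rho;\varphi)$ is obtained by taking the worst slice.

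First, I would establish a \textbf{sliced non-existence lemma}: for every $\rho_0\in[-1,1]$ and every $\eta>0$ such that $\sup_{\rho'\in[\rho_0-\eta,\rho_0+\eta]}\delta_{\su}(\kappa,\rho';\varphi)<\delta$, one has, with high probability,
\begin{equation*}
\big\{\btheta\in\S^{d-1}:\<\btheta,\btheta_*\>\in[\rho_0-\eta,\rho_0+\eta]\big\}\cap\ERM_0(\kappa)=\emptyset.
\end{equation*}
This is obtained by decomposing $\btheta=\rho\btheta_*+\sqrt{1-\rho^2}\,\btheta_\perp$ with $\btheta_\perp\in\S^{d-2}\cap\btheta_*^{\perp}$, writing $\xx_i=G_i\btheta_*+\bg_i^\perp$ with $\bg_i^\perp$ independent of $G_i$ and supported on $\btheta_*^\perp$, and then applying Gordon's inequality to the smooth surrogate of $\max_\btheta\min_i(y_i\<\xx_i,\btheta\>-\kappa)$ \emph{uniformly} over the compact set of allowed $\rho$'s. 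The resulting deterministic maximum reduces exactly to the variational expression appearing in Definition~\ref{def:sig_upper} (evaluated at the worst $\rho'$ in the window), which is strictly negative under the hypothesis. This step is essentially the same Gordon-plus-exponential-reshaping computation used to prove Theorem~\ref{thm:logistic_upper_bound}, but carried out with the additional linear constraint $\<\btheta,\btheta_*\>\in[\rho_0-\eta,\rho_0+\eta]$ and the suprema over $\rho'$ in this window absorbed into the bound.

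Second, I would verify \textbf{continuity} of $\rho\mapsto\delta_{\su}(\kappa,\rho;\varphi)$. From its definition it is an infimum over $c,u>0$ of
\begin{equation*}
\Lambda(\rho,c,u;\delta)=\frac{\sqrt{1-\rho^2}}{c\sqrt{1-\rho^2}+\sqrt{c^2(1-\rho^2)+4}}+\frac{1}{c}\log\frac{c\sqrt{1-\rho^2}+\sqrt{c^2(1-\rho^2)+4}}{2}-\frac{c}{4u}+\frac{\delta}{c}\log\psi_{\kappa,\rho}(-u),
\end{equation*}
and continuity of $\psi_{\kappa,\rho}(-u)$ in $\rho$ follows from dominated convergence. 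A standard argument (the infimum of a jointly continuous function over a parameter that can be localized to a compact set) then gives continuity of $\delta_{\su}(\kappa,\rho;\varphi)$ in $\rho$.

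Third, I would combine everything by a \textbf{compactness and union bound} argument. Fix $\eps>0$ and set $K_\eps:=[-1,\rho_{\min}-\eps]\cup[\rho_{\max}+\eps,1]$. By the definitions of $\rho_{\min},\rho_{\max}$ and continuity from Step~2, the function $\rho\mapsto\delta_{\su}(\kappa,\rho;\varphi)$ is strictly below $\delta$ on the compact set $K_\eps$, hence is bounded away from $\delta$ there by a positive gap. Cover $K_\eps$ by finitely many closed windows $[\rho_{0,k}-\eta_k,\rho_{0,k}+\eta_k]$ on which the sliced lemma from Step~1 applies, and take a union bound. We conclude that with probability tending to $1$, no $\btheta\in\ERM_0(\kappa)$ has $\<\btheta,\btheta_*\>\in K_\eps$, i.e.\ $\{\<\btheta,\btheta_*\>:\btheta\in\ERM_0(\kappa)\}\subseteq[\rho_{\min}-\eps,\rho_{\max}+\eps]$ with high probability; letting $\eps\downarrow 0$ along a sequence (or stating the bound with the arbitrarily small slack that is absorbed in the standard ``with high probability'' interpretation) gives the theorem.

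The main obstacle is Step~1: one must redo the Gordon/exponential-reshaping computation of Theorem~\ref{thm:logistic_upper_bound} under the additional equality-in-window constraint $\<\btheta,\btheta_*\>\in[\rho_0-\eta,\rho_0+\eta]$, and verify that the resulting deterministic bound is exactly $\sup_{\rho'\in[\rho_0-\eta,\rho_0+\eta]}\delta_{\su}(\kappa,\rho';\varphi)$. The delicate points are (i) handling the $Y G$ term, which is not Gaussian, via the standard conditioning trick separating the $G$-direction from the orthogonal Gaussian components, and (ii) justifying the uniform (in $\rho'$) strict negativity needed to beat the $o(1)$ Gordon slack; both are mild extensions of arguments already used in the unrestricted case.
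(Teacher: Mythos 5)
Your overall strategy — restrict to a fixed-correlation slice, apply Gordon with exponential reweighting, and finish with a compactness/covering argument — is the paper's strategy. But there are three technical points where the paper's execution differs, and one of them is a genuine gap in your version.

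First, the paper does \emph{not} re-run Gordon over a window $\rho\in[\rho_0-\eta,\rho_0+\eta]$. It applies Gordon at each \emph{fixed} $\rho$ to bound $\P(\xi_{n,\kappa,\rho}\le\eta)$ by $\exp(nf_\delta(\eta,\rho)+o(n))$, and does the covering at the level of the \emph{random variable}: it shows $|\xi_{n,\kappa,\rho_1}-\xi_{n,\kappa,\rho_2}|$ is Lipschitz in $\rho$ with a random Lipschitz constant controlled by $\|\yy\odot\bG\|_2$ and $\|\ZZ\|_{\op}$, which concentrate. This is substantially lighter than running the whole exponential-Gordon computation with the extra window constraint absorbed into the supremum, and it avoids the issue you flag in your ``main obstacle'' paragraph of having to push suprema over $\rho'$ through Gordon.

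Second, the paper does not prove continuity of $\rho\mapsto\delta_{\su}(\kappa,\rho;\varphi)$ (which would require showing a two-sided inequality for an infimum of infima). Instead it proves \emph{upper semicontinuity} of the rate $f_\delta(\eta,\rho)$ at fixed $\eta>0$ (Lemma~\ref{lem:continuity_of_f_delta}), by showing $L^1$-continuity of the density of $\rho YG+\sqrt{1-\rho^2}W$ in $\rho$ via Scheff\'e. Upper semicontinuity is exactly what the compactness argument needs, since the quantity being controlled is a $\limsup$ of log-probabilities; continuity of $\delta_{\su}$ is neither established nor needed.

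Third, and most concretely: your Step~3 covers $K_\eps=[-1,\rho_{\min}-\eps]\cup[\rho_{\max}+\eps,1]$ and ``lets $\eps\downarrow 0$.'' This does not yield the stated conclusion. For each fixed $\eps>0$ you get $\lim_n\P(\{\<\btheta,\btheta_*\>\}\subseteq[\rho_{\min}-\eps,\rho_{\max}+\eps])=1$, but this family of statements does not imply $\lim_n\P(\{\<\btheta,\btheta_*\>\}\subseteq[\rho_{\min},\rho_{\max}])=1$ — a sequence of configurations landing at $\rho_{\min}-1/n$, say, would satisfy every $\eps$-version while failing the target. The paper avoids this by covering $J=[-1,\rho_{\min}]\cup[\rho_{\max},1]$ directly (asserting $\delta>\delta_{\su}(\kappa,\rho;\varphi)$ holds on all of $J$, including the endpoints). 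If you want to follow your route you must argue that the strict inequality extends to $\rho_{\min},\rho_{\max}$ themselves (or more precisely that the required $\eta>0$ slack in $f_\delta$ is available there) and cover $J$ without the $\eps$ buffer; ``absorbing the slack into the with-high-probability interpretation'' is not a valid move.
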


In words, all solutions $\btheta\in\ERM_0(\kappa)$ lie in the annulus 
$\<\btheta,\btheta_*\>\in [\rho_{\min},\rho_{\max}]$. The next theorem
characterizes $\delta_{\su}$ for large negative $\kappa$. (Recall that the notation
 $\breve{o}_{\kappa} (1)$ was introduced in Section \ref{sec:notation}.)
\begin{thm}\label{thm:asymptotic_upper_bound}
Let Assumption \ref{ass:exponential_tail_link} hold.
As $\kappa \to - \infty$, the following holds uniformly over $\rho \in [-1, 1]$:
\begin{equation*}
    \delta_{\su} (\kappa, \rho;\varphi) = \frac{(1 + \breve{o}_{\kappa} (1)) \mathcal{D} \left( \kappa^2 (1 - \rho^2) \right)}{\P \left( \rho YG + \sqrt{1 - \rho^2} W < \kappa \right)},
\end{equation*}
where $\mathcal{D}: \R_{\ge 0} \to \R_{\ge 0}$ is  defined by:
\begin{equation*}
    \mathcal{D} (a) = \inf \left\{ b \ge 0: \exists c > 0, \frac{1}{c + \sqrt{c^2 + 4}} + \frac{1}{c} \log \frac{c + \sqrt{c^2 + 4}}{2} < \inf_{t > 0} \left\{ \frac{c}{4 t a} +  \frac{b}{c} \int_{0}^{+\infty} 2 t s \exp \left( -t s^2 - s \right) \d s \right\} \right\}.
\end{equation*}
\end{thm}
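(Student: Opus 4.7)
\emph{Proof proposal.} Set $a := \kappa^2(1-\rho^2)$ and $p := \P(\rho YG + \sqrt{1-\rho^2}W<\kappa)$. The plan is a two-step reduction: a change of variables that puts the defining inequality of $\delta_{\su}(\kappa,\rho;\varphi)$ into the form used to define $\mathcal{D}(a)$, followed by the derivation of an asymptotic identity for $\log\psi_{\kappa,\rho}$. Substituting $c' = c\sqrt{1-\rho^2}$ and $u = t\kappa^2$ in Definition~\ref{def:sig_upper} and dividing both sides of the defining inequality by $\sqrt{1-\rho^2}$, a direct calculation shows that the condition becomes
\begin{equation*}
\frac{1}{c' + \sqrt{(c')^2 + 4}} + \frac{1}{c'}\log\frac{c' + \sqrt{(c')^2 + 4}}{2}
< \inf_{t > 0}\left\{\frac{c'}{4ta} - \frac{\delta}{c'}\log\psi_{\kappa,\rho}(-t\kappa^2)\right\}.
\end{equation*}
Under the identification $b := \delta p$, this coincides exactly with the condition defining $\mathcal{D}(a)$, provided we can establish the key asymptotic
\begin{equation}\label{eq:keyplan}
-\log\psi_{\kappa,\rho}(-t\kappa^2) = (1 + \breve{o}_\kappa(1))\, p \int_0^\infty 2ts\, e^{-ts^2 - s}\,ds
\end{equation}
uniformly in $\rho\in[-1,1]$ and $t$ in any compact subset of $(0,\infty)$. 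Since both conditions are monotone in the parameter ($\delta$, respectively $b$), taking infima then yields $\delta_{\su}(\kappa,\rho;\varphi) = (1+\breve o_\kappa(1))\mathcal{D}(a)/p$.

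The bulk of the proof is \eqref{eq:keyplan}. Let $T := \rho YG + \sqrt{1-\rho^2}W$, so that $\psi_{\kappa,\rho}(-u) = 1 - \E[(1-e^{-u(\kappa-T)^2})\bone\{T<\kappa\}]$. Because $\psi_{\kappa,\rho}(-u)\in[1-p,1]$ and $p\to 0$ uniformly in $\rho$ as $\kappa\to-\infty$, Taylor expansion gives $-\log\psi_{\kappa,\rho}(-u) = (1+O(p))(1-\psi_{\kappa,\rho}(-u))$, and after the substitution $v = |\kappa|(\kappa-T)$ we obtain
\begin{equation*}
1 - \psi_{\kappa,\rho}(-t\kappa^2) = p\int_0^\infty (1 - e^{-tv^2})\, q_{\kappa,\rho}(v)\,dv,
\end{equation*}
where $q_{\kappa,\rho}$ is the density of $|\kappa|(\kappa-T)$ conditioned on $T<\kappa$. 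Integration by parts on $e^{-tv^2-v}$ yields $\int_0^\infty(1-e^{-tv^2})e^{-v}\,dv = \int_0^\infty 2tv\, e^{-tv^2-v}\,dv$, so it suffices to show $q_{\kappa,\rho}(v)\to e^{-v}$ with uniformly dominated tails as $\kappa\to-\infty$, uniformly over $\rho\in[-1,1]$. The convergence is established by sharp Mills-ratio estimates: Assumption~\ref{ass:exponential_tail_link} implies $p_{YG}(g) = \phi(g)[\varphi(g) + 1 - \varphi(-g)]$ with $p_{YG}(g)\sim 2\phi(g)e^{\alpha g}$ as $g\to-\infty$ and $p_{YG}(g)\sim 2\phi(g)$ as $g\to+\infty$, so a saddle-point analysis on the convolution $p_T(\tau) = (1/\sqrt{1-\rho^2})\int p_{YG}(g)\phi((\tau-\rho g)/\sqrt{1-\rho^2})\,dg$ gives $p_T(\tau) = \exp(-\tau^2/2 + \beta(\rho)\tau + O(1))$ as $\tau\to-\infty$ for an explicit bounded $\beta(\rho)$ (its form depending on the sign of $\rho$, i.e., on which tail of $p_{YG}$ dominates). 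Consequently the log-derivative of $p_T$ at $\tau = \kappa$ equals $|\kappa|(1 + o_\kappa(1))$ uniformly in $\rho$, yielding both the Mills ratio $|\kappa|p/p_T(\kappa)\to 1$ and the ratio $p_T(\kappa - v/|\kappa|)/p_T(\kappa)\to e^{-v}$; combining gives $q_{\kappa,\rho}(v) = p_T(\kappa-v/|\kappa|)/(|\kappa|p)\to e^{-v}$, as required.

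The main obstacle is the uniformity in $\rho$ near the endpoints $\rho = \pm 1$, where the convolution structure degenerates to $T = \pm YG$ and the saddle-point analysis fails. I would address this by splitting the parameter range into a bulk region $|\rho|\le 1-\eta$ (treated by the saddle-point argument above) and a boundary region $|\rho|\ge 1-\eta$ (treated by a direct asymptotic on $p_{YG}$, using the explicit exponential-Gaussian tails from Assumption~\ref{ass:exponential_tail_link}), and then sending $\eta\downarrow 0$ after $\kappa\to-\infty$. Uniformity in $t$ follows from continuity: $t\mapsto\int_0^\infty 2ts e^{-ts^2-s}\,ds$ is continuous and strictly increasing from $0$ to $1$, so for $a$ in any bounded range the relevant infimum is attained on a compact $t$-interval, and a standard perturbation argument converts the pointwise \eqref{eq:keyplan} into uniform convergence of the infima. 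Finally, the conversion back to the $\delta$-$b$ relationship requires a monotonicity argument: if $b>\mathcal{D}(a)$ is feasible, then for any $\epsilon>0$ the inflated right-hand side of \eqref{eq:keyplan} at $\delta=(1+\epsilon)b/p$ eventually dominates, giving $\delta_{\su}(\kappa,\rho;\varphi)\le (1+\breve o_\kappa(1))b/p$; the reverse bound is symmetric.
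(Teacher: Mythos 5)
Your overall plan mirrors the paper's proof: rescale $c \mapsto c\sqrt{1-\rho^2}$ and $u = t\kappa^2$ in Definition~\ref{def:sig_upper}, reduce $-\log\psi_{\kappa,\rho}(-t\kappa^2)$ to $(1+\breve o_\kappa(1))(1-\psi_{\kappa,\rho}(-t\kappa^2))$ via $p = \breve o_\kappa(1)$, integrate by parts, and push the conditional law of $|\kappa|(\kappa-T)$ given $T<\kappa$ toward $\mathrm{Exp}(1)$. Your repackaging of the integration-by-parts step through the conditional density $q_{\kappa,\rho}$ is an equivalent way of writing what the paper computes, and your Mills-ratio/saddle-point derivation of $p_T(\kappa-v/|\kappa|)/p_T(\kappa)\to e^{-v}$ is in spirit the content of the paper's Lemma~\ref{lem:tail}.

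However, the handling of uniformity in $t$ has a genuine gap. You argue that the infimum over $t$ in the definition of $\mathcal{D}(a)$ "is attained on a compact $t$-interval" because ``$a$ is in any bounded range,'' and then invoke a perturbation argument. But $a = \kappa^2(1-\rho^2)$ ranges over all of $[0,\kappa^2]$, which is unbounded as $\kappa\to-\infty$; in particular for any fixed $\rho\neq\pm 1$ you have $a\to\infty$. There is therefore no fixed compact $t$-interval on which the infimum is uniformly attained, and establishing \eqref{eq:keyplan} only on compact $t$-intervals does not directly yield matching of the infima. The paper resolves this by proving a ratio convergence
\begin{equation*}
\int_0^\infty 2ts\,e^{-ts^2}\,\tfrac{\kappa^2}{\kappa^2+s}\,e^{-s^2/(2\kappa^2)-s-\alpha\rho_+ s/|\kappa|}\,\d s
= (1+\breve o_\kappa(1))\int_0^\infty 2ts\,e^{-ts^2-s}\,\d s
\end{equation*}
\emph{uniformly over all $t>0$} and $\rho\in[-1,1]$, splitting into the cases $0<t\le 1$ (where one compares to the $t=1$ integral and uses $e^{-ts^2}\ge e^{-s^2}$) and $t>1$ (where one changes variables $x=\sqrt{t}\,s$ and again applies dominated convergence). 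Without something of this sort --- a bound valid over \emph{all} $t>0$ rather than over compacta --- your final ``perturbation argument'' does not close the proof. A secondary remark: the ``$\eta\downarrow 0$ after $\kappa\to-\infty$'' scheme for handling $\rho$ near $\pm 1$ cannot, as stated, produce a statement that is uniform in $\rho\in[-1,1]$ for a single fixed $\kappa$; the paper instead works with a fixed $\eta_0$ and supplies a separate (bounded-above-and-below) estimate $a_{\rho,t}$ in the intermediate regime, baking the uniformity into Lemma~\ref{lem:tail} itself.
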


\begin{figure}[t!]
\centering
\includegraphics[width=30em]{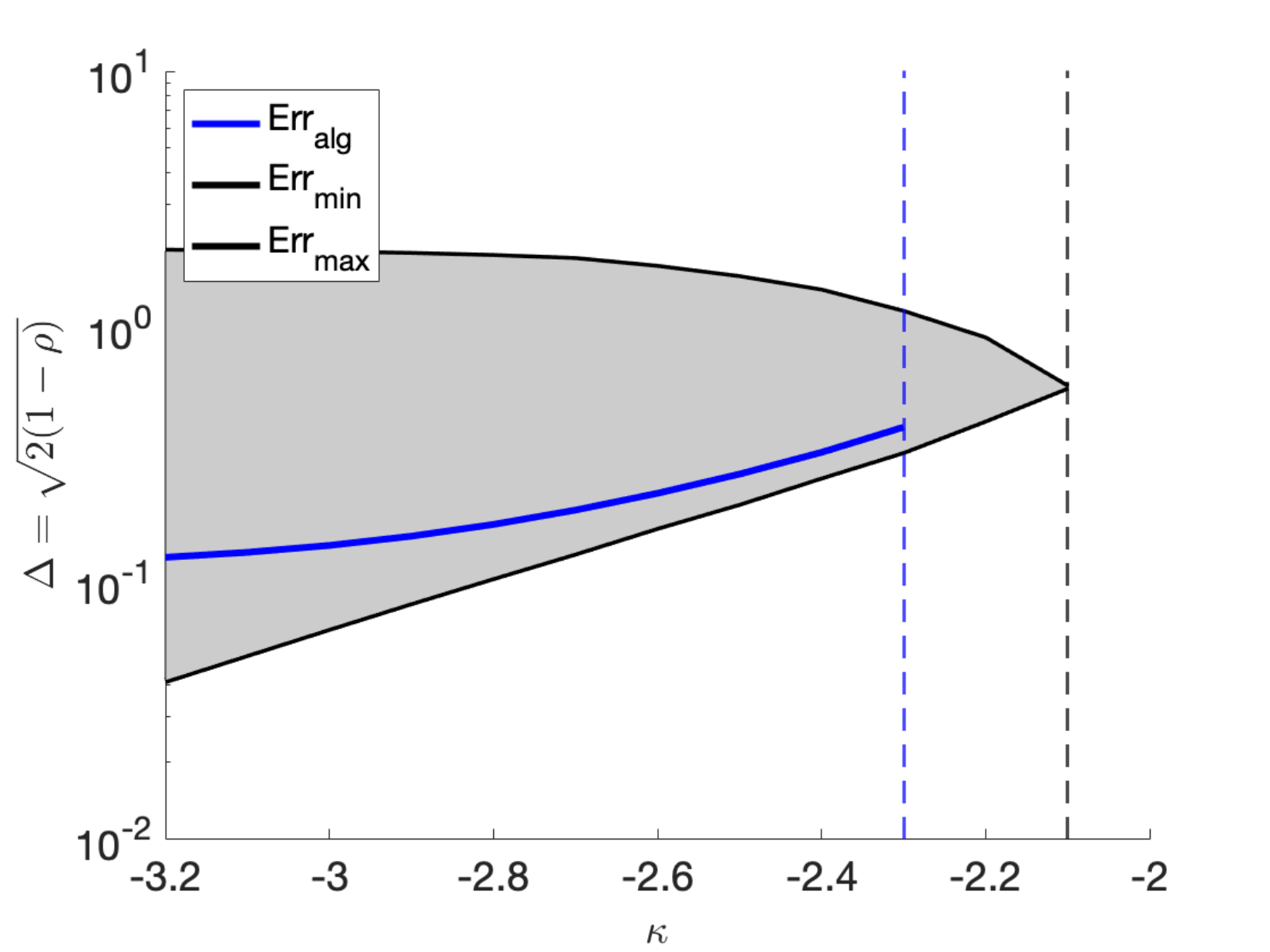}
\caption{Asymptotic predictions of the estimation error among $k$-margin solutions
$\btheta\in\ERM_0(\kappa)$, as a function of $\kappa$ for $\delta = 1.5^{15}$.
Here we use the logistic link function $\varphi(t)=(1+e^{-t})^{-1}$.
The estimation error of any $\kappa$-margin solution lies in the gray region.
The linear programming algorithm of Section \ref{sec:AlgoNRLabels} achieves 
estimation error reported as the blue curve (namely $\sqrt{2( 1 - \rho^*)}$ for $\rho^*$ introduced in Theorem~\ref{thm:err} (a)). Vertical dashed lines correspond to 
$\kappa_{\slin}(\delta) :=\sup\{\kappa:\; \delta<\delta_{\slin}(\kappa; \varphi)\}$
(left vertical line) and $\kappa_{\su}(\delta) :=\sup\{\kappa:\; \delta<\delta_{\su}(\kappa; \varphi)\}$
(right vertical line).
}\label{fig:Error}
\end{figure}
The next theorem provides a simpler expression for $\rho_{\max}$ below the 
linear programing threshold $\delta_{\slin}(\kappa;\varphi)$ which we introduced in
the previous  section.
\begin{thm}\label{thm:best_error}
Let Assumption \ref{ass:exponential_tail_link} hold.
 If $\delta < \delta_{\slin}(\kappa;\varphi)$, then with high probability $\ERM_0(\kappa)$ is not empty, and
\begin{equation}\label{eq:best_error_alg}
	\rho_{\max}\le  1- (1 + \breve{o}_{\kappa}(1)) \frac{\delta}{2} \E \left[ \left( \kappa - YG \right)_+^2 \right]
	\, .
\end{equation}
\end{thm}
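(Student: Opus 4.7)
The plan is to unpack Definition~\ref{def:sig_upper} of $\delta_{\su}(\kappa,\rho;\varphi)$ and exhibit a small-$c$ certificate that witnesses $\delta_{\su}(\kappa,\rho;\varphi) < \delta$ for $\rho$ sufficiently close to one, which by the definition of $\rho_{\max}$ delivers the claim. Non-emptiness of $\ERM_0(\kappa)$ is already supplied by Theorem~\ref{thm:signal}: since $\delta < \delta_{\slin}(\kappa;\varphi)$, the linear programming algorithm of Section~\ref{sec:AlgoNRLabels} outputs a $\kappa$-margin solution with high probability. Thus only the bound on $\rho_{\max}$ remains.

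Set $s := \sqrt{1-\rho^2}$ and $E_\rho := \E[(\kappa - \rho YG - sW)_+^2]$, and recall that $\delta_{\su}(\kappa,\rho;\varphi) < \delta$ iff there exists $c > 0$ with $A(c,\rho) < B(c,\delta,\kappa,\rho)$, where $A$ is the Gordon entropy term in Definition~\ref{def:sig_upper} and
\[
B(c,\delta,\kappa,\rho) \;:=\; \inf_{u > 0}\left\{\frac{c}{4u} - \frac{\delta}{c}\log\psi_{\kappa,\rho}(-u)\right\}.
\]
First, rationalizing and Taylor-expanding in the double limit $c,s \to 0$ gives $A(c,\rho) = s\{1 + O(cs) + O(s^2)\}$. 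Second, the map $u \mapsto -\log\psi_{\kappa,\rho}(-u)$ is concave with slope $E_\rho$ at $u = 0$; the cumulant expansion $-\log\psi_{\kappa,\rho}(-u) = u E_\rho - (u^2/2)\var((\kappa - \rho YG - sW)_+^2) + O(u^3)$ substituted into $B$ is minimized at $u^\ast \approx c/(2\sqrt{\delta E_\rho})$, yielding
\[
B(c,\delta,\kappa,\rho) \;=\; \sqrt{\delta E_\rho}\,\bigl(1 - \breve{o}_\kappa(1)\bigr).
\]
Taking $c$ of the same order as $s$ ensures the concavity correction $c\,\var((\kappa-\rho YG - sW)_+^2)/(8 E_\rho)$ is a $\breve{o}_\kappa(1)$ multiple of $\sqrt{\delta E_\rho}$: indeed $\var((\kappa - Z)_+^2)/E_\rho^2 \sim 1/p$ with $p := \P(YG < \kappa)$, while $\sqrt{\delta E_\rho} \sim 1/|\kappa|$ at the linear-programming threshold, so the relative correction at $u^\ast$ is of order $1/\kappa^2$.

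Combining the two estimates, the inequality $A < B$ reduces to $s(1 + \breve{o}_\kappa(1)) < \sqrt{\delta E_\rho}(1 + \breve{o}_\kappa(1))$, i.e., $1 - \rho^2 < \delta E_\rho\,(1 + \breve{o}_\kappa(1))$. The expansion $\sqrt{1 - x} = 1 - x/2 + O(x^2)$ then yields $\rho > 1 - (1 + \breve{o}_\kappa(1))(\delta/2) E_\rho$. The proof is finished by replacing $E_\rho$ with $\E[(\kappa - YG)_+^2]$ at the cost of an additional factor $1 + \breve{o}_\kappa(1)$: by Assumption~\ref{ass:exponential_tail_link} the density $p_{YG}$ near $\kappa$ decays exponentially at rate $|\kappa|+\alpha$, and convolution with the small Gaussian $sW$ (with $s = O(1/|\kappa|)$ in this regime) perturbs the near-$\kappa$ integral $\E[(\kappa - \rho YG - sW)_+^2]$ only by a lower-order factor. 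The main technical obstacle is making the three $\breve{o}_\kappa(1)$ estimates above (the Gordon-entropy expansion, the concavity correction in the lower bound on $B$, and the replacement of $E_\rho$ by $\E[(\kappa - YG)_+^2]$) quantitative and uniform over the relevant range of $\rho$; each reduces to a sharp tail computation combining the standard Gaussian asymptotic $\Phi(\kappa) \sim \phi(\kappa)/|\kappa|$ with Assumption~\ref{ass:exponential_tail_link}.
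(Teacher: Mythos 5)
Your route is genuinely different from the paper's: you expand Definition~\ref{def:sig_upper} directly, whereas the paper first establishes the abstraction $\delta_{\su}(\kappa,\rho;\varphi) = (1+\breve{o}_{\kappa}(1))\,\mathcal{D}(\kappa^2(1-\rho^2))/\P(\rho YG + \sqrt{1-\rho^2}\,W < \kappa)$ in Theorem~\ref{thm:asymptotic_upper_bound}, computes $\mathcal{D}(a) = a/2$ for $a \le 2$ (Proposition~\ref{prop:properties_of_d(a)}(b)), and then reads the claim off the formula, using $\delta < \delta_{\slin}$ to verify $\kappa^2(1-\rho^2)$ falls in the regime $a \le 2$. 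Your calculations of the Gordon-entropy term $A(c,\rho)=s(1-cs/4+O(c^2s^2))$, of the near-threshold cumulant scales (e.g.\ $\var/E_\rho^2 \sim 1/p$, $\sqrt{\delta E_\rho} \sim 1/|\kappa|$), and of the final passage from $E_\rho$ to $\E[(\kappa - YG)_+^2]$ all look right in direction and order.

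However, there is a real gap in the $B$-estimate, and it is structural rather than a routine tail computation. To certify $\delta_{\su}(\kappa,\rho;\varphi) < \delta$ you need a \emph{lower} bound on $B(c,\delta,\kappa,\rho) = \inf_{u>0}\{\tfrac{c}{4u} - \tfrac{\delta}{c}\log\psi_{\kappa,\rho}(-u)\}$ that exceeds $A$. Substituting the local quadratic cumulant expansion for $-\log\psi_{\kappa,\rho}(-u)$ and evaluating at $u^\ast$ only gives an \emph{upper} bound on $B$ (an infimum is always $\le$ its value at any one point), and the substituted quadratic $uE_\rho - u^2\var/2$ is not itself a lower bound for $-\log\psi_{\kappa,\rho}(-u)$ (indeed its global infimum over $u>0$ is $-\infty$, so ``minimizing'' the quadratic surrogate is not even well defined without restricting $u$). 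What is actually needed is a global argument over all $u>0$, e.g.\ using concavity through the origin: $-\log\psi(-u)/u$ is decreasing, so for $u \le u_0$ one can replace $E_\rho$ by the secant slope $m(u_0) := -\log\psi(-u_0)/u_0$ and apply AM--GM, while for $u > u_0$ monotonicity gives the infimand $\ge (\delta u_0/c) m(u_0)$; choosing $u_0$ self-consistently so the two branches match produces $B \ge \sqrt{\delta\, m(u_0)}$, and then one must show $m(u_0) = (1-\breve{o}_\kappa(1)) E_\rho$ uniformly in $\rho$. This is precisely the work that is absorbed in the paper into the proof of Proposition~\ref{prop:properties_of_d(a)}(b), which uses a Mills-ratio substitution and a monotonicity argument; it is not among the three ``sharp tail computations'' you list at the end, and your current sketch does not provide it.
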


We finally pass to studying the behavior of the linear programming algorithm
introduced in the previous section. The next result provides a simple deterministic
approximation for $\<\hbtheta,\btheta_*\>$ which is accurate for large negative $\kappa$.
\begin{thm}\label{thm:err}
Let Assumption~\ref{ass:exponential_tail_link} hold. 
We recall the definition of $M(\rho,r)$ in \eqref{def:Mrho} and denote $m = \E[YG]$.
Further recall that $\hbtheta$ denotes the $\kappa$-margin solution found by the linear programming algorithm.
Then, the following hold:
\begin{enumerate}
\item[(a)]{If $(\rho^*, r^*)$ is the unique maximizer of $M(\rho,r)$ over $[-1,1] \times [0,1]$, then $\langle \hat \btheta, \btheta_* \rangle \xrightarrow{p} \rho^*$ as $n \to \infty$.
}
\item[(b)]
 {Let  $\delta = \delta(\kappa)$ be such that $\lim_{\kappa \to -\infty} \delta(\kappa) = \infty$ and 
 condition \eqref{cond:delta} holds.  
 Then, there exists a sufficiently negative $\underline \kappa = \underline\kappa(\alpha, m, \veps)>0$
  such that the following holds for all $\kappa < \underline \kappa$:
\begin{align*}
&\lim_{n \to \infty} \P \Big( \Big| \<\hbtheta,\btheta_*\> - (1-\cE(\kappa)) \Big| \le \breve{o}_{\kappa}(1) \cdot \cE(\kappa) \Big) = 1, \\
&\cE(\kappa) :=  \frac{1}{2 m^2 \delta} + \frac{\delta}{\delta_0(\kappa)} \, ,
\;\;\;\;\;\;\; \delta_0(\kappa):= \frac{|\kappa|^2e^{\alpha|\kappa|}}{2C_{\tail}\Phi(\kappa)}\, .
\end{align*}
}
\end{enumerate}
\end{thm}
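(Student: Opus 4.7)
The plan is to apply the Convex Gaussian Min-Max Theorem (CGMT) to reduce the random LP \eqref{opt:convex-2} to a deterministic two-parameter optimization in $(\rho, r)$ whose maximizer governs $\<\hbtheta, \btheta_*\>$, and then to expand this maximizer asymptotically for part (b). First, parametrize $\btheta = \rho \btheta_* + \sqrt{1-\rho^2}\, r\, \bu$ with $\bu$ a unit vector in $\btheta_*^\perp$, so that $\rho = \<\btheta, \btheta_*\> \in [-1, 1]$ and the norm constraint $\|\btheta\|_2 \le 1$ is equivalent to $r \in [0, 1]$. Writing $G_i = \<\xx_i, \btheta_*\>$ and $\xx_i^\perp = \xx_i - G_i \btheta_*$, the constraints become $y_i \<\xx_i^\perp, \bu\> \ge (\kappa - \rho y_i G_i)/(\sqrt{1-\rho^2}\, r)$, while the objective decomposes as $\<\vv, \btheta\> = \rho v_\parallel + \sqrt{1-\rho^2}\, r\, \<\vv_\perp, \bu\>$, with $v_\parallel = n^{-1}\sum_i y_i G_i \to m$ and $\vv_\perp = n^{-1}\sum_i y_i \xx_i^\perp$. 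Crucially, conditional on $(\by, \bG)$, the random matrix $[y_i \xx_i^\perp]_{i\le n}$ has i.i.d.\ standard Gaussian entries in $\btheta_*^\perp$, exactly the setup for CGMT.

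Applying CGMT to the inner problem over $\bu$ after introducing Lagrange multipliers for the $n$ inequality constraints (in parallel with the proof of Theorem~\ref{thm:algpure}), the problem scalarizes: the stationarity condition in the scalar dual parameter produces exactly \eqref{def:s}, defining $s_*(\rho, r)$, while $\Omega_\ge$ is precisely the feasibility region where $s_*(\rho, r) \ge 0$ exists, and the asymptotic optimal value equals $M(\rho, r)$ as in \eqref{def:Mrho}. Under the uniqueness of $(\rho^*, r^*)$ assumed in part (a), the standard CGMT consequences transfer argmax convergence from the auxiliary to the primal problem, yielding $\<\hbtheta, \btheta_*\> \xrightarrow{p} \rho^*$. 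For part (b), under \eqref{cond:delta} I will first show that the maximizer has $r^* = 1$ (so $\|\hbtheta\|_2 \to 1$ and $\hbtheta$ is a bona fide $\kappa$-margin solution); the remaining task is to analyze $\rho^*$ as $\kappa \to -\infty$. Writing $\rho = 1 - \eta$ and expanding $Z_{\rho, 1}$, $s_*(\rho, 1)$, and $M(\rho, 1)$ using Assumption~\ref{ass:exponential_tail_link} for the tail of $\varphi$ together with the Mills-ratio expansion of $\Phi(\kappa)$, the first-order condition $\partial_\rho M(\rho^*, 1) = 0$ produces a scalar equation whose leading balance splits $\eta^* = \cE(\kappa)$ into two contributions: the first, $1/(2m^2\delta)$, comes from the linear parallel gain $\rho m$ balanced against the quadratic penalty in $\eta$ induced by $1 - \rho^2 \sim 2\eta$, and reflects the statistical error of $\vv$ as an estimator of $m\btheta_*$; the second, $\delta/\delta_0(\kappa)$, comes from the tail of the ramp $(Z_{\rho, 1} + s_*)_+$ where its argument falls below zero, which becomes non-negligible as $\delta$ nears $\delta_0(\kappa)$.

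The main obstacle will be the asymptotic analysis in part (b). The two components of $\cE(\kappa)$ arise from qualitatively different mechanisms (statistical noise versus constraint-induced pressure), and identifying them requires the correct leading balance in the joint limit $\eta \to 0$, $\kappa \to -\infty$, $\delta \to \infty$; one must also verify uniqueness of the relevant small root of $\partial_\rho M(\rho, 1) = 0$ in the window of interest so that $\rho^*$ is correctly pinned down, and propagate errors at the relative (not merely additive) $\breve{o}_\kappa(1)$ scale throughout. The CGMT reduction itself is technical but conceptually a direct parallel to Theorem~\ref{thm:algpure}; its main additional subtlety is obtaining uniform-in-$(\rho, r)$ convergence of the primal value to $M(\rho, r)$ over compact subsets of $\Omega_\ge \cap ([-1, 1] \times [0, 1])$, which is what allows pointwise value convergence to upgrade to convergence of the argmax.
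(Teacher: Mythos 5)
Your high-level strategy matches the paper's. Part (a) is indeed proved by the Gordon/CGMT scalarization in the parametrization $\btheta = \rho\btheta_* + \sqrt{1-\rho^2}\,r\,\bu$; the paper packages this as Theorem~\ref{thm:Mlimit}, whose conclusion delivers uniform convergence over compact subsets of $\Omega_{\ge}$ and transfers argmax convergence exactly as you describe. For part (b) the paper also first establishes $r^*=1$ (via Proposition~\ref{prop:Mmonotone}, which shows $\partial_r \Psi|_{A=A^*} = (1-\rho^2)r[\delta^{-1} - \P(s_*+Z<0)] > 0$), and then extracts $\rho^*$ from the quantity you call the first-order balance.

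Where the proof differs in implementation, and where your plan may run into friction, is the extraction of $\rho^*$ at the relative $\breve{o}_\kappa(1)\cdot\cE(\kappa)$ precision. You propose to differentiate $M(\rho,1)$ directly and solve $\partial_\rho M(\rho^*,1)=0$. The paper instead sandwiches $M(\rho,1)$ between two closed-form strictly concave functions $M^1_{\pm}(\rho) = \rho m + \sqrt{(1-\rho^2)\delta^{-1} - (1\pm\alpha_\kappa)\frac{2}{|\kappa|^2}q(\kappa)}$ (definitions~\eqref{def:M0}--\eqref{def:M1}, derived from the uniform approximation $s_* \approx \sqrt{(1-\rho^2)\delta^{-1} - \E[Z^2;Z<0]}$ in Lemma~\ref{lem:sapprox} and the tail estimate $\E[Z^2;Z<0] = (1+\breve{o}_\kappa(1))\frac{2}{|\kappa|^2}q(\kappa)$). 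The maximizer $\rho_1$ of $M^1_\pm$ is explicit from $\d M^1_\pm/\d\rho=0$, and the distance $|\rho^*-\rho_1|$ is controlled via the strong concavity lower bound on $-\d^2 M^1_\pm/\d\rho^2$ together with a function-value gap bound (Lemmas~\ref{lem:residual2new} and~\ref{lem:residual2}). This machinery is what converts an error of order $\frac{\sqrt{q(\kappa)}}{|\kappa|}$ in function value into a multiplicative $\breve{o}_\kappa(1)$ error in $\rho^*$. Differentiating $M$ directly would require implicit differentiation of $s_*(\rho,1)$ through equation~\eqref{def:s}, plus control of $\partial_\rho\E[(Z+s_*)_-]$ to the same relative scale, and then a separate argument that the small root you find is the global maximizer -- doable in principle, but more delicate and not how the paper proceeds. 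Your proposal correctly flags all of these concerns in its last paragraph, so I would not call this a gap; it is a route that trades a cleaner conceptual derivation for substantially harder error bookkeeping, and the paper's sandwich technique is specifically designed to avoid that.

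One small caveat on the plan to "first show $r^*=1$": the feasibility certificate in the paper works through $\Omega_>$, $\Omega_=$, $\Omega_<$ and a comparison $M_*(1+\veps') > M_*(1)$ (the definition of $\delta_{\mathrm{lin}}$ in~\eqref{def:deltaalg}), not simply by monotonicity of $M$ on $[0,1]$; the monotonicity of $M(\rho,\cdot)$ on a slightly enlarged interval $[0,1+\veps']$, together with Lemma~\ref{lem:rholbnd}'s lower bound $\rho^* \ge 1-(1+\veps/2)|\kappa|^{-2}$, is what pins $r^*=1$ at the boundary. Make sure any direct argument you use respects that structure, since $\|\hbtheta\|=1$ (rather than $<1$) is itself a consequence of comparing the two boundary values.
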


Figure \ref{fig:Error} reports the predictions for the $\ell_2$ estimation error
$\Delta:=\|\hbtheta-\btheta_*\|_2$. The relation between this error and the alignment 
$\rho=\<\hbtheta,\btheta_*\>$ is simply given by
\begin{align*}
\Delta=\sqrt{2(1-\rho)}\, .
\end{align*}
It might be useful to point out a couple of interesting features of Figure \ref{fig:Error}.
\begin{enumerate}
\item We can interpret $\kappa$ as tuning how constrained or overparametrized
is the model: as $\kappa$ becomes more negative, the problem becomes less constrained
or more overparametrized. We observe that ---somewhat counterintuitively--- 
the estimation error decreases as the problem
becomes less constrained.
\item The solutions selected by the linear programming 
algorithm appear to be close to the lower edge of the band of possible estimation errors
for $\btheta\in\ERM_0(\kappa)$. Further study is warranted  to 
understand precisely how atypical these solutions are.
\end{enumerate}

\subsection{Polytope geometry}
\label{sec:Polytope}

As anticipated in Section \ref{sec:ConnectionsPolytope}, our results have implications on the
geometry of random polytopes, which we spell out explicitly.
\begin{cor}\label{cor:Radius}
Let $\ZZ := -\yy\odot \XX\in\reals^{n\times d}$ be the matrix with rows
$-y_i\xx_i^\top$, and denote by $\radius(\Polyt_{\ZZ})$ the radius of the  polytope 
$\Polyt_{\ZZ}:=\{\btheta:\, \ZZ\btheta\le \bfone\}$, as defined in Eq.~\eqref{eq:RadiusDef}.
Then we have, for any $\veps > 0$, assuming $n,d\to\infty$, $n/d\to\delta$, 
\begin{align}
&\lim_{n\to\infty}\prob_{n,d}(\radius(\Polyt_{\ZZ})\le \rho_{+\veps}(\delta)\big) = 1\, ,\\
&\liminf_{n\to\infty}\prob_{n,d}(\radius(\Polyt_{\ZZ})\ge \rho_{-\veps}(\delta)\big) >0\, ,\\
&\rho_{\pm\eps}(\delta):= \frac{1}{\sqrt{2\log\delta}} +\frac{\alpha\pm \eps}{2\log \delta}\, .
\end{align}
(Here it is understood that $\alpha=0$ in the case of pure noise labels.)
\end{cor}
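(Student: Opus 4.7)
The plan is to translate the corollary directly into the existence/nonexistence results for $\kappa$-margin classifiers (Theorems~\ref{thm:pure_noise_lower_bound}--\ref{thm:pure_noise_upper_bound} in the pure noise case, and Theorems~\ref{thm:logistic_lower_bound}--\ref{thm:logistic_upper_bound} in the linear signal case) via the elementary identity
\begin{equation*}
\radius(\Polyt_{\ZZ}) \;=\; -\frac{1}{\kappa_*(\XX,\yy)} \qquad \text{where } \ZZ=-\yy\odot\XX,
\end{equation*}
already derived in Section~\ref{sec:ConnectionsPolytope}. Concretely, for any $\kappa<0$ the event $\{\radius(\Polyt_{\ZZ})\ge 1/|\kappa|\}$ coincides with $\{\ERM_0(\kappa)\neq\emptyset\}$, because a unit-norm $\btheta$ achieving margin $\ge \kappa$ yields the feasible point $\btheta/|\kappa|\in\Polyt_{\ZZ}$ of norm $1/|\kappa|$, and conversely any $\bxi\in\Polyt_{\ZZ}$ of norm $r$ gives $\btheta=\bxi/r\in\S^{d-1}$ with margin $\ge -1/r$. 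So the corollary will follow once we check that, for every fixed $\veps>0$ and for $\delta$ sufficiently large,
\begin{equation*}
\delta \;>\; \delta_{\su}\!\bigl(-1/\rho_{+\veps}(\delta);\varphi\bigr) \quad\text{and}\quad \delta\;<\;\delta_{\sl}\!\bigl(-1/\rho_{-\veps}(\delta);\varphi\bigr).
\end{equation*}

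First I would set $\kappa_\pm:=-1/\rho_{\pm\veps}(\delta)$ and compute an asymptotic expansion of $|\kappa_\pm|$. Using $\rho_{\pm\veps}(\delta)=1/\sqrt{2\log\delta}+(\alpha\pm\veps)/(2\log\delta)$, a direct Taylor expansion gives
\begin{equation*}
|\kappa_\pm| \;=\; \sqrt{2\log\delta}\;-\;(\alpha\pm\veps)\;+\;O\!\bigl((\log\delta)^{-1/2}\bigr) \qquad (\delta\to\infty),
\end{equation*}
so that $\kappa_\pm\to-\infty$. This is exactly the regime where the asymptotic estimates in Theorems~\ref{thm:logistic_lower_bound}--\ref{thm:logistic_upper_bound} apply (with $\alpha=0$ under pure noise). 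Plugging the expansion into $\delta_{\su}(\kappa;\varphi) = (1+o_\kappa(1))e^{\alpha|\kappa|}\log|\kappa|/(4\Phi(\kappa))$ and using $\Phi(\kappa)\sim\phi(\kappa)/|\kappa|$, a straightforward calculation yields
\begin{equation*}
\delta_{\su}(\kappa_+;\varphi) \;=\; \delta\cdot e^{-\veps\sqrt{2\log\delta}}\cdot (\log\delta)^{1/2}\cdot (\log\log\delta)\cdot \Theta(1),
\end{equation*}
which is strictly less than $\delta$ once $\delta$ is large enough (the exponential factor dominates the polylogarithmic one). The analogous computation with $\rho_{-\veps}$ produces
\begin{equation*}
\delta_{\sl}(\kappa_-;\varphi) \;=\; \delta\cdot e^{+\veps\sqrt{2\log\delta}}\cdot (\log\delta)^{1/2}\cdot(\log\log\delta)\cdot\Theta(1)\;>\;\delta.
\end{equation*}
These two comparisons are the whole content of the corollary: the first, combined with Theorem~\ref{thm:logistic_upper_bound} (or \ref{thm:pure_noise_upper_bound} when $\alpha=0$), gives the high-probability upper bound $\radius(\Polyt_{\ZZ})\le \rho_{+\veps}(\delta)$; the second, combined with Theorem~\ref{thm:logistic_lower_bound} (or \ref{thm:pure_noise_lower_bound}), gives the $\liminf>0$ lower bound $\radius(\Polyt_{\ZZ})\ge \rho_{-\veps}(\delta)$.

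I do not anticipate any conceptual obstacle: all the heavy lifting (existence via second moment, non-existence via exponential Gordon) is already done in the theorems being cited. The only delicate point is the asymptotic bookkeeping in the previous paragraph: one must expand $|\kappa_\pm|$ to sufficient order so that the \emph{first correction} $-(\alpha\pm\veps)$ appears correctly inside $e^{\kappa_\pm^2/2}$ (because $e^{\kappa_\pm^2/2}\cdot e^{\alpha|\kappa_\pm|}$ combines to produce the dominant $e^{\pm\veps\sqrt{2\log\delta}}$ factor), while the slowly varying terms $|\kappa_\pm|$, $\log|\kappa_\pm|$, $\Phi(\kappa_\pm)$ contribute only polylogarithmic factors that are absorbed. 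In particular one should verify that the $o_\kappa(1)$ errors in the asymptotics of $\delta_{\su}$ and $\delta_{\sl}$, once expressed in terms of $\delta$, do not spoil the dominant exponential, which is automatic since $o_{\kappa_\pm}(1)=o_{\delta}(1)$ under the substitution $|\kappa_\pm|\asymp\sqrt{2\log\delta}$.
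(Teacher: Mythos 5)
Your proposal is correct and is essentially the intended route: the corollary follows from the existence theorems via the radius--margin duality $\radius(\Polyt_\ZZ) = -1/\kappa_*(\XX,\yy)$ of Section~\ref{sec:ConnectionsPolytope}, and the asymptotic bookkeeping you sketch — expanding $|\kappa_\pm| = 1/\rho_{\pm\veps}(\delta)$ to second order so that
\begin{equation*}
\frac{\kappa_\pm^2}{2} + \alpha|\kappa_\pm| \;=\; \log\delta \mp \veps\sqrt{2\log\delta} + O(1)
\end{equation*}
is exactly the right computation, since it is this $\mp\veps\sqrt{2\log\delta}$ in the exponent that beats the slowly varying prefactor $|\kappa_\pm|\log|\kappa_\pm|$ and makes $\delta_{\su}(\kappa_+)<\delta<\delta_{\sl}(\kappa_-)$ for $\delta$ sufficiently large. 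The paper does not actually record a proof of Corollary~\ref{cor:Radius} (it is treated as an immediate consequence of Theorems~\ref{thm:pure_noise_lower_bound}--\ref{thm:pure_noise_upper_bound} and \ref{thm:logistic_lower_bound}--\ref{thm:logistic_upper_bound}), so there is no ``official'' argument to compare against, but yours is the natural and essentially unique one. Two cosmetic remarks: you should treat the pure noise and linear signal cases separately when citing thresholds, since $\delta_{\su}(\kappa)$ and $\delta_{\sl}(\kappa)$ in the pure noise theorems do not take a $\varphi$ argument (and differ by a factor of $4$ from the $\alpha=0$ specialization of the linear-signal formulas — harmless here since it is absorbed in the $O(1)$); and it is worth saying explicitly at the outset that the corollary is only meaningful, and the theorems only provide the needed asymptotics, for $\delta$ exceeding some $\delta_0(\veps)$.
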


Given the matrix $\ZZ\in\reals^{n\times d}$, we can consider a different 
way of constructing a random polytope, by taking the convex hull of the rows
of $\ZZ$, namely $\zz_i = -y_i\xx_i^\top$. We then define the inner radius as the maximum radius
of a ball that is contained in this convex hull:
\begin{align}
\inner(\Polyt_{\ZZ}^{\circ}):= \max\big\{r\ge 0: \, \Ball^d(0;r)\subseteq\Polyt^{\circ}_{\ZZ}\big\}\, ,
\;\;\;\;
\Polyt^{\circ}_{\ZZ}:= \conv(\zz_1,\dots,\zz_n)\, .
\end{align}
The notation $\Polyt^{\circ}_{\ZZ}$ is due to the fact that 
$\conv(\zz_1,\dots,\zz_n)$ is dual to the polytope  $\Polyt_{\ZZ}$ defined in
Eq.~\eqref{eq:RadiusDef}. The outer radius of $\Polyt_{\ZZ}$ and inner radius of
its dual are related through the following standard duality relation.
\begin{prop}[\cite{gritzmann1992inner}]
Assume $\radius(\Polyt_{\ZZ})<\infty$. Then
\begin{align*}
\inner(\Polyt_{\ZZ}^{\circ})\radius(\Polyt_{\ZZ})=1\, .
\end{align*}
\end{prop}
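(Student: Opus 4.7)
The plan is to prove the duality $\radius(\Polyt_{\ZZ})\cdot\inner(\Polyt_{\ZZ}^{\circ})=1$ directly from the support function description of $\Polyt_{\ZZ}^{\circ}=\conv(\zz_1,\dots,\zz_n)$. Recall that for a compact convex set $K\subseteq\reals^d$ with support function $h_K(u)=\sup_{x\in K}\langle u,x\rangle$, one has $\inner(K)=\min_{\|u\|_2=1}h_K(u)$ whenever $K$ contains the origin. Applied to $\Polyt_{\ZZ}^\circ$, the support function is $h_{\Polyt_{\ZZ}^{\circ}}(u)=\max_{i\le n}\langle \zz_i,u\rangle$, so
\begin{equation*}
\inner(\Polyt_{\ZZ}^{\circ}) \;=\; \min_{\|u\|_2=1}\;\max_{i\le n}\langle \zz_i,u\rangle.
\end{equation*}

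Next I would reduce $\radius(\Polyt_{\ZZ})$ to a similar optimization by polar scaling. Writing $\bxi=tu$ with $t\ge 0$ and $\|u\|_2=1$, the constraints $\langle \zz_i,\bxi\rangle\le 1$ read $t\cdot\max_{i}\langle \zz_i,u\rangle\le 1$. The first key subclaim is that the boundedness assumption $\radius(\Polyt_{\ZZ})<\infty$ forces $\max_{i}\langle \zz_i,u\rangle>0$ for every unit vector $u$: otherwise a separating hyperplane would yield a ray $\{tu:t\ge 0\}\subseteq \Polyt_{\ZZ}$, contradicting boundedness. Given that positivity, the maximum feasible $t$ along direction $u$ is exactly $t_{\max}(u)=1/\max_i\langle \zz_i,u\rangle$, and therefore
\begin{equation*}
\radius(\Polyt_{\ZZ}) \;=\; \max_{\|u\|_2=1}\frac{1}{\max_{i\le n}\langle \zz_i,u\rangle} \;=\; \frac{1}{\displaystyle\min_{\|u\|_2=1}\max_{i\le n}\langle \zz_i,u\rangle}.
\end{equation*}
Combining the two displays gives $\radius(\Polyt_{\ZZ})\cdot\inner(\Polyt_{\ZZ}^{\circ})=1$.

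The only step that requires care is the subclaim that boundedness of $\Polyt_{\ZZ}$ is equivalent to the origin lying in the interior of $\conv(\zz_1,\dots,\zz_n)$; this is standard polar-duality but should be stated for the reader. Everything else is a direct manipulation of support functions and radial parametrization, so I do not anticipate any genuine obstacle. An alternative, slightly slicker write-up would define $K=\conv(\{0\}\cup\{\zz_i\})$, note that $\Polyt_{\ZZ}=K^{\circ}$ in the sense of polar duality, and invoke the general fact that $\radius(K^\circ)\inner(K)=1$ whenever $0\in\mathrm{int}(K)$; but since $\Polyt_{\ZZ}^{\circ}$ in the paper's notation is $\conv(\zz_1,\dots,\zz_n)$ rather than this augmented hull, the direct computation above is cleaner and avoids any ambiguity about conventions.
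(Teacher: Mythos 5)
Your proof is correct. It takes a mildly different route from the paper's: you express both quantities through the common minimax $\min_{\|u\|_2=1}\max_{i\le n}\langle \zz_i,u\rangle$, obtaining $\inner(\Polyt^{\circ}_{\ZZ})$ from the support-function characterization of ball containment and $\radius(\Polyt_{\ZZ})$ from a radial parametrization of the constraint system. The paper instead writes $\inner(\Polyt^{\circ}_{\ZZ})=\inf\{\|\vv\|_2 : \vv\notin\Polyt^{\circ}_{\ZZ}\}$, applies a separating-hyperplane characterization of non-membership, normalizes the separating functional, and recognizes the result as $1/\radius(\Polyt_{\ZZ})$. Both arguments are elementary and rest on the same convex-duality fact, but yours is more symmetric and makes the shared intermediate quantity explicit, which is arguably the cleaner write-up. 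One small point: to rule out $\max_i\langle \zz_i,u\rangle\le 0$ you do not even need a separating hyperplane; if $\max_i\langle \zz_i,u\rangle\le 0$ for some unit $u$ then the ray $\{tu:t\ge 0\}$ already satisfies every constraint $\langle\zz_i,tu\rangle\le 0\le 1$, so $\radius(\Polyt_{\ZZ})=\infty$ directly. And note that, strictly speaking, what boundedness yields is $\bzero\in\mathrm{int}\,\conv(\zz_1,\dots,\zz_n)$ (not merely membership), which is exactly the positivity of the minimax and what both the support-function formula and the paper's normalization step implicitly use.
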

\begin{proof}
The proof is elementary but we sketch it for the reader's convenience.
We note that $\vv\not\in \conv(\zz_1,\dots,\zz_n)$ if and only if there exists $\bxi\in\reals^d$
such that $\<\bxi,\vv\> > \<\bxi,\zz_i\>$ for all $i\le n$. Since $\bzero\in  \conv(\zz_1,\dots,\zz_n)$,
we have $\<\bxi,\vv\> > 0$, and hence can assume, without loss of generality, $\<\bxi,\vv\>=1$.
Therefore 
\begin{align*}
\inner(\Polyt_{\ZZ}^{\circ}) &= \inf\big\{\|\vv\|_2: \; \vv\not\in\Polyt_{\ZZ}^{\circ}\big\}\\
& = \inf\big\{\|\vv\|_2: \; \<\bxi,\vv\> = 1\, ,\;\<\bxi,\zz_i\> <1 \;\; \forall i\le n\big\}
\\
& \stackrel{(i)}{=} \inf\big\{1/\|\bxi\|_2: \; \<\bxi,\zz_i\> <1 \;\; \forall i\le n\big\}=\radius(\Polyt_{\ZZ})^{-1}\, ,
\end{align*}
where $(i)$ is due to $\inf \{\|\vv\|_2: \; \<\bxi,\vv\> = 1\} = 1/\|\bxi\|_2$. This completes the proof.
\end{proof} 
As a consequence of this duality, our results on the threshold $\delta_{\sus}(\kappa)$
have direct implications on $\inner(\Polyt_{\ZZ}^{\circ})$. Informally, with
probability bounded away from zero, 
\begin{align}
\inner(\Polyt^{\circ}_{\ZZ})= \sqrt{2\log\delta}-\alpha +o_{\delta}(1)\, .\label{eq:InnerEstimate}
\end{align}
Recently, Baldi and Vershynin \cite{baldi2021theory} considered
the problem of bounding  $\inner(\Polyt^{\circ}_{\ZZ})$. For the pure noise case, they
prove that, with probability at least $1-\exp(-n)$, 
$\inner(\Polyt^{\circ}_{\ZZ})\ge  \sqrt{2(1-\eps)\log\delta}$ for any $\delta>C(\eps)$.
Our results hold in an asymptotic setting, while the one of \cite{baldi2021theory} is
non-asymptotic. On the other hand, we provide a more precise characterization,
that applies also to the linear signal model, and a matching upper bound.

\begin{rem}
Neither Corollary \ref{cor:Radius} nor its consequence Eq.~\eqref{eq:InnerEstimate}
is as accurate as our main results in Theorems \ref{thm:pure_noise_lower_bound},
\ref{thm:pure_noise_upper_bound} (for the pure noise case) or Theorems 
\ref{thm:logistic_lower_bound}, \ref{thm:logistic_upper_bound} (for the linear signal case). In particular, these simpler formulas do not 
account for the  prefactors polynomial in $\kappa$ which separate between $\delta_{\slin}(\kappa)$
and $\delta_{\sus}(\kappa)$. 
\end{rem}

\section{Gradient descent}
\label{sec:GD}

Our algorithmic results are based on an extremely simple linear programming approach.
An interesting open question is to understand the behavior and relative advantages of other 
algorithms. In this section we introduce a differentiable loss function, 
whose minimization leads to a maximum negative margin solution, and
which does not require any tuning.

\subsection{A differentiable loss function without tuning parameters}
\label{sec:Loss}

 In the case of linearly separable data, it is known that 
gradient descent with respect to the logistic loss converges to
a maximum margin solution \cite{soudry2018implicit,ji2018risk}.
This approach has the advantage of providing a differentiable cost 
function without tuning parameters.
Is it possible to achieve the same for non-separable data (to achieve maximum \emph{negative}
margin)?

We are interested in finding $\btheta$ satisfying $y_i \<\btheta,\xx_i\>\ge \kappa$
for all $i\le n$. Let $\ell:\reals\to\reals_{\ge 0}$ be a monotone decreasing 
function with $\lim_{x\to +\infty}\ell(x)=0$.
We define the empirical risk function:
\begin{equation*}
	\hR_{n,\kappa}(\btheta) := 
	\frac{1}{n}\sum_{i=1}^{n} \ell \big( y_i\langle \xx_i, \btheta \rangle - \kappa \|\btheta\|_2 \big),
\end{equation*}
and consider using gradient descent (GD) to minimize it. Fixing a learning rate $\eta > 0$ 
(to be determined later), the GD iteration reads
\begin{equation}
	\btheta^{t+1} = \btheta^{t} - \eta \nabla \hR_{n,\kappa}(\btheta^t) =  \btheta^t - 
	\frac{\eta}{n} \sum_{i=1}^{n} 
	\ell' \big( y_i\langle \xx_i, \btheta^t \rangle - \kappa \norm{\btheta^t}_2 \big)
	 \left( y_i\xx_i - \kappa \frac{\btheta^t}{\norm{\btheta^t}_2} \right).
	 \label{eq:GD-kappa}
\end{equation}
The intuition behind our definition of $\hR_{n,\kappa}$ is the following.
 Assume $\hbtheta$ is a $(\kappa+\eps)$-margin solution for some $\eps>0$, then 
 $y_i \langle \xx_i, \hat{\btheta} \rangle - \kappa > 0$, $\forall i \in [n]$, and we have
\begin{equation*}
	\lim_{s \to +\infty} \hR_{n,\kappa} (s \hat{\btheta}) = \lim_{s \to +\infty} \frac{1}{n}
	\sum_{i=1}^{n} \ell \Big( s \big( y_i\< \xx_i, \hat{\btheta} \> - \kappa \big) \Big) 
	= 0\,. 
\end{equation*}
In words, $\hR_{n,\kappa}$ is minimized at infinity along the direction of $\hat{\btheta}$.
 Therefore, it would be reasonable to guess that if $\|\btheta^t\|_2 \to \infty$, 
 then $\btheta^t/\|\btheta^t\|_2$ should converge to a $\kappa$-margin solution.
We will prove that this is indeed the case under some additional conditions.
\begin{defn}\label{ass:tight_exp_tail}
	We say that $\ell$  has a tight exponential tail if 
	there exist positive constants $c, a, \mu_{+}, \mu_{-}, u_{+}$ and $u_{-}$ such that
\begin{align*}
& \forall u>u_{+}: -\ell'(u) \leq c\left(1+\exp \left(-\mu_{+} u\right)\right) e^{-a u}, \\
& \forall u>u_{-}: -\ell'(u) \geq c\left(1-\exp \left(-\mu_{-} u\right)\right) e^{-a u}.
\end{align*}
\end{defn}

\begin{thm}\label{thm:grad_descent}
Let $\ell:\reals\to\reals_{\ge 0}$ be  positive, differentiable, monotonically decreasing to zero 
at $+\infty$. Further assume $\ell'$ to be bounded and $\beta$-Lipschitz for some positive constant $\beta$.
Finally, assume $\ell$ to have a tight exponential tail.

There exists a constant $M > 0$, such that for all learning rate $\eta < 2/M$, 
as long as the gradient descent iterates diverge to infinity ($\norm{\btheta^t}_2 \to \infty$), 
then we have 
	\begin{equation*}
		\liminf_{t \to \infty} \min_{1 \le i \le n} y_i \langle \hbtheta^t ,\xx_i \rangle \ge \kappa\, ,
		\;\;\;\;
		\hbtheta^t:=\frac{\btheta^t}{\|\btheta^t\|_2}\, .
	\end{equation*}
\end{thm}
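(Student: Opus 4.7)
The plan is to adapt the analysis of Soudry et al.~\cite{soudry2018implicit} for gradient descent on tight-exponential-tail losses in the separable case, transplanted to the augmented margin
\begin{equation*}
z_i(\btheta):=y_i\langle\xx_i,\btheta\rangle-\kappa\|\btheta\|_2.
\end{equation*}
Since $\kappa<0$, $z_i(\btheta)>0$ for every $i$ iff $\btheta/\|\btheta\|_2$ is a strict $\kappa$-margin solution, so $\hR_{n,\kappa}$ plays the role of a ``separable'' margin loss whenever such a solution exists. I proceed in three steps: local smoothness of $\hR_{n,\kappa}$ on $\{\|\btheta\|_2\ge C_0\}$, the standard descent lemma yielding $\|\nabla\hR_{n,\kappa}(\btheta^t)\|_2\to 0$, and finally a radial (Euler-type) identity obtained by pairing the gradient with $\btheta^t$ itself.

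\textbf{Step 1 (smoothness and vanishing gradients).} The only source of non-smoothness of $\hR_{n,\kappa}$ is the map $\btheta\mapsto\|\btheta\|_2$, whose Hessian $\|\btheta\|_2^{-1}(\bI-\btheta\btheta^\top/\|\btheta\|_2^2)$ has operator norm $O(1/\|\btheta\|_2)$. Combined with the assumed boundedness and Lipschitz continuity of $\ell'$ and the (finite-$n$) boundedness of $\max_i\|\xx_i\|_2$, this gives a uniform bound $M=M(C_0)$ on $\|\nabla^2\hR_{n,\kappa}(\btheta)\|_\op$ on $\{\|\btheta\|_2\ge C_0\}$. Under the hypothesis $\|\btheta^t\|_2\to\infty$, the iterates stay in such a region from some time onward, and the descent lemma then yields
\begin{equation*}
\hR_{n,\kappa}(\btheta^{t+1})\le\hR_{n,\kappa}(\btheta^t)-\eta(1-\eta M/2)\,\|\nabla\hR_{n,\kappa}(\btheta^t)\|_2^2.
\end{equation*}
Since $\hR_{n,\kappa}\ge 0$, telescoping gives $\sum_t\|\nabla\hR_{n,\kappa}(\btheta^t)\|_2^2<\infty$, hence $\|\nabla\hR_{n,\kappa}(\btheta^t)\|_2\to 0$.

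\textbf{Step 2 (contradiction via the radial identity).} Taking an inner product of the gradient with $\btheta^t$ produces the homogeneity-like relation
\begin{equation*}
\langle\nabla\hR_{n,\kappa}(\btheta^t),\btheta^t\rangle=\frac{1}{n}\sum_{i=1}^n\ell'(z_i^t)\,z_i^t,
\end{equation*}
and Cauchy--Schwarz together with Step 1 gives $\frac{1}{n\|\btheta^t\|_2}\sum_i\ell'(z_i^t)\,z_i^t\to 0$. Assume for contradiction that along some subsequence $t_k$ one has $\hbtheta^{t_k}\to\hbtheta^*$ with $y_j\langle\xx_j,\hbtheta^*\rangle\le\kappa-\gamma$ for some $j$ and $\gamma>0$. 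Writing $r_i:=y_i\langle\xx_i,\hbtheta^*\rangle-\kappa$, so $z_i^{t_k}/\|\btheta^{t_k}\|_2\to r_i$, I analyze each term by the sign of $r_i$. When $r_i>0$, $z_i^{t_k}\to+\infty$ and the tight-tail upper bound in Definition~\ref{ass:tight_exp_tail} yields $|\ell'(z_i^{t_k})|z_i^{t_k}/\|\btheta^{t_k}\|_2\lesssim z_i^{t_k}e^{-az_i^{t_k}}/\|\btheta^{t_k}\|_2\to 0$. When $r_i=0$, $z_i^{t_k}/\|\btheta^{t_k}\|_2\to 0$ and $\ell'$ is bounded, so the contribution is $o(1)$. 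When $r_i<0$ (in particular for $i=j$), $z_i^{t_k}\to-\infty$; using the uniform lower bound $|\ell'(z)|\ge c_0>0$ for $z\le 0$ (a standard consequence of convexity of $\ell$, which is implicit in the tight-exponential-tail framework), the $i$-th term is at least $c_0|r_i|+o(1)$. Summing, $\frac{1}{n\|\btheta^{t_k}\|_2}\sum_i\ell'(z_i^{t_k})z_i^{t_k}\ge c_0\gamma/n+o(1)>0$, contradicting its vanishing.

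\textbf{Main obstacles.} The first is pinning down the constant $M$: the Hessian of $\|\cdot\|_2$ blows up near the origin, so one can only produce an $M$ (and the corresponding bound $\eta<2/M$) valid once $\|\btheta^t\|_2$ exceeds a threshold $C_0>0$. Under the hypothesis of diverging iterates this is harmless, but the constant $M$ in the statement must be understood as depending on the data and on an eventual scale of the iterates, rather than on a uniform bound. The second subtle point is the lower bound $|\ell'(z)|\ge c_0$ on $(-\infty,0]$ invoked in the $r_i<0$ case: this is immediate if $\ell$ is convex, which is the usual structural hypothesis behind logistic-type tight-exponential-tail losses and is tacit in \cite{soudry2018implicit}, but is not literally stated among the present hypotheses and may require a minor addition.
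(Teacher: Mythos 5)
Your proof follows essentially the same route as the paper's: bound the Hessian of $\hR_{n,\kappa}$ on $\{\|\btheta\|_2\ge C_0\}$ to obtain smoothness and hence $\nabla\hR_{n,\kappa}(\btheta^t)\to\bzero$ (the paper invokes Lemma~10 of \cite{soudry2018implicit} for this), then pair the gradient with the iterate direction to get $\frac{1}{n}\sum_i\ell'(R_t m_{it})m_{it}\to 0$, and split by the sign of the normalized margins to reach a contradiction, with the tight-exponential-tail condition killing the positive-margin terms. The gap you flag at the end is a real one, but it is equally present in the paper's own argument: the final step there equates $\limsup_{t_k}\bigl(-\veps\,\ell'(-R_{t_k}\veps)\bigr)$ with $-\veps\lim_{u\to-\infty}\ell'(u)$ and asserts this is strictly positive, which tacitly requires that $|\ell'|$ stays bounded away from $0$ as $u\to-\infty$ (e.g.\ via convexity of $\ell$, as for logistic loss), even though none of the stated hypotheses (monotonicity, boundedness, Lipschitz $\ell'$, tight exponential tail on the positive side) literally guarantee it; similarly, the paper's intermediate bound $\sum_{m_{it_k}<0}\ell'(R_{t_k}m_{it_k})m_{it_k}\ge-\veps\,\ell'(-R_{t_k}\veps)$ uses that $|\ell'|$ is nondecreasing toward $-\infty$, another convexity-type fact. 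So your observation is a perceptive point about the hypotheses as written rather than a defect of your own proof.
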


\subsection{Numerical experiments}
\label{sec:numerical}

In this section, we illustrate the behavior of the linear programming (LP) algorithm, and of
gradient descent (GD) on $\hR_{n,\kappa}(\btheta)$  via numerical simulations. 
We generate synthetic data from the pure noise model, namely $\xx_i \sim_{\iid} \cN (\bzero, \bI_d)$, 
and without loss of generality we set $y_i = 1$ for all $i \in [n]$. 
 We run the gradient descent iteration \eqref{eq:GD-kappa} with $\ell (x) = \log (1 + e^{- x})$, random initialization $\btheta^0 \sim \Unif (\S^{d - 1})$,
 and step size $\eta = 0.05$, and terminate it as soon as the data achieves a margin $\kappa$, i.e.,
\begin{equation*}
	\min_{1 \le i \le n} \langle \hbtheta^t ,\xx_i \rangle \ge \kappa\, ,
		\;\;\;\;
		\hbtheta^t:=\frac{\btheta^t}{\|\btheta^t\|_2}\, .
\end{equation*}
For the largest experiment, we replace GD by mini-batch stochastic gradient descent (SGD).

In the first experiment, we fix $\kappa = - 1.5$, and
$d \in \{50, 100, 200\}$. We choose various values of $n$ with $\delta=n/d$ ranging from below the linear programming
  threshold to the phase transition upper bound. For each pair of $(n, d)$ we 
  run both GD and LP (cf. Eq.~\eqref{opt:convex-1}) 
  for  $1000$ independent realizations of the data $(\yy,\XX)$, and report in Fig.~\ref{fig:GDvsLP_1} 
  the empirical  probability of finding a $\kappa$-margin solution. 

\begin{figure}[h!]
\centering
\includegraphics[width=45em]{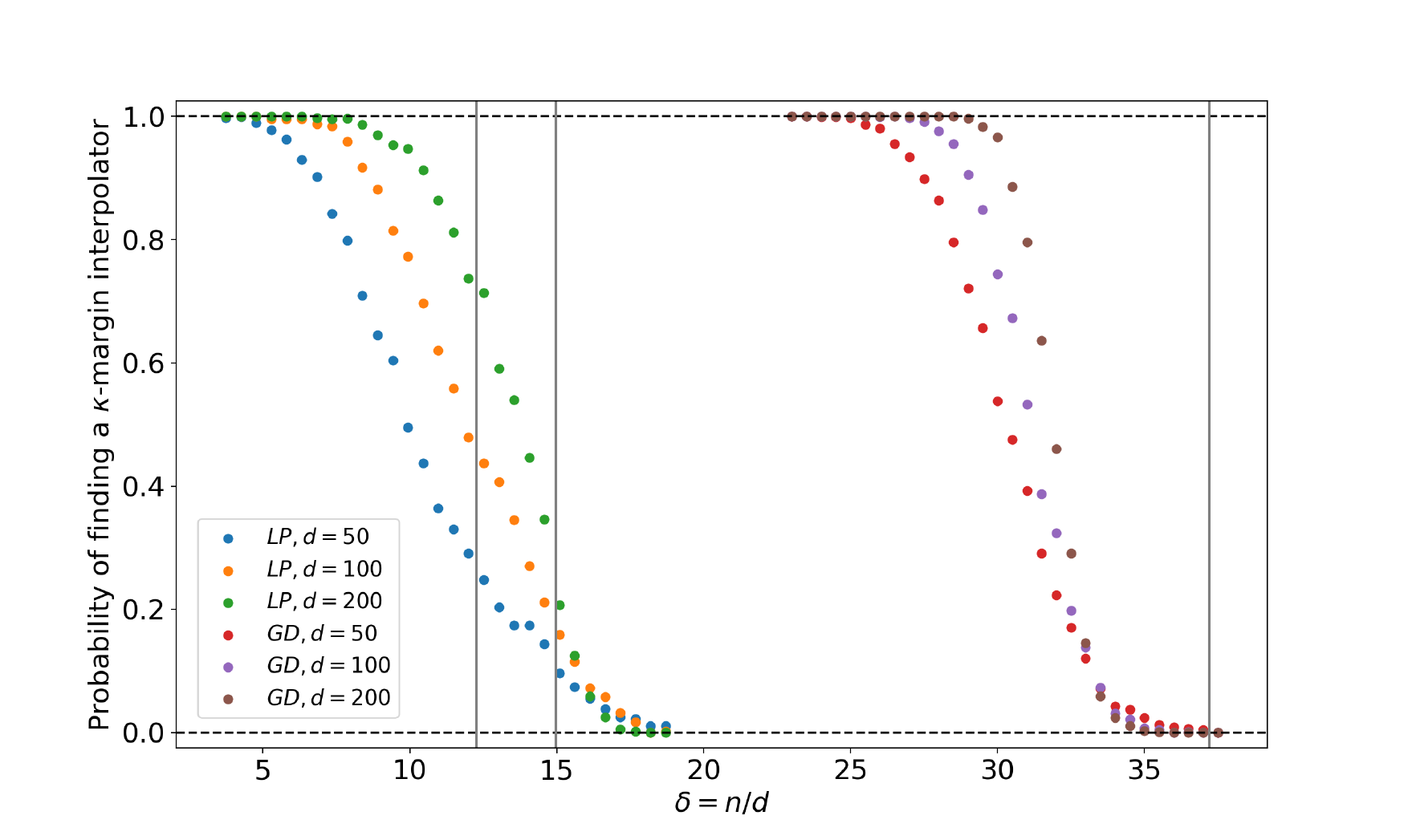}
\put(-312,43){{\small $\delta_{\sl}$}}
\put(-287,43){{\small $\delta_{\slin}$}}
\put(-79,43){{\small $\delta_{\su}$}}
\caption{Scatter plots of the empirical probability of finding a $\kappa$-margin interpolator 
using gradient descent (GD) and linear programming (LP), as a function of $\delta$. 
We fix $\kappa = -1.5$ and choose $d \in \{50, 100, 200\}$. The vertical solid lines represent 
$\delta_{\sl} (\kappa)$, $\delta_{\slin} (\kappa)$ and $\delta_{\su} (\kappa)$ (from left to right) respectively. Note that for $\kappa = -1.5$, we actually have $\delta_{\sl} (\kappa) < \delta_{\slin} (\kappa)$.}
\label{fig:GDvsLP_1}
\end{figure}

Figure~\ref{fig:GDvsLP_1} suggests the following remarks:
\begin{enumerate}
\item For both algorithms we see that the probability of finding a $\kappa$-margin
interpolator decreases from close to one to close to zero as $\delta$ crosses a threshold.
The transition region becomes steeper as $d$ grows larger. 
This finding confirms Theorem~\ref{thm:algpure}, which predicts a sharp
threshold for the LP algorithm. The location of this threshold is also consistent with 
the theoretical prediction.
\item  Figure~\ref{fig:GDvsLP_1} also suggests a phase transition 
 for GD under proportional asymptotics. A proof of this phenomenon is 
 at this moment an open problem.
 \item GD significantly outperforms LP.
 Notice that for $\kappa=-1.5$, our bounds $\delta_{\sl}(\kappa)$, $\delta_{\su} (\kappa)$ 
 are too wide to separate the linear programming threshold $\delta_{\slin}(\kappa)$
from the existence threshold $\delta_{\sus}(\kappa)$. However
the GD threshold appears to be strictly larger than $\delta_{\slin}(\kappa)$,
suggesting that such a separation exists.
 \end{enumerate}
 We note that an analysis of gradient-based algorithms using techniques 
 from physics was initiated in \cite{agoritsas2018out}.

As for the second experiment, we select $\kappa = - 3$,  and 
$d \in \{25, 50, 100 \}$. In this case we need to take $\delta\ge 1000$ to be close to
the satisfiability phase transition, resulting in $n$ of the order of $10^5$. 
Computing the full gradient is very slow in this case, and therefore 
we replace GD by mini-batch SGD with batch size $=1000$.
 As before, in Figure \ref{fig:GDvsLP_2} we plot the empirical probability of finding a $\kappa$-margin 
 interpolator
 using both SGD and LP calculated from $400$ independent realizations.

\begin{figure}[h!]
\centering
\includegraphics[width=45em]{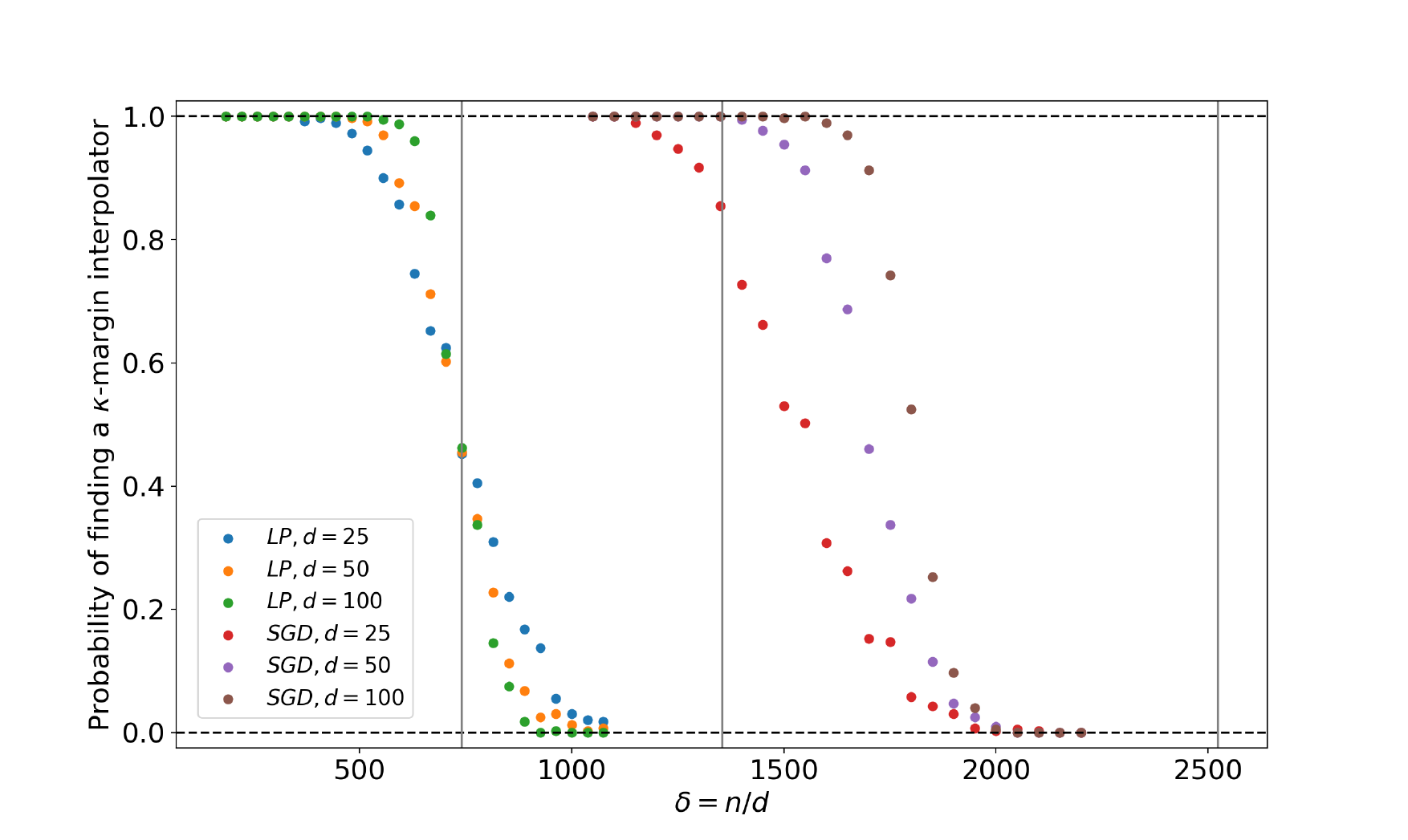}
\put(-317.5,43){{\small $\delta_{\slin}$}}
\put(-233.5,43){{\small $\delta_{\sl}$}}
\put(-78.5,43){{\small $\delta_{\su}$}}
\caption{Scatter plots of the empirical probability of finding a $\kappa$-margin interpolator using 
mini-batch stochastic gradient descent (SGD) and linear programming (LP), as a function of $\delta$. 
We fix $\kappa = - 3$ and choose $d \in \{25, 50, 100 \}$. The vertical solid lines represent
 $\delta_{\slin} (\kappa)$, $\delta_{\sl} (\kappa)$ and $\delta_{\su} (\kappa)$ (from left to right) respectively.}
\label{fig:GDvsLP_2}
\end{figure}
Note that for $\kappa=-3$, $\delta_{\sl}(\kappa)>\delta_{\slin}(\kappa)$.
In other words, our bounds are precise enough to separate the satisfiability and
linear programming threshold. However the empirical threshold for SGD seems to 
be  between $\delta_{\sl}(\kappa)$ and $\delta_{\su}(\kappa)$. Therefore we cannot exclude
that SGD finds solutions with probability bounded away from zero
 for all $\delta<\delta_{\sus}(\kappa)$.

\section{Proof ideas}
\label{sec:ProofIdea}

In the previous sections, we stated three main types of results:
$(i)$~Lower bounds on the interpolation threshold $\delta_{\sus}(\kappa)$;
$(ii)$~Upper bounds on the interpolation threshold $\delta_{\sus}(\kappa)$;
$(iii)$~Characterizations of the linear programming threshold $\delta_{\slin}(\kappa)$.
We briefly describe the approach we use for each of these proofs. 

\subsection{Lower bounds on $\delta_{\sus}(\kappa)$.}
A standard approach in the context of random CSPs is to apply the second moment method.
Namely, one constructs a non-negative random variable $Z_n=Z(\XX)\ge 0$ such that
$Z>0$ if and only if the problem has solutions. By the Paley-Zygmund inequality,
computing the first two moments of $Z$ yields a lower bound on the probability that
there exist solutions:
\begin{align}
\prob\big(\ERM_0(\kappa)\neq \emptyset\big)\ge \frac{\E\{Z\}^2}{\E\{Z^2\}}\, .
\end{align}
The simplest choice for $Z$ in discrete settings is the number of solutions \cite{achlioptas2006random},
and an obvious generalization would be the volume of the set $\ERM_0(\kappa)$. 
However, an explicit calculation shows that in this case  $\E\{Z^2\}$ is
exponentially larger than $\E\{Z\}^2$ for any $\delta>0$. 

\noindent \textbf{Pure noise model.} We obtain a non-trivial bound by weighting the constraints $y_i\<\xx_i,\btheta\>\ge \kappa$,
and defining
\begin{align}
Z := \int_{\S^{d-1}}\prod_{i=1}^nf(y_i\<\xx_i,\btheta\>)\, \mu(\de\btheta)\, ,
\end{align}
where $\mu$ is the uniform probability distribution over $\S^{d-1}$
and $f:\reals\to\reals$ is a non-negative measurable function such that $f(t) = 0$
for $t<\kappa$. Weighted solution counts were used already in the context of 
discrete CSPs, e.g. in \cite{achlioptas2005rigorous}.

Some calculation shows that $\E [Z^2]$ is determined by the integral
\begin{equation*}
\int_{-1}^1 \sqrt{\frac{d}{2\pi}} \exp \left( -n \Big( F(q) + \frac{I(q)}{\delta} \Big) \right) \; \d q
\end{equation*}
where $q$ is the inner product of a pair of vectors $ \btheta_1, \btheta_2 \in \S^{d-1}$. Here, functions $F$ and $I$ are defined by \begin{equation*}
F(q) = - \log \E_q[ f(G_1) f(G_2) ], \qquad I(q) = -\frac{1}{2} \log (1-q^2)
\end{equation*}
where $(G_1,G_2)$ follows $\cN \left( \bzero ,  \begin{bmatrix}
    	1 & q \\
    	q & 1
    \end{bmatrix} \right)$. 

We use $f(t) = e^{-ct}\bfone_{t\ge \kappa}$ and show that $c=c(\kappa)$
can be chosen so that $F(q) + I(q) / \delta$ has a unique minimizer at $q=0$. Once this is proved, we then use Laplace's method to obtain
\begin{equation*}
\liminf_{n \to + \infty} \frac{\E[Z]^2}{\E[Z^2]} \ge \sqrt{\frac{1}{\delta F''(0) + I''(0)}} > 0 \; .
\end{equation*}
Furthermore, we combine this estimate with a Gaussian concentrate argument to show that $\prob\big(\ERM_0(\kappa)\neq \emptyset\big)$ actually converges to one instead of being only bounded away from zero.\\

\noindent \textbf{Linear signal model.}
For labels correlated with a linear signal, the application of the second moment becomes more 
challenging, because the quantity $Z$ defined earlier has larger fluctuations, and its
moment can be dominated by exponentially rare realizations of the data $\{(y_i,\xx_i)\}_{i\le n}$.
In order to reduce this variability, we define $Z$ via a restricted integral
over vectors $\btheta$ such that $\<\btheta,\btheta_*\>=\rho$. 

To be more concrete, 
we decompose $\xx_i$ and $\btheta$ into components in $\spann\{\btheta^*\}$ and components in its orthogonal complement. Let $\zz_i \sim \cN(\bzero, \bI_{d-1})$, $G_i \sim \cN(0,1)$, and $\rho = \langle \btheta, \btheta^*\rangle$, we will show that for certain $\rho \in [-1,1]$ we choose,
\begin{equation}\label{ineq:problnd}
	\liminf_{n \to +\infty} \P \left( \exists \Vert \ww \Vert_2 \ge 1, \ \text{s.t.} \ \forall 1 \le i \le n, \rho y_i G_i + \sqrt{1-\rho^2} \langle \zz_i, \ww  \rangle \ge \kappa \right) > 0,
\end{equation}
where $(y_i, G_i) \bot \zz_i$ and the conditional distribution $y_i|G_i$ is given by the linear signal model. A similar concentration argument will further boost this probability to 1.

A technical difficulty lies in the additional random variable $\rho y_i G_i$. Ideally, we hope that $\rho (1-\rho^2)^{-1/2}\, y_i G_i \ge \kappa'$ for appropriately chosen $\kappa'$ so that the constraint reduces to
\begin{equation*}
\langle \zz_i, \ww  \rangle \ge \kappa   / \sqrt{1-\rho^2} - \kappa' 
\end{equation*}
and we could apply a similar second moment calculation to find a solution $\ww$. However, the lower bound on $\rho (1-\rho^2)^{-1/2}\, y_i G_i$ only holds for a subset of indices in $\{1,2,\ldots,n\}$, which we call ``good samples''. The remaining indices, which we call ``bad samples'', are only a small fraction of all indices but create obstacles when applying the second moment method.

To deal with bad samples, our strategy is to free up some coordinates of $\ww$ that are used to counter the bad samples. More specifically, we partition $\ww$ into subvectors $\ww = [\ww_1; \ww_2]$ where $\ww_1 \in \R^{d-1-d_0}$ and $\ww_2 \in \R^{d_0}$, and similarly partition $\ZZ = (\ZZ_1, \ZZ_2)$ where $\ZZ_1, \ZZ_2$ has $d-1-d_0$ and $d_0$ columns respectively. Our goal is equivalent to finding $[\ww_1; \ww_2]$ with 
\begin{equation*}
\begin{split}
	\uu^G + \ZZ_1^G \ww_1 + \ZZ_2^G \ww_2 \ge & \kappa \bone, \\ \uu^B + \ZZ_1^B \ww_1 + \ZZ_2^B \ww_2 \ge & \kappa \bone,
\end{split}
\end{equation*}
where the superscripts $G$ and $B$ denote the indices of good samples and those of bad samples. We will apply the second moment method to first find $\ww_1$ with $\uu^G + \ZZ_1^G \ww_1 \ge \tilde \kappa$, and then find $\ww_2$ such that 
	\begin{align*}
		\ZZ_2^G \ww_2 \ge & (\kappa - \tilde \kappa) \bone, \\
		\ZZ_2^B \ww_2 \ge & \kappa \bone - \uu^B - \ZZ_1^B \ww_1.
	\end{align*}
Note that (i) $\ww_1$ only relies on $\ZZ_1^G$ which is independent of $\ZZ_1^B$, $\ZZ_2^G$, $\ZZ_2^B$, so the RHS of the second inequality is fixed after conditioning on $\ZZ_1^G$; (ii) for a suitably large $d_0$, it is easy to find such $\ww_2$ and in the meantime not decrease the lower threshold $\delta_{\sl}(\kappa)$ much (still matching the upper threshold in the limit $\kappa \to -\infty$).

\subsection{Upper bounds on $\delta_{\sus}(\kappa)$.}
The simplest approach to obtain upper
bounds in discrete CSPs is to let $Z_0$ be the number of solutions and show
that (when the number of constraints per variable passes a threshold),
$\E\{Z_0\}\to 0$. This implies $\prob(Z_0>0)\to 0$ by Markov's inequality. 

In the case of continuous decision variables, this method cannot be applied, as 
the volume can be arbitrarily small and yet nonzero.
We consider instead the following minimax problem
\begin{equation*}
	\xi_{n, \kappa} = \min_{\norm{\btheta}_2 = 1} \max_{\norm{\blambda}_2 = 1, \blambda \ge \bzero} \frac{1}{\sqrt{d}} \blambda^\top \left( \kappa \bone - \XX \btheta \right).
\end{equation*}
It is easy to see that
\begin{align}
\P(\ERM_0(\kappa) \neq \emptyset) = \P \left( \xi_{n, \kappa} \le 0 \right)\, ,
\end{align}
and it is therefore sufficient to upper bound the latter probability.
Gordon's Gaussian comparison inequality \cite{gordon1985some} implies 
\begin{align}
&\xi_{n, \kappa} +\frac{1}{\sqrt{d}}z \succeq \xi^{\slin}_{n,\kappa}\, ,\label{eq:FirstGordon} \\
 &\xi^{\slin}_{n,\kappa} := \min_{\norm{\btheta}_2 = 1} \max_{\norm{\blambda}_2 = 1, \blambda \ge \bzero} 
 \Big\{
\frac{1}{\sqrt{d}} \blambda^\top \left( \kappa \bone - \norm{\btheta}_2 \hh\right) + 
\frac{1}{\sqrt{d}} \norm{\blambda}_2 \bgg^\top \btheta
\Big\}\, . \label{def:xilin}
\end{align}
Here $\succeq$ denotes stochastic domination\footnote{Namely, $X\succeq Y$ if $\E(\psi(X))\ge \E(\psi(Y))$ 
for any non-decreasing function $\psi$.} and $z\sim\normal(0,1)$, $\hh\sim\normal(\bzero,\bI_n)$,
$\bgg\sim\normal(\bzero,\bI_d)$ are mutually independent and independent of $\XX$. 
At this point, the standard approach is to estimate
$\E\xi^{\slin}_{n,\kappa}$, and upper bound $\delta_{\sus}(\kappa)$ by the infimum
of $\delta$ such that $\E\xi^{\slin}_{n,\kappa}$ is positive bounded away from zero.
In the present setting, this yields the replica symmetric upper bound of  \cite{stojnic2013another},
which is substantially suboptimal.

We follow a different approach, first appearing in \cite{stojnic2013negative}, which
is based on estimating
the moment generating function of $\xi_{n, \kappa}$. 
Namely, we use Eq.~\eqref{eq:FirstGordon} to deduce that, for any $c\ge 0$,
$\E\{\exp(-c \xi_{n, \kappa})\}\E\{\exp(-cz/\sqrt{d})\}\le \E\{\exp(-c \xi^{\slin}_{n, \kappa})\}$,
whence
\begin{align}\label{ineq:ERMubd}
\P(\ERM_0(\kappa) \neq \emptyset) \le  \frac{\E\{\exp(-c \xi^{\slin}_{n, \kappa})\}}
{\E\{\exp(-cz/\sqrt{d})\}}\, .
\end{align}
The proof is obtained by estimating the right-hand side and optimizing this bound over $c\ge 0$.
Remarkably, the exponential weighting method yields an upper bound that is significantly 
tighter than more standard applications of Gordon's inequality.\\
 
\noindent \textbf{Pure noise model.} The min-max form of $\xi_{n,\kappa}^{\slin}$ is actually equivalent to two separate optimization problems that admit simple solutions: under the condition $\min_{i \le n} h_i \le \kappa$ (whose effect is negligible), we have
\begin{equation*}
\xi_{n,\kappa}^{\slin} = \frac{1}{\sqrt{d}} \big\| \kappa \bone - \hh \big\|_2 - \frac{1}{\sqrt{d}} \| \bgg \|_2
\end{equation*}
Since $\hh$ and $\bgg$ are independent, we can simplify $\E\{\exp(-c \xi^{\slin}_{n, \kappa})\}$ and choose the optimal $c\ge 0$ on the RHS of \eqref{ineq:ERMubd}, which leads to the claimed upper bound.\\

\noindent \textbf{Linear signal model.} Similar to before, we decompose $\xx_i$ and $\btheta$ into components in $\spann\{\btheta^*\}$ and those in its orthogonal complements. Consider applying Gordon's theorem, for each fixed $\rho \in [-1,1]$, to
\begin{equation*}
\xi_{n,\kappa, \rho} =  \min_{\| \ww \|_2 = 1} \max_{ \|\blambda \|_2 = 1, \blambda \ge 0} \frac{1}{\sqrt{d}} \left(\kappa \bone - \rho \yy \odot \bG - \sqrt{1 - \rho^2}\, \ZZ \ww \right)\, .
\end{equation*}
Some calculations yield a $\rho$-dependent threshold $\delta_{\su}(\kappa, \rho; \varphi)$, above which no solutions correlated with the signal, namely $\{\btheta \in \S^{d-1}: \langle \btheta, \btheta^* \rangle = \rho \}$,  exist with high probability. We then use a covering argument to exclude solutions for all possible $\rho$ above the largest threshold. This results in the final upper bound $\delta_{\su}(\kappa; \varphi)$ in Definition~\ref{def:sig_upper}.

\subsection{Characterization of $\delta_{\slin}(\kappa)$.}

Notice that the convex problem \eqref{opt:convex-2} is simply a linear program without the norm constraint $\| \btheta \|_2 \le 1$. We hope to show that whenever $\delta < \delta_{\slin}(\kappa)$, solving the problem \eqref{opt:convex-2} produces a unit-norm solution $\hat \btheta \in \S^{d-1}$.\\

\noindent \textbf{Pure noise model.}
The idea is to verify that $\| \btheta \|_2 \le 1$ is an active constraint: if we replace $\| \btheta \|_2 \le 1$ with $\| \btheta \|_2 \le 1+\veps$, then the optimal value of the objective function would increase. The optimal value of the objective function is tightly characterized by an
 application of Gordon's inequality analogous to \cite{thrampoulidis2018precise,montanari2019generalization}, so it is easy to compare the optimal values and verify that the constraint is active.\\

\noindent \textbf{Linear signal model.} We follow our recipe again: decompose $\xx_i$ and $\btheta$ into orthogonal components and apply Gordon's theorem for each fixed $\rho$. To obtain the existence of solutions, we check that the constraint $\| \btheta \|_2 \le 1$ is active. Further, we obtain asymptotic estimation error by checking that an additional linear constraint of the form $\underline{\rho} \le \langle \btheta, \btheta^* \rangle \le \overline{\rho}$ is active.

\section{Conclusion}
\label{sec:Conclusion}

Given data $(\XX,\yy) = \{(\xx_i,y_i)\}_{i\le n}$, $\xx_i\in\reals^d$, $y_i\in\{+1,-1\}$ that are not 
linearly separable, the negative perceptron attempts to find a linear classifier
that minimizes the maximum distance of any misclassified sample from the decision boundary.
This leads to a non-convex empirical risk minimization problem, whose global minimizers
are $\kappa$-margin classifiers, with $\kappa<0$. 

We studied two simple models for the data $(\XX,\yy)$: purely random labels, and labels 
correlated with a linear signal. We focused on the high-dimensional regime
$n,d\to\infty$, $n/d\to\delta$.
For each of these models, we proved bounds on the 
maximum margin $\kappa_{\sus}(\delta)$ or, equivalently, on the maximum number
of samples per dimension $\delta_{\sus}(\kappa)$ such that a $\kappa$-margin solution 
exists for  $\delta<\delta_{\sus}(\kappa)$. 

Our main result is that, despite non-convexity, very simple algorithms
can find $\kappa$ margin solutions in a large portion of the region in which such solutions 
exist. In particular, a single step of Frank-Wolfe succeeds with high probability for
$n/d\le \delta_{\slin}(\kappa)(1-\eps)$, for a threshold $\delta_{\slin}(\kappa)$
that we characterize. In other words the non-convex learning problem becomes easy at sufficient 
overparametrization.

It is worth emphasizing that this phenomenon is very different from the one arising in 
certain high-dimensional statistics problems, whereby the problem becomes easy
in a high signal-to-noise ratio regime. Example of this have been studied in matrix completion
\cite{keshavan2010matrix,ge2016matrix}, phase retrieval \cite{candes2015phase,chen2015solving,sun2018geometric}, mixture of Gaussians \cite{xu2016global,daskalakis2017ten}, 
and so on.
In those setting, additional samples make the problem even easier, and the basic mechanism 
is uniform concentration around a bowl-shaped population risk that is uniquely minimized 
at the true signal \cite{mei2018landscape}.
In contrast, in the present example, the problem becomes easy because
it is underconstrained and larger sample size can translate into a harder problem.

Our work leads to a number of interesting open questions. In particular, it would be 
interesting to obtain a more precise characterization of the existence phase transition
$\delta_{\sus}(\kappa)$ and prove that it is a sharp threshold. Also, it would be interesting to
analyze gradient descent algorithms, which empirically seem to have a superior behavior
to the linear programming approach. 

Finally, we used the negative perceptron mainly as a toy model for more complex
non-convex empirical risk minimization problems. It would be interesting to understand 
whether negative margin methods can be useful from a statistical viewpoint.

\section*{Acknowledgements}

This work was supported by NSF through award DMS-2031883 and from the Simons Foundation through
Award 814639 for the Collaboration on the Theoretical Foundations of Deep Learning.
It was also supported by the NSF grant CCF-2006489 and the ONR grant N00014-18-1-2729.

\newpage

\bibliographystyle{alpha}

\newcommand{\etalchar}[1]{$^{#1}$}

\newpage

\appendix

\section{$\kappa$-margin classifiers in the pure noise model:\\ Proofs of Theorems
\ref{thm:pure_noise_lower_bound} and \ref{thm:pure_noise_upper_bound}}

\label{sec:ExistencePureNoise}

\subsection{Phase transition lower bound: Proof of Theorem~\ref{thm:pure_noise_lower_bound}}\label{sec:pure-lower}

We first state the following crucial lemma, whose proof will be deferred to Appendix~\ref{sec:append-pure-lemma}.

\begin{lem}\label{lem:property_of_e_q} The followings hold:
\hspace{2em}
\begin{enumerate}
		\item[(a)] For any $\kappa < 0$, there exists a unique $c_* = c_* (\kappa) > 0$ satisfying $c \left( 1 - \Phi (\kappa +c) \right) = \phi (\kappa + c)$. Moreover, $c_* = \breve{O}_{\kappa} (\exp(- \kappa^2/2))$ as $\kappa \to -\infty$.
		
        \item[(b)] Let $\kappa$ and $c_*$ be as described in (a), set $f (x) = \exp (- c_*(\kappa) x) \bone \{ x \ge \kappa \}$, and define
        \begin{equation}\label{def:F(q)_and_I(q)}
	        F(q)=-\log \E_q \left[f(G_1) f(G_2) \right], \quad I(q)=-\frac{1}{2} \log \left(1-q^2 \right)
        \end{equation}
        as per Definition~\ref{def:pure_lower}. We further define
	\begin{equation*}
		e (q) = \E_q [f(G_1) f(G_2)] = \exp \left(- F(q) \right),
	\end{equation*}
    then it follows that $e'(0) = F'(0) = 0$, $e(0) = 1 + \breve{o}_{\kappa} (1)$, and $e(1) - e(0) = (1 + \breve{o}_{\kappa} (1)) \Phi (\kappa)$.
    
        \item[(c)] For $q \in [-1, 1]$, we have
    \begin{equation}\label{eq:asymptotics_for_e'(q)}
    \begin{split}
    e'(q) =& \frac{1}{2 \pi \sqrt{1-q^2}} \exp \left(-2 \kappa c_*-\frac{\kappa^2}{1+q}\right)+c_*^2 \exp \left(c_*^2(1+q)\right) \P_q \left(\min \left(G_1, G_2 \right) \ge \kappa+c_*(1+q)\right) \\
    	&- \sqrt{\frac{2}{\pi}} \left(1-\Phi\left(\sqrt{\frac{1-q}{1+q}}(\kappa+c_*(1+q))\right)\right) c_* \exp \left(c_*^2(1+q)-\frac{1}{2}(\kappa+c_*(1+q))^2\right) \\
    	=& \frac{1+\breve{o}_{\kappa}(1)}{2 \pi \sqrt{1-q^{2}}} \exp \left(-\frac{\kappa^{2}}{1+q}\right)+\breve{O}_{\kappa} \left(\exp (-\kappa^2) \right).
    	\end{split}
    \end{equation}
    Here, $\P_q$ denotes the probability distribution $\cN \left( \bzero , \begin{bmatrix}
    	1 & q \\
    	q & 1
    \end{bmatrix} \right)$.
    Further, for any $q \in [0, 1)$ there exists a constant $C \in (0, +\infty)$ such that
    \begin{equation}\label{eq:bound_on_e''(q)}
        \sup_{u \in [-q, q]} \vert e''(u) \vert \le \vert \kappa \vert^C \exp \left( - \frac{\kappa^2}{1+q} \right).
    \end{equation}

	\end{enumerate}

    \end{lem}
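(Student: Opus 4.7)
I would tackle the three parts in order; they are essentially independent.

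For (a), let $g(c):=c(1-\Phi(\kappa+c))-\phi(\kappa+c)$. Existence of $c_*>0$ with $g(c_*)=0$ follows from the intermediate value theorem: $g(0)=-\phi(\kappa)<0$, while the Mills-ratio expansion $(1-\Phi(x))/\phi(x)=1/x+O(x^{-3})$ as $x\to+\infty$ yields $g(c)=\phi(\kappa+c)\bigl(-\kappa/(\kappa+c)+O((\kappa+c)^{-2})\bigr)\to 0^+$ as $c\to\infty$. For uniqueness I would examine $g'(c)=(1-\Phi(\kappa+c))+\kappa\phi(\kappa+c)$; its zeros are precisely where the Gaussian Mills ratio $(1-\Phi(\kappa+c))/\phi(\kappa+c)$ equals $|\kappa|$, and since Mills ratio is strictly decreasing, $g$ has a unique critical point, necessarily a maximum. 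Combined with $g(0)<0$ and $g(+\infty)=0^+$, this forces a unique zero of $g$ in $(0,\infty)$. The bound $c_*=\breve{O}_\kappa(e^{-\kappa^2/2})$ follows by bootstrap: one first shows $c_*\to 0$ (otherwise $\kappa+c_*\to-\infty$ would force $R(\kappa+c_*)\to 0$ despite $c_*\ge c_0$), and then uses $\phi(\kappa+c_*)=\phi(\kappa)e^{|\kappa|c_*-c_*^2/2}$ together with $|\kappa|c_*\to 0$ to conclude $c_*=(1+o(1))\phi(\kappa)$.

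For (b), the $q=0$ case is direct: by independence and a Gaussian shift, $e(0)=(\E f(G))^2=e^{c_*^2}(1-\Phi(\kappa+c_*))^2=1+\breve{o}_\kappa(1)$, since $c_*\to 0$ and $\kappa+c_*\to-\infty$. For $e'(0)=0$ I would use the Hermite/Mehler expansion $e(q)=\sum_{n\ge 0}b_n^2 q^n$ with $b_n=\E[f(G)h_n(G)]$ in the orthonormal Hermite basis, which gives $e'(0)=b_1^2=(\E[f(G)G])^2$. Integration by parts via $x\phi(x)=-\phi'(x)$ yields
\[
\E[f(G)G]=e^{c_*^2/2}\bigl(\phi(\kappa+c_*)-c_*(1-\Phi(\kappa+c_*))\bigr)=0
\]
\emph{exactly} by the defining equation of $c_*$---this is precisely why the exponential tilt $c_*$ was chosen. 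For $e(1)-e(0)=\var(f(G))=e^{2c_*^2}(1-\Phi(\kappa+2c_*))-e^{c_*^2}(1-\Phi(\kappa+c_*))^2$, a Taylor expansion in $c_*$ (using $c_*=O(\phi(\kappa))$ for error control) reduces the leading part to $2\Phi(\kappa+c_*)-\Phi(\kappa+2c_*)-\Phi(\kappa+c_*)^2=\Phi(\kappa)(1+\breve{o}_\kappa(1))$, since the $O(c_*)$ contributions in the two linear pieces cancel.

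For (c), the key tool is the heat-equation identity $\partial_q\phi_q(x,y)=\partial_x\partial_y\phi_q(x,y)$ for the bivariate Gaussian density. Differentiating $e(q)$ under the integral and integrating by parts distributionally---with $f'(x)=-c_*e^{-c_*x}\bone\{x\ge\kappa\}+e^{-c_*\kappa}\delta_\kappa(x)$---one obtains
\[
e'(q)=\iint f'(x)f'(y)\,\phi_q(x,y)\,dx\,dy.
\]
Expanding the product of distributions and exploiting symmetry produces the three terms of the lemma: the Dirac-Dirac term is $e^{-2c_*\kappa}\phi_q(\kappa,\kappa)$ directly; the two cross terms combine and evaluate via a Gaussian tilt $G_i\mapsto G_i-c_*(1+q)$ (equivalently, completing the square in the exponential), yielding the $(1-\Phi(\cdot))$ expression with shifted argument $\sqrt{(1-q)/(1+q)}(\kappa+c_*(1+q))$; and the smooth-smooth term evaluates by the same tilt to $c_*^2 e^{c_*^2(1+q)}\P_q(\cdot)$. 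For the leading asymptotic, $\phi_q(\kappa,\kappa)=(2\pi\sqrt{1-q^2})^{-1}e^{-\kappa^2/(1+q)}$ dominates while the cross term carries an extra factor $c_*\lesssim e^{-\kappa^2/2}$ and the smooth-smooth term carries $c_*^2$, both making them $\breve{O}_\kappa(e^{-\kappa^2})$. The bound on $e''(q)$ is then obtained by differentiating the explicit three-term formula once more: each differentiation introduces at most polynomial-in-$|\kappa|$ factors (notably $\kappa^2/(1+q)^2$ from differentiating the exponential), while the decay $e^{-\kappa^2/(1+u)}\le e^{-\kappa^2/(1+q)}$ over $|u|\le q<1$ is preserved.

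The main technical difficulty is the uniform bookkeeping in part (c): one must verify that the error estimates for all three summands of $e'(q)$ and their derivatives hold uniformly in $q$ over any compact subinterval $[-q_0,q_0]\subset(-1,1)$, and that the distributional integration by parts is rigorously justified---the latter is routine given the exponential decay of $f$ and $\phi_q$ and can be formalized by a mollification argument.
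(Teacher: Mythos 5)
Your plan is correct and arrives at all the claimed conclusions, but you take a genuinely different route in two places. For part~(a) uniqueness, you analyze the sign of $g'(c)=(1-\Phi(\kappa+c))+\kappa\phi(\kappa+c)=\phi(\kappa+c)\bigl(R(\kappa+c)-|\kappa|\bigr)$ to show $g$ is unimodal and hence has exactly one zero to the right of the origin; the paper instead rewrites the defining equation as $A_\kappa(c):=c-1/R(\kappa+c)=0$ and proves $A_\kappa'>0$ everywhere via Birnbaum's lower bound on the Mills ratio, which is a single monotonicity step rather than a critical-point analysis. Both work, though you should note explicitly (as you implicitly need) that the unique critical point $c_0$ of $g$ lies in $(0,\infty)$ — this follows because otherwise $g$ would be decreasing on $(0,\infty)$, contradicting $g(0)<0$ and $g(\infty)=0^+$. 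For part~(b), your Mehler-expansion argument $e(q)=\sum_n b_n^2 q^n$ with $b_n=\E[f(G)h_n(G)]$ is a clean and self-contained way to get $e'(0)=b_1^2$, and then $b_1=\E[f(G)G]=e^{c_*^2/2}\bigl(\phi(\kappa+c_*)-c_*(1-\Phi(\kappa+c_*))\bigr)=0$ by the choice of $c_*$. The paper instead derives the explicit formula for $e'(q)$ in part~(c) first and then substitutes $q=0$, factoring the result as $\exp(c_*^2)\bigl(c_*(1-\Phi(\kappa+c_*))-\phi(\kappa+c_*)\bigr)^2=0$; your route has the minor advantage of exposing that $e(q)$ has non-negative power-series coefficients, but is otherwise equivalent. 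For $e(1)-e(0)=\var(f(G))$, your direct Taylor expansion in $c_*$ (keeping track that $c_*^2=\breve{o}_\kappa(\Phi(\kappa))$ is subleading) reaches the same cancellation $2\Phi(\kappa+c_*)-\Phi(\kappa+2c_*)=\Phi(\kappa)+O(c_*^2)$ that the paper extracts by first writing $e(1)-e(0)=(1+\breve{o}_\kappa(1))\log(e(1)/e(0))$. For part~(c), your heat-equation identity $\partial_q\phi_q=\partial_x\partial_y\phi_q$ followed by two integrations by parts is the classical equivalent of the paper's two applications of Stein's identity; the resulting formula $e'(q)=\iint f'(x)f'(y)\phi_q\,dx\,dy$, the expansion into three terms, and the $e''$ bound are then identical in content.
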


We will prove Theorem~\ref{thm:pure_noise_lower_bound} via the second moment method and a Gaussian concentration argument. Let $\mu$ denote the uniform probability measure on $\S^{d-1}$, and $f$ be as defined in Lemma~\ref{lem:property_of_e_q} \textit{(b)}. Then we know that $f$ is supported on the interval $[\kappa, +\infty)$, i.e., $f(x) > 0$ if and only if $x \ge \kappa$. We define the random variable
\begin{equation*}
	Z = \int_{\S^{d-1}} \prod_{i=1}^{n} f \left( \left\langle \btheta, \xx_i \right\rangle \right) \mu(\d \btheta),
\end{equation*}
then we know that $Z > 0$ if and only if $\exists \btheta \in \S^{d-1}, \forall 1 \le i \le n, \left\langle \xx_i, \btheta \right\rangle \ge \kappa$. Therefore,
\begin{equation}\label{eq:Z_to_lower_bound}
	\P \left( \exists \btheta \in \S^{d-1}, \forall 1 \le i \le n, \left\langle \xx_i, \btheta \right\rangle \ge \kappa \right) = \P \left( Z > 0 \right).
\end{equation}
Now, according to the Paley-Zygmund inequality,
\begin{equation}\label{ineq:paley_zygmund}
	\P \left( Z > 0 \right) = \E \left[ \bone \left\{ Z > 0 \right\} \right] \ge \frac{\E \left[ Z \bone \left\{ Z > 0 \right\} \right]^2}{\E \left[ Z^2 \right]} = \frac{\E \left[ Z \right]^2}{\E \left[ Z^2 \right]},
\end{equation}
where the intermediate bound follows from the Cauchy-Schwarz inequality. Our aim is to show that $\delta < \delta_{\sl} (\kappa)$ implies $\E \left[ Z^2 \right] = O_d ( \E \left[ Z \right]^2 )$, then combining Eq.~\eqref{eq:Z_to_lower_bound} and Eq.~\eqref{ineq:paley_zygmund} yields
\begin{equation}\label{eq:pos_prob_pure}
	\liminf_{n \to \infty} \P \left( \exists \btheta \in \S^{d-1}, \forall 1 \le i \le n, \left\langle \xx_i, \btheta \right\rangle \ge \kappa \right) > 0.
\end{equation}

To this end we calculate and compare the first and second moments of $Z$. Note that for any $\btheta \in \S^{d-1}$, $\left\langle \btheta, \xx_i \right\rangle \sim_{\iid} \cN (0, 1)$, hence
\begin{equation}\label{eq:1st_moment}
	\E [Z] = \int_{\S^{d-1}} \E \left[ \prod_{i=1}^{n} f \left( \left\langle \btheta, \xx_i \right\rangle \right) \right] \mu(\d \btheta) = \E \left[ f(G) \right]^n = \exp \left( n \log \E \left[ f(G) \right] \right), \quad G \sim \cN (0, 1).
\end{equation}
To calculate $\E [Z^2]$, we first write $Z^2$ as a double integral:
\begin{equation*}
	Z^2=\int_{\S^{d-1} \times \S^{d-1}} \prod_{i=1}^{n} f\left(\left\langle \btheta_1, \xx_i \right\rangle\right) f\left(\left\langle \btheta_2, \xx_i \right\rangle\right) \mu(\d \btheta_1) \mu\left(\d \btheta_2 \right).
\end{equation*}
Next, denote $q = \left\langle \btheta_1, \btheta_2 \right\rangle$, we observe the following two useful facts:
\begin{enumerate}
	\item For $\btheta_1, \btheta_2 \sim_{\iid} \Unif (\S^{d-1})$, the probability density function of $q$ is given as
	\begin{equation*}
		p (q) = \frac{\Gamma\left( d/2 \right)}{\sqrt{\pi} \Gamma\left( (d-1)/2 \right)}\left(1-q^{2}\right)^{(d-3)/2} \bone_{[-1, 1]} (q).
	\end{equation*}

    \item For $1 \le i \le n$,
    \begin{equation*}
    	\left(\left\langle \btheta_1, \xx_i \right\rangle, \left\langle \btheta_2, \xx_i \right\rangle\right)^{\top} \sim_{\iid} \cN \left( \bzero , \begin{bmatrix}
    		1 & q \\
    		q & 1
    	\end{bmatrix} \right).
    \end{equation*}
\end{enumerate}
Therefore, it follows that
\begin{align*}
	\E [Z^2] &= \int_{\S^{d-1} \times \S^{d-1}} \E \left[ \prod_{i=1}^{n} f\left(\left\langle \btheta_1, \xx_i \right\rangle\right) f\left(\left\langle \btheta_2, \xx_i \right\rangle\right) \right] \mu(\d \btheta_1) \mu\left(\d \btheta_2 \right) \\
	&= \int_{-1}^{1} \E_q \left[ f(G_1)f(G_2) \right]^n p(q) \d q = \left( 1 + o_d (1) \right) \int_{-1}^{1} \sqrt{\frac{d}{2 \pi}} \E_q \left[f(G_1) f(G_2) \right]^{n}\left(1-q^{2}\right)^{(d-3)/2} \d q,
\end{align*}
where $\E_q$ denotes the expectation taken under $(G_1, G_2)^\top \sim \cN \left( \bzero , \begin{bmatrix}
    1 & q \\
    q & 1
\end{bmatrix} \right)$, and the last equality is due to $\Gamma (d/2)/\Gamma((d-1)/2) = (1 + o_d (1)) \sqrt{d/2}$, which can be easily derived from Stirling's formula:
\begin{equation*}
	\Gamma(t+1) = \left( 1 + o_t (1) \right) \sqrt{2 \pi t} \left( \frac{t}{e} \right)^t, \quad t \to +\infty.
\end{equation*} 
According to Eq.~\eqref{def:F(q)_and_I(q)}, we can then write
\begin{align*}
	\E [Z^2] &= (1+o_d (1)) \int_{-1}^{1} \sqrt{\frac{d}{2 \pi}} \exp \left( -n F(q) - (d-3) I(q) \right) \d q \\
	&= (1+o_d (1)) \int_{-1}^{1} \sqrt{\frac{d}{2 \pi}} \exp \left( -n \left( F(q) + \frac{d-3}{n} I(q) \right) \right) \d q.
\end{align*}
Since $\lim_{n \to +\infty} n/(d-3) = \delta < \delta_{\sl} (\kappa)$, for any $\delta'$ such that $\delta < \delta' < \delta_{\sl} (\kappa)$, we have
\begin{equation*}
	\E [Z^2] \le (1 + o_d (1)) \int_{-1}^{1} \sqrt{\frac{d}{2 \pi}} \exp \left( -n \left( F(q) + \frac{I(q)}{\delta'} \right) \right) \d q.
\end{equation*}

By Definition~\ref{def:pure_lower}, $ F(q) + I(q)/\delta'$ is uniquely minimized at $q = 0$, and $F''(0) + I''(0)/\delta' > 0$. Utilizing Laplace's method (Chap 4.2 of \cite{de1981asymptotic}), we obtain that
\begin{align}
    \E [Z^2] &\le (1 + o_d (1)) \int_{-1}^{1} \sqrt{\frac{d}{2 \pi}} \exp \left( -n \left( F(q) + \frac{I(q)}{\delta'} \right) \right) \d q \tag*{} \\
	&= (1 + o_d (1)) \sqrt{\frac{2 \pi}{n \left( F''(0) + I''(0)/\delta' \right)}} \sqrt{\frac{d}{2 \pi}} \exp \left( -n \left( F(0) + I(0)/\delta' \right) \right) \tag*{} \\
	&\stackrel{(i)}{=} (1+o_d(1)) \sqrt{\frac{1}{\delta \left( F''(0) + I''(0)/\delta' \right)}} \exp \left( -nF(0) \right) \tag*{}  \\
	\label{eq:2nd_moment}
	&\stackrel{(ii)}{=} (1+o_d(1)) \sqrt{\frac{1}{\delta \left( F''(0) + I''(0)/\delta' \right)}} \exp \left( 2 n \log \E [f(G)] \right), \quad G \sim \cN (0, 1),
\end{align}
where $(i)$ is due to the fact that $I(0) = 0$, and $(ii)$ is due to the definition of $F(q)$. Now combining Eq.~\eqref{eq:1st_moment} and Eq.~\eqref{eq:2nd_moment} gives
\begin{equation*}
	\liminf_{n \to +\infty} \P (Z > 0) \ge \liminf_{n \to +\infty} \frac{\E [Z]^2}{\E [Z^2]} \ge \sqrt{\frac{1}{\delta \left( F''(0) + I''(0)/\delta' \right)}} > 0,
\end{equation*}
thus proving Eq.~\eqref{eq:pos_prob_pure}. Now, we define the following function:
\begin{equation*}
	S_\kappa (\XX) = \inf_{\btheta \in \S^{d-1}} \left\{ \frac{1}{\sqrt{n}} \norm{(\kappa \bone - \XX \btheta)_+}_2 \right\}, \quad \XX \in \R^{n \times d}.
\end{equation*}
Then we see that for $\XX, \XX' \in \R^{n \times d}$,
\begin{align*}
	\left\vert S_\kappa (\XX) - S_\kappa (\XX') \right\vert \le & \sup_{\btheta \in \S^{d-1}} \left\vert \frac{1}{\sqrt{n}} \norm{(\kappa \bone - \XX \btheta)_+}_2 - \frac{1}{\sqrt{n}} \norm{(\kappa \bone - \XX' \btheta)_+}_2 \right\vert \\
	\stackrel{(i)}{\le} & \sup_{\btheta \in \S^{d-1}} \left\{ \frac{1}{\sqrt{n}} \norm{(\kappa \bone - \XX \btheta)_+ - (\kappa \bone - \XX' \btheta)_+}_2 \right\} \\
	\stackrel{(ii)}{\le} & \sup_{\btheta \in \S^{d-1}} \left\{ \frac{1}{\sqrt{n}} \norm{\XX \btheta - \XX' \btheta}_2 \right\} = \frac{1}{\sqrt{n}} \norm{\XX - \XX'}_{\op} \le \frac{1}{\sqrt{n}} \norm{\XX - \XX'}_{\rm F},
\end{align*}
where $(i)$ follows from triangle inequality, and $(ii)$ is due to the $1$-Lipschitzness of $x \mapsto (\kappa - x)_+$. We thus obtain that $S_\kappa (\XX)$ is an $(1/\sqrt{n})$-Lipschitz function of $\XX$. According to Gaussian concentration inequality \cite[Thm. 5.2.2]{vershynin2018high}, we know that $\forall t > 0$,
\begin{equation}\label{eq:pure_noise_concentrate}
	\P \left( \left\vert S_\kappa (\XX) - \E [S_\kappa (\XX)] \right\vert \ge t \right) \le 2 \exp \left( - \frac{n t^2}{2} \right).
\end{equation}
Now we are in position to prove Eq.~\eqref{eq:pure_noise_lower_bound}. From the proof of Lemma~\ref{lem:property_of_e_q}, we know that the function $\Psi(q)$ (which also depends on $\kappa$) in Definition~\ref{def:pure_lower} is continuous in $(\kappa, q)$. As a result, the threshold $\delta_{\rm lb} (\kappa)$ is continuous in $\kappa$. Since $\delta < \delta_{\rm lb} (\kappa)$, we can find some $\eta > 0$ such that $\delta < \delta_{\rm lb} (\kappa + \eta)$, thus leading to
\begin{equation*}
	\liminf_{n \to \infty} \P \left( S_{\kappa + \eta} (\XX) = 0 \right) > 0.
\end{equation*}
Using Eq.~\eqref{eq:pure_noise_concentrate} with $\kappa$ replaced by $\kappa + \eta$, we know that with high probability, $S_{\kappa + \eta} (\XX) = o(\eta)$, i.e., there exists $\btheta \in \S^{d-1}$ such that $\norm{((\kappa+\eta) \bone - \XX \btheta)_+}_2 = o(\eta \sqrt{n})$. Applying the (rounding) Lemma 11 in \cite{alaoui2020algorithmic}, we finally deduce that there exists some $\btheta' \in \S^{d-1}$, such that
\begin{equation*}
	\XX \btheta' \ge (\kappa + \eta - o(\eta)) \bone \ge \kappa \bone.
\end{equation*}
This completes the proof of Eq.~\eqref{eq:pure_noise_lower_bound}. In order to conclude the second part of Theorem~\ref{thm:pure_noise_lower_bound}, it suffices to show that if $\vert \kappa \vert$ is sufficiently large, and 
\begin{equation*}
	\delta < (1 - \veps) \sqrt{2 \pi} |\kappa| \log |\kappa| \exp \left( \frac{\kappa^2}{2} \right),
\end{equation*}
then $ F(q) + I(q)/\delta$ is uniquely minimized at $q = 0$, and $F''(0) + I''(0)/\delta > 0$. To verify these two conditions, we proceed with the following three steps.

\noindent \textbf{Step 1. Simplify the first condition} 

\noindent Recall that $F (q) = - \log e(q)$, therefore the condition ``\textit{$ F(q) + I(q)/\delta$ is uniquely minimized at $q = 0$}" is equivalent to
\begin{align}
	& \forall q \in [-1, 1] \backslash \{ 0 \}, \quad F(q) + \frac{I(q)}{\delta} > F (0) + \frac{I(0)}{\delta} = F(0) \tag*{} \\
	\iff & \forall q \in [-1, 1] \backslash \{ 0 \}, \quad \delta \log \left( \frac{e(q)}{e(0)} \right) < I (q) \tag*{} \\
	\stackrel{(i)}{\impliedby} & \forall q \in [-1, 1] \backslash \{ 0 \}, \quad \delta \frac{e(q) - e(0)}{e(0)} < I (q) \tag*{} \\
	\label{eq:minimum_at_q=0} \iff & \forall q \in [-1, 1] \backslash \{ 0 \}, \quad \frac{\delta}{e(0)} \int_{0}^{q} e'(s) \d s < I(q) = - \frac{1}{2} \log \left( 1 - q^2 \right),
\end{align}
where in $(i)$ we use the inequality $\log x \le x-1$.

\noindent \textbf{Step 2. Prove the second condition} 

\noindent Note that Eq.~\eqref{eq:bound_on_e''(q)} implies $e''(0) \le \vert \kappa \vert^C \exp (- \kappa^2)$, hence we have
\begin{equation*}
	\vert F''(0) \vert = \left\vert \frac{e'(0)^2 - e''(0) e(0)}{e(0)^2} \right\vert \stackrel{(i)}{=} \left\vert \frac{e''(0)}{e(0)}  \right\vert \stackrel{(ii)}{\le} \vert \kappa \vert^C \exp \left( -\kappa^2 \right),
\end{equation*}
where $(i)$ is due to $e'(0) = 0$, $(ii)$ is due to $e(0) = 1 + \breve{o}_{\kappa} (1)$ (Lemma~\ref{lem:property_of_e_q} \textit{(b)}). Now since $1/\delta \ge \vert \kappa \vert^{-C} \exp (-\kappa^2 / 2)$, $I''(0) = 1$, it immediately follows that $\exists M_1 > 0$, $\forall \kappa < -M_1$, $F''(0) + I''(0)/\delta > 0$.

\noindent \textbf{Step 3. Prove the first condition} 

\noindent To show Eq.~\eqref{eq:minimum_at_q=0}, consider the following three cases:

(1) $q \in [-1/2, 1/2] \backslash \{ 0 \}$. Following Eq.~\eqref{eq:bound_on_e''(q)}, and using the facts $e(0) = 1 + \breve{o}_{\kappa} (1)$, $e'(0) = 0$ from Lemma~\ref{lem:property_of_e_q} \textit{(b)}, we obtain the estimate below:
\begin{align*}
	& \frac{\delta}{e(0)} \int_{0}^{q} e'(s) \d s = (1 + \breve{o}_{\kappa} (1)) \delta \int_{0}^{q} e'(s) \d s = (1+\breve{o}_{\kappa} (1)) \delta \int_{0}^{q} \d s \int_{0}^{s} e''(u) \d u \\
	\le & \delta \vert \kappa \vert^C \exp \left( - \frac{\kappa^2}{1+q} \right) \int_{0}^{q} \d s \int_{0}^{s} \d u \\
	\le & (1 - \veps) \sqrt{2 \pi} |\kappa|^{C+1} \log |\kappa| \exp \left( \frac{\kappa^2}{2} \right) \exp \left( - \frac{2 \kappa^2}{3} \right)\frac{q^2}{2} \\
	\le & \vert \kappa \vert^C \exp \left( - \frac{\kappa^2}{6} \right) \frac{q^2}{2} = \breve{o}_{\kappa} (1) q^2.
\end{align*}
Since $I (q) \ge q^2/2$ for all $q \in [-1, -1]$, we conclude that $\exists M_2 > 0$, as if $\kappa < -M_2$,
\begin{equation*}
	\frac{\delta}{e(0)} \int_{0}^{q} e'(s) \d s \le \breve{o}_{\kappa} (1) q^2 < I (q), \quad q \in \left[ - \frac{1}{2}, \frac{1}{2} \right] \big\backslash \{ 0 \}.
\end{equation*}

(2) $q \in [-1, -1/2]$. In this case, with the aid of Eq.~\eqref{eq:asymptotics_for_e'(q)} one gets that
\begin{align*}
	\frac{\delta}{e(0)} \int_{0}^{q} e'(s) \d s = & (1 + \breve{o}_{\kappa} (1))\delta \int_{0}^{q} \left( \frac{1+\breve{o}_{\kappa}(1)}{2 \pi \sqrt{1-s^{2}}} \exp \left(-\frac{\kappa^{2}}{1+s}\right)+\breve{O}_{\kappa} \left(\exp (-\kappa^2) \right) \right) \d s \\
	\stackrel{(i)}{\le} & C \delta \left( \int_{0}^{\vert q \vert} \frac{\d s}{\sqrt{1 - s^2}} + \vert q \vert \right) \exp \left( - \kappa^2 \right) \stackrel{(ii)}{\le} C \delta \exp \left( - \kappa^2 \right) \\
	\le & C |\kappa| \log |\kappa| \exp \left( - \frac{\kappa^2}{2} \right) = \breve{o}_{\kappa} (1),
\end{align*}
where $(i)$ holds since $\exp ( - \kappa^2/(1+s) ) \le \exp ( - \kappa^2 )$ when $s \le 0$, and $(ii)$ follows from the fact
\begin{equation*}
	\int_{0}^{\vert q \vert} \frac{\d s}{\sqrt{1 - s^2}} \le \int_{0}^{1} \frac{\d s}{\sqrt{1 - s^2}} = \frac{\pi}{2}.
\end{equation*}
Therefore, we can find some $M_3 > 0$ such that when $\kappa < - M_3$,
\begin{equation*}
	\frac{\delta}{e(0)} \int_{0}^{q} e'(s) \d s \le \breve{o}_{\kappa} (1) < I \left( - \frac{1}{2} \right) \le I(q), \quad q \in \left[ -1, - \frac{1}{2} \right].
\end{equation*}

(3) $q \in [1/2, 1]$. On the one hand, we observe that Eq.~\eqref{eq:asymptotics_for_e'(q)} implies that $e'(q) \ge 0$ when $\vert \kappa \vert$ is large enough, and also note that $e(1) - e(0) = (1 + \breve{o}_{\kappa} (1)) \Phi (\kappa)$ (Lemma~\ref{lem:property_of_e_q} \textit{(b)}), thus leading to
\begin{align*}
	& \frac{\delta}{e(0)} \int_{0}^{q} e'(s) \d s \le \delta \frac{e(1) - e(0)}{e(0)} = (1 + \breve{o}_{\kappa} (1)) \delta \Phi (\kappa) \\
	\le & (1 + \breve{o}_{\kappa} (1)) (1 - \veps) \sqrt{2 \pi} |\kappa| \log |\kappa| \exp \left( \frac{\kappa^2}{2} \right) \Phi (\kappa) \\
	\stackrel{(i)}{\le} & (1 + \breve{o}_{\kappa} (1)) (1 - \veps) \log |\kappa| < \left( 1 - \frac{\veps}{2} \right) \log \vert \kappa \vert \\
	\le & - \frac{1}{2} \log \left( 1 - q^2 \right) = I (q),
\end{align*}
if $\vert \kappa \vert$ is sufficiently large and $1 - q \le \vert \kappa \vert^{-2 + \veps}/2$. Here, $(i)$ is due to the inequality $\Phi(\kappa) \le \vert \kappa \vert \phi(\kappa)$. On the other hand, if $1 - q \ge \vert \kappa \vert^{-2 + \veps}/2$, then for any $s \le q$, we have
\begin{equation*}
	- \frac{\kappa^2}{1+s} \le - \frac{\kappa^2}{1+q} = - \frac{\kappa^2}{2} - \frac{(1-q) \kappa^2}{2(1+q)} \le - \frac{\kappa^2}{2} - \frac{\vert \kappa \vert^\veps}{8},
\end{equation*}
which further implies (use Eq.~\eqref{eq:asymptotics_for_e'(q)})
\begin{equation*}
	e'(s) \le \frac{1+\breve{o}_{\kappa}(1)}{2 \pi \sqrt{1-s^{2}}} \exp \left( -\frac{\kappa^2}{2} - \frac{\vert \kappa \vert^\veps}{8} \right)+\breve{O}_{\kappa}\left(\exp (-\kappa^2) \right).
\end{equation*}
Hence it follows that
\begin{align*}
	\frac{\delta}{e(0)} \int_{0}^{q} e'(s) \d s \le & (1 + \breve{o}_{\kappa}(1)) \delta \left( \frac{1 + \breve{o}_{\kappa} (1)}{2 \pi} \exp \left( -\frac{\kappa^2}{2} - \frac{\vert \kappa \vert^\veps}{8} \right) \int_{0}^{q} \frac{\d s}{\sqrt{1-s^2}}  + \breve{O}_{\kappa} \left(\exp (-\kappa^2) \right) q \right) \\
	\le & (1 + \breve{o}_{\kappa} (1)) (1 - \veps) \sqrt{2 \pi} |\kappa| \log |\kappa| \exp \left( \frac{\kappa^2}{2} \right) \left( \frac{1}{4} \exp \left( -\frac{\kappa^2}{2} - \frac{\vert \kappa \vert^\veps}{8} \right) + \breve{O}_{\kappa} \left(\exp (-\kappa^2) \right) \right) \\
	\le & \vert \kappa \vert^C \exp \left( - \frac{\vert \kappa \vert^\veps}{8} \right) = \breve{o}_{\kappa} (1) < - \frac{1}{2} \log \left( 1-q^2 \right) = I(q), \quad q \in \left[ \frac{1}{2}, 1 - \frac{\vert \kappa \vert^{-2 + \veps}}{2} \right].
\end{align*} 
To conclude, there exists some $M_4 > 0$, such that for all $\kappa < - M_4$, 
\begin{equation*}
	\frac{\delta}{e(0)} \int_{0}^{q} e'(s) \d s < I(q), \quad q \in \left[ \frac{1}{2}, 1 \right].
\end{equation*}
Now we choose $M = \max_{1 \le j \le 4} M_j$ and $\underline{\kappa} = - M$. Then for $\kappa < \underline{\kappa}$, and
\begin{equation*}
\delta < (1 - \veps) \sqrt{2 \pi} |\kappa| \log |\kappa| \exp \left( \frac{\kappa^2}{2} \right),
\end{equation*}
Eq.~\eqref{eq:minimum_at_q=0} will be valid and $F''(0) + I''(0)/\delta > 0$. The conclusion of Theorem~\ref{thm:pure_noise_lower_bound} follows naturally.

\subsection{Phase transition upper bound: Proof of Theorem~\ref{thm:pure_noise_upper_bound}}\label{sec:pure-upper}

To begin with, let us define the following random variable:
\begin{equation*}
	\xi_{n, \kappa} = \min_{\norm{\btheta}_2 = 1} \max_{\norm{\blambda}_2 = 1, \blambda \ge \bzero} \frac{1}{\sqrt{d}} \blambda^\top \left( \kappa \bone - \XX \btheta \right).
\end{equation*}
Note that    	
\begin{align*}
	& \P \left( \xi_{n, \kappa} \le 0 \right) = \P \left( \exists \btheta \in \S^{d-1}, \max_{\norm{\blambda}_2 = 1, \blambda \ge \bzero} \frac{1}{\sqrt{d}} \blambda^\top \left( \kappa \bone - \XX \btheta \right) \le 0 \right) \\
	= & \P \left( \exists \btheta \in \S^{d-1}, \kappa \bone - \XX \btheta \le \bzero \right) = \P \left( \exists \btheta \in \S^{d-1}, \forall 1 \le i \le n, \left\langle \xx_i, \btheta \right\rangle \ge \kappa \right),
\end{align*}
hence it suffices to prove $\P \left( \xi_{n, \kappa} \le 0 \right) \to 0$ under the assumption of Theorem~\ref{thm:pure_noise_upper_bound}. To this end we will use a modified version of Gordon's comparison theorem, namely Theorem 2.3 of \cite{gordon1985some}. We state its adaption to our case below for readers' convenience:

\begin{thm}[Theorem 2.3 of \cite{gordon1985some}]
	Assume $\psi: \R \to \R$ is an increasing function, then we have
    \begin{align}\label{eq:modified_Gordon}
        \begin{split}
        	& \E \left[ \min_{\norm{\btheta}_2 = 1} \max_{\norm{\blambda}_2 = 1, \blambda \ge \bzero} \psi \left( \frac{1}{\sqrt{d}} \blambda^\top \left( \kappa \bone - \XX \btheta \right) + \frac{1}{\sqrt{d}} \norm{\blambda}_2 \norm{\btheta}_2 z \right) \right] \\
	    \ge & \E \left[ \min_{\norm{\btheta}_2 = 1} \max_{\norm{\blambda}_2 = 1, \blambda \ge \bzero} \psi \left( \frac{1}{\sqrt{d}} \blambda^\top \left( \kappa \bone - \norm{\btheta}_2 \hh\right) + \frac{1}{\sqrt{d}} \norm{\blambda}_2 \bgg^\top \btheta \right) \right],
        \end{split}
    \end{align}
    where $\bgg \sim \cN (\bzero, \bI_d)$, $\hh \sim \cN (\bzero, \bI_n)$ and $z \sim \cN (0, 1)$ are mutually independent, and further independent of the standard Gaussian matrix $\XX$.
\end{thm}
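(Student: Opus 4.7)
The theorem is a direct application of Gordon's Gaussian min--max comparison inequality, and my plan is to reduce it to verifying the standard covariance conditions on two carefully chosen Gaussian processes. The common deterministic drift $\kappa \blambda^\top \bone/\sqrt{d}$ on both sides plays no role in the Gaussian comparison, so I would absorb it as a translation inside the increasing function $\psi$ and work with the two centered processes
\[
A_{\btheta,\blambda} := \tfrac{1}{\sqrt{d}}\bigl(-\blambda^\top \XX \btheta + \norm{\blambda}_2 \norm{\btheta}_2\, z\bigr), \qquad
B_{\btheta,\blambda} := \tfrac{1}{\sqrt{d}}\bigl(-\norm{\btheta}_2\, \blambda^\top \hh + \norm{\blambda}_2\, \bgg^\top \btheta\bigr),
\]
indexed by $(\btheta,\blambda) \in \S^{d-1} \times \{\blambda \in \S^{n-1} : \blambda \ge \bzero\}$. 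The role of the extra $z$-term on the left is precisely to force the variances of the two processes to coincide, which is one of Gordon's hypotheses.

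Next, I would compute the covariance kernels. Exploiting independence of $\XX$ and $z$ on one side and of $\hh$ and $\bgg$ on the other,
\begin{align*}
\mathrm{Cov}(A_{\btheta,\blambda}, A_{\btheta',\blambda'}) &= \tfrac{1}{d}\bigl(\langle \blambda,\blambda'\rangle \langle \btheta,\btheta'\rangle + \norm{\blambda}_2\norm{\blambda'}_2 \norm{\btheta}_2 \norm{\btheta'}_2\bigr), \\
\mathrm{Cov}(B_{\btheta,\blambda}, B_{\btheta',\blambda'}) &= \tfrac{1}{d}\bigl(\norm{\btheta}_2 \norm{\btheta'}_2 \langle \blambda,\blambda'\rangle + \norm{\blambda}_2 \norm{\blambda'}_2 \langle \btheta,\btheta'\rangle\bigr).
\end{align*}
On the constraint $\norm{\btheta}_2 = \norm{\blambda}_2 = 1$, the two variances both equal $2/d$; for a fixed $\btheta$, the inner covariances (as $\blambda, \blambda'$ vary) also coincide, both equal to $(1 + \langle \blambda,\blambda'\rangle)/d$; and for $\btheta \ne \btheta'$ the algebraic identity
\[
\mathrm{Cov}(A_{\btheta,\blambda}, A_{\btheta',\blambda'}) - \mathrm{Cov}(B_{\btheta,\blambda}, B_{\btheta',\blambda'}) = \tfrac{1}{d}\bigl(1 - \langle\blambda,\blambda'\rangle\bigr)\bigl(1 - \langle\btheta,\btheta'\rangle\bigr) \ge 0
\]
shows that $A$'s cross-$\btheta$ covariances dominate those of $B$. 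These are exactly Gordon's hypotheses, with the outer min-variable $\btheta$ serving as the ``row'' index and the inner max-variable $\blambda$ as the ``column'' index.

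Invoking Gordon's comparison theorem in its functional form then yields, for every non-decreasing $\psi$,
\[
\E\Bigl[\min_{\btheta} \max_{\blambda} \psi\bigl(A_{\btheta,\blambda} + \kappa \blambda^\top \bone/\sqrt{d}\bigr)\Bigr] \ge \E\Bigl[\min_{\btheta} \max_{\blambda} \psi\bigl(B_{\btheta,\blambda} + \kappa \blambda^\top \bone/\sqrt{d}\bigr)\Bigr],
\]
which is precisely \eqref{eq:modified_Gordon}. The only real technical wrinkle is that the index sets here are compact continua, while Gordon's original statement is for finite families; this is handled by the standard route of discretizing $\S^{d-1}$ and $\{\blambda \in \S^{n-1} : \blambda \ge \bzero\}$ by finite $\eps$-nets, applying Gordon on each net, and passing to the limit using the almost-sure Lipschitz continuity of $(\btheta, \blambda) \mapsto A_{\btheta,\blambda}, B_{\btheta,\blambda}$ (both are bilinear forms in finitely many Gaussian variables) together with bounded convergence. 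The positivity constraint $\blambda \ge \bzero$ never enters the covariance computation, so it is inherited for free as a restriction of the index set.
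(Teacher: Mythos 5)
Your proposal is correct and takes the route the paper intends: the paper does not prove this statement but simply imports Gordon's Theorem 2.3 by citation, and your argument is exactly the expected verification that the two processes satisfy Gordon's hypotheses. The covariance computations are right, and the direction is right: with variances matched, within-row (same $\btheta$) covariances equal, and cross-row covariances of the $\XX$-plus-$z$ process dominating those of the decoupled $(\bgg,\hh)$ process by the factorization $\tfrac{1}{d}\bigl(\|\blambda\|\|\blambda'\| - \langle\blambda,\blambda'\rangle\bigr)\bigl(\|\btheta\|\|\btheta'\| - \langle\btheta,\btheta'\rangle\bigr) \ge 0$, Gordon gives $\E[\min_\btheta\max_\blambda \psi(\cdot)]$ larger on the LHS. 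The $\eps$-net reduction to finite index sets and the observation that the nonnegativity constraint on $\blambda$ is harmless are both standard and correctly handled.

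One small imprecision worth flagging: you say you would ``absorb'' the drift $\kappa\blambda^\top\bone/\sqrt{d}$ ``as a translation inside the increasing function $\psi$.'' That is not quite right, since the drift depends on $\blambda$ and so the resulting function would depend on the index, which is not what the functional version of Gordon compares. The correct way to dispose of a deterministic, index-dependent shift is to fall back on Gordon's Theorem 1.1, which already allows arbitrary per-index thresholds $\lambda_{ij}$ in the comparison $\P\bigl(\bigcap_i \bigcup_j \{X_{ij} \ge \lambda_{ij}\}\bigr)$; writing $\E[\min_i\max_j \psi(X_{ij}+c_{ij})]$ as an integral over levels and applying the probability comparison with $\lambda_{ij} = \psi^{-1}(t) - c_{ij}$ recovers Theorem 2.3 in the form with the common drift. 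This is a fix of phrasing only; the substance of the argument is unaffected.
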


Now for some $c > 0$ which may depend on $\kappa$ (to be determined later), if we choose $\psi (x) = - \exp (-cnx)$, then Eq.~\eqref{eq:modified_Gordon} becomes
\begin{align}
    & - \E \left[ \exp \left( - c n \xi_{n, \kappa} \right) \right] \E \left[ \exp \left( - \frac{c n z}{\sqrt{d}} \right) \right] \tag*{} \\
    = & - \E \left[ \exp \left( -c n \left( \xi_{n, \kappa} + \frac{z}{\sqrt{d}} \right) \right) \right] \tag*{} \\
	\ge & - \E \left[ \exp \left( -c n \left( \min_{\norm{\btheta}_2 = 1} \frac{1}{\sqrt{d}} \bgg^\top \btheta + \max_{\norm{\blambda}_2 = 1, \blambda \ge \bzero} \frac{1}{\sqrt{d}} \blambda^\top \left( \kappa \bone - \hh \right) \right) \right) \right] \tag*{} \\
	\label{eq:exponential_Gordon}
	=& - \E \left[ \exp \left( - \frac{cn}{\sqrt{d}} \min_{\norm{\btheta}_2 = 1} \bgg^\top \btheta \right) \right] \E \left[ \exp \left( - \frac{cn}{\sqrt{d}} \max_{\norm{\blambda}_2 = 1, \blambda \ge \bzero} \blambda^\top \left( \kappa \bone - \hh \right) \right) \right],
\end{align}
where the first and last equality are due to the independence between $\bgg, \hh, z, \XX$. Since
\begin{equation*}
    \min_{\norm{\btheta}_2 = 1} \bgg^\top \btheta = - \norm{\bgg}_2, \quad 
	\max_{\norm{\blambda}_2 = 1, \blambda \ge \bzero} \blambda^\top \left( \kappa \bone - \hh \right) = 
	\begin{cases} 
	\displaystyle \norm{\left( \kappa \bone - \hh \right)_+}_2 , & \min_{1 \le i \le n} h_i \le \kappa \\
    \displaystyle \max_{1 \le i \le n} \left( \kappa - h_i \right), & \min_{1 \le i \le n} h_i > \kappa
    \end{cases},
\end{equation*}
it follows that
\begin{equation*}
	\E \left[ \exp \left( - \frac{cn}{\sqrt{d}} \min_{\norm{\btheta}_2 = 1} \bgg^\top \btheta \right) \right] = \E \left[ \exp \left( \frac{cn}{\sqrt{d}} \norm{\bgg}_2 \right) \right],
\end{equation*}
and
\begin{align}
	& \E \left[ \exp \left( - \frac{cn}{\sqrt{d}} \max_{\norm{\blambda}_2 = 1, \blambda \ge \bzero} \blambda^\top \left( \kappa \bone - \hh \right) \right) \right] \tag*{} \\
	= & \E \left[ \exp \left( - \frac{cn}{\sqrt{d}} \norm{\left( \kappa \bone - \hh \right)_+}_2 \right) \bone \left\{ \min_{1 \le i \le n} h_i \le \kappa \right\} \right] + \E \left[ \exp \left( - \frac{cn}{\sqrt{d}} \max_{1 \le i \le n} \left( \kappa - h_i \right) \right) \bone \left\{ \min_{1 \le i \le n} h_i > \kappa
        \right\} \right] \tag*{} \\
    \label{eq:term_I_and_II}
    \le & \underbrace{\E \left[ \exp
        \left( - \frac{cn}{\sqrt{d}} \norm{\left( \kappa \bone - \hh
        \right)_+}_2 \right) \right]}_{\text{I}} + \underbrace{\E \left[ \exp \left( - \frac{cn}{\sqrt{d}} \max_{1 \le i \le n} \left( \kappa - h_i \right) \right) \bone \left\{ \min_{1 \le i \le n} h_i > \kappa
        \right\} \right] }_{\text{II}}.
\end{align}
According to Markov's inequality, we obtain that $\P(\xi_{n, \kappa} \le 0) \le \E [\exp(- c n \xi_{n, \kappa})]$. Combining this estimate with Eq.~\eqref{eq:modified_Gordon} and Eq.~\eqref{eq:exponential_Gordon} yields
\begin{align}
	& \frac{1}{n} \log \P \left( \xi_{n, \kappa} \le 0 \right) \le \frac{1}{n} \log \E \left[ \exp \left(- c n \xi_{n, \kappa} \right) \right] \tag*{} \\
	\le & - \frac{1}{n} \log \E \left[ \exp \left( - \frac{c n z}{\sqrt{d}} \right) \right] + \frac{1}{n} \log \E \left[ \exp \left( - \frac{cn}{\sqrt{d}} \min_{\norm{\btheta}_2 = 1} \bgg^\top \btheta \right) \right] \tag*{} \\
	&+ \frac{1}{n} \log \E \left[ \exp \left( - \frac{cn}{\sqrt{d}} \max_{\norm{\blambda}_2 = 1, \blambda \ge \bzero} \blambda^\top \left( \kappa \bone - \hh \right) \right) \right] \tag*{} \\
	\label{eq:xi_non_positive_probability_bound}
	\le & - \frac{c^2 n}{2 d} + \frac{1}{n} \log \E \left[ \exp \left( \frac{cn}{\sqrt{d}} \norm{\bgg}_2 \right) \right] + \frac{1}{n} \log (\text{I} + \text{II}).
\end{align}
In what follows we focus on the right-hand side of the above inequality, the argument is divided into three parts.

\noindent \textbf{Part 1. Find an upper bound for $\text{II}$} 

\noindent Note that $- \max_{1 \le i \le n} (\kappa - h_i) = \min_{1 \le i \le n} (h_i - \kappa )$ has probability density
\begin{equation*}
	p(u) = n \left( 1 - \Phi \left( u + \kappa \right) \right)^{n-1} \phi \left( u + \kappa \right),
\end{equation*}
thus leading to
\begin{align}\label{eq:term_II_estimate}
\begin{split}
	\text{II} =& \int_{0}^{+ \infty} n \left( 1 - \Phi \left( u + \kappa \right) \right)^{n-1} \phi \left( u + \kappa \right) \exp \left( \frac{cnu}{\sqrt{d}} \right) \d u \\
	=& n \int_{0}^{+ \infty} \exp \left( (n-1) \log \left( 1 - \Phi \left( u + \kappa \right) \right) + \frac{cnu}{\sqrt{d}} \right) \phi \left( u + \kappa \right) \d u \\
	\stackrel{(i)}{\le} & n \int_{0}^{+ \infty} \exp \left( (n-1) \left( \log \left( 1 - \Phi(\kappa) \right) - \frac{\phi(\kappa) u}{1 - \Phi(\kappa)} \right) + \frac{cnu}{\sqrt{d}} \right) \phi \left( u + \kappa \right) \d u \\
	\le & n \left( 1 - \Phi(\kappa) \right)^{n-1} \int_{0}^{+\infty} \exp \left( - \left( \frac{(n-1) \phi(\kappa)}{1 - \Phi(\kappa)} - \frac{cn}{\sqrt{d}} \right) u \right) \phi \left( u + \kappa \right) \d u \\
	\stackrel{(ii)}{=} & o_d \left( n \left( 1 - \Phi(\kappa) \right)^{n-1} \right),
	\end{split}
\end{align}
where $(i)$ is due to the fact that $u \mapsto \log (1 - \Phi (u + \kappa))$ is concave, and
\begin{equation*}
	\frac{\d}{\d u} \log \left( 1 - \Phi (u + \kappa) \right) \Big\vert_{u = 0} = - \frac{\phi(\kappa)}{1 - \Phi(\kappa)},
\end{equation*} 
$(ii)$ is due to our assumption $n/d \to \delta$, which further implies that
\begin{equation*}
	\frac{(n-1) \phi(\kappa)}{1 - \Phi(\kappa)} - \frac{cn}{\sqrt{d}} \to + \infty \implies \int_{0}^{+\infty} \exp \left( - \left( \frac{(n-1) \phi(\kappa)}{1 - \Phi(\kappa)} - \frac{cn}{\sqrt{d}} \right) u \right) \phi \left( u + \kappa \right) \d u \to 0.
\end{equation*}

\noindent \textbf{Part 2. Calculate the limit of other terms as $n \to +\infty$}

\noindent We use the following consequence of Varadhan's Integral lemma (Theorem 4.3.1 of \cite{Dembo_2010}):
\begin{lem}\label{lem:varadhan}
	Let $X_i$ be $\iid$ non-negative random variables, $S_n = X_1 + \cdots + X_n$. Assume $\psi (t) = \E [\exp(t X_1)]$ is finite for some $t > 0$, then we have
	\begin{equation*}
	\lim_{n \to +\infty} \frac{1}{n} \log \E \left[ \exp \left( c \sqrt{n S_n} \right) \right] = 
	\begin{cases} 
	\displaystyle \inf_{t > 0} \left( \frac{c^2}{4 t} + \log \psi (t) \right), \ c > 0 \\
    \displaystyle \sup_{t < 0} \left( \frac{c^2}{4 t} + \log \psi (t) \right), \ c < 0
    \end{cases}.
	\end{equation*}
\end{lem}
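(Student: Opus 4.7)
The plan is to prove matching upper and lower bounds on $(1/n)\log \E[\exp(c\sqrt{nS_n})]$ directly, via AM-GM and an exponential change of measure, yielding the precise variational formula. An alternative route would invoke Cram\'er's theorem to get a large deviation principle for $\bar S_n := S_n/n$ with rate function $I(x) = \sup_t(tx - \log\psi(t))$ and then apply Varadhan's integral lemma to $F(x) = c\sqrt{x}$; the variational formulae below can be recovered from that via Fenchel duality, but the direct route is more transparent and automatically handles the integrability condition Varadhan's lemma would require for $c>0$.

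The key elementary fact is that, for $y\ge 0$,
\[
c\sqrt{y} \;=\; \inf_{t > 0}\Bigl(ty + \frac{c^2}{4t}\Bigr)\text{ if }c>0,\qquad c\sqrt{y}\;=\;\sup_{s>0}\Bigl(-sy - \frac{c^2}{4s}\Bigr)\text{ if }c<0,
\]
each verified by completing the square, with equality at $t = c/(2\sqrt{y})$, respectively $s=|c|/(2\sqrt{y})$. Writing $c\sqrt{nS_n} = cn\sqrt{\bar S_n}$ and applying the first identity when $c>0$, for any $t>0$ with $\psi(t)<\infty$ one gets $c\sqrt{nS_n}\le tS_n+nc^2/(4t)$, hence
\[
\frac{1}{n}\log\E[\exp(c\sqrt{nS_n})] \;\le\; \frac{c^2}{4t}+\log\psi(t),
\]
and infimizing over $t>0$ gives the desired upper bound. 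The analogous argument using the second identity yields the matching lower bound when $c<0$ after the substitution $t=-s$.

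For the reverse inequalities the plan is a Cram\'er tilt. Let $t_*$ extremize the variational formula; define $\d\P_*/\d\P(X_1) = e^{t_*X_1}/\psi(t_*)$. Under $\P_*$ the variables $X_i$ remain i.i.d.\ with mean $\mu_* = \psi'(t_*)/\psi(t_*)$, and the first-order condition at $t_*$ gives $\mu_* = c^2/(4t_*^2)$, which is precisely the equality case in AM-GM. Rewriting
\[
\E[\exp(c\sqrt{nS_n})] \;=\; \psi(t_*)^n\,\E_*\bigl[\exp(cn\sqrt{\bar S_n}-t_*S_n)\bigr]
\]
and restricting to the event $A_{n,\eta}:=\{|\bar S_n-\mu_*|\le\eta\}$, a first-order Taylor expansion of $y\mapsto c\sqrt{y}$ at $\mu_*$ combined with $c\sqrt{\mu_*}-t_*\mu_* = c^2/(4t_*)$ shows that on $A_{n,\eta}$ the integrand is at least $\exp(nc^2/(4t_*) - O(n\eta))$. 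Since $\P_*(A_{n,\eta})\to 1$ by the LLN, taking $n\to\infty$ then $\eta\downarrow 0$ yields the matching bound.

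The main technical hurdle is locating $t_*$ in the interior of the domain of finiteness of $\psi$, so that $\mu_*$ is well-defined. For $c>0$ this uses the hypothesis $\psi(t)<\infty$ for some $t>0$ together with convexity of $\log\psi$; for $c<0$ note that $\psi(t)\le 1$ on $(-\infty,0)$ since $X_i\ge 0$, so $\psi$ is automatically finite on the negative axis. Degenerate cases (extremum attained only at $t_*=0$, $t_*\to\pm\infty$, or at the boundary of $\{\psi<\infty\}$) can be handled by a truncation/approximation argument: replace $t_*$ by interior approximants $t_*^{(k)}$, apply the tilt argument for each, and pass to the limit in $k$ using monotonicity/continuity of $t\mapsto c^2/(4t)+\log\psi(t)$.
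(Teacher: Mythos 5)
Your plan handles the two halves asymmetrically, and the asymmetry hides a gap. The AM--GM identity $c\sqrt{y}=\inf_{t>0}(ty+c^2/(4t))$ for $c>0$ gives a pointwise \emph{upper} bound $c\sqrt{nS_n}\le tS_n + nc^2/(4t)$, hence the upper bound on $\frac{1}{n}\log\E[\exp(c\sqrt{nS_n})]$; the second identity $c\sqrt{y}=\sup_{s>0}(-sy-c^2/(4s))$ for $c<0$ gives a pointwise \emph{lower} bound, hence the lower bound. Your Cram\'er tilt then restricts to the typical event $A_{n,\eta}$ under the tilted measure and concludes that the integrand is ``at least'' $\exp(nc^2/(4t_*)-O(n\eta))$; that is again a lower bound. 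This correctly closes the $c>0$ case, where you already had the matching upper bound. But for $c<0$ the missing direction is the \emph{upper} bound, and the tilt-and-restrict argument cannot supply it: under the tilt at $t_*<0$, the function $g(y)=c\sqrt{y}-t_*y$ is convex with a \emph{minimum} at $\mu_*$, so it is unbounded above as $y\to\infty$, and the contribution from $\{\bar S_n\gg\mu_*\}$ or $\{\bar S_n\approx 0\}$ must be controlled by an exponential tail estimate, not dismissed. Indeed a direct Cauchy--Schwarz-type estimate in this setting only recovers the trivial bound $\E[\exp(c\sqrt{nS_n})]\le 1$. So you have proved the $c<0$ lower bound twice and the $c<0$ upper bound zero times.

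To repair the $c<0$ case you essentially need a large-deviations \emph{upper} bound for $\bar S_n$ combined with a Laplace-type argument over a mesh (optimize the Chernoff bound $\P(\bar S_n\le x)\le e^{-n\sup_{t<0}(tx-\log\psi(t))}$ at each mesh point and take a union bound), followed by the Legendre identity $\sup_{x\ge 0}(c\sqrt{x}-I(x))=\sup_{t<0}(c^2/(4t)+\log\psi(t))$; this last identity is a clean minimax swap because $c\sqrt{x}=\sup_{t<0}(tx+c^2/(4t))$ lets you exchange two suprema unconditionally. That is precisely the Cram\'er--Varadhan route you set aside as the ``alternative''; the paper's proof goes exactly that way, using Varadhan's lemma for both directions and then doing the Fenchel computation ($c<0$ via the sup-swap just described, $c>0$ via an explicit parametrization $x\mapsto t_x$ that avoids the failure of the na\"ive $\sup$-$\inf$ exchange). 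Where your proposal does differ usefully is on the $c>0$ side: the AM--GM upper bound bypasses the integrability hypothesis of Varadhan's lemma that the paper has to check separately, and the tilt gives the $c>0$ lower bound without invoking the LDP. So a correct hybrid would keep your argument for $c>0$ and fall back to Chernoff-plus-mesh (or Cram\'er--Varadhan) for the $c<0$ upper bound.
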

The proof of Lemma~\ref{lem:varadhan} will be provided in Appendix~\ref{sec:append-pure-lemma}. Since
\begin{equation*}
	\E \left[ \exp \left( t g_1^2 \right) \right] = \frac{1}{\sqrt{1 - 2t}}, \quad 0 < t < \frac{1}{2},
\end{equation*}
it follows from the above lemma that
\begin{align}
	& \lim_{n \to +\infty} \frac{1}{n} \log \E \left[ \exp \left( \frac{cn}{\sqrt{d}} \norm{\bgg}_2 \right) \right] \tag*{} \\
	= & \frac{1}{\delta} \lim_{d \to + \infty} \frac{1}{d} \log \E \left[ \exp \left( c \delta \sqrt{d \sum_{i=1}^{d} g_i^2} \right) \right] \tag*{} \\
    = & \frac{1}{\delta} \inf_{t > 0} \left( \frac{c^2 \delta^2}{4 t} + \log \left( \frac{1}{\sqrt{1 - 2t}} \right) \right) \tag*{} \\
    \label{eq:varadhan_1st_limit}
    = & \frac{c^2 \delta + c \sqrt{c^2 \delta^2 + 4}}{4} - \frac{1}{\delta} \log \left( \frac{\sqrt{c^2 \delta^2 +4} - c \delta}{2} \right),
\end{align}
where the last equality just results from direct computation. Similarly, we have ($\text{I}$ is defined in Eq.~\eqref{eq:term_I_and_II})
\begin{align}
	& \lim_{n \to +\infty} \frac{1}{n} \log \left( \text{I} \right) = \lim_{n \to +\infty} \frac{1}{n} \log \E \left[ \exp \left( - \frac{cn}{\sqrt{d}} \norm{\left( \kappa \bone - \hh \right)_+}_2 \right) \right] \tag*{} \\
	=& \lim_{n \to +\infty} \frac{1}{n} \log \E \left[ \exp \left( - c \sqrt{\delta} \sqrt{n \sum_{i=1}^{n} \left( \kappa - h_i \right)_{+}^2} \right) \right] \tag*{} \\
	\label{eq:term_I_limit}
	=& \sup_{t < 0} \left( \frac{c^2 \delta}{4 t} + \log \psi_{\kappa} (t) \right) = - \inf_{u > 0} \left( \frac{c^2 \delta}{4 u} - \log \psi_{\kappa} (-u) \right),
\end{align}
where $\psi_{\kappa} (-u) = \E [ \exp (-u (\kappa - h)_+^2)]$, with $h \sim \cN (0, 1)$.

\noindent \textbf{Part 3. Put everything together}
	
\noindent Note that Eq.~\eqref{eq:term_II_estimate} implies ($\text{II}$ is defined in Eq.~\eqref{eq:term_I_and_II})
\begin{align*}
	& \limsup_{n \to +\infty} \frac{1}{n} \log \left( \text{II} \right) \le \log \left( 1 - \Phi(\kappa) \right) = - \lim_{u \to +\infty} \left( \frac{c^2 \delta}{4 u} - \log \psi_{\kappa} (-u) \right) \\
		\le & - \inf_{u > 0} \left( \frac{c^2 \delta}{4 u} - \log \psi_{\kappa} (-u) \right) = \lim_{n \to +\infty} \frac{1}{n} \log \left( \text{I} \right).
\end{align*}
Combining this calculation with Eq.~\eqref{eq:term_I_limit} leads to
\begin{equation}\label{eq:varadhan_2nd_limit}
	\lim_{n \to +\infty} \frac{1}{n} \log (\text{I} + \text{II}) = \lim_{n \to +\infty} \frac{1}{n} \log \left( \text{I} \right) = - \inf_{u > 0} \left( \frac{c^2 \delta}{4 u} - \log \psi_{\kappa} (-u) \right).
	\end{equation}
We can now upper bound $\P (\xi_{n, \kappa} \le 0)$. Using Eq.~\eqref{eq:xi_non_positive_probability_bound}, Eq.~\eqref{eq:varadhan_1st_limit} and Eq.~\eqref{eq:varadhan_2nd_limit} we conclude that for all $ c > 0$,
\begin{equation*}
	\limsup_{n \to +\infty} \frac{1}{n} \log \P \left( \xi_{n, \kappa} \le 0 \right) \le - \frac{c^2 \delta}{2} + \frac{c^2 \delta + c \sqrt{c^2 \delta^2 + 4}}{4} - \frac{1}{\delta} \log \left( \frac{\sqrt{c^2 \delta^2 +4} - c \delta}{2} \right) - \inf_{u > 0} \left( \frac{c^2 \delta}{4 u} - \log \psi_{\kappa} (-u) \right).
\end{equation*}

Therefore, if there exists $c > 0$ such that the right-hand side of the above inequality is negative, then $\P (\xi_{n, \kappa} \le 0)$ would be exponentially small, and the first part of Theorem~\ref{thm:pure_noise_lower_bound} follows immediately. After dividing both sides by $c$ and making a change of variable $c \mapsto c/\delta$ here, we see that it suffices to show
\begin{align}
	& \exists c > 0, \frac{\sqrt{c^2+4} - c}{4} - \frac{1}{c} \log \frac{\sqrt{c^2+4} - c}{2} - \inf_{u>0} \left( \frac{c}{4u} - \frac{\delta}{c} \log \psi_{\kappa} (-u) \right) < 0 \tag*{} \\
	\stackrel{u = ct}{\iff}& \exists c > 0, \frac{1}{\sqrt{c^2+4}+c} + \frac{1}{c} \log \frac{\sqrt{c^2+4} + c}{2} - \inf_{t>0} \left( \frac{1}{4t} - \frac{\delta}{c} \log \psi_{\kappa} (-ct) \right) < 0 \tag*{} \\
	\label{eq:pure_noise_final_minimax}
	\iff & \exists c>0, \ \text{s.t.} \ \forall t>0, \frac{1}{\sqrt{c^2+4}+c} + \frac{1}{c} \log \frac{\sqrt{c^2+4} + c}{2} < \frac{1}{4t} - \frac{\delta}{c} \log \psi_{\kappa} (-ct),
\end{align}
where (by integration by part)
\begin{equation}\label{eq:expression_for_psi_kappa}
    \psi_{\kappa} (-u) = \E \left[ \exp \left(-u \left( \kappa - h \right)_+^2 \right) \right] = \Phi \left( \vert \kappa \vert \right) + \frac{\exp \left( - u \vert \kappa \vert^2/(1+2u) \right)}{\sqrt{1+2u}} \left( 1 - \Phi \left( \frac{\vert \kappa \vert}{\sqrt{1+2u}} \right) \right).
\end{equation}
According to Definition~\ref{def:pure_upper}, if $\delta > \delta_{\mathrm u} (\kappa)$, then Eq.~\eqref{eq:pure_noise_final_minimax} holds and we complete the proof of the first part of Theorem~\ref{thm:pure_noise_upper_bound}. Now we are in position to finish the proof. Assume $\vert \kappa \vert$ is sufficiently large, and
\begin{equation*}
	\delta > (1 + \veps) \sqrt{2 \pi} \vert \kappa \vert \log \vert \kappa \vert \exp \left( \frac{\kappa^2}{2} \right).
\end{equation*}
 Choose $\eta > 0$ such that $(1+\eta)^2 < 1 + \veps$, and choose $c > 0$ such that
\begin{equation*}
    \frac{c^2}{4 (1+\eta) \log c} = \vert \kappa \vert^{2(1+\eta)},
\end{equation*}
then for large $\vert \kappa \vert$, we must have $\log c = (1 + \breve{o}_{\kappa} (1)) (1+\eta) \log \vert \kappa \vert$ and
\begin{equation*}
	\frac{1}{\sqrt{c^2+4}+c} + \frac{1}{c} \log \frac{\sqrt{c^2+4} + c}{2} < (1+\eta) \frac{\log c}{c}.
\end{equation*}
Consider the following two situations:

(1) $t \le c/(4 (1+\eta) \log c)$, then we get that
\begin{equation*}
	\frac{1}{4t} \ge (1+\eta) \frac{\log c}{c} > \frac{1}{\sqrt{c^2+4}+c} + \frac{1}{c} \log \frac{\sqrt{c^2+4} + c}{2},
\end{equation*}
therefore, Eq.~\eqref{eq:pure_noise_final_minimax} holds.

(2) $t > c/(4 (1+\eta) \log c)$. Based on Eq.~\eqref{eq:expression_for_psi_kappa}, we obtain that
\begin{align*}
	1 - \psi_{\kappa} \left( - \vert \kappa \vert^{2(1+\eta)} \right) = & 1 - \Phi \left( \vert \kappa \vert \right) - \frac{\exp \left( - \vert \kappa \vert^{2(1+\eta)} \vert \kappa \vert^2/\left(1+2\vert \kappa \vert^{2(1+\eta)} \right) \right)}{\sqrt{1+2\vert \kappa \vert^{2(1+\eta)}}} \left( 1 - \Phi \left( \frac{\vert \kappa \vert}{\sqrt{1+2\vert \kappa \vert^{2(1+\eta)}}} \right) \right) \\
	=& 1 - \Phi \left( \vert \kappa \vert \right) - \breve{O}_{\kappa} \left(\frac{\exp \left( - \vert \kappa \vert^2/2 \right)}{\vert \kappa \vert^{1+\eta}} \exp \left( - \frac{\vert \kappa \vert^2}{2 \left( 1 + 2 \vert \kappa \vert^{2 (1+\eta)} \right)} \right) \right) \\
	=& 1 - \Phi \left( \vert \kappa \vert \right) - \breve{o}_{\kappa} \left(\frac{\exp \left( - \vert \kappa \vert^2/2 \right)}{\vert \kappa \vert} \right) \stackrel{}{=} (1+\breve{o}_{\kappa}(1)) \frac{\exp(- \vert \kappa \vert^2 /2)}{\sqrt{2 \pi} \vert \kappa \vert}.
\end{align*}
We also note that
\begin{equation*}
	1 - \Phi(\kappa) \le \psi_{\kappa} (-u) \le 1 \implies 1 - \psi_{\kappa} (-u) = \breve{o}_{\kappa} (1) \implies - \log \psi_{\kappa} (-u) = (1+\breve{o}_{\kappa} (1)) (1 - \psi_{\kappa} (-u)).
\end{equation*}    
Now by our assumption, $\delta > (1 + \veps) \sqrt{2 \pi} |\kappa| \log |\kappa| \exp \left( \kappa^2/2 \right)$, thus leading to	\begin{align*}
	& - \frac{\delta}{c} \log \psi_{\kappa} (-ct) > - \frac{\delta}{c} \log \psi_{\kappa} \left( - \frac{c^2}{4 (1+\eta) \log c} \right) = - \frac{\delta}{c} \log \psi_{\kappa} \left( - \vert \kappa \vert^{2(1+\eta)} \right) \\
	=& (1+\breve{o}_{\kappa}(1)) \frac{\delta}{c} \left( 1 - \psi_{\kappa} \left( - \vert \kappa \vert^{2(1+\eta)} \right) \right) = (1+\breve{o}_{\kappa}(1)) \frac{\delta}{c}  \frac{\exp(- \vert \kappa \vert^2 /2)}{\sqrt{2 \pi} \vert \kappa \vert} \\
	> & (1+\breve{o}_{\kappa}(1)) \frac{(1+\veps) \log \vert \kappa \vert}{c} \ge (1+\breve{o}_{\kappa}(1)) \frac{(1+\veps) \log c}{(1+\eta)c},
\end{align*}
where the last inequality is due to $\log c = (1 + \breve{o}_{\kappa} (1)) (1+\eta) \log \vert \kappa \vert$. Since $(1 + \veps)/(1+\eta) > 1 + \eta$, for $\vert \kappa \vert$ large enough we know that the right-hand side of the above inequality is greater than
\begin{equation*}
	(1 +\eta) \frac{\log c}{c} > \frac{1}{\sqrt{c^2+4}+c} + \frac{1}{c} \log \frac{\sqrt{c^2+4} + c}{2},
\end{equation*}
which implies Eq.~\eqref{eq:pure_noise_final_minimax}.

In conclusion, there exists a $\underline{\kappa} < 0$, such that if $\kappa < \underline{\kappa}$ and
\begin{equation*}
	\delta > (1 + \veps) \sqrt{2 \pi} |\kappa| \log |\kappa| \exp \left( \frac{\kappa^2}{2} \right),
\end{equation*}
then we have Eq.~\eqref{eq:pure_noise_final_minimax} and as a consequence, Eq.~\eqref{eq:pure_noise_upper_bound} holds.

\subsection{Additional proofs}\label{sec:append-pure-lemma}

\begin{proof}[\bf Proof of Lemma~\ref{lem:property_of_e_q}]
(a): Denote the Mills ratio of standard normal distribution by $R(u) = (1 - \Phi(u)) / \phi(u)$, and define an auxiliary function $A_\kappa(c) = c - 1/R(\kappa + c)$, then we have
\begin{equation*}
	A_\kappa' (c) = 1 + \frac{R'(\kappa + c)}{R(\kappa + c)^2} = 1 + \frac{(\kappa + c) R(\kappa + c) - 1}{R(\kappa + c)^2} =  \frac{R(\kappa + c)^2 + (\kappa + c) R(\kappa + c) - 1}{R(\kappa + c)^2},
\end{equation*}
since $R'(u) = u R(u) - 1$. According to \cite{birnbaum1942inequality}, we have for all $u \in \R$,
\begin{equation*}
	R(u) > \frac{\sqrt{u^2 + 4} - u}{2} \implies R(u)^2 + u R(u) - 1 > 0,
\end{equation*}
thus leading to $A_\kappa'(c) > 0$. Hence $A_\kappa (c)$ is increasing. Note that $A_\kappa (0) < 0$.
Choose $c > \vert \kappa \vert + 1/\vert \kappa \vert$, we deduce that
\begin{equation*}
	A_{\kappa} (c) \stackrel{(i)}{>} c -  \frac{(\kappa + c)^2 + 1}{\kappa + c} = \frac{c \vert \kappa \vert - \kappa^2 - 1}{\kappa + c} > 0,
\end{equation*}
where in $(i)$ we use the Gaussian tail bound:
\begin{equation*}
	R(x) = \frac{1 - \Phi(x)}{\phi (x)} >\frac{x}{ x^2 + 1}.
\end{equation*}
The existence and uniqueness of $c_*$ is thus established. For sufficiently large $\vert \kappa \vert$, one has
\begin{equation*}
	A_\kappa \left( \frac{1}{\kappa^2} \right) = \frac{1}{\kappa^2} - (1+\breve{o}_{\kappa} (1)) \phi (\kappa) > 0,
\end{equation*}
which implies $ c_* \in (0, 1/\kappa^2)$. Furthermore,
\begin{equation*}
	c_* = \frac{\phi(\kappa + c_*)}{1 - \Phi (\kappa + c_*)} = (1 + \breve{o}_{\kappa} (1)) \phi (\kappa) = \breve{O}_{\kappa} \left( \exp \left( - \frac{\kappa^2}{2} \right) \right).
\end{equation*}

\noindent (b) \& (c): We first derive the expression for $e'(q)$, namely Eq.~\eqref{eq:asymptotics_for_e'(q)}. Let $G, G' \sim_{\iid} \cN(0, 1)$, then we have
\begin{equation}\label{eq:e(q)_stein}
\begin{split}
	e'(q) = & \frac{\d}{\d q} \E_q [f(G_1) f(G_2)] = \frac{\d}{\d q} \E \left[ f(G) f (q G + \sqrt{1 - q^2} G') \right] \\
	= & \E \left[ f(G) f' (q G + \sqrt{1 - q^2} G') \left( G - \frac{q}{\sqrt{1 - q^2}} G' \right) \right] \\
	\stackrel{(i)}{=} & \E \left[ f' (G) f' (q G + \sqrt{1 - q^2} G') \right] + q \E \left[ f(G) f'' (q G + \sqrt{1 - q^2} G') \right] \\
	& - \frac{q}{\sqrt{1 - q^2}} \E \left[ f(G) f'' (q G + \sqrt{1 - q^2} G') \cdot \sqrt{1-q^2} \right] \\
	= & \E \left[ f' (G) f' (q G + \sqrt{1 - q^2} G') \right] = \E_q [f'(G_1) f'(G_2)],
	\end{split}
\end{equation}
where $(i)$ is due to Stein's identity. Note that $f$ is weakly differentiable, with
\begin{equation*}
	f'(x) = -c_* \exp(-c_*x) \bone \{ x \ge \kappa \} + \exp (-c_* \kappa) \delta_{\kappa} (x) = -c_* f(x) + \exp (-c_* \kappa) \delta_{\kappa} (x),
\end{equation*}
where $\delta_{\kappa}$ denotes the Dirac's delta measure at $x = \kappa$. It then follows that
\begin{align*}
	e'(q) = & c_*^2 \E_q \left[ f(G_1) f(G_2) \right] - 2 c_* \exp(- c_* \kappa) \E_q \left[ f(G_1) \delta_{\kappa} (G_2) \right] + \exp(-2 c_* \kappa) \E_q \left[ \delta_{\kappa} (G_1) \delta_{\kappa} (G_2) \right] \\
	= & c_*^2 e(q) - \frac{c_* \exp(- c_* \kappa)}{\pi \sqrt{1 - q^2}} \int_{\kappa}^{+\infty} \exp \left( - \frac{1}{2 (1 - q^2)} (x^2 + \kappa^2 - 2 q \kappa x) - c_*x \right) \d x + \frac{\exp(- 2 c_* \kappa)}{2 \pi \sqrt{1 - q^2}} \exp \left( - \frac{\kappa^2}{1 + q} \right) \\
	:= & T_1 - T_2 + T_3.
\end{align*}
As for the first term, we notice that
\begin{align*}
	e(q) = & \E_q \left[ \exp(-c_* (G_1 + G_2)) \bone \{ \min(G_1, G_2) \ge \kappa \} \right] \\
	= & \int_{[\kappa, +\infty)^2} \frac{1}{2 \pi \sqrt{1 - q^2}} \exp \left( - \frac{1}{2 (1 - q^2)} \left( x^2 + y^2 - 2 q xy \right) - c_* (x + y) \right) \d x \d y \\
	\stackrel{(i)}{=} & \frac{\exp(c_*^2 (1 + q))}{2 \pi \sqrt{1 - q^2}} \int_{[\kappa, +\infty)^2} \exp \left( - \frac{(x+y)^2}{4(1+q)} - \frac{(x-y)^2}{4(1-q)} - c_* (x + y) - c_*^2 (1 + q) \right) \d x \d y \\
	= & \frac{\exp(c_*^2 (1 + q))}{2 \pi \sqrt{1 - q^2}} \int_{[\kappa, +\infty)^2} \exp \left( - \frac{(x+y + 2 c_*(1 + q))^2}{4(1+q)} - \frac{(x-y)^2}{4(1-q)} \right) \d x \d y \\
	\stackrel{(ii)}{=} & \frac{\exp(c_*^2 (1 + q))}{2 \pi \sqrt{1 - q^2}} \int_{[\kappa + c_*(1 + q), +\infty)^2} \exp \left( - \frac{1}{2 (1 - q^2)} \left( x^2 + y^2 - 2 q xy \right) \right) \d x \d y \\
	= & \exp \left(c_*^2(1+q)\right) \P_q \left(\min \left(G_1, G_2 \right) \ge \kappa+c_*(1+q)\right),
\end{align*}
where in $(i)$ and $(ii)$ we use the identity
\begin{equation*}
	\frac{1}{2 (1 - q^2)} \left( x^2 + y^2 - 2 q xy \right) = \frac{(x+y)^2}{4(1+q)} + \frac{(x-y)^2}{4(1-q)}.
\end{equation*}
This proves that
\begin{equation*}
	T_1 = c_*^2 \exp \left(c_*^2(1+q)\right) \P_q \left(\min \left(G_1, G_2 \right) \ge \kappa+c_*(1+q)\right).
\end{equation*}
Moreover, by direct calculation we obtain that
\begin{align*}
	T_2 = & \frac{c_* \exp(- c_* \kappa)}{\pi \sqrt{1 - q^2}} \int_{\kappa}^{+\infty} \exp \left( - \frac{1}{2 (1 - q^2)} (x^2 + \kappa^2 - 2 q \kappa x) - c_*x \right) \d x \\
	= & \sqrt{\frac{2}{\pi}} \left(1-\Phi\left(\sqrt{\frac{1-q}{1+q}}(\kappa+c_*(1+q))\right)\right) c_* \exp \left(c_*^2(1+q)-\frac{1}{2}(\kappa+c_*(1+q))^2\right),
\end{align*}
thus proving Eq.~\eqref{eq:asymptotics_for_e'(q)}. Now we have
\begin{align*}
	e'(0) = & \frac{1}{2 \pi} \exp (-2 \kappa c_* - \kappa^2 )+c_*^2 \exp ( c_*^2 ) \left(1-\Phi (\kappa+c_*)\right)^2 - \sqrt{\frac{2}{\pi}} \left(1-\Phi (\kappa+c_*)\right) c_* \exp \left(c_*^2 -\frac{1}{2}(\kappa+c_*)^2\right) \\
	= & \exp(c_*^2) \left( c_* \left(1-\Phi (\kappa+c_*)\right) - \phi(\kappa + c_*) \right)^2 = 0,
\end{align*}
and consequently $F'(0) = - e'(0)/e(0) = 0$. Next we prove $e(0) = 1 + \breve{o}_{\kappa} (1)$, and $e(1) - e(0) = (1 + \breve{o}_{\kappa} (1)) \Phi (\kappa)$. It's straightforward that
\begin{align*}
	& \E [f(G)] = \int_{\kappa}^{+\infty} \frac{1}{\sqrt{2 \pi}} \exp \left( - c_*x - \frac{x^2}{2} \right) \d x = \exp \left( \frac{c_*^2}{2} \right) \int_{\kappa}^{+\infty} \frac{1}{\sqrt{2 \pi}} \exp \left( - \frac{(x+c_*)^2}{2} \right) \d x \\
	= & \exp \left( \frac{c_*^2}{2} \right) \left( 1 - \Phi(\kappa + c_*) \right) = \exp(\breve{o}_{\kappa}(1)) (1+\breve{o}_{\kappa} (1)) = 1+\breve{o}_{\kappa} (1),
\end{align*}
since $c_* = \breve{o}_{\kappa} (1)$. Hence we deduce that $ e(0) = \E [f(G)]^2 = 1+\breve{o}_{\kappa}(1)$.
Similarly, we calculate
\begin{equation*}
	e(1) = \E \left[ f(G)^2 \right] = \int_{\kappa}^{+\infty} \frac{1}{\sqrt{2 \pi}} \exp \left( - 2c_*x - \frac{x^2}{2} \right) \d x = \exp \left( 2 c_*^2 \right) \left( 1 - \Phi(\kappa + 2c_*) \right),
\end{equation*}
which implies that
\begin{align*}
	& e(1) - e(0) = (1 +\breve{o}_{\kappa}(1)) \frac{e(1) - e(0)}{e(0)} = (1+\breve{o}_{\kappa}(1)) \log \left( \frac{e(1)}{e(0)} \right) \\
	= & (1+\breve{o}_{\kappa}(1)) \log \left( \exp(c_*^2) \frac{ 1 - \Phi(\kappa + 2c_*)}{ (1 - \Phi(\kappa + c_*))^2} \right) \\
	= & (1+\breve{o}_{\kappa}(1)) \left( c_*^2 + \log \left( 1 - \Phi(\kappa + 2c_*) \right) - 2 \log \left( 1 - \Phi(\kappa + c_*) \right) \right) \\
	= & (1+\breve{o}_{\kappa}(1)) \left( \breve{O}_{\kappa} (\exp (-\kappa^2)) - (1+\breve{o}_{\kappa}(1)) \Phi(\kappa+c_*) + 2 (1+\breve{o}_{\kappa}(1)) \Phi(\kappa+2c_*) \right) \\
	\stackrel{(i)}{=} & (1+\breve{o}_{\kappa}(1)) \left( \breve{O}_{\kappa} (\exp (-\kappa^2)) + (1+\breve{o}_{\kappa}(1)) \Phi(\kappa) \right) = (1 + \breve{o}_{\kappa} (1)) \Phi (\kappa),
\end{align*}
as desired. Here the approximation $(i)$ is valid because $c_* = \breve{o}_{\kappa} (1/\kappa)$. Hence, (b) is proved.

Now we turn to show Eq.~\eqref{eq:bound_on_e''(q)}, i.e., the uniform bound on $e''(q)$. Using Eq~\eqref{eq:asymptotics_for_e'(q)}, we derive
\begin{align*}
	e''(q) =& \frac{1}{2 \pi \sqrt{1-q^2}} \exp \left(-2 \kappa c_*-\frac{\kappa^2}{1+q}\right) \left( \frac{q}{1-q^2} + \frac{\kappa^2}{(1+q)^2} + c_*^2 (1-2q) - \frac{2 \kappa c_*}{1+q} \right) \\
	& + c_*^4 \exp \left(c_*^2(1+q)\right) \P_q \left(\min \left(G_1, G_2 \right) \ge \kappa+c_*(1+q)\right) \\
    & + \sqrt{\frac{2}{\pi}} \left(1-\Phi\left(\sqrt{\frac{1-q}{1+q}}(\kappa+c_*(1+q))\right)\right) c_*^2 (\kappa - c_*(1-q) ) \exp \left(c_*^2(1+q)-\frac{1}{2}(\kappa+c_*(1+q))^2\right) \\
    := & A(q) +B(q) +C(q).
\end{align*}
Again noting that $c_* = \breve{O}_{\kappa} (\exp(-\kappa^2/2))$, we get the following upper bounds:
\begin{align*}
	\sup_{u \in [-q, q]} \vert A(u) \vert \le & \frac{1}{2 \pi \sqrt{1-q^2}} \exp \left(-2 \kappa c_*-\frac{\kappa^2}{1+q}\right) \left( \frac{q}{1-q^2} + \frac{\kappa^2}{(1-q)^2} + c_*^2 - \frac{2 \kappa c_*}{1-q} \right) \\
	= & \frac{1 + \breve{o}_{\kappa}(1)}{2 \pi \sqrt{1-q^2}} \exp \left(-\frac{\kappa^2}{1+q}\right) \left( \frac{q}{1-q^2} + \frac{\kappa^2}{(1-q)^2} + c_*^2 - \frac{2 \kappa c_*}{1-q} \right)\le \vert \kappa \vert^C \exp \left( - \frac{\kappa^2}{1+q} \right), \\
	\sup_{u \in [-q, q]} \vert B(u) \vert \le & c_*^4 \exp \left(c_*^2(1+q)\right) = (1+\breve{o}_{\kappa}(1)) c_*^4 = \breve{O}_{\kappa} (\exp(- 2 \kappa^2)), \\
	\sup_{u \in [-q, q]} \vert C(u) \vert \le & \sqrt{\frac{2}{\pi}}  c_*^2 (\vert \kappa \vert + c_*(1+q) ) \exp \left(c_*^2(1+q)-\frac{1}{2}(\kappa+c_*(1+q))^2\right) \\
	= & \sqrt{\frac{2}{\pi}}  c_*^2 (\vert \kappa \vert + c_*(1+q) ) (1+\breve{o}_{\kappa}(1)) \exp\left( - \frac{\kappa^2}{2} \right) \le \vert \kappa \vert^C \breve{O}_{\kappa} \left( \exp\left( - \frac{3 \kappa^2}{2} \right) \right),
\end{align*}
which immediately implies
\begin{equation*}
        \sup_{u \in [-q, q]} \vert e''(u) \vert \le \vert \kappa \vert^C \exp \left( - \frac{\kappa^2}{1+q} \right).
\end{equation*}
We finish the proof here.
\end{proof}

\begin{proof}[\bf Proof of Lemma~\ref{lem:varadhan}]
This proof is based on the original Varadhan's lemma, i.e., Theorem 4.3.1 in \cite{Dembo_2010}. According to Cramer's theorem (Theorem 2.2.3 in \cite{Dembo_2010}), we know that $S_n / n$ satisfies the LDP (Large deviations principle) with the following convex rate function:
\begin{equation*}
	I (x) = \sup_{t \in \R} \left \{ t x - \log \psi(t) \right\}.
\end{equation*}
Note that by our assumption, $\psi(t_0) < +\infty$ for some $t_0 > 0$, and $\psi(t)$ is increasing ($X_1$ is non-negative), therefore $\psi(t) < +\infty$ on $(- \infty, t_0)$, and Lemma 2.2.20 of \cite{Dembo_2010} implies $I(x)$ is a good rate function. Now we verify the condition of Varadhan's lemma, i.e., there exists $\gamma > 1$ such that
\begin{equation*}
	\limsup_{n \to +\infty} \frac{1}{n} \log \E \left[ \exp \left( \gamma c \sqrt{n S_n} \right) \right] < +\infty.
\end{equation*}
The above inequality automatically holds when $c \le 0$. For $c > 0$, we have
\begin{equation*}
	\limsup_{n \to +\infty} \frac{1}{n} \log \E \left[ \exp \left( \gamma c \sqrt{n S_n} \right) \right] \stackrel{(i)}{\le} \limsup_{n \to +\infty} \frac{1}{n} \log \E \left[ \exp \left( t_0 S_n + \frac{n \gamma^2 c^2}{4 t_0} \right) \right] = \log \psi(t_0) + \frac{\gamma^2 c^2}{4 t_0},
\end{equation*}
where $(i)$ follows from AM-GM inequality. Hence, Varadhan's lemma is applicable, and we obtain that
\begin{equation*}
    \lim_{n \to +\infty} \frac{1}{n} \log \E \left[ \exp \left( c \sqrt{n S_n} \right) \right] = \sup_{x > 0} \left\{ c \sqrt{x} - I(x) \right\}.
\end{equation*}	
It suffices to show the following relationship:
\begin{equation*}
    \sup_{x > 0} \left\{ c \sqrt{x} - I(x) \right\} =
    \begin{cases} 
	\displaystyle \inf_{t > 0} \left( \frac{c^2}{4 t} + \log \psi (t) \right), \ \text{for} \ c > 0, \\
    \displaystyle \sup_{t < 0} \left( \frac{c^2}{4 t} + \log \psi (t) \right), \ \text{for} \ c < 0.
    \end{cases}
\end{equation*}
To this end we first consider the case when $c < 0$, note that $c \sqrt{x} = \sup_{t < 0} \{ c^2/(4t) + tx \}$, therefore
\begin{equation*}
	\sup_{x > 0} \left\{ c \sqrt{x} - I(x) \right\} = \sup_{x > 0} \sup_{t < 0} \left\{ \frac{c^2}{4 t} + tx - I(x) \right\} = \sup_{t < 0} \sup_{x > 0} \left\{ \frac{c^2}{4 t} + tx - I(x) \right\} = \sup_{t < 0} \left\{ \frac{c^2}{4 t} + \log \psi(t) \right\},
\end{equation*}
where the last equality is due to the property of Legendre transform. Now assume $c > 0$, it is well-known that if $x \le \psi'(+\infty)/\psi(+\infty)$, then $I(x) = t_x x - \log \psi(t_x)$, where $t_x$ satisfies ($t_x$ can be $+\infty$)
\begin{equation*}
	\frac{\psi'(t_x)}{\psi(t_x)} = x \iff t_x = \left( \frac{\psi'}{\psi} \right)^{-1} (x),
\end{equation*}
otherwise $I(x) = + \infty$. Since $x \mapsto t_x$ is a bijection from $[0, +\infty)$ to $\R$, we obtain that
\begin{equation*}
	\sup_{x > 0} \left\{ c \sqrt{x} - I(x) \right\} = \sup_{x > 0} \left\{ c \sqrt{\frac{\psi'(t_x)}{\psi(t_x)}} - t_x \frac{\psi'(t_x)}{\psi(t_x)} + \log \psi (t_x) \right\} =  \sup_{t \in \R} \left\{ c \sqrt{\frac{\psi'(t)}{\psi(t)}} - t \frac{\psi'(t)}{\psi(t)} + \log \psi (t) \right\}.
\end{equation*}
We calculate the derivative of the above function:
\begin{equation*}
	\frac{\d}{\d t} \left( c \sqrt{\frac{\psi'(t)}{\psi(t)}} - t \frac{\psi'(t)}{\psi(t)} + \log \psi (t) \right) = \sqrt{\frac{\psi(t)}{\psi'(t)}} \left( \frac{c}{2} - t \sqrt{\frac{\psi'(t)}{\psi(t)}} \right) \frac{\d}{\d t} \left( \frac{\psi'(t)}{\psi(t)} \right).
\end{equation*}
Since $t \sqrt{\psi'(t)/\psi(t)}$ is increasing on $(0, +\infty)$, and approaches $0$ and $+\infty$ as $t \to 0$ and $t \to +\infty$ respectively, there exists a unique $t^* > 0$ such that
\begin{align*}
	& \frac{c}{2} = t^* \sqrt{\frac{\psi'(t^*)}{\psi(t^*)}}, \ \text{and further} \\
	& t > (<) t^* \iff \frac{c}{2} < (>) t^* \sqrt{\frac{\psi'(t^*)}{\psi(t^*)}}.
\end{align*}
Moreover, we have
\begin{align*}
	& \sup_{t \in \R} \left\{ c \sqrt{\frac{\psi'(t)}{\psi(t)}} - t \frac{\psi'(t)}{\psi(t)} + \log \psi (t) \right\} \\
	= & c \sqrt{\frac{\psi'(t^*)}{\psi(t^*)}} - t^* \frac{\psi'(t^*)}{\psi(t^*)} + \log \psi (t^*) \\
	= & c \cdot \frac{c}{2 t^*} - t^* \cdot \left( \frac{c}{2 t^*} \right)^2 + \log \psi(t^*) \\
	= & \frac{c^2}{4 t^*} + \log \psi(t^*) \stackrel{(i)}{=} \inf_{t > 0} \left( \frac{c^2}{4 t} + \log \psi (t) \right),
\end{align*}
where $(i)$ just results from the fact that
\begin{equation*}
	\frac{\d}{\d t} \left( \frac{c^2}{4 t} + \log \psi (t) \right) = - \frac{c^2}{4 t^2} + \frac{\psi'(t)}{\psi(t)} = \frac{1}{t^2} \left(t \sqrt{\frac{\psi'(t)}{\psi(t)}} - \frac{c}{2} \right) \left(t \sqrt{\frac{\psi'(t)}{\psi(t)}} + \frac{c}{2} \right)
\end{equation*}
is negative when $t < t^*$, and is positive when $t > t^*$. In conclusion, we have
\begin{equation*}
		\sup_{x > 0} \left\{ c \sqrt{x} - I(x) \right\} = \inf_{t > 0} \left( \frac{c^2}{4 t} + \log \psi (t) \right).
\end{equation*}
This completes the proof.
\end{proof}

\section{Linear programming algorithm in the pure noise model:\\ Proof of Theorem
\ref{thm:algpure}}

\label{sec:AlgorithmPureNoise}

Recall that we assume $y_i = 1$ for all $i$ without loss of generality. In this proof, we will also assume that $\veps_1>0$ is a sufficiently small constant such that $\delta \Phi(\kappa / (1+\veps_1)) < 1$ if $\delta < 1/\Phi(\kappa)$ holds and $\delta \Phi(\kappa / (1-\veps_1)) > 1$ if $\delta > 1/\Phi(\kappa)$ holds.

The optimization problem \eqref{opt:convex-1} can be rewritten into the max-min form.
\begin{align}
M &= \max_{\| \btheta \| \le 1} \min_{\balpha \ge \bzero} \left\{ \langle \btheta, \vv \rangle + \sum_{i=1}^n \alpha_i \big( \langle \xx_i, \btheta \rangle -  \kappa \big) \right\} \notag \\
&=   \max_{\| \btheta \| \le 1} \min_{\balpha \ge \bzero} \left\{   \langle \btheta, \vv \rangle +   \langle \balpha, \XX \btheta  \rangle - \kappa\langle \balpha , \bone_n \rangle \right\} \label{def:M}
\end{align}
We will apply Gordon's theorem. To this end, we define
\begin{align}
\tilde M &= \max_{\| \btheta \| \le 1} \min_{\balpha \ge \bzero} \Big\{  \langle \btheta, \vv \rangle +  \| \btheta \| \langle \balpha, \bgg  \rangle +  \| \balpha \| \langle \btheta, \hh  \rangle  - \kappa\langle \balpha , \bone_n \rangle \Big\}. \label{def:Mtilde}
\end{align}
where $\bgg \sim \cN(\bzero, \bI_n)$ and $\hh \sim \cN(\bzero, \bI_d)$ are independent Gaussian vectors. We use a variant of Gordon's theorem due to \cite[Thm.~3]{thrampoulidis2015regularized} which is also used in \cite{montanari2019generalization}. 

\begin{thm}\label{thm:Gordon}
Let $\cC_1 \in \R^n$ and $\cC_2 \in \R^d$ be two compact sets and let $T: \cC_1 \times \cC_2 \to \R$ be a continuous function. Let $\XX = (X_{i,j}) \sim_\iid \cN(0,1) \in \R^{n \times d}, \bgg \sim \cN(\bzero, \bI_n)$, and $\hh \sim \cN(\bzero, \bI_d)$ be independent vectors and matrices. Define, 
\begin{align}
&Q_1(\XX) = \min_{\ww_1 \in \cC_1} \max_{\ww_2 \in \cC_2} \ww_1^\top \XX \ww_2 + T(\ww_1, \ww_2), \label{def:Q1} \\
&Q_2(\bgg, \hh) = \min_{\ww_1 \in \cC_1} \max_{\ww_2 \in \cC_2} \| \ww_2 \| \bgg^\top \ww_1 + \| \ww_1\| \hh^\top \ww_2 + T(\ww_1, \ww_2). \label{def:Q2}
\end{align}
Then, for all $t \in \R$, 
\begin{enumerate}
\item[(a)]{we have
\begin{equation*}
\P\left( Q_1(\XX) \le t \right) \le 2 \P\left( Q_2(\bgg, \hh) \le t \right);
\end{equation*}
}
\item[(b)]{
if further, $\cC_1$ and $\cC_2$ are convex and $T$ is convex concave in $(\ww_1, \ww_2)$, then 
\begin{equation*}
\P\left( Q_1(\XX) \ge t \right) \le 2 \P\left( Q_2(\bgg, \hh) \ge t \right).
\end{equation*}
}
\end{enumerate}
\end{thm}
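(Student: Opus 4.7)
The plan is to derive both parts from Gordon's classical Gaussian comparison inequality applied to the two centered bilinear Gaussian processes $X_{\ww_1,\ww_2} := \ww_1^\top \XX \ww_2$ and $Y_{\ww_1,\ww_2} := \|\ww_2\|\bgg^\top\ww_1 + \|\ww_1\|\hh^\top\ww_2$ indexed by $(\ww_1,\ww_2) \in \cC_1 \times \cC_2$, together with a variance-matching trick that is the source of the multiplicative factor of $2$.

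First I would augment $X$ by an independent standard Gaussian $\gamma$, setting $\tilde X_{\ww_1,\ww_2} := X_{\ww_1,\ww_2} + \|\ww_1\|\|\ww_2\|\gamma$, so that $\tilde X$ and $Y$ become centered Gaussians both with variance $2\|\ww_1\|^2 \|\ww_2\|^2$. A direct computation yields the identity
\begin{align*}
\cov\bigl(\tilde X_{\ww_1,\ww_2}, \tilde X_{\ww_1',\ww_2'}\bigr) - \cov\bigl(Y_{\ww_1,\ww_2}, Y_{\ww_1',\ww_2'}\bigr) = \bigl(\|\ww_1\|\|\ww_1'\| - \langle\ww_1,\ww_1'\rangle\bigr)\bigl(\|\ww_2\|\|\ww_2'\| - \langle\ww_2,\ww_2'\rangle\bigr),
\end{align*}
which is non-negative by Cauchy--Schwarz and vanishes when $\ww_1 = \ww_1'$. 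These are precisely the covariance hypotheses needed to invoke Gordon's min-max inequality in the $\min_{\ww_1}\max_{\ww_2}$ orientation, yielding $\P(\min_{\ww_1}\max_{\ww_2}(\tilde X + T) \le t) \le \P(Q_2 \le t)$. For part (a), I then condition on the event $\{\gamma \le 0\}$: it has probability $1/2$ and is independent of $\XX$, and on it the perturbation $\|\ww_1\|\|\ww_2\|\gamma$ is non-positive uniformly in $(\ww_1,\ww_2)$, so $\min_{\ww_1}\max_{\ww_2}(\tilde X + T) \le Q_1$. Independence of $\gamma$ and $Q_1$ then gives $\tfrac{1}{2}\P(Q_1 \le t) = \P(Q_1 \le t,\gamma \le 0) \le \P(\min_{\ww_1}\max_{\ww_2}(\tilde X + T) \le t) \le \P(Q_2 \le t)$, which is the desired bound.

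For part (b), I would first apply Sion's minimax theorem to swap the order in $Q_1$: this is legitimate because $\ww_1^\top\XX\ww_2$ is bilinear and $T$ is convex-concave on convex compact sets, giving $Q_1 = \max_{\ww_2}\min_{\ww_1}(\ww_1^\top\XX\ww_2 + T)$ almost surely. The displayed covariance identity is symmetric in the two arguments and also invariant under exchange of the $\ww_1$ and $\ww_2$ roles, so Gordon's inequality applies in the reversed $\max_{\ww_2}\min_{\ww_1}$ orientation as well. Rerunning the same auxiliary-$\gamma$ argument in reversed direction then gives $\P(Q_1 \ge t) \le 2\,\P(\bar Q_2 \ge t)$, where $\bar Q_2 := \max_{\ww_2}\min_{\ww_1}(\|\ww_2\|\bgg^\top\ww_1 + \|\ww_1\|\hh^\top\ww_2 + T)$, and weak duality $\bar Q_2 \le Q_2$ closes the bound.

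The main obstacle I anticipate is bookkeeping the direction of Gordon's covariance hypotheses when switching between the $\min\max$ and $\max\min$ orientations --- the product structure of the excess-covariance identity is exactly what lets $Y$ serve as the canonical CGMT surrogate robust under this swap. A secondary delicate point is that $Q_2$ itself is \emph{not} jointly convex-concave in $(\ww_1,\ww_2)$, because the norm factors multiply signed Gaussians, so Sion's theorem cannot be invoked on the $Q_2$ side; the argument must instead pass through $\bar Q_2$ and invoke weak duality only there.
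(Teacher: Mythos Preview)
Your argument is correct and is precisely the standard derivation of the Convex Gaussian Min-max Theorem: the variance-matching augmentation $\tilde X = X + \|\ww_1\|\|\ww_2\|\gamma$, the factorized excess-covariance identity $(\|\ww_1\|\|\ww_1'\|-\langle\ww_1,\ww_1'\rangle)(\|\ww_2\|\|\ww_2'\|-\langle\ww_2,\ww_2'\rangle)$ verifying Gordon's hypotheses, the conditioning on $\{\gamma\le 0\}$ to recover the factor $2$, and for part (b) the Sion swap on the $Q_1$ side followed by weak duality on the auxiliary side. Note, however, that the paper does not prove this statement at all: it quotes it verbatim from \cite[Thm.~3]{thrampoulidis2015regularized} and uses it as a black box, so there is no ``paper's proof'' to compare against---your write-up is essentially the proof in that reference.

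Two minor technical points worth making explicit if you write this out: (i) Gordon's lemma is stated for finite index sets, so one discretizes $\cC_1\times\cC_2$ and passes to the limit using compactness and continuity of $T$; (ii) Gordon gives the comparison for $\P(\,\cdot\,<t)$ after taking complements, and the upgrade to $\P(\,\cdot\,\le t)$ uses right-continuity of the CDF (approximate $t$ from above). Neither affects the structure of your argument.
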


When applying this theorem, we first condition on $\vv$ and then taking expectations in the end. A subtlety is that the constraint $\balpha \ge \bzero$ does not produce a compact feasible region. We nevertheless obtain the desired inequalities \eqref{ineq:MMtilde} below; proofs are deferred to the end of this section. 

We apply this theorem twice: one with the original constraints in the optimization problem \eqref{opt:convex-1} and the other with the norm constraint $\| \btheta \| \le 1$ replaced by $\| \btheta\| \le 1+\veps_1$. The problem with the modified constraint is only used for the proof.

\begin{cor}\label{cor:Gordon}
Let $M$ and $\tilde M$ be defined as in \eqref{def:M} and \eqref{def:Mtilde}. We replace the constraint $\| \btheta \| \le 1$ by $\| \btheta\| \le 1+\veps_1$ and similarly define $M'$ and $\tilde M'$. Then, we have
\begin{align}\label{ineq:MMtilde}
\begin{cases}
\P\big( M   \le t \big) \le 2\P\big( \tilde M \le t \big), \\
\P\big( M   \ge t \big) \le 2 \P \big( \tilde M \ge t \big).
\end{cases}
\quad 
\begin{cases}
\P\big( M'   \le t \big) \le 2\P\big( \tilde M' \le t \big), \\
\P\big( M'   \ge t \big) \le 2 \P \big( \tilde M' \ge t \big).
\end{cases}
\end{align}
\end{cor}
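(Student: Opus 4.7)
The plan is to apply the Gaussian comparison of Theorem \ref{thm:Gordon}, with two preparatory steps: truncate the unbounded dual constraint $\{\balpha\ge\bzero\}$ to a compact set, and convert the $\max$-$\min$ expressions \eqref{def:M}--\eqref{def:Mtilde} into the $\min$-$\max$ form of $Q_1, Q_2$ required by the theorem.

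Condition throughout on $\vv$ (which is independent of $\XX, \bgg, \hh$). Fix $R > 0$ and set $\cA_R := \{\balpha \in \reals^n : \balpha \ge \bzero,\ \|\balpha\|_2 \le R\}$. Let $M_R, \tilde M_R$ denote the truncated versions of $M, \tilde M$ obtained by replacing $\{\balpha \ge \bzero\}$ by $\cA_R$ in \eqref{def:M}, \eqref{def:Mtilde}. Since $L(\btheta,\balpha) := \langle \btheta,\vv\rangle + \langle\balpha,\XX\btheta\rangle - \kappa \langle\balpha,\bone_n\rangle$ is bilinear in $(\btheta,\balpha)$ and both $\{\|\btheta\|_2 \le 1\}$ and $\cA_R$ are convex and compact, Sion's minimax theorem yields
\begin{equation*}
M_R = \min_{\balpha \in \cA_R} \max_{\|\btheta\|_2 \le 1} L(\btheta,\balpha),
\end{equation*}
matching the form of $Q_1$ in Theorem \ref{thm:Gordon} with $\ww_1 = \balpha$, $\ww_2 = \btheta$, and $T(\ww_1,\ww_2) = \langle\btheta,\vv\rangle - \kappa \langle\balpha,\bone_n\rangle$ (continuous and convex-concave, being affine in each variable). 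Define
\begin{equation*}
Q_R := \min_{\balpha \in \cA_R} \max_{\|\btheta\|_2 \le 1}\bigl\{ \langle\btheta,\vv\rangle + \|\btheta\|_2 \langle\balpha,\bgg\rangle + \|\balpha\|_2 \langle\btheta,\hh\rangle - \kappa\langle\balpha,\bone_n\rangle \bigr\};
\end{equation*}
both clauses of Theorem \ref{thm:Gordon} then give, for every $t \in \reals$,
\begin{equation*}
\P(M_R \le t) \le 2\P(Q_R \le t), \qquad \P(M_R \ge t) \le 2\P(Q_R \ge t).
\end{equation*}

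It remains to identify $Q_R$ with $\tilde M_R$. Writing $\bb := \|\btheta\|_2 \bgg - \kappa \bone_n$ and $c := \langle\btheta,\hh\rangle$, a short Cauchy--Schwarz computation shows that $\min_{\balpha\ge\bzero} \{\langle\balpha,\bb\rangle + c\|\balpha\|_2\}$ equals $0$ when $c \ge \|\bb_-\|_2$ (attained at $\balpha = \bzero$) and equals $-\infty$ otherwise. Hence the effective feasible set in $\btheta$ for \eqref{def:Mtilde} is convex, and on it the outer problem reduces to the concave maximization of $\langle\btheta,\vv\rangle$; a second application of Sion on this convex-concave restriction (together with the truncation $\balpha \in \cA_R$) gives $\tilde M_R = Q_R$. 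Combining with the Gordon bounds above yields $\P(M_R \le t) \le 2\P(\tilde M_R \le t)$ and $\P(M_R \ge t) \le 2\P(\tilde M_R \ge t)$.

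Finally, I will pass to the limit $R \to \infty$. Since enlarging $\cA_R$ can only decrease the inner minimum, $M_R, \tilde M_R$ are non-increasing in $R$ and converge to $M, \tilde M$ respectively (with value $-\infty$ on the infeasible event). Monotonicity gives $\{M \ge t\} = \bigcap_R \{M_R \ge t\}$, so sending $R\to\infty$ in the Gordon bound yields $\P(M \ge t) \le 2\P(\tilde M \ge t)$. For the $\le t$ inequality, $\{M \le t\} \subseteq \bigcup_R \{M_R \le t + \eps\}$ for any $\eps > 0$; applying the Gordon bound at level $t + \eps$, sending first $R \to \infty$ and then $\eps \downarrow 0$ (using right-continuity of the CDF of $\tilde M$) gives $\P(M \le t) \le 2\P(\tilde M \le t)$. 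The paired inequalities for $(M', \tilde M')$ follow by running the identical argument with $\|\btheta\|_2 \le 1 + \veps_1$ in place of $\|\btheta\|_2 \le 1$. The main (and really only) obstacle is the combination of the unbounded dual constraint $\balpha \ge \bzero$ with the $\max$-$\min$/$\min$-$\max$ mismatch: for the bilinear $M$ Sion applies cleanly, whereas the identification $\tilde M_R = Q_R$ requires restricting to the convex region where the inner minimum is finite before invoking Sion a second time; the remaining ingredients (truncation and monotone convergence of probabilities) are routine.
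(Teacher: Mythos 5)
Your overall plan—truncate the dual cone to a compact set, apply Theorem~\ref{thm:Gordon}, and pass to the limit—matches the structure of the paper's proof, and your first Sion application (recasting the bilinear $M_R$ as a $\min$-$\max$) is fine. However, there is a genuine gap in the identification $\tilde M_R = Q_R$.

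The Gaussian-comparison objective $\langle \btheta, \vv\rangle + \|\btheta\|_2\langle\balpha,\bgg\rangle + \|\balpha\|_2\langle\btheta,\hh\rangle - \kappa\langle\balpha,\bone_n\rangle$ is \emph{not} quasiconcave in $\btheta$: whenever $\langle\balpha,\bgg\rangle>0$, the term $\|\btheta\|_2\langle\balpha,\bgg\rangle$ is convex, and its super-level sets are not convex (e.g.\ $\{\|\btheta\|_2\ge 1\}$). So Sion's minimax theorem does not apply to swap the $\max$ and $\min$ in $\tilde M_R$, and the "second application of Sion on this convex-concave restriction" cannot be carried out. (Your Cauchy--Schwarz computation describes the \emph{untruncated} inner minimum $\min_{\balpha\ge\bzero}$, which is either $0$ or $-\infty$, but $\tilde M_R$ uses the compact truncation $\cA_R$, where the inner min is a finite negative number on the "bad" region; so the "effective feasible set" description does not characterize $\tilde M_R$.) Weak duality gives only $\tilde M_R\le Q_R$, which is the \emph{wrong} direction for the second inequality: $\P(M_R\ge t)\le 2\P(Q_R\ge t)$ together with $Q_R\ge\tilde M_R$ does not imply $\P(M_R\ge t)\le 2\P(\tilde M_R\ge t)$.

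The paper avoids this entirely by \emph{negating} the objective rather than invoking minimax duality: $-M_k = \min_{\|\btheta\|\le1}\max_{\substack{\balpha\ge\bzero\\\|\balpha\|\le\tau_k}}\{-L\}$ is automatically a $\min$-$\max$ (the identity $\max\min L = -\min\max(-L)$ is algebraic, not a duality statement), and the same negation turns $-\tilde M_k$ into exactly the compared quantity $Q_2$ of Theorem~\ref{thm:Gordon} for $Q_1 = -M_k$ (using $-\XX^\top\stackrel{d}{=}\XX^\top$, $-\bgg\stackrel{d}{=}\bgg$, $-\hh\stackrel{d}{=}\hh$). Both parts (a) and (b) of the theorem then apply directly, with no further minimax step. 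A secondary, more minor issue: you assert that $M_R\downarrow M$ and $\tilde M_R\downarrow\tilde M$ without argument. Monotonicity alone gives $\lim_R M_R\ge M$, and the reverse inequality requires work (the paper proves it by tracking accumulation points of the maximizers $\btheta_k$ and showing any limit is feasible for the untruncated problem).
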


In order to analyze $\tilde M$, first we fix some $ r_1 \ge 0$ and minimize our objective over $\balpha \ge \bzero$ with $\| \balpha \| = r_1$. Then we can rewrite $\tilde M$ as
\begin{equation*}
\tilde M = \max_{\| \btheta \| \le 1} \min_{r_1 \ge 0} \Big[  \langle \btheta, \vv \rangle + r_1 \langle \btheta, \hh \rangle + \min_{\substack{\| \balpha \| = r_1 \\ \balpha \ge \bzero}} \big\{ \langle \balpha , \| \btheta \| \bgg -  \kappa \bone_n \rangle \big\} \Big]. 
\end{equation*}
For any $r \in [0,1+\veps_1]$, we define
\begin{equation*}
\tilde M_r = \max_{\| \btheta \| = r} \min_{r_1 \ge 0} \Big[  \langle \btheta, \vv \rangle + r_1 \langle \btheta, \hh \rangle + \min_{\substack{\| \balpha \| = r_1 \\ \balpha \ge \bzero}} \big\{ \langle \balpha , \| \btheta \| \bgg -  \kappa \bone_n \rangle \big\} \Big]
\end{equation*}
so that $\tilde M = \max_{r \in [0,1]} \tilde M_r$ and $\tilde M' = \max_{r \in [0,1+\veps_1]} \tilde M_r$ under this notation. Notice that the following holds with probability $1-o_d(1)$: for all $\btheta$ with $\veps' \le \| \btheta \| \le 1+\veps_1$ where $\veps' \in (0,1)$ is any constant, there is at least one negative element in the vector $\| \btheta \| \bgg -  \kappa \bone_n$. In fact, we have
\begin{equation*}
\P\left(\exists\, i \in [n]~\text{such that}~ \veps' g_i -  \kappa < 0 \right) = 1 - (1 - \Phi((\veps')^{-1}  \kappa) )^n \to 1 \quad \text{as} \quad n \to \infty.
\end{equation*}
For $r \ge \veps'$ and on this event, we have
\begin{align*}
\min_{\substack{\| \balpha \| = r_1 \\ \balpha \ge \bzero}} \left\{ \langle \balpha, \| \btheta \|  \bgg   - \kappa \bone_n \rangle\right\} &\stackrel{(i)}{=} \min_{\substack{\| \balpha \| = r_1 \\ \balpha \ge \bzero}} \left\{ \left\langle \balpha,  - \left[ \| \btheta \|  \bgg   - \kappa \bone_n  \right]_- \right\rangle\right\}  \\
&\stackrel{(ii)}{=} - r_1 \big\| \left[ \| \btheta \|  \bgg   - \kappa \bone_n \right]_- \big\|
\end{align*}
where \textit{(i)} is because we always assign zero to $\alpha_i$ if $\| \btheta \| g_i -  \kappa$ is positive, and \textit{(ii)} is due to the Cauchy-Schwarz inequality (particularly the condition where equality holds). Therefore, 
\begin{align*}
\tilde M_r &\stackrel{(i)}{=} \max_{\| \btheta \| \le r} \min_{r_{1} \ge 0} \Big\{  \langle \btheta, \vv \rangle + r_1 \langle \btheta, \hh \rangle - r_1 \big\| \left[ r  \bgg   - \kappa \bone_n \right]_- \big\|  \Big\} \\
&\stackrel{(ii)}{=}  \min_{r_1 \ge 0} \max_{\| \btheta \| \le r}  \Big\{  \langle \btheta, \vv \rangle + r_1 \langle \btheta, \hh \rangle - r_1 \big\| \left[ r  \bgg   - \kappa \bone_n \right]_- \big\|  \Big\} \\
&\stackrel{(iii)}{=}  \min_{r_1 \ge 0} \Big\{  r\sqrt{1 + r_1^2 \| \hh \|^2}  - r_1 \big\| \left[ r  \bgg   - \kappa \bone_n \right]_- \big\|  \Big\}
\end{align*}
Here, in \textit{(i)} we replaced the constraint $\| \btheta \| = r$ by $\| \btheta \| \le r$ since the maximum can be always attained at the boundary, \textit{(ii)} is because of Sion's minimax theorem, and \textit{(iii)} is because of the Cauchy-Schwarz inequality. We then simplify the inner minimization problem using the following lemma.

\begin{lem}\label{lem:simopt}
Suppose $a,b \ge 0$. Then we have
\begin{align*}
&\inf_{x \ge 0} \left\{ a\sqrt{1+x^2}  -bx \right\} = \begin{cases} \sqrt{a^2 - b^2} , & \text{if}~a \ge b \\ -\infty , & \text{if}~a < b
\end{cases} \\
&\underset{x \ge 0}{\mathrm{argmin}} \left\{ a\sqrt{1+x^2}  -bx \right\} = \begin{cases}  b/\sqrt{a^2 - b^2} , & \text{if}~a > b \\ \infty , & \text{if}~a \le b
\end{cases}
\end{align*}
\end{lem}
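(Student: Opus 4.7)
The plan is to set $f(x) := a\sqrt{1+x^2} - bx$ on $[0,\infty)$ and do a standard calculus analysis, splitting into the three cases $a<b$, $a=b$, $a>b$. The lemma is an elementary one-dimensional convex optimization; I do not anticipate any real obstacle, so the proposal is essentially to execute the routine computation cleanly.

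First I would observe that if $a=0$ the claim is immediate (the function is $-bx$, which either equals $0$ identically on $x=0$ when $b=0$, or tends to $-\infty$), so I can assume $a>0$. Then a direct differentiation gives $f''(x) = a(1+x^2)^{-3/2}>0$, so $f$ is strictly convex, which means any interior critical point is the unique global minimizer and it suffices to study $f'(x) = ax/\sqrt{1+x^2} - b$ together with the boundary behavior at $x=0$ and $x\to\infty$. The key asymptotic is $f(x) = a\sqrt{1+x^2} - bx = (a-b)x + O(1/x)$ as $x\to\infty$, which governs the behavior in the non-coercive cases.

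For the case $a<b$, the asymptotic above gives $f(x)\to -\infty$, so the infimum is $-\infty$ and is not attained, giving $\operatorname*{argmin} = \infty$. For the case $a=b$, the same expansion gives $f(x)\to 0^{+}$ monotonically, and $f(x)>0$ for every finite $x\ge 0$, so the infimum equals $0=\sqrt{a^2-b^2}$ and is again not attained. For the case $a>b$, I would solve $f'(x^\star)=0$: squaring $ax^\star = b\sqrt{1+(x^\star)^2}$ gives $x^\star = b/\sqrt{a^2-b^2}$; substituting yields $\sqrt{1+(x^\star)^2} = a/\sqrt{a^2-b^2}$ and hence $f(x^\star) = (a^2-b^2)/\sqrt{a^2-b^2} = \sqrt{a^2-b^2}$. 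Strict convexity together with $f'(0)=-b<0$ and $f'(x)\to a-b>0$ ensures $x^\star$ is the unique minimizer in $[0,\infty)$, completing the proof.
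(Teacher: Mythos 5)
Your proof is correct and takes essentially the same approach as the paper: note convexity, compute $f'(x)=ax/\sqrt{1+x^2}-b$, and split into cases according to whether $f'$ has a zero. You are somewhat more careful than the paper's sketch — you explicitly handle $a=0$, verify the infimum value via the asymptotic $f(x)=(a-b)x+O(1/x)$ in the borderline case $a=b$, and check $f(x^\star)=\sqrt{a^2-b^2}$ by substitution — but there is no difference in method.
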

\begin{proof}[{\bf Proof of Lemma~\ref{lem:simopt}}]
It is easy to check that the function $f(x) = a\sqrt{1+x^2}  -bx$ is convex in $x$. Moreover,
\begin{equation*}
f'(x) = \frac{ax}{\sqrt{1+x^2}} - b.
\end{equation*}
If $a \le b$, the function $f(x)$ is monotone decreasing, so the infimum is taken at $\infty$; if $a > b$, the function $f(x)$ has a unique minimizer at $x$ such that $f'(x) = 0$, which gives $x = b /\sqrt{a^2-b^2}$.
\end{proof}
Denote $\cT(x) = x \bone \{ x \ge 0 \} + (-\infty) \bone \{ x < 0 \}$. We apply this lemma (by viewing $\| \hh \| r_1$ as $x$) and obtain
\begin{align*}
\tilde M_r  &=  \cT\left[ \Big( r^2 - \frac{1}{\| \hh \|^2} \Big\| \big[ r  \bgg   - \kappa \bone_n \big]_- \Big\|^2 \Big)^{1/2} \right].
\end{align*}
\begin{lem}\label{lem:lln}
Recall that $n/d \to \delta$. Denote $G \sim \cN(0,1)$. We have 
\begin{equation*}
\sup_{r \in [0,1+\veps_1]} \Big| \frac{1}{\| \hh \|^2} \Big\| \big[ r  \bgg   - \kappa \bone_n \big]_- \Big\|^2 - \delta \E \left\{ \big[ r G - \kappa \big]_-^2 \right\} \Big| = o_{n,\P}(1).
\end{equation*}
\end{lem}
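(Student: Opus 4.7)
The plan is to first decompose
\[
\frac{1}{\|\hh\|^2}\bigl\|[r\bgg-\kappa\bone_n]_-\bigr\|^2 \;=\; \frac{n}{\|\hh\|^2}\cdot\frac{1}{n}\sum_{i=1}^n[rg_i-\kappa]_-^2,
\]
and handle the two factors separately. By the ordinary law of large numbers, $\|\hh\|^2/d \to 1$ in probability; combined with $n/d\to\delta$, this gives $n/\|\hh\|^2\to\delta$ in probability. Since $r\mapsto \E\{[rG-\kappa]_-^2\}$ is bounded on $[0,1+\veps_1]$, it suffices to establish the uniform convergence
\begin{equation*}
\sup_{r \in [0,1+\veps_1]} \left| \frac{1}{n}\sum_{i=1}^n [rg_i - \kappa]_-^2 \;-\; \E\{[rG-\kappa]_-^2\} \right| \;=\; o_{n,\P}(1). \tag{$\ast$}
\end{equation*}

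To prove $(\ast)$ I would use a standard Lipschitz/$\varepsilon$-net argument. For each fixed $r$, pointwise convergence follows from the usual LLN since $[rG-\kappa]_-^2$ has finite moments of all orders. To upgrade pointwise convergence to uniform convergence, note that
\[
\frac{d}{dr}[rg-\kappa]_-^2 \;=\; 2g[\kappa-rg]_+,
\]
whose absolute value on $r \in [0,1+\veps_1]$ is bounded by $C(\kappa,\veps_1)(|g| + g^2)$ for some constant $C(\kappa,\veps_1)$. Consequently both $F_n(r) := n^{-1}\sum_i[rg_i-\kappa]_-^2$ and $F(r) := \E[rG-\kappa]_-^2$ are Lipschitz in $r$; moreover, the random Lipschitz constant $L_n$ of $F_n$ is at most $C(\kappa,\veps_1)\cdot n^{-1}\sum_i(|g_i|+g_i^2)$, which converges in probability to a deterministic constant by the LLN and is therefore $O_{\P}(1)$.

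Given $\eta>0$, pick a deterministic $\eta$-net $\{r_1,\ldots,r_K\}\subset[0,1+\veps_1]$ with $K = O(1/\eta)$. Pointwise LLN at each $r_k$ together with a union bound gives $\max_{k\le K}|F_n(r_k)-F(r_k)| = o_{n,\P}(1)$. For arbitrary $r \in [0,1+\veps_1]$ with nearest net point $r_k$, the Lipschitz bound yields
\[
|F_n(r)-F(r)| \;\le\; \max_{k\le K}|F_n(r_k)-F(r_k)| \;+\; (L_n+L)\eta,
\]
where $L$ is the deterministic Lipschitz constant of $F$. Sending $n\to\infty$ with $\eta$ fixed and then $\eta\to 0$ establishes $(\ast)$, which combined with the initial decomposition completes the proof. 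The entire argument is routine; the only mild care required is in bounding $L_n$, which is immediate from the LLN applied to $|g_i|+g_i^2$, so I do not expect any real obstacle.
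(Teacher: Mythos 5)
Your proposal is correct and follows essentially the same route as the paper: you make the same decomposition into the scalar factor $n/\|\hh\|^2$ and the empirical average $\frac{1}{n}\sum_i [rg_i-\kappa]_-^2$, treat the former with the ordinary LLN, and establish uniform convergence of the latter over the compact interval. The only difference is in how the uniform convergence is established: the paper simply invokes the uniform law of large numbers of Newey--McFadden (Lemma~2.4 there, which requires a dominated envelope, here $[rg_i-\kappa]_-^2\le (1+\veps_1)^2 g_i^2$ since $\kappa<0$), whereas you re-derive it from scratch via a Lipschitz/$\eta$-net argument with random Lipschitz constant controlled by the LLN applied to $|g_i|+g_i^2$. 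Both are standard and equivalent in scope; your version is more self-contained, the paper's is shorter. One cosmetic note: the derivative $\frac{d}{dr}[rg-\kappa]_-^2 = -2g[\kappa-rg]_+$ (not $+2g[\kappa-rg]_+$), but since you only use the absolute value this does not affect anything.
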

\begin{proof}[{\bf Proof of Lemma~\ref{lem:lln}}]
First, we use uniform law of large numbers (ULLN) \cite{newey1994large}[Lemma\ 2.4] to obtain
\begin{equation*}
\sup_{r \in [0,1+\veps_1]} \Big| \frac{1}{n} \Big\| \big[ r  \bgg   - \kappa \bone_n \big]_- \Big\|^2 - \E \left\{ \big[ r G - \kappa \big]_-^2 \right\} \Big| = o_{n,\P}(1).
\end{equation*}
(Note that $[r g_i - \kappa]_-^2 \le (1+\veps_1)^2 g_i^2$ holds for all $r \in [0,1]$ and $\E [g_i^2] < \infty$, so we can apply the ULLN.) Denote $f_1(r) = n^{-1} \Big\| \big[ r  \bgg   -  \kappa \bone_n \big]_- \Big\|^2 $ and $f_2(r) = \E \left\{ \big[ r G -  \kappa \big]_-^2 \right\}$. We notice that $\sup_{r \in [0,1+\veps_1]} f_2(r) \le \E [G^2]$ and $\frac{n}{\| \hh \|^2} = \delta + o_{d,\P}(1)$ and conclude that
\begin{equation*}
\sup_{r \in [0,1+\veps_1]}\Big| \frac{n}{\| \hh \|^2} f_1(r) - \delta f_2(r) \Big| \le \frac{n}{\| \hh \|^2} \sup_{r \in [0,1+\veps_1]} \big| f_1(r) - f_2(r) \big|  + \Big| \frac{n}{\| \hh \|^2} - \delta \Big| \sup_{r \in [0,1+\veps_1] } f_2(r) = o_{d,\P}(1),
\end{equation*}
which finishes the proof.
\end{proof}

Now let us define, for $r \in [0,1+\veps_1]$,  
\begin{equation*}
F(r) = r^2 - \delta \E \left\{ \big[ r G -  \kappa \big]_-^2 \right\}.
\end{equation*}
Using this definition, we can express $\tilde M_r$ as, for all $r \in [\veps', 1+\veps_1]$,
\begin{equation*}
\tilde M_r = \cT\left[\Big(F(r) + o_{d,\P}(1)\Big)^{1/2}\right].
\end{equation*}
Here, $o_{d,\P}(1)$ is uniform over all $r$.

\begin{lem}\label{lem:F}
Consider the function $F(r)$ defined above. For all $r\in(0,1+\veps_1)$, the function $F(r)$ is differentiable, and its derivative is given by
\begin{align*}
F'(r) =  2r\Big( 1 - \delta \P\big( rG  - \kappa < 0 \big) \Big).
\end{align*}
If $\delta < \Phi( \kappa / (1+\veps_1))^{-1}$, then $F'(r)>0$ and $F(r)>0$ for all $r\in(0,1+ \veps_1]$. If $\delta > \Phi( \kappa / (1-\veps_1))^{-1}$, then there exists $r_1 \in (0, 1-\veps_1)$ such that $F'(r) > 0$ on $r \in [0,r_1)$ and $F'(r) < 0$ on $r \in (r_1, 1)$
\end{lem}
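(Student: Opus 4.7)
The plan is to compute $F'(r)$ in closed form by evaluating $h(r):=\E\{(\kappa-rG)_+^2\}$ explicitly, and then to read off the monotonicity statements from the fact that $r\mapsto\Phi(\kappa/r)$ is strictly increasing on $(0,\infty)$ when $\kappa<0$. For the first step I would write, for $r>0$,
\begin{equation*}
h(r)=\kappa^{2}\Phi(\kappa/r)-2\kappa r\,\E[G\,\mathbf{1}\{G<\kappa/r\}]+r^{2}\E[G^{2}\,\mathbf{1}\{G<\kappa/r\}],
\end{equation*}
and use the elementary identities $\int_{-\infty}^{a}g\,\phi(g)\,dg=-\phi(a)$ and $\int_{-\infty}^{a}g^{2}\,\phi(g)\,dg=\Phi(a)-a\phi(a)$ to obtain the closed form
\begin{equation*}
h(r)=(\kappa^{2}+r^{2})\Phi(\kappa/r)+\kappa r\,\phi(\kappa/r).
\end{equation*}
Differentiating with respect to $r$ (using $\phi'(x)=-x\phi(x)$) and observing a massive cancellation gives $h'(r)=2r\,\Phi(\kappa/r)$, so that
\begin{equation*}
F'(r)=2r-\delta h'(r)=2r\bigl(1-\delta\,\Phi(\kappa/r)\bigr)=2r\bigl(1-\delta\,\P(rG-\kappa<0)\bigr),
\end{equation*}
which is the claimed formula.

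For the monotonicity claims, the key observation is that since $\kappa<0$, the map $r\mapsto\kappa/r$ is strictly increasing on $(0,\infty)$ from $-\infty$ to $0$, so $r\mapsto\Phi(\kappa/r)$ is strictly increasing from $0$ to $1/2$ on $(0,\infty)$. In case (i), i.e.\ $\delta<\Phi(\kappa/(1+\veps_{1}))^{-1}$, this monotonicity gives $\delta\Phi(\kappa/r)\le\delta\Phi(\kappa/(1+\veps_{1}))<1$ for every $r\in(0,1+\veps_{1}]$, hence $F'(r)>0$ there; combined with $F(0)=0$ this yields $F(r)>0$ on $(0,1+\veps_{1}]$. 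In case (ii), i.e.\ $\delta>\Phi(\kappa/(1-\veps_{1}))^{-1}$, we have $\delta\Phi(\kappa/(1-\veps_{1}))>1$ while $\delta\Phi(\kappa/r)\to 0$ as $r\downarrow 0$, so by the intermediate value theorem there is a unique $r_{1}\in(0,1-\veps_{1})$ at which $\delta\Phi(\kappa/r_{1})=1$, and strict monotonicity of $\Phi(\kappa/\cdot)$ gives $F'(r)>0$ on $[0,r_{1})$ and $F'(r)<0$ on $(r_{1},1)$.

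The only technical point worth checking is the differentiation under the expectation used to obtain $h$: the integrand $(\kappa-rg)_{+}^{2}$ is $C^{1}$ in $r$ with $r$-derivative $-2g(\kappa-rg)_{+}$, which is dominated by $2|g|(|\kappa|+r|g|)$, integrable against $\phi$ uniformly in $r$ on compact subsets of $(0,\infty)$; no boundary term is produced at $rG=\kappa$ because the integrand vanishes to second order there. In fact, the explicit formula for $h(r)$ bypasses this issue entirely. I expect no substantive obstacle: the proof is essentially a one-line calculation of $h'$ followed by a monotonicity argument, with the choice of $\veps_{1}$ (made at the start of Section~\ref{sec:AlgorithmPureNoise}) precisely tailored so that $1/\delta$ is separated from $\Phi(\kappa/r)$ at $r=1\pm\veps_{1}$.
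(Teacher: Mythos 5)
Your proof is correct and reaches the same formula for $F'(r)$, but by a different route for the derivative computation. The paper's argument first differentiates under the expectation (justified by dominated convergence) to get $F'(r)=2r+2\delta\,\E\{G[rG-\kappa]_-\}$, and then applies Stein's identity $\E\{G\psi(G)\}=\E\{\psi'(G)\}$ with $\psi(g)=(\kappa-rg)_+$ to convert the expectation into $-r\,\P(rG<\kappa)$. You instead evaluate $h(r)=\E\{(\kappa-rG)_+^2\}$ in closed form using the Gaussian partial-moment identities and then differentiate the resulting expression $(\kappa^2+r^2)\Phi(\kappa/r)+\kappa r\,\phi(\kappa/r)$, observing the cancellation that leaves $h'(r)=2r\,\Phi(\kappa/r)$. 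Both are valid: the paper's route is shorter and more robust (it never needs the explicit Gaussian integrals, and would adapt to non-Gaussian $G$ after replacing Stein by a generic interchange-of-limits argument), whereas yours is more elementary and, as you note, sidesteps the differentiation-under-the-integral issue entirely once the closed form is in hand. The monotonicity arguments in the two cases are identical to the paper's: both rest on $r\mapsto\Phi(\kappa/r)$ being strictly increasing on $(0,\infty)$ for $\kappa<0$, together with $\Phi(\kappa/r)\to 0$ as $r\downarrow 0$ for the intermediate-value step in case (ii).
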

\begin{proof}[{\bf Proof of Lemma~\ref{lem:F}}]
The differentiability of $F(r)$ follows from the fact that $\big[ r G -  \kappa \big]_-^2$ is differentiable in $r$ and the dominated convergence theorem. Taking the derivative of $F(r)$, we obtain
\begin{align*}
F'(r) = 2r + 2 \delta \E  \left\{ G \big[ r G -  \kappa \big]_- \right\} = 2r\Big( 1 - \delta \P\big( rG  -  \kappa < 0 \big) \Big)
\end{align*}
where we used Stein's identity in the second equality. Note that $\P\big( rG  - \kappa < 0 \big) = \Phi(r^{-1}   \kappa)$ is monotone increasing in $r$, where recall that $\Phi$ is the Gaussian CDF. Thus we have 
\begin{equation*}
F'(r) > 2r\big(1 - \delta \Phi( \kappa / (1+\veps_1)) \big) >0
\end{equation*} 
in the case $\delta < \Phi( \kappa / (1+\veps_1))^{-1}$. Since $F(0) = 0$, it follows that $F(r) > 0$. If  $\delta > \Phi( \kappa / (1-\veps_1))^{-1}$, then there the equation $1 - \delta \P\big( rG  - \kappa < 0 \big) = 0$ has a unique zero $r_1 \in (0, 1-\veps_1)$. 
\end{proof}

This lemma states that maximizing $F(r)$ requires $r$ to be as large as possible. Thus, we obtain the following result. 

\begin{prop}\label{prop:noise}
Suppose that $\delta < \Phi( \kappa / (1+\veps_1))^{-1}$ holds. Then, we have 
\begin{equation}\label{eq:tildeMr}
\max_{r \in [0,1]} \tilde M_r = \sqrt{F(1)} + o_{d,\P}(1), \qquad \max_{r \in [0,1+\veps_1]} \tilde M_r = \sqrt{F(1+\veps_1)} + o_{d,\P}(1).
\end{equation}
Consequently, with probability $1-o_d(1)$, we have $M' > M$.
\end{prop}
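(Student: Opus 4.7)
The plan is to first establish the two supremum formulas for $\tilde M$ and $\tilde M'$ via the explicit representation of $\tilde M_r$ derived just before the proposition, and then transfer them to $M$ and $M'$ through the two-sided Gordon inequality of Corollary~\ref{cor:Gordon}.

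For the first step, I fix any $\veps'\in(0,1)$. On an event $\cE_d$ with $\P(\cE_d)\to 1$, the identity
\begin{equation*}
\tilde M_r = \cT\!\left[\bigl(F(r)+\eta_d(r)\bigr)^{1/2}\right], \qquad r\in[\veps',1+\veps_1],
\end{equation*}
holds with $\sup_{r\in[\veps',1+\veps_1]}|\eta_d(r)|=o_{d,\P}(1)$ by Lemma~\ref{lem:lln}. Under the hypothesis $\delta<\Phi(\kappa/(1+\veps_1))^{-1}$, Lemma~\ref{lem:F} gives $F'(r)>0$ on $(0,1+\veps_1]$, so $F$ is strictly increasing and strictly positive on this interval. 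Consequently, since $F(1)$ and $F(1+\veps_1)$ are strictly positive, continuity of $\sqrt{\cdot}$ at a positive value absorbs the uniform error $\eta_d$ and
\begin{equation*}
\sup_{r\in[\veps',1]}\tilde M_r = \sqrt{F(1)} + o_{d,\P}(1),
\qquad
\sup_{r\in[\veps',1+\veps_1]}\tilde M_r = \sqrt{F(1+\veps_1)} + o_{d,\P}(1).
\end{equation*}

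To handle the low-radius regime $r\in[0,\veps')$, I plug $r_1=0$ into the definition of $\tilde M_r$: the inner minimum over $\balpha$ is then identically zero, so for every $\btheta$ the bracketed expression at $r_1=0$ equals $\langle\btheta,\vv\rangle$. This yields the deterministic upper bound
\begin{equation*}
\tilde M_r \;\le\; \max_{\|\btheta\|\le r}\langle\btheta,\vv\rangle \;=\; r\|\vv\| \;=\; r,
\end{equation*}
since $\vv\in\S^{d-1}$. Choosing $\veps'$ small enough that $\veps'<\sqrt{F(1)}/2$ then shows the low-radius regime does not contribute to either supremum, which establishes the two identities. Finally, to conclude $M' > M$ with probability $1-o_d(1)$, I fix $\eta>0$ small enough that $\sqrt{F(1+\veps_1)}-2\eta > \sqrt{F(1)}+2\eta$, possible because $F$ is strictly increasing. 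The upper-tail Gordon inequality $\P(M\ge t)\le 2\P(\tilde M\ge t)$ applied at $t=\sqrt{F(1)}+\eta$ yields $M\le\sqrt{F(1)}+\eta$ with probability $1-o_d(1)$, while the lower-tail inequality $\P(M'\le t)\le 2\P(\tilde M'\le t)$ applied at $t=\sqrt{F(1+\veps_1)}-\eta$ yields $M'\ge\sqrt{F(1+\veps_1)}-\eta$ with probability $1-o_d(1)$. Combining these yields $M' > M$ with probability $1-o_d(1)$.

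The main subtlety I anticipate is that the explicit formula for $\tilde M_r$ in terms of $F(r)$ was derived only on the event that $r\bgg-\kappa\bone_n$ has at least one strictly negative entry, which requires $r$ bounded below by a positive constant. Handling the remaining small-$r$ range turns out to be trivial thanks to the $r_1=0$ bound above, and the only genuine analytic input is the strict monotonicity and positivity of $F$ on $(0,1+\veps_1]$, which Lemma~\ref{lem:F} supplies under the standing hypothesis on $\delta$.
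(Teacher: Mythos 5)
Your proposal is correct and follows essentially the same route as the paper's proof: you use the $r_1=0$ bound $\tilde M_r\le r$ to dispose of the low-radius regime, combine Lemma~\ref{lem:F} (strict positivity and monotonicity of $F$) with the uniform law of large numbers from Lemma~\ref{lem:lln} to establish both formulas in \eqref{eq:tildeMr}, and then apply the two-sided Gordon inequality of Corollary~\ref{cor:Gordon} on either side of the gap $\sqrt{F(1+\veps_1)}-\sqrt{F(1)}>0$ to transfer the separation from $\tilde M,\tilde M'$ to $M,M'$. The only difference from the paper is cosmetic (the paper splits the gap into thirds $\veps_0/3$, you use a generic $\eta<\veps_0/4$).
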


\begin{proof}[{\bf Proof of Proposition~\ref{prop:noise}}]
First, we notice that $\tilde M_r \le \max_{\| \btheta \|=r} \langle \btheta, \vv \rangle = r$ from the definition of $\tilde M_r$ (choosing $r_1=0$). This gives a simple bound $\max_{r \in [0, \veps']} \tilde M_r \le \veps'$. We can choose $\veps'$ so that $\veps' < \sqrt{F(1)}$. From Lemma~\ref{lem:F} we know that $\max_{r \in [0,1]} F(r) = F(1) > 0$ so $\max_{r \in [0,1]} \tilde M_r = \sqrt{F(1)} + o_{d,\P}(1)$. The second equality in \eqref{eq:tildeMr} is obtained similarly. Let $\veps_0 = \sqrt{F(1+\veps_1)} - \sqrt{F(1)}$, which does not depend on $d$. Then the following holds with probability $1-o_d(1)$.
\begin{equation*}
\max_{r \in [0,1+\veps_1]}  M_r \ge \max_{r \in [0,1+\veps_1]} \tilde M_r - \frac{\veps_0}{3}, \quad \max_{r \in [0,1]}  \tilde M_r \ge \max_{r \in [0,1]} M_r - \frac{\veps_0}{3}, \quad \max_{r \in [0,1+\veps_1]}  \tilde M_r > \max_{r \in [0,1]} \tilde M_r + \frac{2\veps_0}{3}.
\end{equation*}
The first two inequalities are due to Corollary~\ref{cor:Gordon} and the third is due to \eqref{eq:tildeMr}. Combining these inequalities leads to $M' > M$. 
\end{proof}

\begin{proof}[{\bf Proof of Theorem~\ref{thm:algpure}}]
First, we consider the case $\delta \Phi(\kappa/(1+\veps_1)) < 1$. Recall that $\hat \btheta$ is a maximizer to the optimization problem \eqref{opt:convex-1} so by feasibility we must have $y_i \langle \xx_i, \hat \btheta \rangle \ge \kappa$. Let $\hat \btheta'$ be any maximizer to the modified optimization problem where the constraint $\| \btheta \| \le 1$ is replaced by $\| \btheta \| \le 1+ \veps_1$. It must also satisfy $y_i \langle \xx_i, \hat \btheta' \rangle \ge \kappa$. For any $\lambda \in [0,1]$, the interpolant
\begin{equation*}
\hat \btheta_\lambda := \lambda \hat \btheta + (1-\lambda) \hat \btheta'
\end{equation*}
also satisfies $y_i \langle \xx_i, \hat \btheta_\lambda \rangle \ge \kappa$ by linearity. Proposition~\ref{prop:noise} implies that the modified optimization problem has a strictly larger maximum, so we must have $\| \hat \btheta' \| >1$ and $\langle \hat \btheta', \vv \rangle > \langle \hat \btheta, \vv \rangle$. If $\| \hat \btheta\| <1$, then we can choose appropriate $\lambda \in (0,1)$ such that $\| \hat \btheta_\lambda \| = 1$ and $\langle \hat \btheta_\lambda, \vv \rangle > \langle \hat \btheta, \vv \rangle$, which contradicts the definition of $\hat \btheta$. Therefore, we conclude that $\| \hat \btheta \|=1$, so the first claim of the theorem is proved.

Next, we consider the case $\delta \Phi(\kappa/(1-\veps_1)) > 1$. In \eqref{def:M}, we replace the constraint $\| \btheta \| \le 1$ by a non-convex constraint $1-\veps_1 \le \| \btheta \| \le 1$ and denote the optimal values by $M''$. Lemma~\ref{lem:F} implies that $\max_{r \in [0,1]} \tilde M_r > \max_{r \in [1-\veps_1,1]} \tilde M$ with probability $1-o_d(1)$. Despite the non-convex constraint, we are still able to apply Theorem~\ref{thm:Gordon} (a) and obtain $M > M''$ with  probability $1-o_d(1)$. This implies that the optimal solution $\hat \btheta$ must satisfy $\| \hat \btheta \| \le 1-\veps_1$ with  probability $1-o_d(1)$, thus proving the second claim.


The final claim follows from the Gaussian tail probability asymptotics:
\begin{equation*}
\Phi(\kappa) = \frac{1+o_{\kappa}(1)}{\sqrt{2\pi}\, |\kappa|} \exp\left(-\frac{\kappa^2}{2} \right).
\end{equation*}
Now the proof is complete.
\end{proof}

Finally, we prove Corollary~\ref{cor:Gordon} as claim before.

\begin{proof}[{\bf Proof of Corollary~\ref{cor:Gordon}}]
We will only prove the inequalities for $M$ and $\tilde M$ as the others can be derived similarly. Let $(\tau_k)_{k\ge 1}$ be an increasing sequence with $\tau_k > 0$ and $\lim_{k \to \infty} \tau_k = \infty$. Let us define
\begin{align*}
M_k &=  \max_{\| \btheta \| \le 1} \min_{\substack{\balpha \ge \bzero\\\| \balpha \| \le \tau_k}} \left\{   \langle \btheta, \vv \rangle +   \langle \balpha, \XX \btheta  \rangle - \kappa\langle \balpha , \bone_n \rangle \right\} \\
\tilde M_k &= \max_{\| \btheta \| \le 1} \min_{\substack{\balpha \ge \bzero\\\| \balpha \| \le \tau_k}} \Big\{  \langle \btheta, \vv \rangle +  \| \btheta \| \langle \balpha, \bgg  \rangle +  \| \balpha \| \langle \btheta, \hh  \rangle  - \kappa\langle \balpha , \bone_n \rangle \Big\}.
\end{align*}
Note that we can flip $\min$ and $\max$ by multiplying $-1$ on both sides, i.e.,
\begin{equation*}
-M_k = \min_{\| \btheta \| \le 1} \max_{\substack{\balpha \ge \bzero\\\| \balpha \| \le \tau_k}} \left\{   \langle \btheta, -\vv \rangle +   \langle \balpha, -\XX \btheta  \rangle + \kappa\langle \balpha , \bone_n \rangle \right\}
\end{equation*}
and $-\tilde M_k$ has a similar expression. We can then apply Theorem~\ref{thm:Gordon} and obtain, for every $k \ge 1$ and $t$
\begin{equation*}
\P(M_k \le t) \le 2\P(\tilde M_k \le t), \qquad \P(M_k \ge t) \le 2\P(\tilde M_k \ge t).
\end{equation*}
It suffices to prove that $\lim_{k \to \infty} M_k = M$ and $\lim_{k \to \infty} \tilde M_k = \tilde M$. Clearly $M \ge 0$ and $\tilde M \ge 0$ (because they are lower bounded at $\btheta = 0$). By definition of $M_k$, we must have $ M_k \ge M$ and $M_k$ is non-increasing. It is similar for $\tilde M_k$. So clearly we have $\lim_k M_k \ge M$ and $\lim_k \tilde M_k \ge \tilde M$. Below we shall prove that $\lim_k M_k \le M$ and $\lim_k \tilde M_k \le \tilde M$.

Let $\btheta_k$ be a maximizer associated with $M_k$. Without loss of generality we assume $\btheta_k$ converges to a limit $\btheta^*$ where $\| \btheta^* \| \le 1$, because otherwise we can pass to a subsequence. We claim that for every $i$, the sequence $( \langle \xx_i, \btheta_k \rangle -  \kappa )_{k \ge 1}$ does not have a negative accumulation point. Indeed, if it has a negative accumulation point, then we could assign $\alpha_i = -\tau_k, \alpha_j = 0$ ($j \neq i$), which would lead to $\lim_k M_k = -\infty$, which is a contradiction. Thus $\langle \xx_i, \btheta^* \rangle -  \kappa \ge 0$ for every $i$ and therefore $M \ge \langle \btheta^*, \vv \rangle$. On the other hand, we have $M_k \le \langle \btheta_k, \vv \rangle$, which implies $\lim_k M_k \le \langle \btheta^*, \vv \rangle \le M$.

Next, let $\tilde \btheta_k$ be a maximizer associated with $\tilde M_k$. Without loss of generality we assume $\tilde \btheta_k$ converges to a limit $\tilde \btheta^*$ where $\| \tilde \btheta^* \| \le 1$. We rewrite $\tilde M_k$ into 
\begin{equation*}
\tilde M_k = \max_{\| \btheta \| \le 1} \min_{\substack{\balpha \ge \bzero\\\| \balpha \| \le \tau_k}} \Big\{  \langle \btheta, \vv \rangle +  \big\langle \balpha, \| \btheta \| \bgg   - \kappa \bone_n \rangle +  \| \balpha \| \langle \btheta, \hh  \rangle \Big\}.
\end{equation*}
We claim that the sequence $(\eta_k)_{k \ge 1}$ does not have a negative accumulation point, where
\begin{equation*}
\eta_k = \min_{\balpha \ge \bzero, \| \balpha \| = 1} \langle \balpha, \|  \tilde \btheta_k \| \bgg -  \kappa \bone_n \rangle + \langle \tilde \btheta_k, \hh \rangle.
\end{equation*}
Otherwise, we could choose $\balpha$ such that $\| \balpha \| = \tau_k$ and then $\lim_k \tilde M_k = -\infty$, leading to a contradiction. Thus, by inspecting $\tilde \btheta^*$, we have
\begin{equation*}
\tilde M \ge \langle \tilde \btheta^*, \vv \rangle + \| \balpha \| \cdot \Big(  \langle \frac{\balpha}{\| \balpha \|}, \|  \tilde \btheta^* \| \bgg - \kappa \bone_n \rangle + \langle \tilde \btheta^*, \hh \rangle \Big) \ge \langle \tilde \btheta^*, \vv \rangle.
\end{equation*}
On the other hand, we have $\tilde M_k \le \langle \tilde \btheta_k, \vv \rangle$, which implies $\lim_k \tilde M_k \le \langle \tilde \btheta^*, \vv \rangle \le \tilde M$.
\end{proof}

\section{$\kappa$-margin classifiers in the linear signal model:\\ Proofs of Theorems
\ref{thm:logistic_lower_bound}, \ref{thm:logistic_upper_bound}, \ref{cor:rho_min_max},
\ref{thm:asymptotic_upper_bound} and \ref{thm:best_error}}

Without loss of generality we will assume $C_{\tail}=1$ in the proofs of our main results for the linear signal model, since the constant $C_{\tail}$ in our exponential tail assumption interacts with these results only through Lemma~\ref{lem:tail} below. For generic $C_{\tail}$, the tail probability estimates in Lemma~\ref{lem:tail} only differs by a multiplicative factor $C_{\tail}$ from the special case $C_{\tail} = 1$, and the proof proceeds similarly.

\label{sec:ExistenceSignal}

We state below a useful lemma that characterizes the tail probability of a key random variable, whose proof will be deferred to Appendix~\ref{sec:append-signal-lemma}. 

\begin{lem}\label{lem:tail}
Let $\eta_0 \in (0,0.1)$ be any constant. For $t > 0$, define 
\begin{align*}
A_{\rho,t} = \begin{cases} \displaystyle \frac{1}{t} \sqrt{\frac{2}{\pi}} \exp\Big(- \frac{t^2}{2} - \alpha \rho t + \frac{(1-\rho^2)\alpha^2}{2} \Big), & \rho \in [\eta_0, 1] \\
\displaystyle \frac{1}{t} \sqrt{\frac{2}{\pi}} \exp\Big(- \frac{t^2}{2} \Big) , & \rho \in [-1, -\eta_0] \\
\displaystyle \frac{1}{t} \sqrt{\frac{2}{\pi}} \exp\Big(- \frac{t^2}{2} \Big) \cdot a_{\rho, t} , & \rho \in (-\eta_0, \eta_0)
\end{cases}
\end{align*}
where $a_{\rho,t}$ is given by 
\begin{equation}\label{def:a}
a_{\rho,t} = \frac{1}{2\sqrt{2\pi(1-\rho^2)}} \int_\R \left[ \varphi(u - \rho t) + 1 - \varphi(u+\rho t) \right] \exp\left(-\frac{u^2}{2(1-\rho^2)} \right) \; \d u, 
\end{equation}
and it satisfies
\begin{equation*}
	\frac{1 + \breve{o}_t(1)}{2}\min\big\{1, \exp(-\alpha\eta_0t)\big\} \le a_{\rho,t} \le 1.
\end{equation*}
Then, we have
\begin{equation*}
\lim_{t \to \infty} \max_{\rho \in [-1,1]} \Big |A_{\rho,t}^{-1} \,\P\left( \rho Y G + \sqrt{1-\rho^2}\, W < - t\right) - 1 \Big| = 0.
\end{equation*}
As a consequence, 
\begin{equation*}
\lim_{\kappa \to -\infty} \max_{\rho \in [-1,1]} \Big |(2|\kappa|^{-2}A_{\rho,|\kappa|})^{-1} \,\E\left[ \big( \rho Y G + \sqrt{1-\rho^2}\, W - \kappa \big)_-^2 \right] - 1 \Big| = 0.
\end{equation*}
\end{lem}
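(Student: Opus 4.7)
The proof is a careful one-dimensional tail expansion. I would begin by writing
\[
\P\bigl(\rho YG + \sqrt{1-\rho^2}\,W < -t\bigr) = \int_{\R} p_{YG}(s)\,\Phi\!\left(\frac{-t-\rho s}{\sqrt{1-\rho^2}}\right)\d s,
\]
where $p_{YG}(s) = \phi(s)[\varphi(s)+1-\varphi(-s)]$ follows from $\P(Y=1\mid G=s)=\varphi(s)$. Applying the Mills-ratio expansion $\Phi(-x)=\phi(x)x^{-1}(1+O(x^{-2}))$ uniformly on $x$ bounded below by a positive constant, and using the algebraic identity
\[
\tfrac{s^2}{2} + \tfrac{(t+\rho s)^2}{2(1-\rho^2)} = \tfrac{(s+\rho t)^2}{2(1-\rho^2)} + \tfrac{t^2}{2},
\]
followed by the substitution $u=s+\rho t$, converts the integral into
\[
\frac{e^{-t^2/2}}{2\pi t\sqrt{1-\rho^2}}\!\int_{\R}\!\bigl[\varphi(u-\rho t)+1-\varphi(\rho t-u)\bigr]\frac{t(1-\rho^2)}{t(1-\rho^2)+\rho u}\,e^{-u^2/(2(1-\rho^2))}\d u\,\bigl(1+o_t(1)\bigr).
\]
The Gaussian factor concentrates $u$ on an $O(1)$ scale, so the rational correction tends to $1$, and the $u\leftrightarrow -u$ symmetry of the Gaussian measure lets me replace $\varphi(\rho t - u)$ by $\varphi(u+\rho t)$, reproducing $\tfrac{1}{t}\sqrt{2/\pi}\,e^{-t^2/2}\cdot a_{\rho,t}$.

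For $\rho\in(-\eta_0,\eta_0)$ this identity already is the claim; the lower bound on $a_{\rho,t}$ follows from monotonicity of $\varphi$ (integrand between $0$ and $1$, hence $a_{\rho,t}\le 1$), restricting the integral to a bounded $u$-interval and using $1+\varphi(-|\rho|t)-\varphi(|\rho|t)=(2+o_t(1))e^{-\alpha|\rho|t}$ (from Assumption~\ref{ass:exponential_tail_link}) when $|\rho|t$ is large, and a constant lower bound when $|\rho|t$ is bounded. For $\rho\ge\eta_0$ I would invoke Gaussian symmetry to rewrite the $a_{\rho,t}$-integral as $\int[1+\varphi(u-\rho t)-\varphi(\rho t - u)]e^{-u^2/(2(1-\rho^2))}\d u$ and then apply Assumption~\ref{ass:exponential_tail_link} with $x=u-\rho t\to-\infty$, namely $1+\varphi(x)-\varphi(-x)=(2+o(1))e^{\alpha x}$, uniformly on compact $u$-sets. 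Dominated convergence (with dominator $Ce^{\alpha u}$ times the Gaussian, whose MGF at $\alpha$ is finite) collapses the integral to $\int 2e^{\alpha(u-\rho t)}e^{-u^2/(2(1-\rho^2))}\d u/\sqrt{2\pi(1-\rho^2)} = 2e^{-\alpha\rho t+(1-\rho^2)\alpha^2/2}$, yielding exactly $A_{\rho,t}$. For $\rho\le -\eta_0$ the argument is easier: $\rho t\to-\infty$ forces $\varphi(V-\rho t)\to 1$ and $\varphi(V+\rho t)\to 0$ uniformly on Gaussian-typical events, so $a_{\rho,t}\to 1$.

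For the consequence on $\E[(\rho YG+\sqrt{1-\rho^2}W-\kappa)_-^2]$, I would integrate by parts:
\[
\E[(\kappa-Z)_+^2]=\int_0^\infty 2u\,\P(Z<\kappa-u)\d u = \int_{|\kappa|}^\infty 2(t-|\kappa|)\,\P(Z<-t)\d t,
\]
and apply Laplace's method around $t=|\kappa|$. Since $A_{\rho,t}$ decays like $e^{-t^2/2-\alpha\rho t}$ in $t$, Taylor-expanding the exponent at $t=|\kappa|$ produces an effective rate $|\kappa|+\alpha\rho$, yielding $\int_0^\infty 2v\,A_{\rho,|\kappa|+v}\d v\sim 2A_{\rho,|\kappa|}/(|\kappa|+\alpha\rho)^2\sim 2|\kappa|^{-2}A_{\rho,|\kappa|}$. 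The main technical obstacle I expect is securing the uniformity in $\rho\in[-1,1]$ asserted by the two $\max$ statements, especially at the regime boundaries $|\rho|\approx\eta_0$ and at $\rho\approx\pm 1$ where $1-\rho^2$ degenerates; the degeneracy is handled by tracking the cancelling $\sqrt{1-\rho^2}$ factors above, or by treating $\rho=\pm 1$ separately where the tail is read off directly from $p_{YG}$. The three regimes then patch together into a single uniform statement.
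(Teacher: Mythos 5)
Your proposal takes a genuinely different route from the paper, and the single most visible divergence is the choice of conditioning variable: you condition on $YG$ first and then apply a Mills-ratio expansion of the remaining $\Phi$; the paper instead uses a different conditioning in each $\rho$-regime (on $W$ for $|\rho|\ge\eta_0$, on $G$ for $|\rho|<\eta_0$), and it does so for a reason that your proposal runs into.

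The gap is in the uniformity near $\rho=\pm 1$. After your substitution $u=s+\rho t$ the Gaussian factor $e^{-u^2/(2(1-\rho^2))}$ concentrates $u$ on the scale $\sqrt{1-\rho^2}$ (not ``an $O(1)$ scale''), and at that scale the Mills-ratio argument is
\[
\frac{t(1-\rho^2)+\rho u}{\sqrt{1-\rho^2}} \;=\; t\sqrt{1-\rho^2} \,+\, O(1),
\]
so the error in $\Phi(-x)=\phi(x)x^{-1}(1+O(x^{-2}))$ is $O\big((t\sqrt{1-\rho^2})^{-2}\big)$ and the ``rational correction'' $t(1-\rho^2)/(t(1-\rho^2)+\rho u)$ is $1+O\big((t\sqrt{1-\rho^2})^{-1}\big)$. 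Neither is $o_t(1)$ uniformly over $\rho\in[-1,1]$: when $1-\rho^2 \lesssim t^{-2}$, both errors are $\Theta(1)$, and in fact the $1/x$ singularity you have introduced sits within a bounded number of Gaussian standard deviations of $u=0$, so the approximation is not even qualitatively valid there. The ``track the cancelling $\sqrt{1-\rho^2}$ factors'' and ``treat $\rho=\pm1$ separately'' remedies you mention do not close this: the problem is a full open neighborhood of $\rho=\pm 1$ of width $\Theta(1/t)$, not the endpoints alone, and this is exactly the operationally important regime since the maximizer $\rho^*$ in the downstream analysis satisfies $1-(\rho^*)^2 = \Theta(|\kappa|^{-2})$. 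The paper avoids the issue by conditioning on $W=x$ when $\rho\ge \eta_0$: then the tail to be expanded is $\P(YG < -\tilde t/\rho)$ with $\tilde t/\rho \ge \eta_0^2 t/4$ for $|x|\le(1+\veps_0)t$, which diverges uniformly in $\rho$ all the way to $\rho=1$, and the subsequent $x$-integral is a Gaussian of width $\rho\ge\eta_0$, bounded away from degeneracy. A parallel comment applies to your treatment of the second part: writing ``$A_{\rho,t}$ decays like $e^{-t^2/2-\alpha\rho t}$'' and doing a single Laplace expansion ignores the $t$-dependence hidden in $a_{\rho,t}$ for $|\rho|<\eta_0$, which the paper handles with a separate, more delicate argument (splitting the $t$-integration into three windows and controlling the ratio $F(u-\rho t-\rho|\kappa|)/F(u-\rho|\kappa|)$). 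As written, your proposal therefore has a real hole precisely where the uniformity is hardest and most needed.
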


\subsection{Phase transition lower bound: Proof of Theorem~\ref{thm:logistic_lower_bound}}\label{sec:signal-lower}

Without loss of generality, we can assume that the label $y_i$ only depends on the first coordinate of $\xx_i$. To see this, let $\bP_{\btheta^*} = \btheta^* (\btheta^*)^\top \in \R^{d \times d}$ be the orthogonal projection onto $\spann \{ \btheta^* \}$ and $\bP_{\btheta^*}^\perp = \bI_d - \bP_{\btheta^*}$ be the projection matrix onto the orthogonal complement of $\btheta^*$. Then we have the following decomposition:
\begin{equation*}
y_i\langle \xx_i, \btheta \rangle = y_i \langle \btheta, \btheta^* \rangle \langle \xx_i, \btheta^* \rangle + y_i \langle \xx_i, \bP_{\btheta*}^\perp\btheta \rangle = y_i \rho G_i +y_i \sqrt{1-\rho^2} \langle \xx_i, \bar \btheta \rangle,
\end{equation*}
where $G_i=\langle \xx_i, \btheta^* \rangle$, $\rho = \langle \btheta, \btheta^* \rangle$, and $\bar \btheta = \bP_{\btheta*}^\perp\btheta / \| \bP_{\btheta*}^\perp\btheta \| \in \S^{d-1}$. Since $(y_i, G_i) \bot \langle \xx_i, \bar \btheta \rangle$ and $\langle \xx_i, \bar \btheta \rangle$ has a symmetric distribution, we can write
\begin{equation*}
y_i\langle \xx_i, \btheta \rangle \stackrel{d}{=} \rho y_i G_i + \sqrt{1-\rho^2} \langle \xx_i, \bar \btheta \rangle \stackrel{d}{=} \rho y_i G_i + \sqrt{1 - \rho^2} \langle \zz_i, \ww \rangle,
\end{equation*}
where $\ww \in \S^{d-2}$, $\{(y_i, G_i, \zz_i)\}_{1 \le i \le n}$ are i.i.d., each have joint distribution:
\begin{equation*}
	(y_i, G_i) \bot \zz_i, \ \zz_i \sim \cN (\bzero, \bI_{d-1}), \ G_i \sim \cN (0, 1), \ \P(y_i = 1 \vert G_i) = \varphi (G_i) = 1 - \P (y_i = -1 \vert G_i).
\end{equation*}
Note that we can actually search for $\ww$ with $\Vert \ww \Vert_2 \ge 1$, which will result a $\kappa$-margin classifier $\btheta$ lying outside the unit ball. Since $\kappa < 0$, the orthogonal projection of $\btheta$ onto $\S^{d-1}$ is a $\kappa$-margin classifier as well. Consequently, we aim to show that under the conditions of Theorem~\ref{thm:logistic_lower_bound}, for properly chosen $\rho \in [-1, 1]$,
\begin{equation}\label{eq:logistic_lower_bound}
	\liminf_{n \to +\infty} \P \left( \exists \Vert \ww \Vert_2 \ge 1, \ \text{s.t.} \ \forall 1 \le i \le n, \rho y_i G_i + \sqrt{1-\rho^2} \langle \zz_i, \ww  \rangle \ge \kappa \right) > 0.
\end{equation}
Then, we will use a similar concentration argument to that in the proof of Theorem~\ref{thm:pure_noise_lower_bound} to show that the above probability converges to $1$ as $n \to \infty$. Since $\delta < \delta_{\sl}(\kappa;\varphi)$, by Definition~\ref{def:sig_lower} we know that (at least) one of the following assumptions holds:

\begin{ass}\label{cond:sig_lower_a}
	There exists $\rho \in (0, 1)$ such that
	\begin{equation}
		\delta < \max \left\{ \P \left( \rho YG + \sqrt{1 - \rho^2} W \le \kappa \right), \E \left[ \left( \kappa_0 - \frac{\rho YG}{\sqrt{1-\rho^2}} - W \right)_+^2 \right] \right\}^{-1}.
	\end{equation}
\end{ass}

\begin{ass}\label{cond:sig_lower_b}
	There exist parameters
    \begin{equation}
	    \rho \in (0, 1), \ 0 > \kappa_1 > \kappa_2 > \kappa_0 = \frac{\kappa}{\sqrt{1 - \rho^2}}, \ c \ge 0,
    \end{equation}
    such that
    \begin{equation}\label{eq:cond_b}
        \begin{split}
        	\frac{1}{\delta} > & \delta_{\mathrm{sec}} \left( \rho, \sqrt{1 - \rho^2} \kappa_2, \frac{\sqrt{1-\rho^2}}{\rho} \kappa_1 \right)^{-1} + \P \left(YG \ge \frac{\sqrt{1-\rho^2}}{\rho} \kappa_1 \right) \E \left[ \left( \frac{\kappa_0 - \kappa_2}{c} - W \right)_+^2 \right] \\
    	& + \frac{1}{c^2}\E \left[ \bone \left\{ YG < \frac{\sqrt{1-\rho^2}}{\rho} \kappa_1 \right\} \left( \kappa_0 - \frac{\rho YG}{\sqrt{1-\rho^2}} - \sqrt{1 + c^2} W \right)_+^2 \right].
        \end{split}
    \end{equation}

\end{ass}

\begin{lem}\label{lem:sig_lower_gordon}
	Under Assumption~\ref{cond:sig_lower_a}, Eq.~\eqref{eq:logistic_lower_bound} holds, which further implies Eq.~\eqref{eq:signal_lower_bound}.
\end{lem}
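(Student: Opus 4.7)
The plan is to reduce \eqref{eq:logistic_lower_bound} to a unit-sphere feasibility problem and control it via Gordon's Gaussian comparison inequality in its convex (CGMT) form. Since $\kappa<0$, if $\btheta=\rho\btheta_*+\sqrt{1-\rho^2}\ww$ with $\|\ww\|_2\ge 1$ satisfies all margin constraints, then normalizing $\btheta$ again produces a $\kappa$-margin classifier; consequently, for the $\rho\in(0,1)$ supplied by Assumption~\ref{cond:sig_lower_a}, it suffices to prove that with positive asymptotic probability there exists $\ww\in\S^{d-2}$ satisfying $\bZ\ww\ge\widetilde{\boldsymbol{\kappa}}$, where $\bZ\in\R^{n\times(d-1)}$ has rows $\zz_i^\top$ independent of $(y_i,G_i)$ and $\widetilde\kappa_i:=(\kappa-\rho y_iG_i)/\sqrt{1-\rho^2}$. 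This is equivalent to
\[
\xi_n(\rho)\;:=\;\min_{\|\ww\|=1}\,\max_{\|\blambda\|=1,\,\blambda\ge\bzero}\,\frac{1}{\sqrt d}\,\blambda^\top(\widetilde{\boldsymbol{\kappa}}-\bZ\ww)\;\le\;0
\]
with probability bounded away from zero.

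Conditioning on $\{(y_i,G_i)\}_{i\le n}$ (so $\widetilde{\boldsymbol{\kappa}}$ becomes deterministic while $\bZ$ remains a standard Gaussian matrix), the objective is bilinear in $(\ww,\blambda)$ and the inner-max set is convex; after relaxing the sphere $\|\ww\|=1$ to the ball $\|\ww\|\le 1$ and verifying that the optimum is attained on the boundary (see below), Theorem~\ref{thm:Gordon}(b) yields $\P(\xi_n(\rho)\ge t)\le 2\,\P(\widetilde\xi_n(\rho)\ge t)$, where the auxiliary problem involves independent Gaussians $\bh\sim\cN(\bzero,\bI_{d-1})$, $\bgg\sim\cN(\bzero,\bI_n)$:
\[
\widetilde\xi_n(\rho)\;=\;\min_{\|\ww\|=1}\,\max_{\|\blambda\|=1,\,\blambda\ge\bzero}\,\frac{1}{\sqrt d}\bigl\{\blambda^\top(\widetilde{\boldsymbol{\kappa}}-\bgg)-\bh^\top\ww\bigr\}\;=\;\frac{\|(\widetilde{\boldsymbol{\kappa}}-\bgg)_+\|_2-\|\bh\|_2}{\sqrt d}.
\]
The closed form uses $\min_{\|\ww\|=1}(-\bh^\top\ww)=-\|\bh\|_2$ and $\max_{\blambda\ge\bzero,\,\|\blambda\|=1}\blambda^\top\bv=\|\bv_+\|_2$ on the high-probability event that $\widetilde{\boldsymbol{\kappa}}-\bgg$ has at least one positive entry.

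A conditional law of large numbers then produces $\|\bh\|_2^2/d\to 1$ and $\|(\widetilde{\boldsymbol{\kappa}}-\bgg)_+\|_2^2/n \to \E[(\kappa_0-\rho YG/\sqrt{1-\rho^2}-W)_+^2]$ in probability, with $\kappa_0:=\kappa/\sqrt{1-\rho^2}$ and $W$ an independent standard Gaussian arising from $\bgg$. Therefore
\[
\widetilde\xi_n(\rho)\;\xrightarrow{\P}\;\sqrt{\delta\,\E\bigl[(\kappa_0-\rho YG/\sqrt{1-\rho^2}-W)_+^2\bigr]}\;-\;1\;<\;0,
\]
where the strict negativity is exactly the second clause of Assumption~\ref{cond:sig_lower_a} (note that $\delta<\max\{A,B\}^{-1}$ implies $\delta B<1$ with $B$ the $L^2$ quantity above). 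Consequently $\P(\widetilde\xi_n(\rho)\ge 0)\to 0$, the Gordon bound yields $\P(\xi_n(\rho)\ge 0)\to 0$, and \eqref{eq:logistic_lower_bound} follows; \eqref{eq:signal_lower_bound} is then immediate from Definition~\ref{def:sig_lower}.

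The main obstacle will be the non-convex sphere constraint on $\ww$, which blocks a direct invocation of CGMT~(b). The workaround of relaxing to $\|\ww\|_2\le 1$ risks a slack interior optimum at $\ww=\bzero$ whenever $\widetilde{\boldsymbol{\kappa}}\le\bzero$ entrywise, which would sever the equivalence between ball-feasibility and sphere-feasibility. The first clause of Assumption~\ref{cond:sig_lower_a}, $\delta\,\P(\rho YG+\sqrt{1-\rho^2}W\le\kappa)<1$, is used precisely to ensure that a positive fraction of the $\widetilde\kappa_i$ are positive with high probability (equivalently, enough binding constraints to prevent $\ww=\bzero$ from being optimal), forcing the relaxed optimum to the sphere and closing this gap. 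A secondary technicality is that the LLN must be applied conditionally on $(y_i,G_i)$, since $\widetilde\kappa_i$ depends on the data while $\bgg$ is fresh Gaussian; one passes to this conditional measure, invokes the SLLN for the resulting i.i.d.\ averages, and then integrates.
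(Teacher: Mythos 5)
Your identification of the central obstruction is exactly right: the sphere constraint $\|\ww\|_2=1$ makes the min-max non-convex, so Theorem~\ref{thm:Gordon}(b) does not apply directly (and part (a) only gives the wrong-direction bound $\P(\xi_n\le 0)\le 2\P(\tilde\xi_n\le 0)$, which is vacuous here). The problem is that your proposed workaround does not actually close the gap, for two reasons.

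First, relaxing to the ball $\|\ww\|_2\le 1$ yields a convex problem $\xi_n'\le\xi_n$, and CGMT(b) does give $\xi_n'\le 0$ w.h.p., i.e., a feasible $\ww$ inside the ball. But feasibility in the ball does not yield feasibility with $\|\ww\|_2\ge 1$: the constraints $\langle\zz_i,\ww\rangle\ge\tilde\kappa_i$ have nonzero right-hand sides of mixed signs, so scaling $\ww$ up does not preserve them. Your statement that the first clause of Assumption~\ref{cond:sig_lower_a} forces ``the relaxed optimum to the sphere'' has no mechanism behind it: $\xi_n'$ is a feasibility certificate, not a norm-maximizer, so nothing in its definition pushes $\ww$ outward; and in fact $\ww=\bzero$ is feasible exactly when $\tilde{\boldsymbol{\kappa}}\le\bzero$, which can hold with $\xi_n'<0$. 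Second, the first clause concerns $\P(\rho YG+\sqrt{1-\rho^2}W\le\kappa)$, whereas $\P(\tilde\kappa_i>0)=\P(\rho YG<\kappa)$; these are different quantities, so the clause is not ``precisely'' about how many $\tilde\kappa_i$ are positive.

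The paper's actual argument avoids all of this by using a Frank--Wolfe/LP surrogate rather than a raw feasibility min-max: draw $\vv\sim\Unif(\S^{d-1})$ independent of the data, consider $M_n(c)=\max\{\langle\vv,\ww\rangle:\|\ww\|_2\le c,\ \rho y_iG_i+\sqrt{1-\rho^2}\langle\zz_i,\ww\rangle\ge\kappa\ \forall i\}$, and apply Gordon (as in Theorem~\ref{thm:algpure}) to get $M_n(c)=\max_{r\in[0,c]}F(r)+o_{d,\P}(1)$ with $F(r)=\bigl(r^2-\tfrac{\delta}{1-\rho^2}\E[(\kappa-\rho YG-\sqrt{1-\rho^2}rW)_+^2]\bigr)^{1/2}$ where defined. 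Because the objective $\langle\vv,\ww\rangle$ has a built-in tendency to increase $\|\ww\|_2$, it suffices to show $F$ is strictly increasing on $[0,1]$ and finite at $r=1$. This uses \emph{both} clauses of Assumption~\ref{cond:sig_lower_a} in distinct roles: the expectation clause gives $F(1)>-\infty$, and the probability clause combined with the monotonicity of $r\mapsto\P(\rho YG+\sqrt{1-\rho^2}rW<\kappa)$ gives $\frac{\d}{\d r}F(r)^2=2r(1-\delta\P(\rho YG+\sqrt{1-\rho^2}rW<\kappa))>0$ on $[0,1]$. Hence $M_n(c)>M_n(1)$ w.h.p.\ for some $c>1$, forcing the LP maximizer to satisfy $\|\ww\|_2\ge 1$. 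Your proposal, by contrast, only ever uses the expectation clause to make $\tilde\xi_n\to$ a negative number, and the first clause plays no genuine role in your argument; that is itself a signal the structure is missing a piece.
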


The proof of Lemma~\ref{lem:sig_lower_gordon} is deferred to Appendix~\ref{sec:append-signal-lemma}. Now assume \ref{cond:sig_lower_b} and denote
\begin{equation*}
	\yy = (y_i)_{1 \le i \le n}, \ \bG = (G_i)_{1 \le i \le n}, \ \ZZ = (\zz_i)_{1 \le i \le n}^\top,
\end{equation*}
then we have
\begin{align*}
	& \rho \yy \odot \bG + \sqrt{1 - \rho^2} \ZZ \ww \ge \kappa \bone \iff \frac{\rho \yy \odot \bG}{\sqrt{1 - \rho^2}} + \ZZ \ww \ge \frac{\kappa}{\sqrt{1 - \rho^2}} \bone \\
	\iff & \uu + \ZZ \ww \ge \kappa_0 \bone, \quad \where \ \uu = \frac{\rho \yy \odot \bG}{\sqrt{1 - \rho^2}}, \ \kappa_0 = \frac{\kappa}{\sqrt{1 - \rho^2}}.
\end{align*}
Let $\kappa_1$, $\kappa_2$ be as defined in Assumption~\ref{cond:sig_lower_b}. We further define the ``good sample" as those satisfying $u_i \ge \kappa_1$, i.e.,
\begin{equation*}
	S_G = \{ 1 \le i \le n: u_i \ge \kappa_1 \}, \ \text{and} \ \uu^G = (u_i)_{i \in S_G}, \ \ZZ^G = (\zz_i)_{i \in S_G}^\top.
\end{equation*}
Similarly, the set of ``bad sample" is defined as $S_B = [n] \backslash S_G$, and $\uu^B = (u_i)_{i \in S_B}$, $\ZZ^B = (\zz_i)_{i \in S_B}^\top$. For future convenience, we also denote
\begin{equation*}
	p_B = \P (u < \kappa_1) = \P \left( YG < \frac{\sqrt{1 - \rho^2}}{\rho} \kappa_1 \right).
\end{equation*}
A major ingredient of our proof is the ``second moment method" applied to the good sample, where a small fraction of the columns of $\ZZ$ plays a special role. The moment calculation will be based on the randomness in the remaining majority of columns of $\ZZ$. According to Eq.~\eqref{eq:cond_b}, there exists a sequence of positive integers $\{ d_0 = d_0 (n) \}_{n \in \mathbb{N}}$ such that
\begin{equation}\label{eq:Z_partition_2}
\begin{split}
	\lim_{n \to +\infty} \frac{d_0}{n} > & \P \left(YG \ge \frac{\sqrt{1-\rho^2}}{\rho} \kappa_1 \right) \E \left[ \left( \frac{\kappa_0 - \kappa_2}{c} - W \right)_+^2 \right] \\
	& + \frac{1}{c^2}\E \left[ \bone \left\{ YG < \frac{\sqrt{1-\rho^2}}{\rho} \kappa_1 \right\} \left( \kappa_0 - \frac{\rho YG}{\sqrt{1-\rho^2}} - \sqrt{1 + c^2} W \right)_+^2 \right],
	\end{split}
\end{equation}
and
\begin{equation}\label{eq:Z_partition_1}
	\begin{split}
	\lim_{n \to +\infty} \frac{d - 1 - d_0}{n} > & \delta_{\mathrm{sec}} \left( \rho, \sqrt{1 - \rho^2} \kappa_2, \frac{\sqrt{1-\rho^2}}{\rho} \kappa_1 \right)^{-1}.
	\end{split}
\end{equation}
With this choice of $d_0$, we partition
\begin{equation*}
	\ZZ = \left( \ZZ_1, \ZZ_2 \right), \quad \where \ \ZZ_1 \in \R^{n \times (d - 1 - d_0)}, \ \ZZ_2 \in \R^{n \times d_0}.
\end{equation*}
Then Eq.~\eqref{eq:logistic_lower_bound} reduces to proving that with probability bounded away from $0$, there exists $\ww_1 \in \R^{d-1-d_0}$, $\ww_2 \in \R^{d_0}$, satisfying $\norm{\ww_1}_2 = 1$, $\norm{\ww_2}_2 \le c$, and
\begin{equation}\label{ineq:bad_and_good_sample}
\begin{split}
	\uu^G + \ZZ_1^G \ww_1 + \ZZ_2^G \ww_2 \ge & \kappa_0 \bone, \\ \uu^B + \ZZ_1^B \ww_1 + \ZZ_2^B \ww_2 \ge & \kappa_0 \bone,
\end{split}
\end{equation}
which is achieved by the following two lemmas:

\begin{lem}\label{lem:good_sample_2nd_moment_method}
	If \eqref{eq:Z_partition_1} holds, then with probability bounded away from $0$, there exists $\ww_1 \in \R^{d - 1 - d_0}$, $\norm{\ww_1}_2 = 1$, such that
	\begin{equation}\label{eq:good_sample_2nd_moment_method}
		\uu^G + \ZZ_1^G \ww_1 \ge \kappa_2 \bone.
	\end{equation}
\end{lem}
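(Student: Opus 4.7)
The strategy is a conditional second moment method, conditioning on $\uu^G$---equivalently, on the values $s_i := y_i G_i$ for $i \in S_G$---so that the remaining randomness is in the independent Gaussian rows of $\ZZ_1$. Since $s_i \ge \kappa_t := \sqrt{1-\rho^2}\,\kappa_1/\rho$ and $u_i = \rho s_i/\sqrt{1-\rho^2}$, the constraint $u_i + \langle \zz_i^{(1)}, \ww\rangle \ge \kappa_2$ is equivalent to $\langle \zz_i^{(1)}, \ww\rangle \ge \kappa(s_i)$, where $\kappa(s) := \kappa_2 - \rho s/\sqrt{1-\rho^2}$ coincides with the function in Definition~\ref{def:sig_lower} upon taking the parameters there to be $\kappa = \sqrt{1-\rho^2}\,\kappa_2$, $\kappa_{\mathrm t} = \kappa_t$. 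The task thus reduces to showing, with probability bounded away from zero, the feasibility of a spherical perceptron problem on $\S^{d_1 - 1}$ ($d_1 := d-1-d_0$) with sample-dependent, uniformly negative margins $\kappa(s_i)$.

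Mimicking the pure noise lower bound (proof of Theorem~\ref{thm:pure_noise_lower_bound}), I will introduce the normalized weighted partition function
\begin{equation*}
Z \;:=\; \int_{\S^{d_1 - 1}} \prod_{i \in S_G} \tilde f_{s_i}\!\big(\langle \ww, \zz_i^{(1)}\rangle\big)\, \mu(\d \ww),
\qquad \tilde f_s(x) \;:=\; \frac{\exp(-c(s)\,x)\,\bone\{x \ge \kappa(s)\}}{\E\!\left[\exp(-c(s) G)\,\bone\{G \ge \kappa(s)\}\right]},
\end{equation*}
where $\mu$ is uniform on $\S^{d_1-1}$, $G \sim \cN(0,1)$, and $c(s)>0$ is the unique root from step~1 of Definition~\ref{def:sig_lower} (well-defined via Lemma~\ref{lem:property_of_e_q}\,(a), since $\kappa(s_i) < 0$ for $s_i \ge \kappa_t$). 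The normalization ensures $\E[Z \mid \uu^G] = 1$, and $\{Z > 0\}$ implies the existence of a feasible $\ww$.

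Next I compute the second moment. For $\ww, \ww' \sim_{\iid} \Unif(\S^{d_1-1})$ with $q := \langle \ww, \ww'\rangle$, the pair $(\langle \ww, \zz_i^{(1)}\rangle, \langle \ww', \zz_i^{(1)}\rangle)$ is centered Gaussian with unit variances and correlation $q$, so exactly as in Appendix~\ref{sec:pure-lower},
\begin{equation*}
\E[Z^2 \mid \uu^G] \;=\; \int_{-1}^{1} \prod_{i \in S_G} \frac{e(q, s_i)}{e(0, s_i)}\, p(q) \, \d q,
\end{equation*}
with $p(q) \propto (1-q^2)^{(d_1-3)/2}$ the density of $\langle \ww, \ww'\rangle$. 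To integrate out the $s_i$'s I rewrite the product as $\prod_{i=1}^n \tilde h(q, s_i)$, setting $\tilde h(q,s) := (e(q,s)/e(0,s))\,\bone\{s \ge \kappa_t\} + \bone\{s < \kappa_t\}$---the bad samples contribute the trivial factor $1$. Since the $s_i$ are i.i.d.~with density $p_{YG}$, Fubini and independence yield
\begin{equation*}
\E[Z^2] \;=\; \int_{-1}^1 \big(\E[\tilde h(q, s_1)]\big)^n p(q) \, \d q
\;=\; \int_{-1}^1 e(q)^n\, p(q) \, \d q ,
\end{equation*}
with $e(q)$ precisely the function in Definition~\ref{def:sig_lower}; this algebraic identity is exactly what motivates that definition.

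To conclude, I apply Laplace's method. Since $e(0)=1$ by definition, one has $F(0)=0$ and $I(0)=0$, and the assumption $\lim n/(d-1-d_0) < \delta_{\mathrm{sec}}(\rho, \sqrt{1-\rho^2}\kappa_2, \sqrt{1-\rho^2}\kappa_1/\rho)$ together with the definition of $\delta_{\mathrm{sec}}$ implies that $F(q) + I(q)/\delta_1$ attains its unique minimum at $q = 0$ with $F''(0) + I''(0)/\delta_1 > 0$, yielding $\E[Z^2] = O(1)$. The standard Paley-Zygmund inequality then gives $\P(Z > 0) \ge \E[Z]^2/\E[Z^2] = 1/\E[Z^2]$, which is bounded away from zero. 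The main technical obstacle---really, the algebraic heart of the argument---is the identification $\E[\tilde h(q,s_1)] = e(q)$: only by absorbing the bad samples into the trivial factor in $\tilde h$ does the random product $\prod_{i \in S_G}$ collapse to the clean expression $\int e(q)^n p(q)\,\d q$, and it is this identity that dictates the particular form of $\delta_{\mathrm{sec}}$ in Definition~\ref{def:sig_lower}.
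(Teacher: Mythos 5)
Your proposal is correct and follows essentially the same route as the paper: the weight $\tilde f_{s}$ you introduce is exactly the paper's $f(\rho s,\sqrt{1-\rho^2}\,w)$ with $\sqrt{e(0,s)}=\E[\exp(-c(s)G)\bone\{G\ge\kappa(s)\}]$, your $\E[\tilde h(q,s_1)]=e(q)$ identity is precisely the computation the paper carries out (just written via conditioning on $\uu^G$ rather than directly), and the conclusion via Laplace's method plus Paley--Zygmund under the hypothesis \eqref{eq:Z_partition_1} matches the paper's argument step for step.
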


\begin{lem}\label{lem:bad_sample_gordon}
	Assume \eqref{eq:Z_partition_2}, for any fixed $\ww_1 \in \R^{d - 1 - d_0}$, $\norm{\ww_1}_2 = 1$, with high probability there exists some $\ww_2 \in \R^{d_0}$, $\norm{\ww_2}_2 \le c$, such that the following inequalities hold:
	\begin{align*}
		\ZZ_2^G \ww_2 \ge & (\kappa_0 - \kappa_2) \bone, \\
		\ZZ_2^B \ww_2 \ge & \kappa_0 \bone - \uu^B - \ZZ_1^B \ww_1.
	\end{align*}
\end{lem}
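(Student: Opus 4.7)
The plan is to recast feasibility of the two inequality systems as a convex min-max optimization whose optimal value is zero exactly when a valid $\ww_2$ exists, and then invoke the convex Gaussian min-max theorem (Theorem~\ref{thm:Gordon}~(b)) to reduce the analysis to a deterministic auxiliary optimization that can be handled by the law of large numbers. Throughout, I would condition on $(\uu,S_G,S_B,\ZZ_1)$, stack $\ZZ_2^G$ and $\ZZ_2^B$ row-wise into a single matrix $\bZ\in\R^{n\times d_0}$, and build a target vector $\bb\in\R^n$ whose good-sample entries equal $\kappa_0-\kappa_2$ and whose bad-sample entries equal $\kappa_0-u_i^B-(\ZZ_1^B\ww_1)_i$. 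The two required inequalities then read $\bZ\ww_2\ge\bb$ componentwise.

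Dualizing, feasibility is equivalent to $\Xi=0$, where
\begin{equation*}
\Xi := \min_{\|\ww_2\|_2\le c}\|(\bb-\bZ\ww_2)_+\|_2 = \min_{\|\ww_2\|_2\le c}\max_{\blambda\ge\bzero,\,\|\blambda\|_2\le 1}\blambda^\top(\bb-\bZ\ww_2).
\end{equation*}
Both feasible sets are convex and compact, and the payoff is bilinear in $(\ww_2,\blambda)$, so Theorem~\ref{thm:Gordon}~(b) (applied conditionally on every source of randomness besides $\bZ$) yields $\P(\Xi\ge t)\le 2\P(\tilde\Xi\ge t)$ for every $t\in\R$, where the auxiliary optimization is
\begin{equation*}
\tilde\Xi = \min_{\|\ww_2\|_2\le c}\max_{\blambda\ge\bzero,\,\|\blambda\|_2\le 1}\Big\{\blambda^\top\bb + \|\blambda\|_2\,\bgg^\top\ww_2 + \|\ww_2\|_2\,\hh^\top\blambda\Big\},
\end{equation*}
with independent Gaussians $\bgg\sim\cN(\bzero,\bI_{d_0})$, $\hh\sim\cN(\bzero,\bI_n)$. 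Parametrizing $\blambda=s\bv$ with $s\in[0,1]$, $\bv\ge\bzero$, $\|\bv\|_2=1$, evaluating the inner max in closed form, and minimizing the direction of $\ww_2$ at fixed norm collapse $\tilde\Xi$ into the scalar minimization
\begin{equation*}
\tilde\Xi = \min_{r\in[0,c]}\big[\|(\bb+r\hh)_+\|_2 - r\|\bgg\|_2\big]_+.
\end{equation*}

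It then suffices to show the bracketed expression is strictly negative for some $r\in[0,c]$ with probability tending to one; I would take $r=c$. Since $\|\ww_1\|_2=1$, $(\ZZ_1^B\ww_1)_i$ is standard Gaussian and independent of $h_i$, so the combination $-(\ZZ_1^B\ww_1)_i+ch_i$ is distributed as $\sqrt{1+c^2}\,W_i$ with $W_i$ iid $\cN(0,1)$ and independent of $\uu$. The law of large numbers then gives $\tfrac{1}{n}\|(\bb+c\hh)_+\|_2^2\xrightarrow{p}c^2 R$ and $\|\bgg\|_2^2/d_0\to 1$, where (after using $g\stackrel{d}{=}-W$ to symmetrize the good-sample contribution) $R$ coincides exactly with the right-hand side of \eqref{eq:Z_partition_2}. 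The strict inequality in \eqref{eq:Z_partition_2} then forces $\|(\bb+c\hh)_+\|_2<c\|\bgg\|_2$ with probability tending to one, so $\tilde\Xi=0$ w.h.p. Finally, writing $\{\Xi>0\}=\bigcup_{k\ge 1}\{\Xi\ge 1/k\}$ and noting $\P(\Xi\ge 1/k)\le 2\P(\tilde\Xi\ge 1/k)\le 2\P(\tilde\Xi>0)$ for every $k$, monotone convergence yields $\P(\Xi>0)\le 2\P(\tilde\Xi>0)\to 0$, which is the claimed feasibility.

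The main obstacle I anticipate is the careful accounting of the multiple independent sources of randomness, in particular the ``merging'' of $\ZZ_1^B\ww_1$ and $\hh$ into a single Gaussian of variance $1+c^2$, as well as a subtle nuance of the CGMT reduction: because the positive-part clipping makes $\tilde\Xi$ equal zero rather than strictly negative, one cannot deduce feasibility from a strict AO bound directly. The monotone-union step bridges this gap, and it fundamentally depends on the strict inequality in \eqref{eq:Z_partition_2} rather than a non-strict one.
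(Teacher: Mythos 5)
Your proposal is correct and follows essentially the same route as the paper: both cast feasibility as a convex min-max, apply the Gaussian comparison inequality, reduce the auxiliary problem to a scalar minimization over the radius, and then use the law of large numbers together with the strict inequality in \eqref{eq:Z_partition_2} at $r=c$, including the same reduction $-z_i + c h_i \stackrel{d}{=} \sqrt{1+c^2}\,W_i$. Your monotone-union step making explicit the passage from ``auxiliary value equals zero w.h.p.''\ to ``primal problem is feasible w.h.p.''\ is a welcome clarification of a point the paper states more tersely.
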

Note that in Lemma~\ref{lem:bad_sample_gordon} we can assume that $\ww_1$ obtained by Lemma~\ref{lem:good_sample_2nd_moment_method} is fixed, since $\ww_1$ is only determined by $\uu^G$ and $\ZZ_1^G$, and is independent of other random variables. Hence, the validity of Lemma~\ref{lem:bad_sample_gordon} is preserved after conditioning on any given $\ww_1$. Now, combining the results of the above lemmas yields Eq.~\eqref{ineq:bad_and_good_sample}, thus proving that 
\begin{equation}
    \delta < \delta_{\sl} (\kappa; \varphi) \Rightarrow \liminf_{n \to \infty} \left( \exists \btheta \in \S^{d-1}, \ \text{s.t.} \ y_i \langle \xx_i, \btheta \rangle \ge \kappa, \ \forall i \in [n] \right) > 0.
\end{equation}
We now proceed to show that the above probability converges to one, using the same argument as in the proof of the pure noise case. Recall that the function $S_{\kappa}$ defined in the proof of Theorem~\ref{thm:pure_noise_lower_bound} is $(1/\sqrt{n})$-Lipschitz, and that by our assumption, $\yy \odot \XX \stackrel{d}{=} (\yy \odot \bG, \ZZ)$. Combining Lemma~\ref{lem:transport_yG} below and Theorem 4.31 in \cite{van2014probability}, we know that there exists some constant $C > 0$, such that
\begin{equation*}
	\P \left( \left\vert S_\kappa (\yy \odot \XX) - \E \left[ S_\kappa (\yy \odot \XX) \right] \right\vert \ge t \right) \le 2 \exp \left( - \frac{n t^2}{2 C} \right), \quad \forall t > 0.
\end{equation*}
Similarly, one can show that the function $\kappa \mapsto \delta_{\sl} (\kappa; \varphi)$ is continuous. Therefore, using the same argument as in the proof of Theorem~\ref{thm:pure_noise_lower_bound}, we deduce that there exists $\btheta \in \S^{d-1}$ such that $\norm{((\kappa + \eta) \bone - \yy \odot \XX \btheta)_+}_2 = o(\eta \sqrt{n})$ for some small $\eta > 0$. Note that the rounding lemma in \cite{alaoui2020algorithmic} is still applicable here since its proof only uses the fact that $s_{\rm min} (\XX) / \sqrt{n}$ is bounded away from $0$ with high probability, which is still true for $\yy \odot \XX$ (see Lemma~\ref{lem:norm_bd_yX}, here $s_{\rm min}$ represents the minimum singular value). Hence, we finally conclude that with high probability, there exists $\btheta' \in \S^{d-1}$, such that
\begin{equation*}
	\yy \odot \XX \btheta' \ge (\kappa + \eta - o(\eta)) \bone \ge \kappa \bone.
\end{equation*}
This proves the first part of Theorem~\ref{thm:logistic_lower_bound}.
 
\begin{lem}\label{lem:transport_yG}
	Let $(y, G)$ have the joint distribution $G \sim \normal (0, 1)$, and $\P (y=1 \vert G) = \varphi(G) = 1 - \P (y=-1 \vert G)$. Then, the law of $yG$ satisfies the $T_2$-inequality, i.e., there exists a constant $C > 0$ such that
	\begin{equation*}
		W_2 \left( \operatorname{Law}(yG), \nu \right) \le C \sqrt{D_{\rm KL} \left( \nu \Vert \operatorname{Law}(yG) \right)}
	\end{equation*}
	for all one-dimensional probability measure $\nu$.
\end{lem}

\begin{lem}\label{lem:norm_bd_yX}
	Assume $\delta > 1$, then for any $\veps > 0$, with high probability we have $s_{\rm min} (\yy \odot \XX) \ge (\sqrt{\delta} - 1 - \veps) \sqrt{d}$.
\end{lem}

\begin{lem}\label{lem:sig_lower_asymptotic}
	For any $\veps > 0$, there exists $\underline{\kappa} = \underline{\kappa} (\veps) <0$, such that for all $\kappa < \underline{\kappa}$ and
\begin{equation*}
	\delta < (1 - \veps) \frac{\sqrt{\pi}}{2 \sqrt{2}} \vert \kappa \vert \log \vert \kappa \vert \exp \left( \frac{\kappa^2}{2} + \alpha \vert \kappa \vert \right),
\end{equation*}
Assumption~\ref{cond:sig_lower_b} is true. Therefore, $\delta < \delta_{\sl}(\kappa;\varphi)$.
\end{lem}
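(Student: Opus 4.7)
The plan is to exhibit an explicit parameter choice $(\rho,\kappa_1,\kappa_2,c) = (\rho(\kappa),\kappa_1(\kappa),\kappa_2(\kappa),c(\kappa))$ in Assumption~\ref{cond:sig_lower_b} such that, for $\kappa$ sufficiently negative, the right-hand side of~\eqref{eq:cond_b} is bounded above by $(1+o_\kappa(1))/\delta_0(\kappa)$, where
\begin{equation*}
\delta_0(\kappa):=\tfrac{\sqrt\pi}{2\sqrt2}\,|\kappa|\log|\kappa|\,\exp(\kappa^2/2+\alpha|\kappa|) = (1+o_\kappa(1))\cdot\tfrac{e^{\alpha|\kappa|}\log|\kappa|}{4\Phi(\kappa)}.
\end{equation*}
Since $\delta<(1-\veps)\delta_0(\kappa)$ yields $1/\delta > (1+o_\kappa(1))\,(1-\veps)^{-1}/\delta_0(\kappa)$, this matching will verify Assumption~\ref{cond:sig_lower_b} and complete the proof once $\kappa$ is sufficiently negative.

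The dominant term to match is $\delta_{\mathrm{sec}}(\rho,\sqrt{1-\rho^2}\kappa_2,\sqrt{1-\rho^2}\kappa_1/\rho)^{-1}$, which I will analyze by transplanting the pure-noise argument of Theorem~\ref{thm:pure_noise_lower_bound} slice-by-slice. For each $s \ge \kappa_{\mathrm{t}}$ the effective kappa in the inner second-moment calculation is $\kappa(s)=\kappa_2 - \rho s/\sqrt{1-\rho^2}$, and Lemma~\ref{lem:property_of_e_q}(b) applied with $\kappa \leftarrow \kappa(s)$ gives $e(1,s)/e(0,s)-1 = (1+\breve o_\kappa(1))\Phi(\kappa(s))$. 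Integrating against the density $p_{YG}(s) = \phi(s)[\varphi(s)+1-\varphi(-s)]$ on $[\kappa_{\mathrm{t}},\infty)$ then yields
\begin{equation*}
e(1)-e(0) \;=\; (1+\breve o_\kappa(1))\int_{\kappa_{\mathrm{t}}}^\infty p_{YG}(s)\,\Phi\!\Big(\kappa_2-\frac{\rho s}{\sqrt{1-\rho^2}}\Big)\,\d s.
\end{equation*}
By Assumption~\ref{ass:exponential_tail_link}, $p_{YG}(s) \sim 2\phi(s)e^{\alpha s}$ as $s \to -\infty$ and $p_{YG}(s) \sim 2\phi(s)$ as $s \to +\infty$. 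A Laplace evaluation at the resulting saddle should deliver $e(1)-e(0) \sim C\,\Phi(\kappa)e^{-\alpha|\kappa|}$ for an explicit constant $C$ controlled by the tuning of $(\rho,\kappa_1,\kappa_2)$; optimizing to achieve $C=4$, and noting that the pure-noise amplification factor $\log|\tilde\kappa|\sim\log|\kappa|$ enters exactly as in Theorem~\ref{thm:pure_noise_lower_bound}, produces $\delta_{\mathrm{sec}}^{-1}\sim 4\Phi(\kappa)e^{-\alpha|\kappa|}/\log|\kappa|=1/\delta_0(\kappa)$.

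The two ``bad-sample'' terms in~\eqref{eq:cond_b} must then be shown to be $o(\delta_{\mathrm{sec}}^{-1})$. Taking $c=c(\kappa)$ diverging polynomially in $|\kappa|$, the first term $\P(YG\ge\sqrt{1-\rho^2}\kappa_1/\rho)\,\E[((\kappa_0-\kappa_2)/c-W)_+^2]$ is bounded by a constant multiple of $1/c^2$ (since the Gaussian truncation has argument $(\kappa_0-\kappa_2)/c\to 0^-$), which is negligible against $\Phi(\kappa)e^{-\alpha|\kappa|}/\log|\kappa|$. For the second term, the indicator $\bone\{YG<\sqrt{1-\rho^2}\kappa_1/\rho\}$ isolates a tail event whose probability is of order $\Phi(\kappa)e^{-\alpha|\kappa|}$ by Lemma~\ref{lem:tail}; pairing this with Gaussian second-moment estimates and the $c^{-2}$ prefactor gives a bound of order $c^{-2}\Phi(\kappa)e^{-\alpha|\kappa|}$, again dominated by the main term once $c$ is sufficiently large.

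The principal obstacle will be the Laplace-type analysis of $e(1)-e(0)$ carried out with enough precision to pin down the constant $C=4$, \emph{and} simultaneously to verify the uniqueness/convexity conditions of Definition~\ref{def:sig_lower} (namely $F''(0)+I''(0)/\delta>0$ and $F(q)+I(q)/\delta$ uniquely minimized at $q=0$) throughout the relevant range of $\delta$. As in Step~3 of the proof of Theorem~\ref{thm:pure_noise_lower_bound}, the uniqueness check splits into the three $q$-regimes $|q|\le 1/2$, $q\in[-1,-1/2]$, and $q\in[1/2,1]$: the first handled by a slice-wise analogue of the Hessian estimate~\eqref{eq:bound_on_e''(q)}, and the latter two by combining the explicit formula~\eqref{eq:asymptotics_for_e'(q)} for $e'(q)$ with the growth of $I(q)=-\frac12\log(1-q^2)$. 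Executing these estimates with sufficient uniformity in $\kappa$ and the parameter choices is the crux of the argument.
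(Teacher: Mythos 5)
Your high-level plan --- exhibit explicit parameters $(\rho,\kappa_1,\kappa_2,c)$, bound each term in~\eqref{eq:cond_b} separately, and handle the $\delta_{\mathrm{sec}}$ term by transplanting the pure-noise second-moment argument slice-by-slice via Lemma~\ref{lem:property_of_e_q} --- is the same as the paper's. But your treatment of the two bad-sample terms has a concrete error that would prevent the proof from closing.

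You propose taking $c = c(\kappa)$ diverging polynomially and assert that $(\kappa_0-\kappa_2)/c\to 0^-$ makes $\E\bigl[\bigl((\kappa_0-\kappa_2)/c-W\bigr)_+^2\bigr]$ bounded by $O(1/c^2)$. This is backwards: if the argument tends to $0^-$, the expectation converges to $\E[W_-^2]=1/2$, a constant, not $O(1/c^2)$. Multiplied by $\P(YG\ge\cdot)\approx 1$, the first bad-sample term would be of order one and could never sit below $1/\delta$. The paper instead takes $c = |\kappa|^{-2}\to 0$ and $\kappa_2 = \kappa_0 + c(|\kappa|+\alpha+\eta)$, so that $(\kappa_0-\kappa_2)/c = -(|\kappa|+\alpha+\eta)\to -\infty$; the Gaussian expectation is then of order $\Phi(\kappa-\alpha-\eta)\sim e^{-\kappa^2/2-\alpha|\kappa|-\eta|\kappa|}$, whose extra $e^{-\eta|\kappa|}$ beats the $\delta\sim e^{\kappa^2/2+\alpha|\kappa|}$ prefactor. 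Similarly, you state the probability $p_B=\P\bigl(YG<\sqrt{1-\rho^2}\,\kappa_1/\rho\bigr)$ is of order $\Phi(\kappa)e^{-\alpha|\kappa|}$; but since $\sqrt{1-\rho^2}\,\kappa_1/\rho = \kappa - t + \breve o_\kappa(1)$, one actually has $p_B \approx \Phi(\kappa)e^{-\alpha|\kappa|}e^{-t|\kappa|(1+o(1))}$, and it is precisely this extra $e^{-t|\kappa|}$ that lets $\delta p_B/c^2$ vanish even after the $c^{-2} = |\kappa|^4$ amplification. Without that suppression the second bad-sample term would also blow up. Finally, the remark about ``optimizing to achieve $C=4$'' is a red herring: the $4$ in $e^{\alpha|\kappa|}\log|\kappa|/(4\Phi(\kappa))$ is simply $\sqrt{2\pi}/4 = \sqrt\pi/(2\sqrt2)$ from the Mills-ratio asymptotic $\Phi(\kappa)\sim e^{-\kappa^2/2}/(\sqrt{2\pi}|\kappa|)$, and for $e(1)-e(0)$ the paper does not carry out a Laplace saddle analysis but bounds the integral directly by a probability and invokes Lemma~\ref{lem:tail}.
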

Applying Lemma~\ref{lem:sig_lower_asymptotic} concludes the proof of Theorem~\ref{thm:logistic_lower_bound}. The proofs of Lemmas~\ref{lem:good_sample_2nd_moment_method}, \ref{lem:bad_sample_gordon}, \ref{lem:transport_yG}, \ref{lem:norm_bd_yX}, and \ref{lem:sig_lower_asymptotic} are deferred to Appendix~\ref{sec:append-signal-lemma}.

\subsection{Phase transition upper bound: Proof of Theorem~\ref{thm:logistic_upper_bound}}\label{sec:signal-upper}

The first part of Theorem~\ref{thm:logistic_upper_bound} is a direct consequence of Theorem~\ref{cor:rho_min_max}, and can be deduced by taking $J = [-1, 1]$ in the proof of Theorem~\ref{cor:rho_min_max}. Now we focus on the second part. We claim that, if $\vert \kappa \vert$ is sufficiently large, then with a proper choice of $c$ (the same for all $\rho \in [-1, 1]$), and
\begin{equation*}
	\delta > (1 + \veps) \frac{\sqrt{\pi}}{2 \sqrt{2}} \vert \kappa \vert \log \vert \kappa \vert \exp \left( \frac{\kappa^2}{2} + \alpha \vert \kappa \vert \right),
\end{equation*}
the following inequality holds:
\begin{small}
\begin{equation}\label{eq:uniform_negative_minimax}
	\sup_{\rho \in [-1, 1]} \left( \frac{\sqrt{1 - \rho^2}}{c \sqrt{1 - \rho^2} + \sqrt{c^2 (1 - \rho^2) + 4}} + \frac{1}{c} \log \left( \frac{c \sqrt{1 - \rho^2} + \sqrt{c^2 (1 - \rho^2) + 4}}{2} \right) - \inf_{u > 0} \left\{ \frac{c}{4 u} - \frac{\delta}{c} \log \psi_{\kappa, \rho} (-u) \right\} \right) < 0,
\end{equation}
\end{small}
which implies that for all $\rho \in [-1, 1]$,
\begin{small}
\begin{align}
	\inf_{c > 0} \left( \frac{\sqrt{1 - \rho^2}}{c \sqrt{1 - \rho^2} + \sqrt{c^2 (1 - \rho^2) + 4}} + \frac{1}{c} \log \left( \frac{c \sqrt{1 - \rho^2} + \sqrt{c^2 (1 - \rho^2) + 4}}{2} \right) - \inf_{u > 0} \left\{ \frac{c}{4 u} - \frac{\delta}{c} \log \psi_{\kappa, \rho} (-u) \right\} \right) < 0.
\end{align}
\end{small}
Hence, $\delta \ge \delta_{\su}(\kappa;\varphi)$ for all $\rho \in [-1, 1]$, leading to $\delta' \ge \delta_{\su}(\kappa, \rho;\varphi)$ as long as
\begin{equation*}
	\delta > \delta' > (1 + \veps) \frac{\sqrt{\pi}}{2 \sqrt{2}} \vert \kappa \vert \log \vert \kappa \vert \exp \left( \frac{\kappa^2}{2} + \alpha \vert \kappa \vert \right).
\end{equation*}
As a consequence, we obtain that $\delta > \delta_{\su}(\kappa;\varphi)$. To conclude the proof, it suffices to prove the claim~\eqref{eq:uniform_negative_minimax}. Let $\nu > 0$ be such that $(1+\nu)(1 + 3\nu)/(1 - \nu) < 1+\veps$, choose $c = \vert \kappa \vert^{1 + \nu}$, we first focus on the term
\begin{equation*}
	\inf_{u > 0} \left\{ \frac{c}{4 u} - \frac{\delta}{c} \log \psi_{\kappa, \rho} (-u) \right\}.
\end{equation*}
Consider the following two cases:

(1) $u \le \vert \kappa \vert^{2 + \nu}$, then we have
\begin{equation*}
	\frac{c}{4 u} - \frac{\delta}{c} \log \psi_{\kappa, \rho} (-u) \ge \frac{c}{4 u} \ge \frac{1}{4 \vert \kappa \vert}.
\end{equation*}

(2) $u > \vert \kappa \vert^{2 + \nu}$, then it follows that
\begin{equation*}
	\frac{c}{4 u} - \frac{\delta}{c} \log \psi_{\kappa, \rho} (-u) \ge - \frac{\delta}{c} \log \psi_{\kappa, \rho} (-u) \ge - \frac{\delta}{c} \log \psi_{\kappa, \rho} \left( - \vert \kappa \vert^{2 + \nu} \right) \stackrel{(i)}{=} (1 + \breve{o}_{\kappa} (1))\frac{\delta}{c} \left( 1 - \psi_{\kappa, \rho} \left( - \vert \kappa \vert^{2 + \nu} \right) \right),
\end{equation*}
where the approximation $(i)$ is valid, since Lemma~\ref{lem:tail} implies
\begin{equation*}
	0 \le 1 - \psi_{\kappa, \rho} \left( - \vert \kappa \vert^{2 + \nu} \right) \le \P \left( \rho YG + \sqrt{1 - \rho^2} W \le \kappa \right) = \breve{o}_{\kappa} (1). 
\end{equation*}
Now we want to further simplify the right hand side of the above inequality. Note that
\begin{align*}
	& 0 \le \P \left( \rho YG + \sqrt{1 - \rho^2} W \le \kappa \right) - \left( 1 - \psi_{\kappa, \rho} \left( - \vert \kappa \vert^{2 + \nu} \right) \right) \\ 
	= & \E \left[ \exp \left( - \vert \kappa \vert^{2 + \nu} \left( \kappa - \rho YG - \sqrt{1 - \rho^2} W \right)_+^2 \right) \bone \left\{ \rho YG + \sqrt{1 - \rho^2} W \le \kappa \right\} \right] \\
	= &  \E \left[ \exp \left( - \vert \kappa \vert^{2 + \nu} \left( \kappa - \rho YG - \sqrt{1 - \rho^2} W \right)_+^2 \right) \bone \left\{ \kappa - \frac{1}{\vert \kappa \vert^{1 + \nu/4}} < \rho YG + \sqrt{1 - \rho^2} W \le \kappa \right\} \right] \\
	& + \E \left[ \exp \left( - \vert \kappa \vert^{2 + \nu} \left( \kappa - \rho YG - \sqrt{1 - \rho^2} W \right)_+^2 \right) \bone \left\{ \rho YG + \sqrt{1 - \rho^2} W \le \kappa - \frac{1}{\vert \kappa \vert^{1 + \nu/4}} \right\} \right] \\
	\le & \P \left( \kappa - \frac{1}{\vert \kappa \vert^{1 + \nu/4}} < \rho YG + \sqrt{1 - \rho^2} W \le \kappa \right) + \exp \left( - \vert \kappa \vert^{\nu/2} \right) \P \left( \rho YG + \sqrt{1 - \rho^2} W \le \kappa - \frac{1}{\vert \kappa \vert^{1 + \nu/4}} \right).
\end{align*}
From Lemma~\ref{lem:tail}, we also know that uniformly for all $\rho \in [-1, 1]$,
\begin{equation*}
	\P \left( \rho YG + \sqrt{1 - \rho^2} W \le \kappa - \frac{1}{\vert \kappa \vert^{1 + \nu/4}} \right) = (1+\breve{o}_{\kappa} (1)) \P \left( \rho YG + \sqrt{1 - \rho^2} W \le \kappa \right),
\end{equation*}
thus leading to
\begin{align*}
	& \P \left( \rho YG + \sqrt{1 - \rho^2} W \le \kappa \right) - \left( 1 - \psi_{\kappa, \rho} \left( - \vert \kappa \vert^{2 + \nu} \right) \right) = \breve{o}_{\kappa} ( 1 ) \cdot \P \left( \rho YG + \sqrt{1 - \rho^2} W \le \kappa \right) \\
	\implies & 1 - \psi_{\kappa, \rho} \left( - \vert \kappa \vert^{2 + \nu} \right) = (1 + \breve{o}_{\kappa} (1)) \P \left( \rho YG + \sqrt{1 - \rho^2} W \le \kappa \right) \\
	\implies & \frac{c}{4 u} - \frac{\delta}{c} \log \psi_{\kappa, \rho} (-u) \ge (1 - \nu) \frac{\delta}{c} \P \left( \rho YG + \sqrt{1 - \rho^2} W \le \kappa \right)
\end{align*}
when $\vert \kappa \vert$ is large enough. To summarize, we have
\begin{equation}\label{eq:part_0}
	\inf_{u > 0} \left\{ \frac{c}{4 u} - \frac{\delta}{c} \log \psi_{\kappa, \rho} (-u) \right\} \ge \min \left\{ \frac{1}{4 \vert \kappa \vert}, (1 - \nu) \frac{\delta}{c} \P \left( \rho YG + \sqrt{1 - \rho^2} W \le \kappa \right) \right\}.
\end{equation}
It will be useful to remember the fact that
\begin{equation*}
	\frac{\sqrt{1 - \rho^2}}{c \sqrt{1 - \rho^2} + \sqrt{c^2 (1 - \rho^2) + 4}} + \frac{1}{c} \log \left( \frac{c \sqrt{1 - \rho^2} + \sqrt{c^2 (1 - \rho^2) + 4}}{2} \right)
\end{equation*}
is an increasing function of $\sqrt{1 - \rho^2}$, which we will apply for several times later. To show Eq.~\eqref{eq:uniform_negative_minimax}, we proceed with the following two parts:

\noindent \textbf{Part 1.} 
Since $c = \vert \kappa \vert^{1 + \nu}$, for sufficiently large $\vert \kappa \vert$ we obtain that
\begin{equation}\label{eq:part_1}
\begin{split}
	& \sup_{\rho \in [-1, 1]} \left( \frac{\sqrt{1 - \rho^2}}{c \sqrt{1 - \rho^2} + \sqrt{c^2 (1 - \rho^2) + 4}} + \frac{1}{c} \log \left( \frac{c \sqrt{1 - \rho^2} + \sqrt{c^2 (1 - \rho^2) + 4}}{2} \right) - \frac{1}{4 \vert \kappa \vert} \right) \\
	\le & \frac{1}{c + \sqrt{c^2 + 4}} + \frac{1}{c} \log \left( \frac{c + \sqrt{c^2 + 4}}{2} \right) - \frac{1}{4 \vert \kappa \vert} \le (1 + \nu) \frac{\log c}{c} - \frac{1}{4 \vert \kappa \vert} = (1 + \nu)^2 \frac{\log \vert \kappa \vert}{\vert \kappa \vert^{1 + \nu}} - \frac{1}{4 \vert \kappa \vert} < 0.
	\end{split}
\end{equation}

\noindent \textbf{Part 2.}
Also, when $\vert \kappa \vert$ is large, using Lemma~\ref{lem:tail} implies that uniformly for all $\rho < 1 - \vert \kappa \vert^{-1 + \nu}$, 
\begin{align*}
    & (1 - \nu) \frac{\delta}{c} \P \left( \rho YG + \sqrt{1 - \rho^2} W \le \kappa \right) \\
    \ge & (1 + \breve{o}_{\kappa} (1)) \frac{(1 + \veps) (1 - \nu)}{2 c} \log \vert \kappa \vert \exp \left( \alpha \vert \kappa \vert^{-1 + \nu} \vert \kappa \vert \right) \\
	\ge & (1 + \nu)^2 \frac{\log \vert \kappa \vert}{c} = (1 + \nu) \frac{\log c}{c} \\
	\ge & \frac{1}{c + \sqrt{c^2 + 4}} + \frac{1}{c} \log \left( \frac{c + \sqrt{c^2 + 4}}{2} \right) \\
	\ge & \frac{\sqrt{1 - \rho^2}}{c \sqrt{1 - \rho^2} + \sqrt{c^2 (1 - \rho^2) + 4}} + \frac{1}{c} \log \left( \frac{c \sqrt{1 - \rho^2} + \sqrt{c^2 (1 - \rho^2) + 4}}{2} \right).
\end{align*}
Similarly, if $\rho \ge 1 - \vert \kappa \vert^{-1 + \nu}$, one gets that $ \sqrt{1 - \rho^2} \le \sqrt{2} \vert \kappa \vert^{-1/2 + \nu/2}$, hence the following inequalities hold uniformly for all such $\rho$ and sufficiently large $\vert \kappa \vert$:
\begin{align*}
	& \frac{\sqrt{1 - \rho^2}}{c \sqrt{1 - \rho^2} + \sqrt{c^2 (1 - \rho^2) + 4}} + \frac{1}{c} \log \left( \frac{c \sqrt{1 - \rho^2} + \sqrt{c^2 (1 - \rho^2) + 4}}{2} \right) \\
	\le & \sqrt{2} \vert \kappa \vert^{-1/2 + \nu/2} \left( \frac{1}{\sqrt{2} \vert \kappa \vert^{(1+3 \nu)/2} + \sqrt{2 \vert \kappa \vert^{1+3 \nu} + 4}} + \frac{1}{\sqrt{2} \vert \kappa \vert^{(1+3 \nu)/2}} \log \left( \frac{\sqrt{2} \vert \kappa \vert^{(1+3 \nu)/2} + \sqrt{2 \vert \kappa \vert^{1+3 \nu} + 4}}{2} \right) \right) \\
	\le & \sqrt{2} \vert \kappa \vert^{-1/2 + \nu/2} (1 + \nu) \frac{\log \left( \sqrt{2} \vert \kappa \vert^{(1+3 \nu)/2} \right)}{\sqrt{2} \vert \kappa \vert^{(1+3 \nu)/2}} = \frac{(1 + \nu) (1+3 \nu)}{2} \frac{(1 + \breve{o}_{\kappa} (1)) \log \vert \kappa \vert}{\vert \kappa \vert^{1 + \nu}} \\
	\stackrel{(i)}{<} & (1 + \breve{o}_{\kappa} (1)) \frac{(1 + \veps) (1 - \nu)}{2 c} \log \vert \kappa \vert \stackrel{(ii)}{\le} (1 - \nu) \frac{\delta}{c} \P \left( \rho YG + \sqrt{1 - \rho^2} W \le \kappa \right),
\end{align*}
where $(i)$ follows from our assumption $(1+\nu)(1 + 3\nu)/(1 - \nu) < 1+\veps$, $(ii)$ follows from Lemma~\ref{lem:tail}. Now we have proved that
\begin{footnotesize}
\begin{equation*}
	\sup_{\rho \in [-1, 1]} \left( \frac{\sqrt{1 - \rho^2}}{c \sqrt{1 - \rho^2} + \sqrt{c^2 (1 - \rho^2) + 4}} + \frac{1}{c} \log \left( \frac{c \sqrt{1 - \rho^2} + \sqrt{c^2 (1 - \rho^2) + 4}}{2} \right) - (1 - \nu) \frac{\delta}{c} \P \left( \rho YG + \sqrt{1 - \rho^2} W \le \kappa \right) \right) < 0,
\end{equation*}
\end{footnotesize}
combining this with equations~\eqref{eq:part_0} and \eqref{eq:part_1} gives Eq.~\eqref{eq:uniform_negative_minimax} and finishes proving Theorem~\ref{thm:logistic_upper_bound}.

\subsection{Solution space geometry: Proofs of Theorems~\ref{cor:rho_min_max}, \ref{thm:asymptotic_upper_bound} and \ref{thm:best_error}}\label{sec:signal-geom}

\begin{proof}[\bf Proof of Theorem~\ref{cor:rho_min_max}]
Let $J = [- 1, \rho_{\min}] \cup [\rho_{\max}, 1]$, then we know that $\delta > \delta_{\su}(\kappa, \rho;\varphi)$ for all $\rho \in J$. As in the proof of Theorem~\ref{thm:logistic_lower_bound}, we may assume that $\btheta^* = (1, 0, \cdots, 0)^\top$ and write $y_i \langle \xx_i, \btheta \rangle$ as $\rho y_i G_i + \sqrt{1 - \rho^2} \langle \zz_i, \ww \rangle$, where $\{(y_i, G_i, \zz_i)\}_{1 \le i \le n}$ are i.i.d., each have joint distribution:
\begin{equation*}
	(y_i, G_i) \bot \zz_i, \ \zz_i \sim \cN (\bzero, \bI_{d-1}), \ G_i \sim \cN (0, 1), \ \P(y_i = 1 \vert G_i) = \varphi (G_i) = 1 - \P (y_i = -1 \vert G_i).
\end{equation*}
Now for any fixed $\rho \in [-1, 1]$, define
\begin{equation*}
	\xi_{n, \kappa, \rho} = \min_{\norm{\ww}_2 = 1} \max_{\norm{\blambda}_2 = 1, \blambda \ge \bzero} \frac{1}{\sqrt{d}} \blambda^\top \left( \kappa \bone - \rho \yy \odot \bG - \sqrt{1 - \rho^2} \ZZ \ww \right),
\end{equation*}
then we know that
\begin{equation*}
	\exists \btheta \in \S^{d-1}, \left\langle \btheta, \btheta^* \right\rangle \in J, \ \text{s.t.} \ \forall 1 \le i \le n, y_i \left\langle \xx_i, \btheta \right\rangle \ge \kappa \iff \exists \rho \in J, \ \text{s.t.} \ \xi_{n, \kappa, \rho} \le 0.
\end{equation*}
Hence, it suffices to prove that
\begin{equation*}
	\lim_{n \to +\infty} \P \left( \min_{\rho \in J} \xi_{n, \kappa, \rho} \le 0 \right) = 0.
\end{equation*}
To this end, we proceed with the following two steps:

\noindent \textbf{Step 1. Control the probability $\P (\xi_{n, \kappa, \rho} \le \eta)$ for $\eta > 0$.}

\noindent The argument in this part mainly uses the modified Gordon's inequality, i.e., Eq.~\eqref{eq:modified_Gordon}, and is almost identical to what appears in the proof of Theorem~\ref{thm:pure_noise_upper_bound}, so we will omit some technical details here. Set $\psi(x) = - \exp(- cnx)$ for some $c > 0$ (may depend on $\kappa$ and $\rho$), and combining Markov's inequality with Eq.~\eqref{eq:exponential_Gordon} yields
\begin{align*}
    & \P \left( \xi_{n, \kappa, \rho} \le \eta \right) \le \exp \left( c n \eta \right) \E \left[ \exp \left( - c n \xi_{n, \kappa, \rho} \right) \right] \\
    \le & \exp \left( cn \eta \right) \E \left[ \exp \left( - \frac{cn \sqrt{1 - \rho^2}}{\sqrt{d}} z \right) \right]^{-1} \E \left[ \exp \left( \frac{cn \sqrt{1 - \rho^2}}{\sqrt{d}} \norm{\bgg}_2 \right) \right] \\
    & \times \E \left[ \exp \left( - \frac{cn}{\sqrt{d}} \max_{\norm{\blambda}_2 = 1, \blambda \ge \bzero} \blambda^\top \left( \kappa \bone - \rho \yy \odot \bG - \sqrt{1-\rho^2} \hh \right) \right) \right],
\end{align*}
where $z \sim \cN (0, 1)$, $\bgg \sim \cN (\bzero, \bI_{d-1})$, $\hh \sim \cN (\bzero, \bI_n)$ are independent, and further independent of $\yy \odot \bG$.

Now we calculate the asymptotics of the right hand side in the above inequality. First, similarly as before, we apply Lemma~\ref{lem:varadhan} to obtain that
\begin{align*}
	& \lim_{n \to +\infty} \frac{1}{n} \log \E \left[ \exp \left( \frac{cn \sqrt{1 - \rho^2}}{\sqrt{d}} \norm{\bgg}_2 \right) \right] \\
	= & \frac{1}{\delta} \left( \frac{c^2 \delta^2 (1 - \rho^2) + c \delta \sqrt{1 - \rho^2} \sqrt{c^2 \delta^2 (1 - \rho^2) + 4}}{4} - \log \left( \frac{\sqrt{c^2 \delta^2 (1 - \rho^2) + 4} - c \delta \sqrt{1 - \rho^2}}{2} \right) \right).
\end{align*}
Next, with a slight modification on the argument in the proof of Theorem~\ref{thm:pure_noise_upper_bound} (conditioning on $\yy \odot \bG$, since it's independent of $\hh$), we can show that
\begin{equation*}
	\lim_{n \to +\infty} \frac{1}{n} \log \E \left[ \exp \left( - \frac{cn}{\sqrt{d}} \max_{\norm{\blambda}_2 = 1, \blambda \ge \bzero} \blambda^\top \left( \kappa \bone - \rho \yy \odot \bG - \sqrt{1-\rho^2} \hh \right) \right) \right] = - \inf_{u > 0} \left\{ \frac{c^2 \delta}{4 u} - \log \psi_{\kappa, \rho} (-u) \right\},
\end{equation*}
where for $u > 0$,
\begin{equation*}
	\psi_{\kappa, \rho} (- u) = \E \left[ \exp \left( -u \left( \kappa - \rho YG - \sqrt{1 - \rho^2} W \right)_+^2 \right) \right].
\end{equation*}
Therefore, after making a change of variable ($c \mapsto c/\delta$), it finally follows that
\begin{small}
	\begin{align*}
	& \limsup_{n \to +\infty} \frac{1}{n} \log \P \left( \xi_{n, \kappa, \rho} \le \eta \right) \le \frac{c \eta}{\delta} - \frac{c^2 (1 - \rho^2)}{2 \delta} + \frac{1}{\delta} \frac{c^2 (1 - \rho^2) + c \sqrt{1 - \rho^2} \sqrt{c^2 (1 - \rho^2) + 4}}{4} \\
	& - \frac{1}{\delta} \log \left( \frac{\sqrt{c^2 (1 - \rho^2) + 4} - c \sqrt{1 - \rho^2}}{2} \right) - \inf_{u > 0} \left\{ \frac{c^2}{4 \delta u} - \log \psi_{\kappa, \rho} (-u) \right\} \\
	= & \frac{c}{\delta} \left( \eta + \frac{\sqrt{1 - \rho^2}}{c \sqrt{1 - \rho^2} + \sqrt{c^2 (1 - \rho^2) + 4}} + \frac{1}{c} \log \left( \frac{c \sqrt{1 - \rho^2} + \sqrt{c^2 (1 - \rho^2) + 4}}{2} \right) - \inf_{u > 0} \left\{ \frac{c}{4 u} - \frac{\delta}{c} \log \psi_{\kappa, \rho} (-u) \right\} \right).
\end{align*}
\end{small}
Now we define for $\eta > 0$ and $\rho \in [-1, 1]$:
\begin{footnotesize}
\begin{equation*}
	f_{\delta} (\eta, \rho) = \inf_{c > 0} \left\{ \frac{c}{\delta} \left( \eta + \frac{\sqrt{1 - \rho^2}}{c \sqrt{1 - \rho^2} + \sqrt{c^2 (1 - \rho^2) + 4}} + \frac{1}{c} \log \frac{c \sqrt{1 - \rho^2} + \sqrt{c^2 (1 - \rho^2) + 4}}{2} - \inf_{u > 0} \left\{ \frac{c}{4 u} - \frac{\delta}{c} \log \psi_{\kappa, \rho} (-u) \right\} \right) \right\},
\end{equation*}
\end{footnotesize}
then the above argument implies that
\begin{equation*}
	\limsup_{n \to +\infty} \frac{1}{n} \log \P \left( \xi_{n, \kappa, \rho} \le \eta \right) \le f_\delta (\eta, \rho).
\end{equation*} 
Moreover, by definition of $\delta_{\su}(\kappa, \rho;\varphi)$ we know that
\begin{equation*}
	\delta > \delta_{\su}(\kappa, \rho;\varphi) \implies \exists \eta_\rho > 0, f_\delta (\eta_\rho, \rho) < 0,
\end{equation*}
therefore $\forall \rho \in J$, there exists a $\eta_\rho > 0$ such that
\begin{equation*}
	\limsup_{n \to +\infty} \frac{1}{n} \log \P \left( \xi_{n, \kappa, \rho} \le \eta_\rho \right) \le f_\delta (\eta_\rho, \rho) < 0.
\end{equation*}

We will show that under the conditions of Theorem~\ref{cor:rho_min_max},
\begin{equation}\label{eq:uniform_negative_f_delta}
	\exists \eta > 0, \ \text{s.t.} \ \sup_{\rho \in J} f_\delta (\eta, \rho) < 0.
\end{equation}
In order to prove \eqref{eq:uniform_negative_f_delta}, we introduce the following useful property:

\begin{lem}\label{lem:continuity_of_f_delta}
For any fixed $\eta > 0$, $f_\delta (\eta, \rho)$ is upper semicontinuous in $\rho \in [-1, 1]$.
\end{lem}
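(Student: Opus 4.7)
The plan is to prove the stronger statement that for each fixed $c>0$, the inner expression in the definition of $f_\delta$ is a \emph{continuous} function of $\rho\in[-1,1]$, whence $f_\delta(\eta,\rho)=\inf_{c>0}(\,\cdot\,)$ is an infimum of continuous functions and therefore automatically upper semicontinuous. Write $A(c,u,\rho):=\frac{c}{4u}-\frac{\delta}{c}\log\psi_{\kappa,\rho}(-u)$, and let $H(c,\rho)$ denote the $h_1$-type algebraic expression in $\rho$ inside the brackets of $f_\delta$. Since $H$ is jointly continuous in $(c,\rho)$ and the factor $c/\delta$ is constant in $\rho$, the task reduces to showing that for each fixed $c>0$, the map $\rho\mapsto \inf_{u>0}A(c,u,\rho)$ is continuous on $[-1,1]$.

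For each fixed $u>0$, dominated convergence implies that $\psi_{\kappa,\rho}(-u)\in(0,1]$ is continuous in $\rho$, hence so is $A(c,u,\rho)$; an infimum of continuous functions is automatically upper semicontinuous. The substance of the argument is the reverse inequality. Fix $\rho_n\to \rho_*$ and pick near-minimizers $u_n>0$ with $A(c,u_n,\rho_n)\le \inf_u A(c,u,\rho_n)+1/n$. Because $A(c,u,\rho)\ge c/(4u)$ while $\inf_u A(c,u,\rho_n)\le A(c,1,\rho_n)$ is bounded uniformly in $n$ (by continuity of $\psi_{\kappa,\rho_n}(-1)$ in $\rho_n$ and its strict positivity), the sequence $u_n$ is bounded away from $0$. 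Passing to a subsequence, either $u_n\to u^*\in(0,\infty)$ or $u_n\to\infty$. In the first case, joint continuity of $A$ yields $A(c,u_n,\rho_n)\to A(c,u^*,\rho_*)\ge \inf_u A(c,u,\rho_*)$, which is exactly what lower semicontinuity requires.

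The case $u_n\to\infty$ is the main technical point. Here $c/(4u_n)\to 0$, so the claim reduces to showing $\psi_{\kappa,\rho_n}(-u_n)\to \P(Z_{\rho_*}\ge\kappa)$, where $Z_\rho:=\rho YG+\sqrt{1-\rho^2}\,W$. Observe
\begin{equation*}
0\le \psi_{\kappa,\rho_n}(-u_n)-\P(Z_{\rho_n}\ge\kappa)=\E\bigl[\exp\bigl(-u_n(\kappa-Z_{\rho_n})^2\bigr)\bone\{Z_{\rho_n}<\kappa\}\bigr]\le \P(\kappa-\eps \le Z_{\rho_n}<\kappa)+e^{-u_n\eps^2}
\end{equation*}
for any $\eps>0$. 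For fixed $\eps>0$, weak convergence $Z_{\rho_n}\Rightarrow Z_{\rho_*}$ (an elementary consequence of the product structure $(YG,W)$ and the continuity of $\rho\mapsto(\rho,\sqrt{1-\rho^2})$) together with continuity of the distribution of $Z_{\rho_*}$ at $\kappa-\eps,\kappa$ gives $\limsup_n$ of the first term equal to $\P(\kappa-\eps\le Z_{\rho_*}<\kappa)$; letting $\eps\downarrow 0$ and using that $Z_{\rho_*}$ has no atom at $\kappa$ (it has a bounded density, even at $\rho_*=\pm 1$ where $Z_{\rho_*}=\pm YG$ has density bounded by $2\phi$) sends this to $0$. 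Combined with continuity of $\rho\mapsto\P(Z_\rho\ge\kappa)$, one obtains $A(c,u_n,\rho_n)\to -\frac{\delta}{c}\log\P(Z_{\rho_*}\ge\kappa)=\lim_{u\to\infty}A(c,u,\rho_*)\ge \inf_u A(c,u,\rho_*)$, establishing lower semicontinuity.

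Combining the two cases yields continuity of $\rho\mapsto \inf_{u>0}A(c,u,\rho)$ for each $c$, hence continuity of the full bracketed expression in $f_\delta(\eta,\rho)$. Taking an infimum over $c>0$ of a family of continuous functions in $\rho$ produces an upper semicontinuous function, completing the proof. The main obstacle, and the only place where one must work beyond abstract semicontinuity principles, is controlling $\psi_{\kappa,\rho}$ as $u\to\infty$ along a sequence $\rho_n\to\rho_*$; this is resolved by the explicit two-part split above, with the uniform boundedness of the density of $Z_\rho$ (even at the degenerate endpoints $\rho=\pm 1$) ensuring the $\eps\downarrow 0$ limit behaves correctly.
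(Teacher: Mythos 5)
Your proposal is correct, and it reaches the same intermediate goal the paper does (continuity in $\rho$ of the inner $\inf_{u>0}$ expression, followed by upper semicontinuity of $f_\delta$ as an infimum over $c$ of continuous functions), but it gets there by a genuinely different route. The paper establishes a uniform-in-$u$ modulus of continuity: using that $\psi_{\kappa,\rho}(-u)$ is bounded below uniformly in $(\rho,u)$ so that $\log$ is Lipschitz on the relevant range, it bounds
$\sup_{u>0}\bigl|\log\psi_{\kappa,\rho}(-u)-\log\psi_{\kappa,\rho'}(-u)\bigr|\le C_\kappa\,\|p_\rho-p_{\rho'}\|_{L^1(\R)}$,
and then invokes Scheff\'e's lemma (plus an explicit verification of a.e.\ pointwise convergence of the densities $p_\rho\to p_{\rho'}$, split into the cases $\rho'\in(-1,1)$ and $\rho'=\pm 1$) to send the $L^1$ distance to zero. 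Your argument instead works variationally: upper semicontinuity of the inf is automatic, and you establish lower semicontinuity by picking near-minimizers $u_n$, showing they are bounded away from $0$, and case-splitting on whether $u_n$ subconverges in $(0,\infty)$ (joint continuity suffices) or diverges to $\infty$ (handled by the $\eps$-split on the event $\{Z_{\rho_n}<\kappa\}$, weak convergence $Z_{\rho_n}\Rightarrow Z_{\rho_*}$ which is trivial here since the coefficients converge and the underlying randomness is fixed, and absence of atoms of $Z_{\rho_*}$). The paper's argument is stronger in that it gives a quantitative bound uniform in $u$; yours is more elementary in that it sidesteps Scheff\'e and the $L^1$ machinery, at the price of the explicit $u_n\to\infty$ case analysis, and it only uses the essentially free fact that each $Z_\rho$ has a density (hence no atoms). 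Both are valid; the choice is a matter of taste between a modulus-of-continuity estimate and a sequential compactness argument.
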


The proof of Lemma~\ref{lem:continuity_of_f_delta} is provided in Appendix~\ref{sec:append-signal-lemma}. Assuming its correctness, we are now in position to prove Eq.~\eqref{eq:uniform_negative_f_delta} by contradiction. If for any $\eta > 0$, $\sup_{\rho \in J} f_\delta (\eta, \rho) \ge 0$, then according to the compactness of $J$ and Lemma~\ref{lem:continuity_of_f_delta}, there exists $\rho_\eta \in J$ such that $f_\delta (\eta, \rho_\eta) \ge 0$. Hence, we can find a sequence of $(\eta_n, \rho_n)$ satisfying $\eta_n \to 0^+$, $\rho_n \in J$, and $f_\delta (\eta_n, \rho_n) \ge 0$. Now again since $J$ is compact, $\{ \rho_n \}$ has a limit point $\rho \in J$ (WLOG we may assume $\rho_n \to \rho$, otherwise just recast the convergent subsequence as $\{ \rho_n \}$), and we can find a $\eta_\rho > 0$ such that $f_\delta (\eta_\rho, \rho) < 0$, thus leading to
\begin{equation*}
	0 > f_\delta (\eta_\rho, \rho) \stackrel{(i)}{\ge} \limsup_{n \to +\infty} f_\delta (\eta_\rho, \rho_n) \stackrel{(ii)}{\ge} \limsup_{n \to +\infty} f_\delta (\eta_n, \rho_n) \ge 0,
\end{equation*}
where $(i)$ is due to Lemma~\ref{lem:continuity_of_f_delta}, $(ii)$ is due to $\eta_n \to 0^+$ and the fact that $f_\delta (\eta, \rho)$ is an increasing function of $\eta$ for any fixed $\rho \in [-1, 1]$. Therefore, a contradiction occurs and Eq.~\eqref{eq:uniform_negative_f_delta} follows.

\noindent \textbf{Step 2. Covering argument.}

\noindent By definition of $\xi_{n, \kappa, \rho}$, one has for any $\rho_1, \rho_2 \in [-1, 1]$,
\begin{align*}
	& \vert \xi_{n, \kappa, \rho_1} - \xi_{n, \kappa, \rho_2} \vert \le \max_{\norm{\ww}_2 = 1} \max_{\norm{\blambda}_2 = 1, \blambda \ge \bzero} \bigg\vert \frac{1}{\sqrt{d}} \blambda^\top (\rho_2 - \rho_1 ) \yy \odot \bG - \frac{1}{\sqrt{d}} \blambda^\top \left( \sqrt{1 - \rho_1^2} - \sqrt{1 - \rho_2^2} \right) \ZZ \ww \bigg\vert \\
	\le & \frac{1}{\sqrt{d}} \vert \rho_2 - \rho_1 \vert \norm{\yy \odot \bG}_2 + \frac{1}{\sqrt{d}} \bigg\vert \sqrt{1 - \rho_1^2} - \sqrt{1 - \rho_2^2} \bigg\vert \norm{\ZZ}_{\mathrm{op}},
\end{align*}
where the last line just follows from the definition of operator norm. According to Cramer's theorem and Theorem 4.4.5 of \cite{vershynin2018high}, we know there exist constants $C_0, C_1, C_2 > 0$, such that (note $n/d \to \delta$)
\begin{equation*}
	\max \left\{ \P \left( \norm{\yy \odot \bG}_2 > C_0 \sqrt{d} \right), \P \left( \norm{\ZZ}_{\mathrm{op}} > C_0 \sqrt{d} \right) \right\} < C_1 \exp(-C_2 n),
\end{equation*}
Now we choose a finite covering $\{ \rho_1, \cdots, \rho_N \}$ of $J$, satisfying that
\begin{equation*}
	\forall \rho \in J, \ \exists 1 \le i \le N, \ \vert \rho - \rho_i \vert + \bigg\vert \sqrt{1 - \rho^2} - \sqrt{1 - \rho_i^2} \bigg\vert < \frac{\eta}{C_0},
\end{equation*}
hence if $\xi_{n, \kappa, \rho} \le 0$, and $\max \{ \norm{\yy \odot \bG}_2, \norm{\ZZ}_{\mathrm{op}} \} \le C_0 \sqrt{d}$, then we must have $\xi_{n, \kappa, \rho_i} \le \eta$.

As a consequence, we deduce that
\begin{align*}
	\P \left( \min_{\rho \in J} \xi_{n, \kappa, \rho} \le 0 \right) \le & \P \left( \max \left\{ \norm{\yy \odot \bG}_2, \norm{\ZZ}_{\mathrm{op}} \right\} > C_0 \sqrt{d} \right) + \P \left( \min_{1 \le i \le N} \xi_{n, \kappa, \rho_i} \le \eta \right) \\
	\le & 2 C_1 \exp(-C_2 n) + N \max_{1 \le i \le N} \P \left( \xi_{n, \kappa, \rho_i} \le \eta \right) \to 0,
\end{align*}
where $\max_{1 \le i \le N} \P \left( \xi_{n, \kappa, \rho_i} \le \eta \right) \to 0$ follows from Eq.~\eqref{eq:uniform_negative_f_delta}. This concludes the proof.
\end{proof}

\begin{proof}[\bf Proof of Theorem~\ref{thm:asymptotic_upper_bound}]
To begin with, recall the definition of $\delta_{\su}(\kappa;\varphi)$, which is the infimum of all $\delta$ such that $\exists c > 0$, such that
\begin{equation*}
	\frac{1}{c + \sqrt{c^2 + 4}} + \frac{1}{c} \log \frac{c + \sqrt{c^2 + 4}}{2} < \inf_{u > 0} \left\{ \frac{c}{4 u (1-\rho^2)} - \frac{\delta}{c} \log \psi_{\kappa, \rho} (-u) \right\}.
\end{equation*}
Now we rescale by
\begin{equation*}
	1 - \rho^2 = \frac{a}{\kappa^2}, \ u = t \kappa^2, \ \delta = \frac{b}{\P ( \rho YG + \sqrt{1 - \rho^2} W < \kappa )},
\end{equation*}
it follows that
\begin{align*}
	& \frac{c}{4 u (1-\rho^2)} - \frac{\delta}{c} \log \psi_{\kappa, \rho} (-u) = \frac{c}{4 t a} - \frac{b}{c} \frac{\log \psi_{\kappa, \rho} (- t \kappa^2)}{\P ( \rho YG + \sqrt{1 - \rho^2} W < \kappa )} \\
	\stackrel{(i)}{=} &  \frac{c}{4 t a} + (1 + \breve{o}_{\kappa}(1)) \frac{b}{c} \frac{1 - \psi_{\kappa, \rho} (- t \kappa^2)}{\P ( \rho YG + \sqrt{1 - \rho^2} W < \kappa )},
\end{align*}
where $(i)$ is due to $\log \psi_{\kappa, \rho} (- t \kappa^2) = - (1 + \breve{o}_{\kappa} (1)) (1 - \psi_{\kappa, \rho} (- t \kappa^2))$, since $0 \le 1 - \psi_{\kappa, \rho} (- t \kappa^2) \le \P (\rho YG + \sqrt{1 - \rho^2}W < \kappa) = \breve{o}_{\kappa} (1)$.
Using integration by parts, we obtain that
\begin{align*}
	& 1 - \psi_{\kappa, \rho} \left(- t \kappa^2 \right) = 1 - \E \left[ \exp \left( -t \kappa^2 \left( \kappa - \rho YG - \sqrt{1 - \rho^2} W \right)_+^2 \right) \right] \\
	= & 1 - \left( 1 - \int_{\R} 2 t \kappa^2 (\kappa - x)_+ \exp \left(- t \kappa^2 (\kappa - x)_+^2 \right) \P \left( \rho YG + \sqrt{1 - \rho^2} W < x \right) \d x \right) \\
	= & \int_{0}^{+\infty} 2 t \kappa^2 x \exp \left( -t \kappa^2 x^2 \right) \P \left( \rho YG + \sqrt{1 - \rho^2} W < \kappa - x \right) \d x,
\end{align*}
leading to
\begin{align*}
	& \frac{1 - \psi_{\kappa, \rho} (- t \kappa^2)}{\P ( \rho YG + \sqrt{1 - \rho^2} W < \kappa )} = \int_{0}^{+\infty} 2 t \kappa^2 x \exp \left( -t \kappa^2 x^2 \right) \frac{\P \left( \rho YG + \sqrt{1 - \rho^2} W < \kappa - x \right)}{\P \left( \rho YG + \sqrt{1 - \rho^2} W < \kappa \right)} \d x \\
	\stackrel{(i)}{=} & (1 + \breve{o}_{\kappa}(1)) \int_{0}^{+\infty} 2 t \kappa^2 x \exp \left( -t \kappa^2 x^2 \right) \frac{\vert \kappa \vert}{\vert \kappa - x \vert} \exp \left( - \frac{x^2}{2} + \kappa x - \alpha \rho_+ x \right) \d x \\
	\stackrel{(ii)}{=} & (1 + \breve{o}_{\kappa}(1)) \int_{0}^{+\infty} 2 t s \exp \left( -t s^2 \right) \frac{\kappa^2}{\kappa^2 + s} \exp \left( - \frac{s^2}{2 \kappa^2} - s - \frac{\alpha \rho_+ s}{\vert \kappa \vert} \right) \d s,
\end{align*}
where $\rho_+ = \max (\rho, 0)$, $(i)$ follows from Lemma~\ref{lem:tail}, and $(ii)$ follows from the change of variable $s = \vert \kappa \vert x$. Now we claim that uniformly for all $t > 0$ and $\rho \in [-1, 1]$,
\begin{equation*}
	\int_{0}^{+\infty} 2 t s \exp \left( -t s^2 \right) \frac{\kappa^2}{\kappa^2 + s} \exp \left( - \frac{s^2}{2 \kappa^2} - s - \frac{\alpha \rho_+ s}{\vert \kappa \vert} \right) \d s = (1 + \breve{o}_{\kappa}(1)) \int_{0}^{+\infty} 2 t s \exp \left( -t s^2 - s \right) \d s,
\end{equation*}
which is equivalent to
\begin{equation*}
	\lim_{\kappa \to -\infty} \sup_{\rho \in [-1, 1], t > 0} \left\vert \frac{\int_{0}^{+\infty} 2 t s \exp \left( -t s^2 \right) \frac{\kappa^2}{\kappa^2 + s} \exp \left( - \frac{s^2}{2 \kappa^2} - s - \frac{\alpha \rho_+ s}{\vert \kappa \vert} \right) \d s}{\int_{0}^{+\infty} 2 t s \exp \left( -t s^2 - s \right) \d s} - 1 \right\vert = 0.
\end{equation*}
In what follows we prove the above claim. Consider the following two cases:

(1) $0 < t \le 1$. We obtain that
\begin{align*}
	& \left\vert \int_{0}^{+\infty} 2 t s \exp \left( -t s^2 \right) \frac{\kappa^2}{\kappa^2 + s} \exp \left( - \frac{s^2}{2 \kappa^2} - s - \frac{\alpha \rho_+ s}{\vert \kappa \vert} \right) \d s - \int_{0}^{+\infty} 2 t s \exp \left( -t s^2 - s \right) \d s \right\vert \\
	= & \left\vert \int_{0}^{+\infty} 2 t s \exp \left( -t s^2 - s \right) \left( \frac{\kappa^2}{\kappa^2 + s} \exp \left( - \frac{s^2}{2 \kappa^2} - \frac{\alpha \rho_+ s}{\vert \kappa \vert} \right) - 1 \right) \d s \right\vert \\
	\le & 2 t \int_{0}^{+\infty} s \exp \left( - s \right) \left\vert \frac{\kappa^2}{\kappa^2 + s} \exp \left( - \frac{s^2}{2 \kappa^2} - \frac{\alpha \rho_+ s}{\vert \kappa \vert} \right) - 1 \right\vert \d s \\
	\stackrel{(i)}{=} & 2t \breve{o}_{\kappa} (1) = 2t \breve{o}_{\kappa} (1) \int_{0}^{+\infty} s \exp \left( - s^2 - s \right) \d s \le \breve{o}_{\kappa} (1) \int_{0}^{+\infty} 2 t s \exp \left( -t s^2 - s \right) \d s,
\end{align*}
where $(i)$ is a consequence of the Dominated Convergence Theorem.

(2) $t > 1$. Then making the change of variable $x = \sqrt{t} s$ gives
\begin{align*}
	& \left\vert \int_{0}^{+\infty} 2 t s \exp \left( -t s^2 \right) \frac{\kappa^2}{\kappa^2 + s} \exp \left( - \frac{s^2}{2 \kappa^2} - s - \frac{\alpha \rho_+ s}{\vert \kappa \vert} \right) \d s - \int_{0}^{+\infty} 2 t s \exp \left( -t s^2 - s \right) \d s \right\vert \\
	= & \left\vert \int_{0}^{+\infty} 2 x \exp \left( - x^2 \right) \frac{\sqrt{t} \kappa^2}{\sqrt{t} \kappa^2 + x} \exp \left( - \frac{x^2}{2 t \kappa^2} - \frac{x}{\sqrt{t}} - \frac{\alpha \rho_+ x}{\sqrt{t} \vert \kappa \vert} \right) \d x - \int_{0}^{+\infty} 2 x \exp \left( -x^2 - \frac{x}{\sqrt{t}} \right) \d x \right\vert \\
	\le & \left\vert \int_{0}^{+\infty} 2 x \exp \left( -x^2 - \frac{x}{\sqrt{t}} \right) \left( \frac{\sqrt{t} \kappa^2}{\sqrt{t} \kappa^2 + x} \exp \left( - \frac{x^2}{2 t \kappa^2} - \frac{\alpha \rho_+ x}{\sqrt{t} \vert \kappa \vert} \right) - 1 \right) \d x \right\vert \\
	\le & \int_{0}^{+\infty} 2 x \exp \left( -x^2 \right) \left\vert \frac{\sqrt{t} \kappa^2}{\sqrt{t} \kappa^2 + x} \exp \left( - \frac{x^2}{2 t \kappa^2} - \frac{\alpha \rho_+ x}{\sqrt{t} \vert \kappa \vert} \right) - 1 \right\vert \d x \\
	\stackrel{(i)}{=} & \breve{o}_{\kappa} (1) = \breve{o}_{\kappa} (1) \int_{0}^{+\infty} 2 x \exp \left( -x^2 - x \right) \d x \le \breve{o}_{\kappa}(1) \int_{0}^{+\infty} 2 x \exp \left( -x^2 - \frac{x}{\sqrt{t}} \right) \d x \\
	= & \breve{o}_{\kappa}(1) \int_{0}^{+\infty} 2 t s \exp \left( -t s^2 - s \right) \d s,
\end{align*}
where $(i)$ again results from the Dominated Convergence Theorem, and the fact that $\sqrt{t} \kappa \to - \infty$.

Combining Case 1 and 2 immediately yields our claim, hence we deduce that
\begin{equation*}
	\frac{c}{4 u (1-\rho^2)} - \frac{\delta}{c} \log \psi_{\kappa, \rho} (-u) = \frac{c}{4 t a} + (1 + \breve{o}_{\kappa} (1)) \frac{b}{c} \int_{0}^{+\infty} 2 t s \exp \left( -t s^2 - s \right) \d s.
\end{equation*}
Now according to the definition of $\mathcal{D} (a)$ in Theorem~\ref{thm:asymptotic_upper_bound}, we finally obtain that
\begin{equation*}
	\delta_{\su}(\kappa;\varphi) = \frac{(1 + \breve{o}_{\kappa} (1)) \mathcal{D} (a)}{\P ( \rho YG + \sqrt{1 - \rho^2} W < \kappa )} = \frac{(1 + \breve{o}_{\kappa} (1)) \mathcal{D} (\kappa^2 (1 - \rho^2))}{\P ( \rho YG + \sqrt{1 - \rho^2} W < \kappa )}.
\end{equation*}
This concludes the proof.
\end{proof}

\begin{proof}[\bf Proof of Theorem~\ref{thm:best_error}]
	To begin with, we collect some important properties of $\mathcal{D} (a)$ in the following proposition (proved in Appendix~\ref{sec:append-signal-lemma}), which allows us to determine the asymptotic behavior of $\delta_{\su}(\kappa;\varphi)$ explicitly within some regime.

\begin{prop}\label{prop:properties_of_d(a)}
\hspace{2em}
\begin{enumerate}
	\item[(a)] $\mathcal{D} (a)$ is asymptotic to $(\log a)/2$ as $a \to +\infty$. Consequently, when $\kappa^2 (1 - \rho^2) \to +\infty$ we have
\begin{equation*}
	\delta_{\su}(\kappa;\varphi) = (1 + \breve{o}_{\kappa} (1)) \frac{\log(\vert \kappa \vert \sqrt{1 - \rho^2})}{\P ( \rho YG + \sqrt{1 - \rho^2} W < \kappa )}.
\end{equation*}

    \item[(b)] $\mathcal{D} (a) = a/2$ when $a \le 2$. Therefore, if $1 - \rho^2 \le 2/\kappa^2$, then we get that
\begin{equation*}
	\delta_{\su}(\kappa;\varphi) = (1 + \breve{o}_{\kappa}(1)) \frac{\kappa^2 (1 - \rho^2)}{2 \P ( \rho YG + \sqrt{1 - \rho^2} W < \kappa )}.
\end{equation*}
\end{enumerate}

\end{prop}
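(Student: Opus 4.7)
I plan to analyze the variational problem defining $\mathcal{D}(a)$ using two universal upper bounds on $R(b,c,a):=\inf_{t>0}\{c/(4at)+(b/c)I(t)\}$: plugging $t=c/(2\sqrt{2ab})$ and using $I(t)\le 2t$ yields $R\le\sqrt{2b/a}$ (valid for every $c$), while letting $t\to\infty$ yields $R\le b/c$. Paired with these I will use the expansions $L(c)=1-c/4+c^2/24+O(c^4)$ as $c\to 0^+$ and $cL(c)=\log c+1/2+O(1/c^2)$ as $c\to\infty$, together with the stronger global inequality $cL(c)>\log c+1/2$, which follows from computing $(cL(c))'=\sqrt{c^2/4+1}-c/2$ and comparing termwise with $1/c=(\log c+1/2)'$, plus the vanishing difference at infinity. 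Integration by parts gives $I(t)=\E[1-e^{-ts^2}]$ for $s\sim\mathrm{Exp}(1)$, whence $I$ is concave with the refined expansion $I(t)=2t-12t^2+O(t^3)$ as $t\to 0^+$.

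For part (a), I would establish the upper bound $\mathcal{D}(a)\le(1+\varepsilon)(\log a)/2$ by taking $b=(1+\varepsilon)(\log a)/2$ and $c=\sqrt{a}\log a$: a saddle-point expansion at the large-$t$ optimizer $t^{\ast}=c^4/(\pi a^2 b^2)$, together with $I(t^{\ast})\approx 1-\sqrt{\pi}/(2\sqrt{t^{\ast}})$, gives $R=b/c-\pi ab^2/(4c^3)+o(1/c^3)$, and the matching expansion of $L$ then makes $R>L$. For the lower bound $\mathcal{D}(a)\ge(1-\varepsilon)(\log a)/2$ I would cover $c$-space by two regimes: for $c\le c_1$ with $L(c_1)=\sqrt{2b/a}$ the first universal bound and monotonicity of $L$ give $R\le L(c)$; for $c\ge c_2:=F^{-1}(b)$ with $F(c)=cL(c)$ the second bound together with $L(c)\ge b/c$ also gives $R\le L(c)$. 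A routine asymptotic check shows $c_1\sim\sqrt{a\log a}$ dominates $c_2\sim a^{(1-\varepsilon)/2}$, so the two regimes cover all $c>0$ for large $a$.

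For part (b), the upper bound $\mathcal{D}(a)\le a/2$ is immediate from $R(b,c,a)\to\sqrt{2b/a}>1=\lim_{c\to 0}L(c)$ whenever $b>a/2$. The lower bound $\mathcal{D}(a)\ge a/2$ with $a\le 2$ requires showing $R(a/2,c,a)\le L(c)$ for every $c>0$; I will split into three ranges. For small $c$, plugging in $t=c/(2a)$ and using $I(c/(2a))=c/a-3c^2/a^2+O(c^3)$ yields a sum bounded by $1-3c/(2a)+O(c^2)$, which lies below $L(c)=1-c/4+O(c^2)$ because $a\le 2$ forces $3/(2a)\ge 3/4>1/4$. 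For $c\ge F^{-1}(a/2)$ the bound $R\le b/c=a/(2c)\le L(c)$ follows directly from $cL(c)\ge a/2$. The intermediate range is where neither universal bound matches $L$; I plan to close it by choosing $t(c)$ as the actual minimizer of the sum and invoking the Legendre-dual representation $R(b,c,a)=\inf_{\lambda\in[0,2]}\{\sqrt{b\lambda/a}-(b/c)I^{\ast}(\lambda)\}$ (with $I^{\ast}$ the Legendre transform of the concave function $I$), matched against the analogous dual representation $cL(c)=\sup_{x>0}\{c\sqrt{x}-(x-1-\log x)/2\}$; the hypothesis $a\le 2$ should drop out naturally from this termwise comparison. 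The intermediate regime of part (b) is the main technical obstacle and is also the step for which I am least certain the Legendre-duality argument cleanly closes the gap; an alternative fallback is an explicit monotonicity analysis of $c\mapsto L(c)-R(a/2,c,a)$ on the finite intermediate interval.
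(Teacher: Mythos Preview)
Your treatment of part (a) and of the upper bound $\mathcal D(a)\le a/2$ in part (b) is correct, and for (a) your regime-splitting via $I(t)\le 2t$ and $I(t)\le 1$ closely parallels the paper's. The genuine gap is the lower bound $\mathcal D(a)\ge a/2$ for $a\le 2$. At $b=a/2$ your first universal bound degenerates to $R\le 1$, which is useless because $L(c)<1$ for every $c>0$; your small-$c$ expansion via $t=c/(2a)$ gives, at $a=2$, the bound $1-3c/4+15c^2/8+O(c^3)$, which stays below $L(c)=1-c/4+c^2/24+O(c^3)$ only for $c\lesssim 3/11$, while your large-$c$ regime starts at $F^{-1}(1)\approx 1.5$. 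The intermediate interval is therefore of order one, and your Legendre-dual sketch is not concrete enough to close it (your claimed identity for $cL(c)$ is in fact off by an additive $c^2/2$, and even after correction the resulting pointwise comparison between the conjugate of $I$ and the $\chi^2$ rate function is a nontrivial inequality you have not verified).

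The paper closes this via a completely different device that exploits a hidden Gaussian structure in $I$. Solving the first-order condition for the optimal $t$ and substituting $u=1/\sqrt{2t}$, one obtains after integration by parts
\[
\frac{2c^2}{a^2}+1=\frac{u^2+1}{u}\,R(u),\qquad R(u)=\frac{1-\Phi(u)}{\phi(u)}\quad\text{(Mills ratio)},
\]
and the optimal value becomes $\tfrac{a}{2c}\bigl[\tfrac12+\tfrac{1-u^2}{2}(1-uR(u))\bigr]$. Monotonicity in $a$ reduces the required inequality to the case $a=2$, and the resulting single-variable statement $h(u)\le 0$ is proved in one stroke by showing $h'(u)>0$ (using the sharp Mills-ratio bound $R(u)<(u^2+2)/(u^3+3u)$ of Gasull--Utzet) together with $\lim_{u\to\infty}h(u)=0$. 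This handles all $c>0$ uniformly, with no regime splitting; your expansion-based approach does not seem to recover this without the Mills-ratio identity.
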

Now we turn to the proof of Theorem~\ref{thm:best_error}. Note that $\mathcal{D}^{-1}$ is an increasing function, we first show that for any $\veps > 0$, there exists a $\underline{\kappa} = \underline{\kappa} (\veps) < 0$, such that as long as $\kappa <  \underline{\kappa}$, with high probability we have
\begin{equation}\label{eq:max_corr}
	\rho_{\max} \le 1 - \frac{1}{2 \kappa^2} \mathcal{D}^{-1} \left( (1 - \veps) \delta \P (YG < \kappa) \right).
\end{equation}
Assume that
\begin{equation*}
	\rho \ge 1 - \frac{1}{2 \kappa^2} \mathcal{D}^{-1} \left( (1 - \veps) \delta \P (YG < \kappa) \right),
\end{equation*}
then we have the following estimate:
\begin{equation*}
	\kappa^2 (1 - \rho^2) \le 2 \kappa^2 (1 - \rho) \le \mathcal{D}^{-1} \left( (1 - \veps) \delta \P (YG < \kappa) \right).
\end{equation*}
Using the conclusion of Theorem~\ref{thm:asymptotic_upper_bound}, it follows that
\begin{align*}
	\delta_{\su}(\kappa, \rho;\varphi) = & \frac{(1 + \breve{o}_{\kappa} (1)) \mathcal{D} (\kappa^2 (1 - \rho^2))}{\P ( \rho YG + \sqrt{1 - \rho^2} W < \kappa )} \le \frac{(1 + \breve{o}_{\kappa} (1)) (1 - \veps) \delta \P (YG < \kappa)}{\P ( \rho YG + \sqrt{1 - \rho^2} W < \kappa )} \\
	\stackrel{(i)}{=} & (1 + \breve{o}_{\kappa} (1)) (1 - \veps) \delta \exp \left( \alpha (1 - \rho) \kappa - \frac{\alpha^2 (1 - \rho^2)}{2} \right) \\
	\le & \left( 1 - \frac{\veps}{2} \right) \delta < \delta
\end{align*}
if $\vert \kappa \vert$ is large, where in $(i)$ we use Lemma~\ref{lem:tail}. Applying Theorem~\ref{cor:rho_min_max} then yields Eq.~\eqref{eq:max_corr}.

To show Eq.~\eqref{eq:best_error_alg}, we first notice that if $\delta < \delta_{\slin}(\kappa;\varphi)$, then using the asymptotics of $\delta_{\slin}(\kappa;\varphi)$ from Theorem~\ref{thm:signal} implies
\begin{equation*}
	(1 + \breve{o}_{\kappa} (1)) \delta \P (YG < \kappa) \le 1 + \breve{o}_{\kappa} (1),
\end{equation*}
which further implies that with high probability,
\begin{align*}
	\rho_{\max} \le & 1 - \frac{1}{2 \kappa^2} \mathcal{D}^{-1} \left( (1 + \breve{o}_{\kappa} (1)) \delta \P (YG < \kappa) \right) \\
	\stackrel{(i)}{=} & 1 - \frac{(1 + \breve{o}_{\kappa} (1)) \delta \P (YG < \kappa)}{\vert \kappa \vert^2} \\
	\stackrel{(ii)}{=} & 1 - \frac{\delta}{2}(1 + \breve{o}_{\kappa}(1)) \E \left[ \left( \kappa - YG \right)_+^2 \right],
\end{align*}
where $(i)$ and $(ii)$ both follow from Lemma~\ref{lem:tail}. This concludes the proof.
\end{proof}

\subsection{Additional proofs}\label{sec:append-signal-lemma}

\begin{proof}[{\bf Proof of Lemma~\ref{lem:tail}}]
Recall the $\breve{o}$ notation we introduced before: for a function $s(\rho,r)$, we write $s(\rho, t) = \breve{o}_{t}(1)$ to mean
\begin{equation*}
\lim_{t \to \infty} \max_{\rho \in [-1,1]} \vert s(\rho, t) \vert = 0.
\end{equation*}
For convenience, sometimes in this proof we may write $o_t(1)$ instead of $\breve{o}_t(1)$.

\textbf{Step 1.}  First, we consider the case $\rho \ge \eta_0$. We calculate 
\begin{align*}
&~~~~\P\left( \rho Y G <  -t - \sqrt{1-\rho^2}\, W \right) \\
&= \int_\R \P \left( \rho Y G < -t - \sqrt{1-\rho^2}\, x \right) \frac{1}{\sqrt{2\pi}} \exp\big(-\frac{x^2}{2} \big) \; \d x \\
&= o_{t}(1)\cdot \exp\Big( -\frac{t^2(1+\veps_0)^2}{2}\Big) +  \int_{|x| \le (1+\veps_0) t } \P \left( \rho Y G < -t - \sqrt{1-\rho^2}\, x \right) \frac{1}{\sqrt{2\pi}} \exp\Big(-\frac{x^2}{2} \Big) \; \d x 
\end{align*}
where $\veps_0 := \veps_0(\eta_0)>0$ is sufficiently small such that $\sqrt{1-\eta_0^2}\, (1+\veps_0) < 1 - \frac{\eta_0^2}{4}$, and where we used the Gaussian tail probability (Lemma~\ref{lem:Gtail}) bound in the last expression. 
We denote $\tilde t = t + \sqrt{1-\rho^2}\, x $, which satisfies $\tilde t \ge t - \sqrt{1-\rho^2} (1+\veps_0) t \ge \eta_0^2 t / 4$. We observe that
\begin{align*}
\P \left( \rho Y G < - \tilde t \right) &= \int_{-\infty}^{- \rho^{-1} \tilde t} \P(Y = 1 | G = z ) \frac{1}{\sqrt{2\pi}} \exp\Big(-\frac{z^2}{2} \Big) \;  \d z  \\
&+  \int_{\rho^{-1} \tilde t}^{\infty} \P(Y = -1 | G = z ) \frac{1}{\sqrt{2\pi}} \exp\Big(-\frac{z^2}{2} \Big) \; \d z  \\
& = \int_{\rho^{-1} \tilde t}^{\infty} \big[ \varphi(- z) + 1 - \varphi(z) \big] \frac{1}{\sqrt{2\pi}} \exp\Big(-\frac{z^2}{2} \Big) \; \d z\\
&\stackrel{(i)}{=} \frac{2(1+o_t(1))}{\sqrt{2\pi}} \int_{\rho^{-1}  \tilde t  }^\infty \exp\Big(-\frac{z^2}{2} - \alpha z \Big) \; \d z \\
&\stackrel{(ii)}{=} \frac{2(1+o_t(1)) \rho }{\sqrt{2\pi}\, | \tilde t | } \exp\Big( - \frac{\tilde t^2}{2 \rho^2} - \frac{\alpha\tilde t}{\rho}  \Big). 
\end{align*}
Here \textit{(i)} is because by the tail probability assumption we have
\begin{equation*}
\big| 1 - \varphi(z) + \varphi(-z) - 2e^{-\alpha z} \big| \le \beta_t e^{-\alpha z}, \qquad \text{for }z \ge t
\end{equation*}
where $\beta_t = o_t(1)$;
\textit{(ii)} is due to the Gaussian tail probability inequality (Lemma~\ref{lem:Gtail}). Thus, we obtain
\begin{equation*}
\P\left( \rho Y G < - t - \sqrt{1-\rho^2}\, W\right) = \frac{(1+\breve{o}_{\kappa}(1)) \rho }{\pi\, } \int_{|x| \le (1+\veps_0) t } \frac{1}{| \tilde t |} \exp\Big( - \frac{\tilde t^2}{2 \rho^2} - \frac{\alpha\tilde t}{\rho}  -\frac{x^2}{2} \Big) \; \d x + o_{t}(1)\cdot \exp\Big( -\frac{t^2(1+\veps_0)^2}{2}\Big). 
\end{equation*}
We rearrange the exponent
\begin{align*}
- \frac{\tilde t^2}{2 \rho^2} - \frac{\alpha\tilde t}{\rho}  -\frac{x^2}{2} &= -\frac{x^2}{2\rho^2} - \frac{\alpha \sqrt{1-\rho^2}}{\rho} x - \frac{t \sqrt{1-\rho^2}}{\rho^2}x - \frac{t^2}{2\rho^2} - \frac{\alpha t}{\rho} \\
&= -\frac{1}{2\rho^2}\left( x + \sqrt{1-\rho^2}\, t + \alpha \rho \sqrt{1-\rho^2} \right)^2 + \frac{1}{2\rho^2} \left( \sqrt{1-\rho^2}\,  t + \alpha\rho \sqrt{1-\rho^2} \right)^2 - \frac{\alpha t}{\rho} - \frac{t^2}{2 \rho^2} \\
&= - \frac{1}{2\rho^2}\left( x + \sqrt{1-\rho^2}\, \bar \kappa +  \alpha \rho \sqrt{1-\rho^2} \right)^2 - \frac{t^2}{2} - \alpha \rho t+ \frac{(1-\rho^2)\alpha^2}{2}.
\end{align*}
Note that $\sqrt{1-\rho^2}\, t + \alpha\rho \sqrt{1-\rho^2} \le t$ when $t$ is sufficiently large, so denoting $b = \sqrt{1-\rho^2}\, t + \alpha \rho \sqrt{1-\rho^2}$, we have
\begin{align*}
&~~~~\int_{|x| \le (1+\veps_0) t} \frac{1}{|t + \sqrt{1-\rho^2} \, x|} \exp\left[ - \frac{1}{2\rho^2}\left( x + b \right)^2 \right] \; \d x \\
&\stackrel{(i)}{=} \int_I \frac{1}{|\rho^2 t - \alpha \rho(1-\rho^2) + u\sqrt{1-\rho^2} | } \exp \left( -\frac{u^2}{2 \rho^2} \right) \; \d u \\
&= \int_{I_{\mathrm{in}}} \frac{1}{|\rho^2 t - \alpha \rho(1-\rho^2) + u\sqrt{1-\rho^2} | } \exp \left( -\frac{u^2}{2 \rho^2} \right) \; \d u  +\int_{I_{\mathrm{out}}} \frac{1}{|\rho^2 t - \alpha \rho(1-\rho^2) + u\sqrt{1-\rho^2} | } \exp \left( -\frac{u^2}{2 \rho^2} \right) \; \d u\\
&\stackrel{(ii)}{=} \int_{I_{\mathrm{in}}} \frac{1}{|\rho^2 t - o_t(1) \cdot \rho^2 t | } \exp \left( -\frac{u^2}{2 \rho^2} \right) \; \d u + o_t(1) \cdot  \int_{I_{\mathrm{out}}}  \exp \left( -\frac{u^2}{2 \rho^2} \right) \; \d u \\
&\stackrel{(iii)}{=} \frac{1+o_t(1)}{\rho^2 t }\int_{I_{\mathrm{in}}}  \exp \left( -\frac{u^2}{2 \rho^2} \right) \; \d u + o_t\left( \exp\Big(-\frac{t^{4/3}}{2\rho^2} \Big) \right) \\
&\stackrel{(iv)}{=} \frac{(1+o_t(1)) \sqrt{2\pi}}{\rho t},
\end{align*}
where in \textit{(i)} we used change of variable $u = x + b$, in \textit{(ii)} we used 
\begin{equation*}
\rho^2 t - \alpha \rho(1-\rho^2) + u\sqrt{1-\rho^2}  \ge t - (1-\veps_0)\sqrt{1-\rho^2}\, t \ge \eta_0^2 t / 4, \qquad \forall\, u \in I, \rho \ge \eta_0
\end{equation*}
 and in \textit{(iii)} and \textit{(iv)} we used the Gaussian tail probability (Lemma~\ref{lem:Gtail}). Here we have denoted by the interval 
\begin{equation*}
I = \left[-(1+\veps_0)t + b, (1+\veps_0)t + b \right], \quad I_{\mathrm{in}} = \left[-t^{2/3}, t^{2/3} \right], \quad I_{\mathrm{out}} = I \big\backslash \left[-t^{2/3}, t^{2/3} \right].
\end{equation*}
This leads to
\begin{align*}
\P \left( \rho Y G < - t - \sqrt{1-\rho^2}\, W \right) = \frac{(1+o_t(1)) \rho }{\pi } \exp\left( - \frac{t^2}{2} - \alpha \rho t + \frac{(1-\rho^2)\alpha^2}{2} \right) \cdot  \frac{\sqrt{2\pi}}{\rho t},
\end{align*}
which proves the first part of the tail probability.

\textbf{Step 2.} Now we consider the case $\rho \le -\eta_0$. Different from before, in the calculation of $\P(\rho Y G < - \tilde t)$ we will flip the inequality direction. To be specific,
\begin{align*}
\P \left( \rho Y G <  - \tilde t \right) &= \int^{\infty}_{- \rho^{-1} \tilde t} \P(Y = 1 | G = z ) \frac{1}{\sqrt{2\pi}} \exp\Big(-\frac{z^2}{2} \Big) \; \d z  \\
&+  \int^{\rho^{-1} \tilde t}_{-\infty} \P(Y= -1 | G = z ) \frac{1}{\sqrt{2\pi}} \exp\Big(-\frac{z^2}{2} \Big) \; \d z  \\
&\stackrel{(i)}{=} \frac{2(1+o_t(1))}{\sqrt{2\pi}} \int_{|\rho^{-1}  \tilde t | }^\infty \exp\Big(-\frac{z^2}{2} \Big) \; \d z \\
&= \frac{2(1+o_t(1)) \rho }{\sqrt{2\pi}\, | \tilde t | } \exp\Big( - \frac{\tilde t^2}{2 \rho^2}   \Big) 
\end{align*}
where \textit{(i)} is because by the tail assumption
\begin{equation*}
\varphi(z) + 1 - \varphi(-z) = 2 - o_t(1) \cdot 2 e^{-\alpha z} = 2 - o_t(1), \qquad \forall\, z \ge | \rho^{-1} \tilde t|
\end{equation*}
Then, following a similar argument as in Step~1 (essentially replacing $\alpha$ by $0$), we get
\begin{align}
\P \left( \rho Y G < - t - \sqrt{1-\rho^2}\, W \right) = \frac{(1+o_t(1)) \rho }{\pi } \exp\left( - \frac{t^2}{2} \right) \cdot  \frac{\sqrt{2\pi}}{\rho t }. \label{eq:negrho}
\end{align}
This proves the second part of the tail probability.

\textbf{Step 3:} Finally, let us consider the case $|\rho| < \eta_0$. We use a different way to do the conditional probability calculation.
\begin{align}
&~~~~\P \left( \sqrt{1-\rho^2} \, W < - t - \rho Y G \right) \notag \\
&= \int_\R \P \left( \sqrt{1-\rho^2} \,W < - t - \rho Y G \,\big| \, G = x \right) \frac{1}{\sqrt{2\pi}} \exp\Big( -\frac{x^2}{2} \Big) \; \d x \notag \\
&= \int_\R \P \left( \sqrt{1-\rho^2} \, W < -t - \rho x \right)  \varphi(x) \frac{1}{\sqrt{2\pi}} \exp\Big( -\frac{x^2}{2} \Big) \; \d x \\
&+ \int_\R \P \left( \sqrt{1-\rho^2} \, W < -t + \rho x \right)  \big[ 1-\varphi(x) \big] \frac{1}{\sqrt{2\pi}} \exp\Big( -\frac{x^2}{2} \Big) \; \d x \label{expr:long}
\end{align}
We notice that 
\begin{equation*}
\int_{|x| > 1.5 t} \P \left( \sqrt{1-\rho^2} \, W < -t - \rho x \right)  \varphi(x) \frac{1}{\sqrt{2\pi}} \exp\Big( -\frac{x^2}{2} \Big) \; \d x < \int_{|x| > 1.5 t} \exp\Big( -\frac{x^2}{2} \Big) \; \d x
\end{equation*}
which is at most $o_{t}(\exp(-2.25 t^2/2))$; and a similar upper bound holds if we replace $-t - \rho x$ by $-t + \rho x$. If $|x| \le 1.5 t$, then $-t - \rho x < -0.5 t$ so we use the Gaussian tail asymptotics (Lemma~\ref{lem:Gtail}) to get 
\begin{equation*}
\P \left( \sqrt{1-\rho^2} \, W < - t - \rho x \right)  = \frac{(1+o_t(1))\sqrt{1-\rho^2}}{\sqrt{2\pi}\, |-t - \rho x|} \exp \Big(- \frac{(t+\rho x)^2}{2(1-\rho^2)} \Big)
\end{equation*}
and thus
\begin{align*}
& ~~~~ \int_{|x| \le 1.5 t} \P \left( \sqrt{1-\rho^2} \, W < -t - \rho x \right)  \varphi(x) \frac{1}{2\pi} \exp\Big( -\frac{x^2}{2} \Big) \; \d x \\
&= \frac{(1+o_t(1))\sqrt{1-\rho^2}}{2\pi} \int_{|x| \le 1.5 t} \frac{1}{|-t - \rho x|}  \varphi(x) \exp    \left(-\frac{(t+ \rho x)^2}{2(1-\rho^2)} - \frac{x^2}{2} \right) \; \d x \\
&= \frac{(1+o_t(1))\sqrt{1-\rho^2}}{2\pi} \exp\Big(-\frac{t^2}{2} \Big)  \int_{|x| \le 1.5 t} \frac{1}{|t + \rho x|} \varphi(x) \exp \left(-\frac{(x + \rho t)^2}{2(1-\rho^2)}  \right) \;  \d x \\
& = \frac{(1+o_t(1))\sqrt{1-\rho^2}}{2\pi} \exp\Big(-\frac{t^2}{2} \Big)  \int_{u \in J} \frac{1}{(1-\rho^2) t + \rho u} \varphi(u - \rho t) \exp \left(-\frac{u^2}{2(1-\rho^2)}  \right) \; \d u
\end{align*}
where we used change of variable $u = x  + \rho t$ in the last equality, and where we define the intervals
\begin{equation*}
J=[-1.5 t + \rho t, 1.5 t + \rho t], \quad J_{\mathrm{in}} = \left[-t^{2/3}, t^{2/3} \right], \quad J_{\mathrm{out}} = J \big\backslash \left[-t^{2/3}, t^{2/3} \right]. 
\end{equation*}
Note that $(1-\rho^2)t + \rho u \ge t/2$, so using the Gaussian tail probability inequality (Lemma~\ref{lem:Gtail}), we have
\begin{align*}
& \int _{J_{\mathrm{out}}} \frac{1}{(1-\rho^2) t + \rho u} 
\varphi(u - \rho t) \exp \left(-\frac{u^2}{2(1-\rho^2)}  \right) \; \d u = o_t\left( \exp\Big(-\frac{t^{4/3}}{2} \Big) \right), \\
& \int_{J_{\mathrm{out}}} \varphi(u - \rho t) \exp \left(-\frac{u^2}{2(1-\rho^2)}  \right) \; \d u = o_t\left( \exp\Big(-\frac{t^{4/3}}{2} \Big) \right).
\end{align*}
Thus we obtain
\begin{align*}
& ~~~~ \int_{|x| \le 1.5 t} \P \left( \sqrt{1-\rho^2} \, W < -t  - \rho x \right)  \varphi(x) \frac{1}{2\pi} \exp\Big( -\frac{x^2}{2} \Big) \; \d x \\ 
&= \frac{1+o_t(1)}{2\pi\sqrt{1-\rho^2}\, t} \exp\Big(-\frac{t^2}{2} \Big) \int_{J_{\mathrm{in}}} \varphi(u - \rho t) \exp\Big(-\frac{u^2}{2(1-\rho^2)} \Big) \; \d u+ o_{t}\left(\exp\Big( - \frac{t^2 + t^{4/3}}{2} \Big) \right) \\
&= \frac{1+o_t(1)}{2\pi\sqrt{1-\rho^2}\, t} \exp\Big(-\frac{t^2}{2} \Big) \int_{\R} \varphi(u-\rho t) \exp\Big(-\frac{u^2}{2(1-\rho^2)} \Big) \; \d u+ o_t\left(\exp\Big(-\frac{t^2 + t^{4/3}}{2} \Big) \right)
\end{align*}
A similar term is obtained for \eqref{expr:long}, and thus we have
\begin{align}
\P \left( \sqrt{1-\rho^2}\,  W < -t - \rho Y G \right) &= \frac{1+o_t(1)}{2\pi\sqrt{1-\rho^2}\, t} \exp\Big(-\frac{t^2}{2} \Big) \int_{\R} \Big[\varphi(u- \rho t) + 1 - \varphi(u+\rho t) \Big] \exp\Big(-\frac{u^2}{2(1-\rho^2)} \Big) \; \d u \notag \\
&+ o_{t}\left(\exp\Big(-\frac{t^2 + t^{4/3}}{2} \Big) \right).
\label{expr:longlong}
\end{align}
We claim that 
\begin{equation}
\min\Big\{1, e^{-\alpha \eta_0 t} \Big\} \sqrt{2\pi(1-\rho^2)} \le \int_{\R} \Big[\varphi(u- \rho t) + 1 - \varphi(u+\rho t) \Big] \exp\Big(-\frac{u^2}{2(1-\rho^2)} \Big) \; \d u \le 2\sqrt{2\pi(1-\rho^2)}.\label{ineq:twoway}
\end{equation}
In fact, it is obvious that $\varphi(u- \rho t) + 1 - \varphi(u+\rho t) \le 2$ so the second inequality follows. To see why the first inequality holds, we notice that if $\rho \le 0$, by monotonicity of $\varphi$ we must have $\varphi(u-\rho t) + 1 - \varphi(u+\rho t) \ge 1$. And if $\rho > 0$, we notice
\begin{align*}
& ~~~\int_{\R} \Big[\varphi(u- \rho t) + 1 - \varphi(u+\rho t) \Big] \exp\Big(-\frac{u^2}{2(1-\rho^2)} \Big) \; \d u \\
&= \int_0^\infty \Big[\varphi(u- \rho t) + 1 - \varphi(u+\rho t) + \varphi(-u - \rho t) + 1 - \varphi(-u +\rho t) \Big] \exp\Big(-\frac{u^2}{2(1-\rho^2)} \Big) \; \d u \\
& \ge \int_0^\infty \Big[ \varphi(u- \rho t) + 1 - \varphi(-u +\rho t) \Big] \exp\Big(-\frac{u^2}{2(1-\rho^2)} \Big) \; \d u
\end{align*}
and by monotonicity,
\begin{align*}
\varphi(u-\rho t) + 1 - \varphi(-u+\rho t) \ge \varphi(-\eta_0 t) + 1 - \varphi(\eta_0 t) = (1+o_t(1)) \cdot 2e^{-\eta_0 \alpha t}.
\end{align*}
Combining \eqref{expr:longlong} and \eqref{ineq:twoway} yields the third part of the tail probability.

\textbf{Step 4:} We prove the ``as a consequence'' part. Using the identity $\E [X_+^2] = 2\int_0^\infty t \P(X > t) \; \d t$, we have
\begin{align*}
& ~~~ \E \Big[ \big(\rho Y G + \sqrt{1-\rho^2}\, W - \kappa\big)_-^2 \Big] = 2\int_0^\infty t \P\big( \rho Y G + \sqrt{1-\rho^2}\, W < \kappa - t \big) \; \d t = (1+\breve{o}_{\kappa}(1)) \cdot 2 \int_0^\infty t A_{\rho,|\kappa - t|} \; \d t.
\end{align*}
For $\rho \ge \eta_0$, 
\begin{align*}
2 \int_0^\infty t A_{\rho,|\kappa - t|} \; \d t &= 2 \sqrt{\frac{2}{\pi}} \, \frac{1}{|\kappa|} \exp\left( -\frac{\kappa^2}{2} - \alpha \rho |\kappa| + \frac{(1-\rho^2)\alpha^2}{2} \right) \int_0^\infty \frac{t}{(t/|\kappa|) + 1 } \exp \left( -\frac{t^2}{2} - t|\kappa| - \alpha \rho t \right) \; \d t \\
&\stackrel{(i)}{=} 2 \sqrt{\frac{2}{\pi}} \, \frac{1}{|\kappa|^3} \exp\left( -\frac{\kappa^2}{2} - \alpha \rho |\kappa| + \frac{(1-\rho^2)\alpha^2}{2} \right) \int_0^\infty \frac{u}{(u/|\kappa|^2) + 1 }  \exp \left( -\frac{u^2}{2\kappa^2} - u - \frac{\alpha \rho}{|\kappa|} \right) \; \d u \\
&\stackrel{(ii)}{=} 2 \sqrt{\frac{2}{\pi}} \, \frac{1}{|\kappa|^3} \exp\left( -\frac{\kappa^2}{2} - \alpha \rho |\kappa| + \frac{(1-\rho^2)\alpha^2}{2} \right) \cdot (1+\breve{o}_{\kappa}(1)) = \frac{2(1+\breve{o}_{\kappa}(1))}{|\kappa|^2} A_{\rho, |\kappa|}
\end{align*}
where (i) is due to a change of variable $u = t|\kappa|$ and (ii) is due to the dominated convergence theorem. For $\rho \le -\eta_0$, following a similar argument,
\begin{align*}
2 \int_0^\infty t A_{\rho,|\kappa - t|} \; \d t &= 2 \sqrt{\frac{2}{\pi}} \, \frac{1}{|\kappa|} \exp\left( -\frac{\kappa^2}{2} \right) \int_0^\infty \frac{t}{(t/|\kappa|) + 1 } \exp \left( -\frac{t^2}{2} - t|\kappa| \right) \; \d t = \frac{2(1+\breve{o}_{\kappa}(1))}{|\kappa|^2} A_{\rho, |\kappa|}.
\end{align*}
For $\rho \in (-\eta_0,\eta_0)$, we define $F(x) = \varphi(x) + 1 - \varphi(-x)$, so the formula \eqref{def:a} is simplified into
\begin{align*}
a_{\rho,t} &= \frac{1}{2\sqrt{2\pi(1-\rho^2)}} \int_0^\infty \big[ \varphi(u-\rho t) + 1 - \varphi(u+\rho t) + \varphi(-u - \rho t) + 1 - \varphi(-u + \rho t) \big] \exp\left( -\frac{u^2}{2(1-\rho^2)} \right) \; \d u\\
&= \frac{1}{2\sqrt{2\pi(1-\rho^2)}}  \int_0^\infty \big[ F(u-\rho t) + F(-u-\rho t) \big]  \exp\left( -\frac{u^2}{2(1-\rho^2)} \right) \; \d u
\end{align*}
Using this and Fubini's theorem, we derive
\begin{align*}
& ~~~ 2 \int_0^\infty t A_{\rho,|\kappa - t|} \; \d t \\
&= \sqrt{\frac{2}{\pi}} \frac{1}{\sqrt{2\pi(1-\rho^2)}}\, \int_0^\infty \frac{t}{|t-\kappa|} \exp\left( - \frac{(t-\kappa)^2}{2} \right) \;\d t \\
& \times \int_0^\infty \big[ F(u-\rho t - \rho |\kappa|) + F(-u-\rho t- \rho |\kappa|) \big] \exp\left( -\frac{u^2}{2(1-\rho^2)} \right)  \;\d u \\
&= \sqrt{\frac{2}{\pi}} \frac{\exp(-\kappa^2/2)}{\sqrt{2\pi(1-\rho^2)}} \int_0^\infty \exp\left( -\frac{u^2}{2(1-\rho^2)} \right)  \;\d u \\
& \times \int_0^\infty \frac{t}{|t-\kappa|} \exp\left( - \frac{t^2}{2} - t|\kappa| \right) \big[ F(u-\rho t- \rho |\kappa|) + F(-u-\rho t- \rho |\kappa|) \big] \;\d t \\
&= \sum_{k=1}^3 \sqrt{\frac{2}{\pi}} \frac{\exp(-\kappa^2/2)}{\sqrt{2\pi(1-\rho^2)}} \int_0^\infty \exp\left( -\frac{u^2}{2(1-\rho^2)} \right)  \;\d u \\
& \times \int_{J_k} \frac{t}{|t-\kappa|} \exp\left( - \frac{t^2}{2} - t|\kappa| \right) \big[ F(u-\rho t- \rho |\kappa|) + F(-u-\rho t- \rho |\kappa|) \big] \;\d t \\
&=: I_1 + I_2 + I_3
\end{align*}
where 
\begin{equation*}
J_1 = [0, |\kappa|^{-1/4}], \qquad J_2 = [|\kappa|^{-1/4}, |\kappa|^{1/4}], \qquad J_3 = [|\kappa|^{1/4}, +\infty).
\end{equation*}
A straightforward bound gives $I_3 = \breve{o}_{\kappa}(\exp(-|\kappa|^2/2 - |\kappa|^{5/4}))$. We claim that
\begin{equation*}
\sup_{u \in \R, |\rho| \le \eta_0, t \in J_1} \Big| \frac{F(u-\rho t - \rho |\kappa|)}{F(u- \rho |\kappa|)} - 1 \Big| = \breve{o}_{\kappa}(1), \qquad \sup_{u \in \R, |\rho| \le \eta_0, t \in J_2} \frac{F(u-\rho t - \rho |\kappa|)}{F(u- \rho |\kappa|)} = O(\exp(|\kappa|^{1/4})).
\end{equation*}
Once this is proved, we simply the inner integrals and obtain
\begin{align*}
I_1 &= \sqrt{\frac{2}{\pi}} \frac{\exp(-\kappa^2/2)}{\sqrt{2\pi(1-\rho^2)}}\frac{1+\breve{o}_{\kappa}(1)}{|\kappa|^3} \int_0^\infty  \big[ F(u-\rho |\kappa|) + F(-u-\rho |\kappa|) \big]  \exp\left( -\frac{u^2}{2(1-\rho^2)} \right)  \;\d u \\
&= \sqrt{\frac{2}{\pi}}\exp\left( - \frac{\kappa^2}{2} \right) \frac{1+\breve{o}_{\kappa}(1)}{|\kappa|^3} a_{\rho, |\kappa|} = \frac{1 + \breve{o}_{\kappa}(1)}{|\kappa|^2}  A_{\rho,|\kappa|}; \\
I_2 &= \sqrt{\frac{2}{\pi}} \frac{\exp(-\kappa^2/2)}{\sqrt{2\pi(1-\rho^2)}}\frac{1 +\breve{o}_{\kappa}(1)}{|\kappa|^3} \int_0^\infty  \big[ F(u-\rho |\kappa|) + F(-u-\rho |\kappa|) \big]  \exp\left( -\frac{u^2}{2(1-\rho^2)} \right)  \;\d u \\ & \times O\left( \exp\Big(- |\kappa|^{3/4} + |\kappa|^{1/4} \Big)\right) \\
&= \sqrt{\frac{2}{\pi}}\exp\left( - \frac{\kappa^2}{2} \right) \frac{\breve{o}_{\kappa}(1)}{|\kappa|^3} a_{\rho, |\kappa|} = \frac{ \breve{o}_{\kappa}(1)}{|\kappa|^2}  A_{\rho,|\kappa|}.
\end{align*}
The would lead to the conclusion. Thus it remains to prove the claim. 

First consider the case $t \in J_1$. By the tail assumption of $\varphi$, for any given $\veps_0 > 0$, we can find an $M = M(\veps_0)>0$ such that $|F(x) - 2e^{-\alpha x} | \le \veps_0 e^{-\alpha x}$ for all $x <- M$. If $u - \rho |\kappa| < -2M$, 
\begin{equation*}
|F(u - \rho t - \rho |\kappa|) - F(u - \rho |\kappa|) | \le (2+\veps_0) e^{-\alpha (u-\rho t - \rho|\kappa|)} - (2-\veps_0) e^{-\alpha(u - \rho |\kappa|)}
\end{equation*}
so 
\begin{equation*}
\sup_{u- \rho |\kappa| < -2M, t \in J_1}\Big| \frac{F(u-\rho t - \rho |\kappa|)}{F(u- \rho |\kappa|)} - 1 \Big| \le \sup_{t \in J_1} \frac{(2+\veps_0) e^{\alpha \rho t} - (2-\veps_0)}{2-\veps_0} \stackrel{\kappa \to -\infty}{\longrightarrow} \frac{2\veps_0}{2-\veps_0}
\end{equation*}
If $u - \rho |\kappa| > -2M$, we use uniform continuity of $F$ to obtain
\begin{equation*}
\sup_{u- \rho |\kappa| \ge  -2M, t \in J_1} \Big| \frac{F(u-\rho t - \rho |\kappa|)}{F(u- \rho |\kappa|)} - 1 \Big| \stackrel{\kappa \to -\infty}{\longrightarrow} 0.
\end{equation*}
Since $\veps_0$ is arbitrary, this proves the first half of the claim. 

Next consider the case $t \in J_2$. If $u - \rho |\kappa| > -M$, then the ratio is always bounded by $O(2/F(-M))$. If $u - \rho |\kappa| \le -M$ and $u - \rho|\kappa| - \rho t \le -M$, then
\begin{equation*}
\frac{F(u-\rho t - \rho |\kappa|)}{F(u- \rho |\kappa|)} \le \frac{(2+\veps_0)\exp\big( u - \rho t - \rho |\kappa| \big) }{(2-\veps_0) \exp\big( u - \rho |\kappa| \big)} = \frac{(2+\veps_0) \exp( |\kappa|^{1/4} )}{2-\veps_0}.
\end{equation*}
If $u - \rho |\kappa| \le -M$ and $u - \rho|\kappa| - \rho t > -M$, then
\begin{align*}
\frac{F(u-\rho t - \rho |\kappa|)}{F(u- \rho |\kappa|)} \le \frac{2}{(2-\veps_0) \exp\big( u - \rho |\kappa| \big)} = \frac{2 \exp(\rho t)}{(2-\veps_0) \exp\big( u - \rho |\kappa| - \rho t \big)} \le \frac{2 \exp(|\kappa|^{1/4})}{(2-\veps_0) \exp\big( -M \big)}.
\end{align*}
This proves the second half of the claim, thus completing the proof.
%
\end{proof}

\begin{lem}\label{lem:Gtail}
Let $G \sim \cN(0,1)$ be a Gaussian variable. Then, for $t > 0$, 
\begin{equation*}
\frac{t}{t^2 + 1} e^{-t^2/2} < \int_t^\infty e^{-x^2/2}\; \d x < \frac{1}{t} e^{-t^2/2}.
\end{equation*}
Moreover, 
\begin{equation*}
\E \big[ (G - t)_+ \big] = \frac{1+o_t(1)}{\sqrt{2\pi}\, t^2} e^{-t^2/2}, \qquad \E \big[ (G - t)_+^2 \big] = \frac{2(1+o_t(1))}{\sqrt{2\pi}\,t^3} e^{-t^2/2}.
\end{equation*}
\end{lem}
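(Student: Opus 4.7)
The plan is to handle the two sides of the tail bound separately, and then derive the moment asymptotics from a clean substitution rather than from a delicate expansion of the Mills ratio.

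For the upper bound, I would use the elementary observation that for $x \ge t > 0$ we have $1 \le x/t$, so
\begin{equation*}
\int_t^\infty e^{-x^2/2}\,\d x \;\le\; \frac{1}{t}\int_t^\infty x\, e^{-x^2/2}\,\d x \;=\; \frac{1}{t}\, e^{-t^2/2},
\end{equation*}
which gives the right-hand inequality. For the lower bound I would introduce
\begin{equation*}
g(t) := \int_t^\infty e^{-x^2/2}\,\d x \;-\; \frac{t}{t^2+1}\, e^{-t^2/2},
\end{equation*}
compute $g'(t)$, and check after simplification that it equals $-\tfrac{2}{(t^2+1)^2}\, e^{-t^2/2}<0$. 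Since $g(t)\to 0$ as $t\to\infty$, this forces $g(t)>0$ for all $t>0$, yielding the left-hand inequality. The derivative computation is routine algebra.

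For the two asymptotic moment identities, I would avoid plugging higher-order Mills expansions into $\int_t^\infty (x-t)^k e^{-x^2/2}\,\d x$ (which leads to cancellations and is error-prone). Instead, I would substitute $y=x-t$ followed by $u=yt$ to obtain, for $k\in\{1,2\}$,
\begin{equation*}
\sqrt{2\pi}\,\E\bigl[(G-t)_+^k\bigr] \;=\; \int_0^\infty y^k\, e^{-(y+t)^2/2}\,\d y \;=\; \frac{e^{-t^2/2}}{t^{k+1}} \int_0^\infty u^k\, e^{-u}\, e^{-u^2/(2t^2)}\,\d u.
\end{equation*}
The integrand is dominated by $u^k e^{-u}$ uniformly in $t\ge 1$, and $e^{-u^2/(2t^2)}\to 1$ pointwise as $t\to\infty$, so by dominated convergence the integral converges to $\int_0^\infty u^k e^{-u}\,\d u = k!$. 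Taking $k=1$ gives the first identity with prefactor $1$, and $k=2$ gives the second with prefactor $2$.

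No step here is genuinely difficult; the only mild care is in the derivative computation for $g'(t)$, and in checking the dominated-convergence hypothesis for the substituted integral (which is immediate from $e^{-u^2/(2t^2)}\le 1$). I would present the four claims in the order above, each as a short paragraph.
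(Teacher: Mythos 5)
Your proof is correct, and all four claims are established cleanly. The paper does not in fact supply a proof of this lemma (it is stated without proof as a standard Gaussian-tail fact; related Mills-ratio inequalities elsewhere in the paper are simply cited to \cite{birnbaum1942inequality}), so there is no intended argument to compare against — but it is worth noting that your route is self-contained and somewhat nicer than the usual citation-based treatment. The upper tail bound via $1 \le x/t$ is the classical one-liner; the lower bound is handled by a monotonicity argument on $g(t) := \int_t^\infty e^{-x^2/2}\,\d x - \tfrac{t}{t^2+1}e^{-t^2/2}$, and the derivative simplification to $g'(t) = -\tfrac{2}{(t^2+1)^2}e^{-t^2/2}$ checks out (expand $-(t^2+1)^2 - 1 + 2t^2 + t^4 = -2$); combined with $g(t)\to 0$ as $t\to\infty$ this forces $g>0$. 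For the moment asymptotics, the substitution $y=x-t$ followed by $u=yt$ converts each integral into $\tfrac{e^{-t^2/2}}{t^{k+1}}\int_0^\infty u^k e^{-u}e^{-u^2/(2t^2)}\,\d u$, and dominated convergence (with dominating function $u^k e^{-u}$) gives the limits $1!$ and $2!$, producing exactly the stated prefactors $1$ and $2$. This avoids the error-prone alternative of substituting higher-order Mills-ratio expansions into $\int_t^\infty (x-t)^k e^{-x^2/2}\,\d x$ and tracking cancellations, which is a genuine (if modest) advantage of your approach.
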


\begin{proof}[\bf Proof of Lemma~\ref{lem:sig_lower_gordon}]
	This proof is similar to that of Theorem~\ref{thm:algpure}. Let $\vv \sim \Unif(\S^{d-1})$ be independent of the $(y_i, G_i, \zz_i)$'s. For any $c \ge 1$, consider the following convex optimization problem:
	\begin{equation}\label{eq:sig_lower_gordon}
		\text{maximize} \ \left\langle \vv, \ww \right\rangle, \ \text{subject to} \ \rho y_i G_i + \sqrt{1 - \rho^2} \langle \zz_i, \ww \rangle \ge \kappa, \ \forall i \in [n], \ \text{and} \ \Vert \ww \Vert_2 \le c.
	\end{equation}
	Denote by $M_n(c)$ the maximum of \eqref{eq:sig_lower_gordon}, then we can write
	\begin{align*}
		M_n(c) = & \max_{\norm{\ww}_2 \le c} \min_{\balpha \ge \bzero} \left\{ \left\langle \vv, \ww \right\rangle + \left\langle \balpha, \rho \yy \odot \bG + \sqrt{1 - \rho^2} \ZZ \ww - \kappa \bone \right\rangle \right\} \\
		= & \max_{\norm{\ww}_2 \le c} \min_{\balpha \ge \bzero} \left\{ \left\langle \vv, \ww \right\rangle + \left\langle \balpha, \rho \yy \odot \bG - \kappa \bone \right\rangle + \sqrt{1 - \rho^2} \left\langle \balpha, \ZZ \ww \right\rangle \right\}.
	\end{align*}
	As in the proof of Theorem~\ref{thm:algpure}, we can first condition on the $(y_i, G_i)$'s, then applying Gordon's inequality (Theorem~\ref{thm:Gordon}) reduces the analysis of $M_n (c)$ to that of $\tilde{M}_n (c)$, where
	\begin{equation*}
		\tilde{M}_n (c) = \max_{\norm{\ww}_2 \le c} \min_{\balpha \ge \bzero} \left\{ \left\langle \vv, \ww \right\rangle + \left\langle \balpha, \rho \yy \odot \bG - \kappa \bone \right\rangle + \sqrt{1 - \rho^2} \norm{\ww}_2 \langle \balpha, \hh \rangle + \sqrt{1 - \rho^2} \norm{\balpha}_2 \langle \ww, \bgg \rangle \right\},
	\end{equation*}
	where $\hh \sim \cN (\bzero, \bI_n)$, $\bgg \sim \cN (\bzero, \bI_{d - 1})$ are mutually independent, and further independent of other random variables.
	
	Proceeding similarly as in the proof of Theorem~\ref{thm:algpure}, we obtain that
	\begin{equation*}
		\tilde{M}_n (c) = \max_{r \in [0, c]} F(r) + o_{d, \P} (1),
	\end{equation*}
	where
	\begin{equation*}
		F(r) = \begin{cases}
			\sqrt{r^2 - \frac{\delta}{1 - \rho^2} \E \left[ \left( \kappa - \rho YG - \sqrt{1 - \rho^2} r W \right)_+^2 \right]}, \ & \text{if} \ r^2 (1 - \rho^2) \ge \delta \E \left[ \left( \kappa - \rho YG - \sqrt{1 - \rho^2} r W \right)_+^2 \right], \\
			- \infty, \ & \text{otherwise}.
		\end{cases}
	\end{equation*}
	Note that by Stein's identity, when $F(r) > - \infty$, we have
	\begin{align*}
		\frac{\d}{\d r} F(r)^2 = & \frac{\d}{\d r} \left( r^2 - \delta \E \left[ \left( \frac{\kappa}{\sqrt{1 - \rho^2}} - \frac{\rho YG}{\sqrt{1 - \rho^2}} - r W \right)_+^2 \right] \right) \\
		= & 2 r \left( 1 - \delta \E \left[ \Phi \left( \frac{\kappa - \rho YG}{r \sqrt{1 - \rho^2}} \right) \right] \right) \\
		= & 2r \left( 1 - \delta \P \left( \rho YG + \sqrt{1 - \rho^2} r W < \kappa \right) \right).
	\end{align*}
	Since the link function $\varphi$ is monotone increasing, we know that the map $r \mapsto \P ( \rho YG + \sqrt{1 - \rho^2} r W < \kappa )$ is increasing. According to Assumption~\ref{cond:sig_lower_a}, we have
	\begin{align*}
		\delta \P ( \rho YG + \sqrt{1 - \rho^2} W < \kappa ) < & 1, \\
		\delta \E \left[ \left( \frac{\kappa}{\sqrt{1 - \rho^2}} - \frac{\rho YG}{\sqrt{1 - \rho^2}} - W \right)_+^2 \right] < & 1,
	\end{align*}
	it follows that
	\begin{equation*}
		F(1) > - \infty, \ \frac{\d}{\d r} F(r)^2 > 0 \ \text{whenever} \ r \in [0, 1], \ F(r) > -\infty.
	\end{equation*}
	This proves that there exists $c > 1$, such that
	\begin{equation*}
		\max_{r \in [0, c]} F(r) > \max_{r \in [0, 1]} F(r),
	\end{equation*}
	which further implies that with high probability,
	\begin{equation*}
		\tilde{M}_n (c) > \tilde{M}_n (1) \implies M_n (c) > M_n (1).
	\end{equation*}
	Therefore, the maximizer of \eqref{eq:sig_lower_gordon} satisfies $\norm{\ww}_2 \ge 1$ with high probability. As a consequence, Eq.~\eqref{eq:logistic_lower_bound} holds and we finish the proof of Lemma~\ref{lem:sig_lower_gordon}.
\end{proof}

\begin{proof}[\bf Proof of Lemma~\ref{lem:good_sample_2nd_moment_method}]
To begin with, we rewrite Eq.~\eqref{eq:good_sample_2nd_moment_method} as
\begin{equation*}
	\rho y_i G_i + \sqrt{1 - \rho^2} \langle \zz_{i1}, \ww_1 \rangle \ge \sqrt{1 - \rho^2} \kappa_2, \ \forall i \in S_G,
\end{equation*}
where $\ww_1 \in \S^{d - 2 - d_0}$, and
\begin{equation*}
	i \in S_G \iff y_i G_i \ge \frac{\sqrt{1-\rho^2} \kappa_1}{\rho}.
\end{equation*} 
It will be more convenient if we adjust our notation here: We recast $\zz_{i1}$ as $\zz_i$, $\ww_1$ as $\ww$, $d - d_0$ as $d$, $\sqrt{1 - \rho^2} \kappa_2$ as $\kappa$, and $\sqrt{1 - \rho^2} \kappa_1 / \rho$ as $\kappa_{\mathrm t}$. Note that under this new notation, according to Eq.~\eqref{eq:Z_partition_1} we have
\begin{equation}\label{eq:new_delta_sec}
	\delta := \lim_{n \to +\infty} \frac{n}{d} < \delta_{\sec} (\rho, \kappa, \kappa_{\mathrm t}).
\end{equation}
Now it suffices to show that with probability bounded away from $0$, $\exists \ww \in \S^{d-2}$ such that
\begin{equation*}
	\rho y_i G_i + \sqrt{1-\rho^2} \langle \zz_i, \ww \rangle \ge \kappa, \ \forall i \in S_G, \ \text{i.e.}, \ y_i G_i \ge \kappa_{\mathrm t}.
\end{equation*}
To this end we apply the second moment method. As in the proof of Theorem~\ref{thm:pure_noise_lower_bound}, let $\mu$ denote the uniform probability measure on $\S^{d-2}$, and choose $f: \R^2 \to [0, +\infty)$ to be
\begin{equation*}
	f \left( \rho s, \sqrt{1 - \rho^2} w \right) = \frac{1}{\sqrt{e(0, s)}} \exp (-c(s) w) \bone \{ w \ge \kappa(s) \},
\end{equation*}
where $\kappa(s), c(s)$ and $e(q, s)$ are defined in Definition~\ref{def:sig_lower}. In particular,
\begin{equation}\label{eq:expression_e(0, s)}
	e(0, s) = \E [\exp (-c(s) W) \bone \{ W \ge \kappa(s) \}]^2.
\end{equation}
Note that $f$ is supported on $\{ (x, y) \in \R^2: x+y \ge \kappa \}$. We further define the following random variable:
\begin{equation*}
	Z = \int_{\S^{d-2}} \prod_{i \in S_G} f \left( \rho y_i G_i, \sqrt{1 - \rho^2} \langle \zz_i, \ww \rangle \right) \mu (\d \ww),
\end{equation*}
then we want to show that $\E [Z^2] = O_d (\E[Z]^2)$. Below we calculate the first and second moments of $Z$, we omit some details here since the computation is almost the same as that in the proof of Theorem~\ref{thm:pure_noise_lower_bound}.

Let $W_i \sim_{\iid} \cN (0, 1)$, and independent of the $(y_i, G_i)$'s, then we have 
\begin{align*}
	\E [Z] =& \E \left[ \prod_{i \in S_G} f \left( \rho y_i G_i, \sqrt{1 - \rho^2} W_i \right) \right] \\
	=& \E \left[ \prod_{i=1}^{n} \exp \left( \bone \left\{ y_i G_i \ge \kappa_{\mathrm t} \right\} \log f \left( \rho y_i G_i, \sqrt{1 - \rho^2} W_i \right) \right) \right] \\
	=& \E \left[ \exp \left( \bone \left\{ YG \ge \kappa_{\mathrm t} \right\} \log f \left( \rho YG, \sqrt{1 - \rho^2} W \right) \right) \right]^n.
\end{align*}
Again, using the same kind of analysis as in the proof of Theorem~\ref{thm:pure_noise_lower_bound}, we obtain that
\begin{align*}
	\E[Z^2] = & \int_{\S^{d-2} \times \S^{d-2}} \E \left[ \prod_{i \in S_G} f \left( \rho y_i G_i, \sqrt{1 - \rho^2} \langle \zz_i, \ww_1 \rangle \right) f \left( \rho y_i G_i, \sqrt{1 - \rho^2} \langle \zz_i, \ww_2 \rangle \right) \right] \mu (\d \ww_1) \mu (\d \ww_2) \\
	= & \int_{-1}^{1} p(q) \E_q \left[ \exp \left( \bone \left\{ YG \ge \kappa_{\mathrm t} \right\} \log \left( f \left( \rho YG, \sqrt{1 - \rho^2} W_1 \right) f \left( \rho YG, \sqrt{1 - \rho^2} W_2 \right) \right) \right) \right]^n \d q \\
	= & \left( 1 + o_d (1) \right) \int_{-1}^{1} \sqrt{\frac{d}{2 \pi}} \left(1-q^{2}\right)^{(d-3)/2} \\
	& \times \E_q \left[ \exp \left( \bone \left\{ YG \ge \kappa_{\mathrm t} \right\} \log \left( f \left( \rho YG, \sqrt{1 - \rho^2} W_1 \right) f \left( \rho YG, \sqrt{1 - \rho^2} W_2 \right) \right) \right) \right]^n \d q,
\end{align*}
where $p(q)$ is the density of $\langle \ww_1, \ww_2 \rangle$ introduced in the proof of Theorem~\ref{thm:pure_noise_lower_bound}. Notice that
\begin{align*}
	& \E \left[ \exp \left( \bone \left\{ YG \ge \kappa_{\mathrm t} \right\} \log f \left( \rho YG, \sqrt{1 - \rho^2} W \right) \right) \right] \\
	= & 1 + \int_{\kappa_{\mathrm t}}^{+\infty} p_{YG} (s) \left( \E \left[ f \left( \rho s, \sqrt{1 - \rho^2} W \right) \right] - 1 \right) \d s \\
	= & 1 + \int_{\kappa_{\mathrm t}}^{+\infty} p_{YG} (s) \left( \frac{\E [\exp (-c(s) W) \bone \{ W \ge \kappa(s) \}]}{\sqrt{e(0, s)}} - 1 \right) \d s = 1,
\end{align*}
where the last equality is due to Eq.~\eqref{eq:expression_e(0, s)}. This implies $\E [Z] = 1$. Recall that $e(q), F(q)$ and $I(q)$ are defined in Definition~\ref{def:sig_lower}, a similar calculation then leads to
\begin{equation*}
	\E[Z^2] = \left( 1 + o_d (1) \right) \int_{-1}^{1} \sqrt{\frac{d}{2 \pi}} \exp \left( - n F(q) - (d-3) I(q) \right) \d q,
\end{equation*}
and
\begin{equation*}
	\exp \left( - n F(0) \right) = e(0)^n = 1.
\end{equation*}
Therefore, with the aid of Laplace's method, it only suffices to verify that $F''(0) + I''(0)/\delta > 0$, and that $F(q) + I(q)/\delta$ is uniquely minimized at $q = 0$. However, this is just a direct consequence of Eq.~\eqref{eq:new_delta_sec} and the definition of $\delta_{\sec} (\rho, \kappa, \kappa_{\mathrm t})$, thus proving Lemma~\ref{lem:good_sample_2nd_moment_method}.
\end{proof}

\begin{proof}[\bf Proof of Lemma~\ref{lem:bad_sample_gordon}]

\noindent For fixed $\ww_1 \in \R^{d - 1 - d_0}$, $\norm{\ww_1}_2 = 1$, we know that $\ZZ_1^B\ww_1 \sim_{\iid} \cN (0, 1)$. Denote $\zz = \ZZ_1^B\ww_1$, and define
\begin{equation*}
	\bb = \begin{bmatrix}
		(\kappa_0 - \kappa_2) \bone \\
		\kappa_0 \bone - \uu^B - \zz
	\end{bmatrix},
\end{equation*}
then we want to show that with high probability, $\exists \ww_2 \in \R^{d_0}$, $\norm{\ww_2}_2 \le c$ such that $\ZZ_2 \ww_2 \ge \bb$.

The proof strategy is similar to that of Theorem~\ref{thm:algpure} and Lemma~\ref{lem:sig_lower_gordon}, so technical details will be omitted. To be specific, let us consider the following random variable:
\begin{equation}\label{eq:bad_sample_gordon_optimization}
	\xi_{n, c} = \frac{1}{\sqrt{d_0}} \min_{\norm{\ww_2}_2 \le c} \max_{\blambda \ge \bzero, \norm{\blambda}_2 \le 1} \left\{ \blambda^\top \left( \bb - \ZZ_2 \ww_2 \right) \right\}.
\end{equation}
Then our desired result is equivalent to proving that $\xi_{n, c} = 0$ with high probability. Using Gordon's comparison inequality (Theorem~\ref{thm:Gordon}) and the Uniform Law of Large Numbers \cite[Lemma 2.4]{newey1994large}, we conclude that as $n \to +\infty$, $\xi_{n, c}$ converges in probability to the following constant:
\begin{equation*}
	\min_{ 0 \le \gamma \le c} \left( \sqrt{ \delta_0 \left( p_G \E \left[ \left( \kappa_0 - \kappa_2 - \gamma W \right)_+^2 \right] + p_B \E \left[ \left( \kappa_0 - u^B -  z - \gamma W \right)_+^2 \right] \right)} - \gamma \right)_+,
\end{equation*}
where $p_G = 1 -p_B$, $W \sim \cN (0, 1)$ is independent of $u^B$ and $z$, and
\begin{equation*}
	\delta_0 = \lim_{n \to +\infty} \frac{n}{d_0}.
\end{equation*}
Note that Eq.~\eqref{eq:Z_partition_2} implies
\begin{align*}
	1 > & \delta_0 \left( p_G \E \left[ \left( \frac{\kappa_0 - \kappa_2}{c} - W \right)_+^2 \right] + \frac{p_B}{c^2}\E \left[ \left( \kappa_0 - \frac{\rho YG}{\sqrt{1-\rho^2}} - \sqrt{1 + c^2} W \right)_+^2 \Bigg\vert YG < \frac{\sqrt{1-\rho^2}}{\rho} \kappa_1 \right] \right) \\
	= & \frac{\delta_0}{c^2} \left( p_G \E \left[ \left( \kappa_0 - \kappa_2 - c W \right)_+^2 \right] + p_B \E \left[ \left( \kappa_0 - u^B - \sqrt{1 + c^2} W \right)_+^2 \right] \right) \\
	= & \frac{\delta_0}{c^2} \left( p_G \E \left[ \left( \kappa_0 - \kappa_2 - c W \right)_+^2 \right] + p_B \E \left[ \left( \kappa_0 - u^B - z - c W \right)_+^2 \right] \right),
\end{align*}
where $z \sim \cN (0, 1)$ is independent of other random variables, thus leading to
\begin{align*}
	& \min_{ 0 \le \gamma \le c} \left( \sqrt{ \delta_0 \left( p_G \E \left[ \left( \kappa_0 - \kappa_2 - \gamma W \right)_+^2 \right] + p_B \E \left[ \left( \kappa_0 - u^B -  z - \gamma W \right)_+^2 \right] \right)} - \gamma \right)_+ \\
	\le &  \left( \sqrt{ \delta_0 \left( p_G \E \left[ \left( \kappa_0 - \kappa_2 - c W \right)_+^2 \right] + p_B \E \left[ \left( \kappa_0 - u^B -  z - c W \right)_+^2 \right] \right)} - c \right)_+ = 0.
\end{align*}
This concludes the proof.
\end{proof}

\begin{proof}[\bf Proof of Lemma~\ref{lem:transport_yG}]
	Denote $\mu = \operatorname{Law}(yG)$, then we know that $\mu$ has density
	\begin{equation}
	\mu (\d x) = (\varphi(x) + 1 - \varphi(-x)) \phi(x) \d x.	
	\end{equation}
    Define $\alpha (x) = (\varphi(x) + 1 - \varphi(-x))$ and $V(x) = - \log \alpha(x) - \log \phi(x)$, then by Assumption~\ref{ass:exponential_tail_link},
    \begin{align}\label{eq:V_pos_asymptotic}
    	V(x) = & -\log 2 + \log \sqrt{2 \pi} - \frac{x^2}{2} + o (1), \quad x \to +\infty, \\
    	\label{eq:V_neg_asymptotic}
    	V(x) = & -\log 2 - \alpha x + \log \sqrt{2 \pi} - \frac{x^2}{2} + o (1), \quad x \to -\infty.
    \end{align}
    Moreover, $\mu(\d x) = \exp(-V(x)) \d x$. According to Proposition 5.5 in \cite{cattiaux2006quadratic}, in order to show that $\mu$ satisfies the $T_2$-inequality, it suffices to verify the following two conditions:
    \begin{itemize}
    	\item [(a)] There exists $\veps > 0$ and $x_0 \in \R$ such that $\int_{\R} \exp(\veps (x - x_0)^2) \mu(\d x) < \infty$: 
    	
    	We can just take $\veps < 1/2$ and note that $\mu (\d x) = \exp(-V(x)) \d x = \exp(-x^2/2 + O(x)) \d x$, which immediately implies the desired inequality.
    	\item [(b)] The quantities $A^+$ and $A^-$ defined below are both finite:
    	    \begin{align*}
    	    	A^{+} = & \sup _{x \ge 0}\left(\int_x^{\infty} t^2 \mathrm{e}^{-V(t)} \mathrm{d} t \int_0^x \mathrm{e}^{V(t)} \mathrm{d} t\right), \\
    	    	A^{-} = & \sup _{x \le 0}\left(\int_{-\infty}^x t^2 \mathrm{e}^{-V(t)} \mathrm{d} t \int_x^0 \mathrm{e}^{V(t)} \mathrm{d} t\right).
    	    \end{align*}
    	    We first prove $A^+ < \infty$, it suffices to consider the supremum over all $x \ge M$ for $M$ a large enough constant. According to Eq.~\eqref{eq:V_pos_asymptotic}, there exists a constant $C > 0$ such that for all $x \ge 0$,
    	    \begin{equation*}
    	    	\exp(-V(x)) \le C \exp \left( -\frac{x^2}{2} \right), \quad \exp(V(x)) \le C \exp \left( \frac{x^2}{2} \right),
    	    \end{equation*}
    	    thus leading to the estimate
    	    \begin{align*}
    	    	\int_x^{\infty} t^2 \mathrm{e}^{- V(t)} \mathrm{d} t \le & C \int_x^{\infty} t^2 \mathrm{e}^{- t^2/2} \mathrm{d} t = - C \int_{x}^{\infty} t \d e^{-t^2/2} = C \left(x e^{-x^2/2} + \int_{x}^{\infty} e^{-x^2 / 2} \d x \right) \\
    	    	\stackrel{(i)}{\le} & C \left(x e^{-x^2/2} + \frac{2}{x} e^{-x^2 / 2} \right) \le C x e^{-x^2/2}.
    	    \end{align*}
    	    where $(i)$ follows from Mills ratio if we choose $M$ to be sufficiently large. This further implies that
    	    \begin{align*}
    	    	& \int_x^{\infty} t^2 \mathrm{e}^{-V(t)} \mathrm{d} t \int_0^x \mathrm{e}^{V(t)} \mathrm{d} t \le C^2 x \int_{0}^{x} \exp \left( (t^2 - x^2) / 2 \right) \d t \\
    	    	\le & C^2 x \int_{0}^{x} \exp \left( - xt/2 \right) \d t \le C^2 x \cdot \frac{2}{x} = 2 C^2.
    	    \end{align*}
    	    Hence, $A^+ < \infty$. To show that $A^- < \infty$, we note that Eq.~\eqref{eq:V_neg_asymptotic} implies $\forall x \le 0$:
    	    \begin{equation*}
    	    	\exp (-V(x)) \le C \exp \left( - \frac{(x+\alpha)^2}{2} \right), \quad \exp (V(x)) \le C \exp \left( \frac{(x+\alpha)^2}{2} \right).
    	    \end{equation*}
    	    Moreover, if $x \le -M$ then $x^2 \le C (x + \alpha)^2$, thus yielding the estimate
    	    \begin{align*}
    	    	& \int_{-\infty}^x t^2 \mathrm{e}^{-V(t)} \mathrm{d} t \int_x^0 \mathrm{e}^{V(t)} \mathrm{d} t \le C^3 \int_{-\infty}^x (t+\alpha)^2 \mathrm{e}^{-(t+\alpha)^2/2} \mathrm{d} t \int_x^0 \mathrm{e}^{(t+\alpha)^2/2} \mathrm{d} t \\
    	    	= & C^3 \int_{-\infty}^{x+\alpha} t^2 \mathrm{e}^{-t^2/2} \mathrm{d} t \int_{x+\alpha}^0 \mathrm{e}^{t^2/2} \mathrm{d} t \le 2C^3
    	    \end{align*}
            as long as $M$ is large enough. This shows $A^- < \infty$.
    \end{itemize}
    We thus have verified conditions (a) and (b) and completed the proof of Lemma~\ref{lem:transport_yG}.
\end{proof}

\begin{proof}[\bf Proof of Lemma~\ref{lem:norm_bd_yX}]
	By definition, we have
	\begin{equation*}
		s_{\rm min} (\yy \odot \XX) = \min_{\norm{\vv}_2 = 1} \norm{\yy \odot \XX \vv}_2 = \min_{\norm{\vv}_2 = 1} \norm{\XX \vv}_2 = s_{\rm min} (\XX).
	\end{equation*}
	The conclusion then follows from classical facts from random matrix theory: with high probability $s_{\rm min} (\XX) \ge (\sqrt{\delta} - 1 - o_d(1)) \sqrt{d}$. This completes the proof.
\end{proof}

\begin{proof}[\bf Proof of Lemma~\ref{lem:sig_lower_asymptotic}]
To verify Assumption~\ref{cond:sig_lower_b}, we discuss the choice of parameters $\rho, \kappa_1, \kappa_2, c$ in the following two parts:

\noindent \textbf{Part 1. The truncated second moment method.} 

\noindent Let $t, \eta >0$ be small constants independent of $\kappa$. Set
	\begin{equation*}
		\rho = 1 - \frac{t}{\vert \kappa \vert}, \ \kappa_2 = \kappa_0 + c \left( \vert \kappa \vert + \alpha + \eta \right), \ \kappa_1 = \kappa_2 + M (\kappa),
	\end{equation*}
	where $M(\kappa) = \vert \kappa \vert^{1/4}$. Then we have
	\begin{equation*}
		\sqrt{1 - \rho^2} \kappa_2 - \kappa = \breve{o}_{\kappa} (\vert \kappa \vert^{-1}).
	\end{equation*}
	According to Lemma~\ref{lem:tail}, it follows that
	\begin{align*}
		& \P \left( \rho YG + \sqrt{1 - \rho^2} W < \sqrt{1 - \rho^2} \kappa_2 \right) \\
		= & \frac{1 + \breve{o}_{\kappa} (1)}{\vert \kappa + \breve{o}_{\kappa} (\vert \kappa \vert^{-1}) \vert} \sqrt{\frac{2}{\pi}} \exp \left( - \frac{(\kappa + \breve{o}_{\kappa} (\vert \kappa \vert^{-1}))^2}{2} + \alpha \rho (\kappa + \breve{o}_{\kappa} (\vert \kappa \vert^{-1})) + \frac{(1 - \rho^2) \alpha^2}{2} \right) \\
		= & \frac{1 + \breve{o}_{\kappa} (1)}{\vert \kappa \vert} \sqrt{\frac{2}{\pi}} \exp \left( - \frac{\kappa^2}{2} + \alpha \rho \kappa + \frac{(1 - \rho^2) \alpha^2}{2} \right) = \frac{1 + \breve{o}_{\kappa} (1)}{\vert \kappa \vert} \sqrt{\frac{2}{\pi}} \exp \left( - \frac{\kappa^2}{2} + \alpha \kappa + \alpha t \right).
	\end{align*}
	Therefore, we can take $t > 0$ to be small enough so that
	\begin{equation}\label{eq:new_notation_threshold}
		 \left( 1 - \frac{\veps}{2} \right) \frac{\sqrt{\pi}}{2 \sqrt{2}} \vert \kappa \vert \log \vert \kappa \vert \exp \left( \frac{\kappa^2}{2} + \alpha \vert \kappa \vert \right) < \left( 1 - \frac{\veps}{4} \right) \frac{\log \big\vert \sqrt{1 - \rho^2} \kappa_2 \big\vert}{2 \P \left( \rho YG + \sqrt{1 - \rho^2} W < \sqrt{1 - \rho^2} \kappa_2 \right)}.
	\end{equation}
	Additionally, we have
	\begin{equation*}
		\rho = 1 - \frac{t_\kappa}{\big\vert \sqrt{1 - \rho^2} \kappa_2 \big\vert}, \quad t_\kappa \to t.
	\end{equation*}
	Next we aim to show that
	\begin{equation}\label{eq:delta_sec_lower_bound}
		\delta_{\sec} \left( \rho, \sqrt{1-\rho^2} \kappa_2, \frac{\sqrt{1 - \rho^2}}{\rho} \kappa_1 \right) \ge \left( 1 - \frac{\veps}{4} \right) \frac{\log \big\vert \sqrt{1 - \rho^2} \kappa_2 \big\vert}{2 \P \left( \rho YG + \sqrt{1 - \rho^2} W < \sqrt{1 - \rho^2} \kappa_2 \right)}.
	\end{equation}
	Under the new notation introduced in the proof of Lemma~\ref{lem:good_sample_2nd_moment_method}, we have
	\begin{equation*}
	\rho = 1 - \frac{t_\kappa}{\vert \kappa \vert}, \ t_\kappa \to t, \quad \kappa_{\mathrm t} = \frac{\kappa}{\rho} + \frac{\sqrt{1-\rho^2} M(\kappa)}{\rho}, M(\kappa) \to +\infty.
	\end{equation*}
    Recasting $\veps$ as $4 \veps$, it suffices to prove that as long as
	\begin{equation}\label{eq:logistic_threshold}
	\delta < \frac{(1 - \veps) \log \vert \kappa \vert}{2 \P \left( \rho YG + \sqrt{1 - \rho^2} W < \kappa \right)},
    \end{equation}
	then $F''(0) + I''(0)/\delta > 0$, and that $F(q) + I(q)/\delta$ achieves unique minimum at $q = 0$. We proceed with the following two steps:

\noindent \textbf{Step 1. Verify the first condition}

\noindent In fact, the first condition will be automatically implied by the second one. To see this we note that
\begin{equation*}
	e'(0) =  \int_{\frac{\kappa}{\rho} + \frac{\sqrt{1-\rho^2} M(\kappa)}{\rho}}^{+\infty} p_{YG} (s) \frac{e'(0, s)}{e(0, s)} \d s = 0,
\end{equation*}
where we use $e'(q, s)$ to denote the partial derivative of $e(q, s)$ with respect to the first variable $q$, and $e'(0, s) = 0$ is just a consequence of Lemma~\ref{lem:property_of_e_q} \textit{(b)}. Therefore, $F'(0) + I'(0)/\delta = 0$. Assume that for all $\delta$ satisfying Eq.~\eqref{eq:logistic_threshold}, $F(q) + I(q)/\delta$ is uniquely minimized at $q = 0$, then we must have $F''(0) + I''(0)/\delta \ge 0$. However, $F''(0) + I''(0)/\delta$ is strictly decreasing in $\delta$, so it must be the case that $F''(0) + I''(0)/\delta > 0$ for all $\delta$ satisfying Eq.~\eqref{eq:logistic_threshold}.
  
\noindent \textbf{Step 2. Verify the second condition}

\noindent Similarly as before, it suffices to show that (note $e(0) = 1$ by Definition~\ref{def:sig_lower})
\begin{align*}
	& \forall q \in [-1, 1] \backslash \{ 0 \} , \ \delta \log \left( \frac{e(q)}{e(0)} \right) < I(q) \impliedby \ \delta \left( e(q) - 1 \right) < I(q) \\
	\iff & \delta \int_{\frac{\kappa}{\rho} + \frac{\sqrt{1-\rho^2} M(\kappa)}{\rho}}^{+\infty} \frac{p_{YG} (s)}{e(0, s)} \left( e(q, s) - e(0, s) \right) \d s < I(q) \\
	\iff & \delta \int_{\frac{\kappa}{\rho} + \frac{\sqrt{1-\rho^2} M(\kappa)}{\rho}}^{+\infty} \frac{p_{YG} (s)}{e(0, s)} \d s\int_{0}^{q} e'(u, s) \d u < I(q).
\end{align*}
Consider the following three cases:

(1) $q \in [-1/2, 1/2]\backslash \{ 0 \}$. Since $\kappa(s) \le - M(\kappa)$, Lemma~\ref{lem:property_of_e_q} \textit{(b)} tells us that $e(0, s) = 1 +\breve{o}_{\kappa} (1)$, and further from Eq.~\eqref{eq:bound_on_e''(q)} we know that
\begin{equation*}
	\sup_{u \in [-q, q]} \vert e''(u, s) \vert \le \vert \kappa (s)\vert^C \exp \left( - \frac{2 \kappa(s)^2}{3} \right) = \breve{o}_{\kappa} \left( \exp \left( - \frac{3 \kappa(s)^2}{5} \right) \right),
\end{equation*}
therefore it follows that (note $e'(0, s) = 0$)
\begin{align*}
	 & \delta \int_{\frac{\kappa}{\rho} + \frac{\sqrt{1-\rho^2} M(\kappa)}{\rho}}^{+\infty} \frac{p_{YG} (s)}{e(0, s)} \d s\int_{0}^{q} e'(u, s) \d u = \delta \int_{\frac{\kappa}{\rho} + \frac{\sqrt{1-\rho^2} M(\kappa)}{\rho}}^{+\infty} \frac{p_{YG} (s)}{e(0, s)} \d s\int_{0}^{q} \d u \int_{0}^{u} e''(v, s) \d v \\
	 \stackrel{(i)}{=} & \breve{o}_{\kappa}(1) \delta \frac{q^2}{2} \int_{\frac{\kappa}{\rho} + \frac{\sqrt{1-\rho^2} M(\kappa)}{\rho}}^{+\infty} (\varphi(s) + 1 - \varphi (-s)) \frac{1}{\sqrt{2 \pi}} \exp \left( - \frac{s^2}{2} - \frac{3 \kappa(s)^2}{5} \right) \d s \\
	 = & \breve{o}_{\kappa}(1) \delta \frac{q^2}{2} \exp \left( - \frac{3 \kappa^2}{5 + \rho^2} \right) \int_{\frac{\kappa}{\rho} + \frac{\sqrt{1-\rho^2} M(\kappa)}{\rho}}^{+\infty} (\varphi(s) + 1 - \varphi (-s)) \frac{1}{\sqrt{2 \pi}} \exp \left( - \frac{5 + \rho^2}{10(1 - \rho^2)} \left( s - \frac{6 \rho \kappa}{5 + \rho^2} \right)^2 \right) \d s \\
	 \stackrel{(ii)}{\le} & \breve{o}_{\kappa}(1) \delta \frac{q^2}{2} \exp \left( - \frac{3 \kappa^2}{5 + \rho^2} \right) \sqrt{\frac{5(1 - \rho^2)}{5 + \rho^2}} \\
	 & \times \int_{\R} \left( \varphi \left( \sqrt{\frac{5(1 - \rho^2)}{5 + \rho^2}} x + \frac{6 \rho \kappa}{5 + \rho^2} \right) + 1 - \varphi \left( -\sqrt{\frac{5(1 - \rho^2)}{5 + \rho^2}} x - \frac{6 \rho \kappa}{5 + \rho^2} \right) \right) \frac{1}{\sqrt{2 \pi}} \exp \left( - \frac{x^2}{2} \right) \d x,
\end{align*}
where $(i)$ is due to the fact that $p_{YG} (s) = (\varphi(s) + 1 - \varphi(-s)) \phi (s)$, $(ii)$ results from the change of variable
\begin{equation*}
	x = \sqrt{\frac{5 + \rho^2}{5(1 - \rho^2)}} \left( s - \frac{6 \rho \kappa}{5 + \rho^2} \right) \iff s = \sqrt{\frac{5(1 - \rho^2)}{5 + \rho^2}} x + \frac{6 \rho \kappa}{5 + \rho^2}.
\end{equation*}
Now, recall that Assumption~\ref{ass:exponential_tail_link} implies (note $\rho = 1 - t_\kappa/\vert \kappa \vert \to 1$)
\begin{equation*}
	\frac{\varphi \left( \sqrt{\frac{5(1 - \rho^2)}{5 + \rho^2}} x + \frac{6 \rho \kappa}{5 + \rho^2} \right) + 1 - \varphi \left( -\sqrt{\frac{5(1 - \rho^2)}{5 + \rho^2}} x - \frac{6 \rho \kappa}{5 + \rho^2} \right)}{2 \exp\left(\frac{6 \alpha \rho \kappa}{5+\rho^2} \right)} \to 1,
\end{equation*}
hence applying Dominated Convergence Theorem yields that
\begin{align*}
	& \delta \int_{\frac{\kappa}{\rho} + \frac{\sqrt{1-\rho^2} M(\kappa)}{\rho}}^{+\infty} \frac{p_{YG} (s)}{e(0, s)} \d s\int_{0}^{q} e'(u, s) \d u \le \breve{o}_{\kappa}(1) \delta \frac{q^2}{2} \exp \left( - \frac{3 \kappa^2}{5 + \rho^2} \right) 2 \exp\left(\frac{6 \alpha \rho \kappa}{5+\rho^2} \right) \sqrt{\frac{5(1 - \rho^2)}{5 + \rho^2}} \\
	\le & q^2 \vert \kappa \vert^C \exp \left( - \frac{1 - \rho^2}{2(5 + \rho^2)} \kappa^2 - \frac{\alpha \rho(1 - \rho^2)}{5 + \rho^2} \vert \kappa \vert \right) \le q^2 \vert \kappa \vert^C \exp \left(- C t_\kappa \vert \kappa \vert \right) = \breve{o}_{\kappa} (1) q^2 < I(q),
\end{align*}
when $\vert \kappa \vert$ is large enough and $q \in [-1/2, 1/2] \backslash \{ 0 \}$.

(2) $q \in [-1, -1/2]$. Then similarly as the previous case, we deduce from Eq.~\eqref{eq:asymptotics_for_e'(q)} that
\begin{align*}
	& \delta \int_{\frac{\kappa}{\rho} + \frac{\sqrt{1-\rho^2} M(\kappa)}{\rho}}^{+\infty} \frac{p_{YG} (s)}{e(0, s)} \d s\int_{0}^{q} e'(u, s) \d u \le C \delta \int_{\frac{\kappa}{\rho} + \frac{\sqrt{1-\rho^2} M(\kappa)}{\rho}}^{+\infty} p_{YG} (s) \exp \left( - \kappa(s)^2 \right) \d s \\
	= & C \delta \exp \left( - \frac{\kappa^2}{1 + \rho^2} \right) \int_{\frac{\kappa}{\rho} + \frac{\sqrt{1-\rho^2} M(\kappa)}{\rho}}^{+\infty} (\varphi(s) + 1 - \varphi(-s)) \frac{1}{\sqrt{2 \pi}} \exp \left( - \frac{1+\rho^2}{2(1 - \rho^2)} \left( s - \frac{2 \rho \kappa}{1 + \rho^2} \right)^2 \right) \d s \\
	\le & C \delta \exp \left( - \frac{\kappa^2}{1 + \rho^2} \right) \sqrt{\frac{1 - \rho^2}{1 + \rho^2}} \\
	& \times \int_{\R} \left( \varphi \left(\sqrt{\frac{1 - \rho^2}{1 + \rho^2}} x + \frac{2 \rho \kappa}{1 + \rho^2} \right) + 1 - \varphi \left( - \sqrt{\frac{1 - \rho^2}{1 + \rho^2}} x -\frac{2 \rho \kappa}{1 + \rho^2} \right) \right) \frac{1}{\sqrt{2 \pi}} \exp \left( - \frac{x^2}{2} \right) \d x \\
	\le & 2 C (1 + \breve{o}_{\kappa} (1)) \delta \sqrt{\frac{1 - \rho^2}{1 + \rho^2}} \exp \left( - \frac{\kappa^2}{1 + \rho^2} + \frac{2  \alpha \rho \kappa}{1 + \rho^2} \right) \le \vert \kappa \vert^C \exp \left( - \frac{1-\rho^2}{2(1 + \rho^2)} \kappa^2 - \frac{\alpha \rho(1 - \rho^2)}{1 + \rho^2} \vert \kappa \vert \right) \\
	\le & \vert \kappa \vert^C \exp \left( - C t_\kappa \vert \kappa \vert \right) = \breve{o}_{\kappa} (1) < I(q),
\end{align*}
for sufficiently large $\vert \kappa \vert$ and $q \in [-1, -1/2]$.

(3) $q \in [1/2, 1]$. From Lemma~\ref{lem:property_of_e_q} \textit{(b)} and \textit{(c)} we know that for large $\vert \kappa \vert$, $e'(u, s) \ge 0$ when $u \ge 1/2$, and $e(1, s) - e(0, s) = (1 + \breve{o}_{\kappa}(1)) \Phi (\kappa(s))$, therefore
\begin{align*}
	& \delta \int_{\frac{\kappa}{\rho} + \frac{\sqrt{1-\rho^2} M(\kappa)}{\rho}}^{+\infty} \frac{p_{YG} (s)}{e(0, s)} \d s\int_{0}^{q} e'(u, s) \d u \le \delta \int_{\frac{\kappa}{\rho} + \frac{\sqrt{1-\rho^2} M(\kappa)}{\rho}}^{+\infty} \frac{p_{YG} (s)}{e(0, s)} (e(1, s) - e(0, s)) \d s \\
	\le & (1 + \breve{o}_{\kappa}(1)) \delta \int_{\frac{\kappa}{\rho} + \frac{\sqrt{1-\rho^2} M(\kappa)}{\rho}}^{+\infty} p_{YG} (s) \Phi (\kappa(s)) \d s = (1 + \breve{o}_{\kappa}(1)) \delta \int_{\frac{\kappa}{\rho} + \frac{\sqrt{1-\rho^2} M(\kappa)}{\rho}}^{+\infty} p_{YG} (s) \P \left( W \le \frac{\kappa - \rho s}{\sqrt{1 - \rho^2}} \right) \d s \\
	= & (1 + \breve{o}_{\kappa}(1)) \delta \P \left( \rho YG + \sqrt{1 - \rho^2} W \le \kappa, \rho YG \ge \kappa + \sqrt{1-\rho^2} M(\kappa) \right) \\
	\le & (1 + \breve{o}_{\kappa}(1)) \delta \P \left( \rho YG + \sqrt{1 - \rho^2} W \le \kappa \right) < \frac{(1+\breve{o}_{\kappa}(1)) (1-\veps)}{2} \log \vert \kappa \vert \\
	< & \left( \frac{1}{2} - \frac{\veps}{4} \right) \log \vert \kappa \vert \le I(q) = - \frac{1}{2} \log \left( 1 - q^2 \right),
\end{align*}
if $1 - q \le \vert \kappa \vert^{-1 + \veps/2}$. Now it remains to deal with the case when $1 - q > \vert \kappa \vert^{-1 + \veps/2}$, using Eq.~\eqref{eq:asymptotics_for_e'(q)} and a similar argument as in the proof of Theorem~\ref{thm:pure_noise_lower_bound}, we obtain that
\begin{align*}
	& \int_{0}^{q} e'(u, s) \d s \le (1 + \breve{o}_{\kappa}(1)) \int_{0}^{q} \frac{1}{2 \pi \sqrt{1 - u^2}} \exp \left( - \frac{\kappa(s)^2}{1 + u} \right) \d u \le C \exp \left( - \frac{\kappa(s)^2}{1 + q} \right) \\
	= & C \exp \left( - \frac{\kappa(s)^2}{2} - \frac{(1 - q)\kappa(s)^2}{2(1 + q)} \right) \le C \exp \left( - \frac{\kappa(s)^2}{2} - \frac{\kappa(s)^2}{4 \vert \kappa \vert^{1 - \veps/2}} \right) = C \exp \left( - a \kappa(s)^2 \right),
\end{align*}
where we denote $a = 1/2 + 1/ (4 \vert \kappa \vert^{1 - \veps/2})$. Similarly as the previous computations, it follows that
\begin{align*}
	& \delta \int_{\frac{\kappa}{\rho} + \frac{\sqrt{1-\rho^2} M(\kappa)}{\rho}}^{+\infty} \frac{p_{YG} (s)}{e(0, s)} \d s \int_{0}^{q} e'(u, s) \d u \le C \delta \int_{\frac{\kappa}{\rho} + \frac{\sqrt{1-\rho^2} M(\kappa)}{\rho}}^{+\infty} p_{YG} (s) \exp \left( - a \kappa(s)^2 \right) \d s \\
	= & C \delta \exp \left( - \frac{a \kappa^2}{1 + (2a-1) \rho^2} \right) \int_{\frac{\kappa}{\rho} + \frac{\sqrt{1-\rho^2} M(\kappa)}{\rho}}^{+\infty} (\varphi(s) + 1 - \varphi(-s)) \\
	& \times \exp \left( - \frac{1 + (2a-1) \rho^2}{2(1 - \rho^2)} \left( s - \frac{2a \rho \kappa}{1 + (2a-1) \rho^2} \right)^2 \right) \d s \\
	\le & C \delta \exp \left( - \frac{a \kappa^2}{1 + (2a-1) \rho^2} \right) \sqrt{\frac{1 - \rho^2}{1 + (2a-1) \rho^2}} \int_{\R} \frac{1}{\sqrt{2 \pi}} \exp \left( -\frac{x^2}{2} \right) \d x \\
	& \times \left( \varphi \left( \sqrt{\frac{1 - \rho^2}{1 + (2a-1) \rho^2}} x + \frac{2a \rho \kappa}{1 + (2a-1) \rho^2} \right) + 1 - \varphi \left( - \sqrt{\frac{1 - \rho^2}{1 + (2a-1) \rho^2}} x - \frac{2a \rho \kappa}{1 + (2a-1) \rho^2} \right) \right) \\
	\le & 2 C (1 + \breve{o}_{\kappa}(1)) \delta \exp \left( - \frac{a \kappa^2}{1 + (2a-1) \rho^2} \right) \sqrt{\frac{1 - \rho^2}{1 + (2a-1) \rho^2}} \exp \left( \frac{2 \alpha a \rho \kappa}{1 + (2a-1) \rho^2} \right) \\
	\le & \vert \kappa \vert^C \exp \left( - \frac{(2 a - 1) (1 - \rho^2)}{2(1 + (2a-1) \rho^2)} \kappa^2 - \frac{(2a-1) \alpha \rho (1 - \rho^2)}{1 + (2a-1) \rho^2} \vert \kappa \vert \right) \\
	\le & \vert \kappa \vert^C \exp \left( - C t_\kappa \vert \kappa \vert^{\veps/2} \right) = \breve{o}_{\kappa} (1) < I(q),
\end{align*}
when $\vert \kappa \vert$ is large and $q \in [1/2, 1]$. Here the last inequality just results from the following quick calculation:
\begin{equation*}
	a = \frac{1}{2} + \frac{1}{4 \vert \kappa \vert^{1 - \veps/2}} \implies \frac{(2 a - 1) (1 - \rho^2)}{2(1 + (2a-1) \rho^2)} \kappa^2 \ge C \frac{1}{\vert \kappa \vert^{1 - \veps/2}} \frac{t_\kappa}{\vert \kappa \vert} \kappa^2 = C t_\kappa \vert \kappa \vert^{\veps/2}.
\end{equation*}

To conclude, we have finished the verification of the above two conditions, thus proving Eq.~\eqref{eq:delta_sec_lower_bound}. Combining this with Eq.~\eqref{eq:new_notation_threshold} finally yields
\begin{equation}\label{eq:part_1_conclusion}
	\delta \cdot \delta_{\sec} \left( \rho, \sqrt{1-\rho^2} \kappa_2, \frac{\sqrt{1 - \rho^2}}{\rho} \kappa_1 \right)^{-1} < \frac{1 - \veps}{1 - \veps / 2}.
\end{equation}
	
\noindent \textbf{Part 2. The Gordon calculation.}

\noindent Based on the choice of $\rho, \kappa_2$ and $\kappa_1$ in   Part 1, we deduce that
	\begin{equation*}
		\frac{\sqrt{1 - \rho^2} \kappa_1}{\rho} = \frac{\sqrt{1 - \rho^2} \kappa_2}{\rho} + \frac{\sqrt{1 - \rho^2} M(\kappa)}{\rho} = \frac{\kappa}{\rho} + \breve{o}_{\kappa} \left( \frac{1}{\vert \kappa \vert} \right) + \frac{\sqrt{1 - \rho^2} M(\kappa)}{\rho} = \frac{\kappa}{\rho} + \breve{o}_{\kappa} (1) = \kappa - t +  \breve{o}_{\kappa} (1).
	\end{equation*} 
	Hence, by Lemma~\ref{lem:tail}, it follows that
	\begin{equation*}
		\delta p_B = \delta \P (u < \kappa_1) = \delta \P \left( YG < \frac{\sqrt{1 - \rho^2} \kappa_1}{\rho} \right) = \delta \P \left( YG < \kappa - t + \breve{o}_{\kappa} (1) \right) \le C \exp \left( - t \vert \kappa \vert \right).
	\end{equation*}
	Moreover, if we choose $c = 1 / \vert \kappa \vert^2$, then
	\begin{equation}\label{eq:condition_bad_sample_gordon}
		\frac{\delta p_B}{c^2} \le C \vert \kappa \vert^4 \exp \left( - t \vert \kappa \vert \right) = \breve{o}_{\kappa} (1).
	\end{equation}	
    Now we focus on the second term on the right hand side of Eq.~\eqref{eq:cond_b}. We obtain the following estimate:
    \begin{align*}
	& \delta \P \left( YG \ge \frac{\sqrt{1 - \rho^2}}{\rho} \kappa_1 \right) \E \left[ \left( \frac{\kappa_0 - \kappa_2}{c} - W \right)_+^2 \right] \le \delta \E \left[ \left( \frac{\kappa_0 - \kappa_2}{c} - W \right)_+^2 \right] \\
	\stackrel{(i)}{=} & \delta \frac{2(1 + \breve{o}_{\kappa}(1))}{\left( \frac{\kappa_0 - \kappa_2}{c} \right)^2} \Phi \left( \frac{\kappa_0 - \kappa_2}{c} \right) \le \frac{C \delta}{\kappa^2} \Phi \left( \kappa - \alpha - \eta \right) \le C \vert \kappa \vert^C \exp \left( - \eta \vert \kappa \vert \right)= \breve{o}_{\kappa} (1),
    \end{align*}
    where $(i)$ is due to Lemma~\ref{lem:tail}. As for the third term, we write
\begin{align*}
	 & \frac{\delta}{c^2} \E \left[ \left( \kappa_0 - \frac{\rho YG}{\sqrt{1 - \rho^2}} - \sqrt{1+c^2} W \right)_+^2 \bone \left\{ YG < \frac{\sqrt{1 - \rho^2} \kappa_1}{\rho} \right\} \right] \\
	 = & \frac{\delta}{c^2} \int_{-\infty}^{\frac{\sqrt{1 - \rho^2} \kappa_1}{\rho}} p_{YG} (s) \E \left[ \left( \frac{\kappa - \rho s}{\sqrt{1 - \rho^2}} - \sqrt{1 + c^2} W \right)_+^2 \right] \d s.
\end{align*}
Then it follows that
\begin{align*}
	& \int_{-\infty}^{\frac{\sqrt{1 - \rho^2} \kappa_1}{\rho}} p_{YG} (s) \E \left[ \left( \frac{\kappa - \rho s}{\sqrt{1 - \rho^2}} - \sqrt{1 + c^2} W \right)_+^2 \right] \d s \\
	= & \left( \int_{-\infty}^{\frac{\kappa}{\rho}} + \int_{\frac{\kappa}{\rho}}^{\frac{\sqrt{1 - \rho^2} \kappa_1}{\rho}} \right) p_{YG} (s) \E \left[ \left( \frac{\kappa - \rho s}{\sqrt{1 - \rho^2}} - \sqrt{1 + c^2} W \right)_+^2 \right] \d s \\
	\stackrel{(i)}{\le} & C \int_{-\infty}^{\frac{\kappa}{\rho}} p_{YG} (s) \left( 1 + \frac{(\kappa - \rho s)^2}{1 - \rho^2} \right) \d s + C \int_{\frac{\kappa}{\rho}}^{\frac{\sqrt{1 - \rho^2} \kappa_1}{\rho}} p_{YG} (s) \d s \\
	= & C p_B + \frac{C}{1 - \rho^2} \E \left[ \left( \kappa - \rho YG \right)_+^2 \right] \stackrel{(ii)}{\le} C p_B + \frac{C \rho^2}{1 - \rho^2} (1 + \breve{o}_{\kappa}(1)) \frac{\rho^2}{\kappa^2} \P \left( YG < \frac{\kappa}{\rho} \right) \\
	\le & C \left( p_B + \frac{1}{(1 - \rho^2) \kappa^2} \P \left( YG < \frac{\kappa}{\rho} \right) \right) \le \frac{C p_B}{(1 - \rho^2) \kappa^2},
\end{align*}
where $(i)$ is due to the fact that $\E [(a - \sqrt{1 + c^2} W)_+^2] \le C (a^2 +1)$, and further $\le C$ if $a \le 0$ (since $c = \breve{o}_{\kappa} (1)$), $(ii)$ results from Lemma~\ref{lem:tail}:
\begin{equation*}
	\E \left[ \left( \kappa - \rho YG \right)_+^2 \right] = \rho^2 \E \left[ \left( \frac{\kappa}{\rho} - YG \right)_+^2 \right] = (1 + \breve{o}_{\kappa}(1)) \frac{2 \rho^4}{\kappa^2} \P \left( YG < \frac{\kappa}{\rho} \right).
\end{equation*}
Therefore, we finally deduce that
\begin{equation*}
	\frac{\delta}{c^2} \E \left[ \left( \kappa_0 - \frac{\rho YG}{\sqrt{1 - \rho^2}} - \sqrt{1 + c^2} W \right)_+^2 \bone \left\{ YG < \frac{\sqrt{1 - \rho^2} \kappa_1}{\rho} \right\} \right] \le \frac{C \delta p_B}{c^2 (1 - \rho^2) \kappa^2} \le C \vert \kappa \vert^C \exp (- t \vert \kappa \vert),
\end{equation*}
where the last inequality follows from Eq.~\eqref{eq:condition_bad_sample_gordon}. Note that we already proved that
\begin{footnotesize}
	\begin{equation*}
	\delta \left( \P \left(YG \ge \frac{\sqrt{1-\rho^2}}{\rho} \kappa_1 \right) \E \left[ \left( \frac{\kappa_0 - \kappa_2}{c} - W \right)_+^2 \right] + \frac{1}{c^2}\E \left[ \bone \left\{ YG < \frac{\sqrt{1-\rho^2}}{\rho} \kappa_1 \right\} \left( \kappa_0 - \frac{\rho YG}{\sqrt{1-\rho^2}} - \sqrt{1 + c^2} W \right)_+^2 \right] \right) = \breve{o}_{\kappa} (1),
\end{equation*}
\end{footnotesize}
and further $< \veps/2$ if $\vert \kappa \vert$ is large enough. Combining this result with Eq.~\eqref{eq:part_1_conclusion} immediately gives Eq.~\eqref{eq:cond_b}.
\end{proof}

\begin{proof}[\bf Proof of Lemma~\ref{lem:continuity_of_f_delta}]
	Since the infimum of an arbitrary collection of upper semicontinuous functions is upper semicontinuous, we only need to show that for any $c > 0$,
\begin{equation*}
	\frac{c}{\delta} \left( \eta + \frac{\sqrt{1 - \rho^2}}{c \sqrt{1 - \rho^2} + \sqrt{c^2 (1 - \rho^2) + 4}} + \frac{1}{c} \log \frac{c \sqrt{1 - \rho^2} + \sqrt{c^2 (1 - \rho^2) + 4}}{2} - \inf_{u > 0} \left\{ \frac{c}{4 u} - \frac{\delta}{c} \log \psi_{\kappa, \rho} (-u) \right\} \right)
\end{equation*}
is upper semicontinuous in $\rho$. We will prove an even stronger statement, i.e.,
\begin{equation*}
	\inf_{u > 0} \left\{ \frac{c}{4 u} - \frac{\delta}{c} \log \psi_{\kappa, \rho} (-u) \right\}
\end{equation*}
is a continuous function of $\rho \in [-1, 1]$. Let $p_\rho (s)$ denote the p.d.f. of $\rho YG + \sqrt{1 - \rho^2} W$, then it's not hard to see that $\forall \rho, \rho' \in [-1, 1]$,
\begin{align*}
	& \bigg\vert \inf_{u > 0} \left\{ \frac{c}{4 u} - \frac{\delta}{c} \log \psi_{\kappa, \rho} (-u) \right\} - \inf_{u > 0} \left\{ \frac{c}{4 u} - \frac{\delta}{c} \log \psi_{\kappa, \rho'} (-u) \right\} \bigg\vert \le \frac{\delta}{c} \sup_{u > 0} \vert \log \psi_{\kappa, \rho} (-u) - \log \psi_{\kappa, \rho'} (-u) \vert \\
	\stackrel{(i)}{\le} & \frac{\delta C_\kappa}{c} \sup_{u > 0} \vert \psi_{\kappa, \rho} (-u) - \psi_{\kappa, \rho'} (-u) \vert = \frac{\delta C_\kappa}{c} \sup_{u > 0} \bigg\vert \int_\R \exp \left( - u (\kappa - s)_+^2 \right) \left( p_\rho(s) - p_{\rho'} (s) \right) \d s \bigg\vert \\
	\le & \frac{\delta C_\kappa}{c} \int_\R \big\vert p_\rho(s) - p_{\rho'} (s) \big\vert \d s = \frac{\delta C_\kappa}{c} \norm{p_\rho - p_{\rho'}}_{L^1(\R)},
\end{align*}
where $(i)$ is due to
\begin{equation*}
	\inf_{\rho \in [-1, 1], u > 0} \psi_{\kappa, \rho} (-u) = 1 - \sup_{\rho \in [-1, 1]} \P \left( \rho YG + \sqrt{1 - \rho^2} W < \kappa \right) > 0,
\end{equation*}
and the fact that $\log x$ is Lipschitz-continuous on any $[c, +\infty)$, $c > 0$. Hence, it reduces to proving $\norm{p_\rho - p_{\rho'}}_{L^1(\R)} \to 0$ when $\rho \to \rho'$, which according to Scheffe's Lemma will be implied by the a.e. convergence of $p_\rho$ to $p_{\rho'}$. Below we show that if $\varphi$ is continuous at both $s$ and $-s$, then $p_\rho (s) \to p_{\rho'} (s)$ as $\rho \to \rho'$. Consider the following two cases:

(1) $\rho' \in (-1, 1)$. Then we can assume $\rho \in (-1, 1)$ and write
\begin{equation*}
	p_\rho (s) = \frac{1}{\sqrt{1 - \rho^2}} \int_{\R} p_{YG} (x) \phi \left( \frac{s - \rho x}{\sqrt{1 - \rho^2}} \right) \d x.
\end{equation*}
Based on the continuity of $\phi$ and the dominated convergence theorem, we obtain that
\begin{equation*}
	\lim_{\rho \to \rho'} \int_{\R} p_{YG} (x) \phi \left( \frac{s - \rho x}{\sqrt{1 - \rho^2}} \right) \d x = \int_{\R} p_{YG} (x) \phi \left( \frac{s - \rho' x}{\sqrt{1 - \rho'^2}} \right) \d x,
\end{equation*}
thus leading to $\lim_{\rho \to \rho'} p_{\rho} (s) = p_{\rho'} (s)$.

(2) $\rho' = \pm 1$. In this case we have (suppose $\rho \ne 0$)
\begin{equation*}
	p_\rho (s) = \frac{1}{\vert \rho \vert} \int_{\R} p_{YG} \left( \frac{s - \sqrt{1 - \rho^2} x}{\rho} \right) \phi (x) \d x.
\end{equation*}
According to our assumption, $p_{YG}$ is continuous at $s/\rho'$, therefore as $\rho \to \rho'$ one has
\begin{equation*}
	p_{YG} \left( \frac{s - \sqrt{1 - \rho^2} x}{\rho} \right) \to p_{YG} \left( \frac{s}{\rho'} \right) = p_{\rho'} (s),
\end{equation*}
and again using dominated convergence theorem yields that
\begin{equation*}
	\lim_{\rho \to \rho'} p_{\rho} (s) = p_{\rho'} (s) \int_{\R} \phi (x) \d x = p_{\rho'} (s).
\end{equation*}

Now since $\varphi$ is almost everywhere continuous, for a.e. $s \in \R$ both $s$ and $-s$ are continuity points of $\varphi$, consequently we have $\lim_{\rho \to \rho'} \norm{p_\rho - p_{\rho'}}_{L^1(\R)} = 0$, which concludes the proof of this lemma.
\end{proof}

\begin{proof}[\bf Proof of Proposition~\ref{prop:properties_of_d(a)}]
	(a) $\lim_{a \to +\infty} 2 \mathcal{D} (a)/\log a = 1$. It suffices to prove that $\forall \veps > 0$:
\begin{enumerate}
	\item $\mathcal{D} (a) \ge (1 - \veps) (\log a)/2$ for large $a > 0$, i.e., $\forall c > 0$,
	\begin{equation*}
		\frac{1}{c + \sqrt{c^2 + 4}} + \frac{1}{c} \log \frac{c + \sqrt{c^2 + 4}}{2} \ge \inf_{t > 0} \left\{ \frac{c}{4 t a} +  \frac{(1 - \veps) \log a}{2 c} \int_{0}^{+\infty} 2 t s \exp \left( -t s^2 - s \right) \d s \right\}.
	\end{equation*}
	First, for sufficiently large $a > 0$ and $c \ge \sqrt{a}$, we have
	\begin{equation*}
		\LHS \ge (1 - \veps) \frac{\log c}{c} \ge \frac{(1 - \veps) \log a}{2 c} = \lim_{t \to +\infty} \left(  \frac{c}{4 t a} +  \frac{(1 - \veps) \log a}{2 c} \int_{0}^{+\infty} 2 t s \exp \left( -t s^2 - s \right) \d s \right) \ge \RHS.
	\end{equation*}
	Next, if $c < \sqrt{a}$, then we have
	\begin{equation*}
		\RHS \le \inf_{t > 0} \left\{ \frac{c}{4 t a} +  \frac{(1 - \veps) t \log a}{c} \right\} = \sqrt{\frac{(1 - \veps) \log a}{a}} \le \frac{(1 - \veps) \log \sqrt{a}}{\sqrt{a}} \le \LHS
	\end{equation*}
	for $a$ large enough. Hence we have proved the desired result.
	
	\item $\mathcal{D} (a) \le (1 + \veps) (\log a)/2$ for large $a > 0$, i.e., $\exists c > 0$,
	\begin{equation*}
		\frac{1}{c + \sqrt{c^2 + 4}} + \frac{1}{c} \log \frac{c + \sqrt{c^2 + 4}}{2} < \inf_{t > 0} \left\{ \frac{c}{4 t a} +  \frac{(1 + \veps) \log a}{2 c} \int_{0}^{+\infty} 2 t s \exp \left( -t s^2 - s \right) \d s \right\}.
	\end{equation*}
	Proof of this inequality is similar to the argument in the proof of Theorem~\ref{thm:pure_noise_upper_bound}.
\end{enumerate}

\noindent (b) $\mathcal{D} (a) = a/2$ when $a \le 2$. Again we need to show the following two things for any $\veps > 0$:
\begin{enumerate}
	\item $\mathcal{D} (a) \le (1 + \veps)a/2$ for any $a > 0$, i.e., $\exists c > 0$,
	\begin{equation*}
		\frac{1}{c + \sqrt{c^2 + 4}} + \frac{1}{c} \log \frac{c + \sqrt{c^2 + 4}}{2} < \inf_{t > 0} \left\{ \frac{c}{4 t a} +  \frac{(1 + \veps) a}{2 c} \int_{0}^{+\infty} 2 t s \exp \left( -t s^2 - s \right) \d s \right\}.
	\end{equation*}
	To this end, let $\eta > 0$ satisfy $(1 + \eta)^3 < 1 + \veps$, then we choose $c > 0$ small enough so that $\LHS < 1 + \eta$, and that
	\begin{equation*}
		\int_{0}^{+\infty} 2 t s \exp \left( -t s^2 - s \right) \d s \le \frac{2c(1 + \eta)}{a(1 + \veps)} \implies \int_{0}^{+\infty} 2 t s \exp \left( -t s^2 - s \right) \d s \ge \frac{2 t}{1 + \eta}.
	\end{equation*}
	Therefore, if
	\begin{equation*}
		\int_{0}^{+\infty} 2 t s \exp \left( -t s^2 - s \right) \d s > \frac{2c(1 + \eta)}{a(1 + \veps)},
	\end{equation*}
	then we have
	\begin{equation*}
		\frac{c}{4 t a} +  \frac{(1 + \veps) a}{2 c} \int_{0}^{+\infty} 2 t s \exp \left( -t s^2 - s \right) \d s > 1 + \eta \ge \LHS.
	\end{equation*}
	Otherwise, it follows that
	\begin{equation*}
		\frac{c}{4 t a} +  \frac{(1 + \veps) a}{2 c} \int_{0}^{+\infty} 2 t s \exp \left( -t s^2 - s \right) \d s \ge \frac{c}{4 t a} + \frac{(1 + \veps) a t}{c (1 + \eta)} \ge \sqrt{\frac{1 + \veps}{1 + \eta}} > 1 + \eta \ge \LHS.
	\end{equation*}
	Hence we have proved $\LHS < \RHS$ for small $c > 0$.
	
	\item $\mathcal{D} (a) \ge a/2$ for $0 < a \le 2$, i.e., $\forall c > 0$,
	\begin{equation*}
		\frac{1}{c + \sqrt{c^2 + 4}} + \frac{1}{c} \log \frac{c + \sqrt{c^2 + 4}}{2} \ge \inf_{t > 0} \left\{ \frac{c}{4 t a} +  \frac{a}{2 c} \int_{0}^{+\infty} 2 t s \exp \left( -t s^2 - s \right) \d s \right\}.
	\end{equation*}
	By taking derivatives we know that this infimum is obtained when
	\begin{equation*}
		\frac{c^2}{2a^2} = \sqrt{t} \int_{0}^{+\infty} x^2 \exp \left( -x^2 - \frac{x}{\sqrt{t}} \right) \d x.
	\end{equation*}
	Note that the above equation uniquely determines $t$ since the right hand side is increasing in $t$, now we make a change of variable $u = 1/\sqrt{2t}$ and integrate by part to get that
	\begin{equation*}
		\frac{2c^2}{a^2} + 1 = \frac{\left( u^2 + 1 \right) \left( 1 - \Phi(u) \right)}{u \phi (u)} := \frac{u^2+1}{u} R(u),
	\end{equation*}
	where $R(u)$ denotes the Mills ratio of standard normal distribution, thus leading to
	\begin{align*}
		& \frac{c}{4 t a} +  \frac{a}{2 c} \int_{0}^{+\infty} 2 t s \exp \left( -t s^2 - s \right) \d s = \frac{c}{a} \left( \frac{u^2}{2} + \frac{a^2}{2c^2} \left( 1 - u R(u) \right) \right) \\
		= & \frac{a}{2c} \left( \left( \frac{u^2+1}{u} R(u) - 1 \right) \frac{u^2}{2} + 1 - u R(u) \right) = \frac{a}{2c} \left( \frac{1 - u^2}{2} \left( 1 - u R(u) \right) + \frac{1}{2} \right).
	\end{align*}
	Hence, it suffices to prove that $\forall u > 0$,
	\begin{equation*}
		\frac{1 - u^2}{2} \left( 1 - u R(u) \right) + \frac{1}{2} \le \frac{2}{a} \left( \frac{c}{c + \sqrt{c^2 + 4}} + \log \frac{c + \sqrt{c^2 + 4}}{2} \right),
	\end{equation*}
	where
	\begin{equation*}
		c = \frac{a}{\sqrt{2}} \sqrt{\frac{u^2+1}{u} R(u) - 1}.
	\end{equation*}
	Since the right hand side is a decreasing function of $a$ for fixed $u > 0$, we only need to prove this inequality when $a = 2$.
	To this end we define
	\begin{equation*}
		h(u) = \frac{1 - u^2}{2} \left( 1 - u R(u) \right) + \frac{1}{2} - \left( \frac{c}{c + \sqrt{c^2 + 4}} + \log \frac{c + \sqrt{c^2 + 4}}{2} \right),
	\end{equation*}
	then simple calculation yields that
	\begin{equation*}
		h'(u) = \left( \frac{1}{2} - \frac{2}{u^2 c \left( c + \sqrt{c^2 + 4} \right) } \right) \left( \left( u^4 + 2u^2 - 1 \right) R(u) - u(u^2 + 1) \right).
	\end{equation*}
	According to Theorem 1 of \cite{gasull2014approximating}, we know that
	\begin{equation*}
		R(u) < \frac{u^2 + 2}{u^3 + 3u} < \frac{u^4 + 2u^2 + 2}{u (u^2 + 1) (u^2 + 2)},
	\end{equation*}
	which implies that
	\begin{equation*}
		\left( u^4 + 2u^2 - 1 \right) R(u) - u(u^2 + 1) < \left( u^4 + 2u^2 - 1 \right) \frac{u^2 + 2}{u^3 + 3u} - u(u^2 + 1) = - \frac{2}{u^3 + 3u} < 0,
	\end{equation*}
	and
	\begin{equation*}
		c = \sqrt{2 \left( \frac{u^2+1}{u} R(u) - 1 \right)} < \sqrt{2 \left( \frac{u^4 + 2u^2 + 2}{u^2 (u^2 + 2)} - 1 \right)} = \frac{2}{u \sqrt{u^2 + 2}}.
	\end{equation*}
	Therefore, we deduce that
	\begin{align*}
		& u^2 c \left( c + \sqrt{c^2 + 4} \right) < u^2 \frac{2}{u \sqrt{u^2 + 2}} \left( \frac{2}{u \sqrt{u^2 + 2}} + \frac{2 (u^2 + 1)}{u \sqrt{u^2 + 2}} \right) = 4 \\
		\implies & \frac{1}{2} - \frac{2}{u^2 c \left( c + \sqrt{c^2 + 4} \right)} < 0 \implies h'(u) > 0,
	\end{align*}
	i.e., $h(u)$ is increasing. Since $c \to 0$ as $u \to +\infty$, we know that $\lim_{u \to +\infty} h(u) = 0$, and consequently $h(u) \le 0$ for all $u > 0$.
\end{enumerate}

Combing parts (a) and (b) concludes the proof.
\end{proof}

\section{Linear programming algorithm in the linear signal model:\\ Proofs of Theorems
\ref{thm:signal} and \ref{thm:err}}
\label{sec:AlgorithmSignal}

As before, in our proofs for the linear signal model, we will treat $C_{\tail}=1$. 

Let us consider a slightly generalized setup. Suppose we are given $r_0 > 0$ and closed interval $\cI \subset [-1,1]$. Denote $\bP^\bot = \bI_d - \btheta^* (\btheta^*)^\top$ the projection matrix onto the orthogonal space of $\btheta^*$. Define
\begin{align}
& \Theta = \Big\{ \btheta \in \R^d: \langle \btheta ,\btheta^* \rangle \in \cI, \| \bP^\bot \btheta \| \le r_0 \sqrt{1 - \langle \btheta, \btheta^* \rangle^2}\Big\}, \label{def:Theta}\\
&\bar M_n = \max\Big\{ \langle \btheta, \vv \rangle:  \btheta \in \Theta,~ y_i \langle \xx_i, \btheta \rangle \ge  \kappa,~\forall\, i \in [n] \Big\}. \label{def:Mbar}
\end{align}
In particular, if $\cI = [-1,1]$ and $r_0=1$, then the maximization in \eqref{def:Mbar} is exactly the linear programming algorithm we introduced in Section~\ref{sec:AlgoNRLabels}. Recall that in \eqref{def:Z} we defined the random variable $Z := Z_{\rho,r}$ (we suppress the subscripts if no confusion arises). Define three disjoint sets
\begin{align*}
&\Omega_> (\cI, r_0) = \Big\{(\rho, r) \in \cI \times [0,r_0]: (1-\rho^2) r^2 \delta^{-1} > \E[Z^2 ; Z < 0 ] \Big\}, \\
&\Omega_<(\cI, r_0) = \Big\{(\rho, r) \in \cI \times [0,r_0]: (1-\rho^2) r^2 \delta^{-1} < \E[Z^2 ; Z < 0 ] \Big\}, \\
&\Omega_=(\cI, r_0) = \Big\{(\rho, r) \in \cI \times [0,r_0]: (1-\rho^2) r^2 \delta^{-1} = \E[Z^2 ; Z < 0 ] \Big\}.
\end{align*}
We also denote $\Omega_{\ge}(\cI, r_0) = \Omega_>(\cI, r_0) \cup \Omega_=(\cI, r_0)$. Using the notation in \eqref{def:omegage}, we have equivalence $\Omega_{\ge}(\cI, r_0) = \Omega_{\ge} \cap \{(\rho,r): \rho \in \cI, r \le r_0 \}$. 

We will prove the following crucial convergence result that serves as a cornerstone for proving Theorem~\ref{thm:signal} and~\ref{thm:err}.

\begin{thm}\label{thm:Mlimit}
Suppose that we are given $\cI$ and $r_0>0$ and that $\Omega_{>}(\cI, r_0)$ is nonempty. 
\begin{enumerate}
\item[(a)]{ The set $ \Omega_{\ge}(\cI, r_0)$ is compact. For $\rho \in \cI$, $r \in [0,r_0]$, the equation
\begin{equation}\label{def:s}
(1-\rho^2) r^2 \delta^{-1} = \E[ \max\{s, -Z\}^2 ], \qquad \text{s.t.} ~~~~ s \ge 0
\end{equation}
has a unique solution if and only if $(\rho, r) \in \Omega_{\ge}(\cI, r_0)$. Moreover, if $(\rho, r) \in \Omega_{\ge}(\cI, r_0)$, then the solution $s_* = s_*(\rho,r)$ is continuous in $(\rho,r) $. 
}
\item[(b)]{
The function
\begin{equation}
M(\rho,r) = \E \big[ ( Z + s)_+ \big] + \kappa.
\end{equation}
is continuous in $(\rho,r) \in  \Omega_{\ge}(\cI, r_0)$. Let
\begin{equation}\label{def:Mstar}
M^* = \max\big\{ M(\rho,r): (\rho,r) \in \Omega_{\ge}(\cI, r_0) \big\}.
\end{equation}
Then for any $\veps > 0$ independent of $n,d$, we have
\begin{equation*}
\lim_{n \to \infty} \P \big( M^* - \veps \le \bar M_n \le M^* + \veps \big) = 1.
\end{equation*}
}
\end{enumerate}
\end{thm}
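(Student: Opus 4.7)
For part (a), the map $(\rho, r) \mapsto \E[Z_{\rho,r}^2; Z_{\rho,r} < 0]$ is continuous on the compact box $\cI \times [0, r_0]$ by dominated convergence (since $Z_{\rho,r}$ is uniformly subgaussian for $r \in [0, r_0]$), so $\Omega_{\ge}(\cI, r_0)$ is a closed, hence compact, subset. For fixed $(\rho, r)$, the function $g(s) := \E[\max\{s, -Z_{\rho,r}\}^2]$ is continuous, satisfies $g(0) = \E[Z^2; Z<0]$ and $g(s) \to \infty$, and has derivative $g'(s) = 2s\,\P(Z \ge -s) > 0$ for $s > 0$, hence is strictly increasing on $[0, \infty)$. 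Thus \eqref{def:s} admits a unique nonnegative solution precisely when $(1-\rho^2)r^2/\delta \ge g(0)$, i.e., on $\Omega_\ge$. Joint continuity of $s_*(\rho, r)$ follows from a subsequence argument: along any sequence $(\rho_k, r_k) \to (\rho, r)$ in $\Omega_\ge$, the values $s_*(\rho_k, r_k)$ remain bounded, subsequential limits solve \eqref{def:s} at $(\rho, r)$ by joint continuity of $g$ in all arguments, and uniqueness forces convergence. Continuity of $M(\rho, r) = \E[(Z_{\rho,r} + s_*(\rho, r))_+] + \kappa$ on $\Omega_\ge(\cI, r_0)$ is then immediate.

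For part (b), I fix $(\rho, r) \in \Omega_\ge(\cI, r_0)$ and analyze the inner problem. As in Section~\ref{sec:signal-lower}, I decompose $\xx_i = G_i\btheta^* + \zz_i$ with $\zz_i$ standard Gaussian in $(\btheta^*)^\perp$ and independent of $(y_i, G_i)$, and parameterize $\btheta = \rho\btheta^* + \sqrt{1-\rho^2}\,r\,\ww$ with $\ww \in \S^{d-2}$. Replacing $\zz_i$ by $y_i\zz_i$ (same law) and using $\langle \btheta, \vv\rangle = \tfrac{1}{n}\sum_i y_i\langle \xx_i, \btheta\rangle$, the problem reduces to
\begin{equation*}
\bar N_n(\rho, r) = \rho \cdot \tfrac{1}{n}\sum_i y_i G_i + \sqrt{1-\rho^2}\,r \cdot \max_{\ww \in \S^{d-2}}\Big\{\tfrac{1}{n}\sum_i\langle \ww, \zz_i\rangle : \langle \ww, \zz_i\rangle \ge \tfrac{\kappa - \rho y_i G_i}{\sqrt{1-\rho^2}\,r}\,\forall i\Big\}.
\end{equation*}
The feasible set in $\btheta$ is convex (an ellipsoid intersected with the slab $\langle \btheta, \btheta^*\rangle \in \cI$), so the Convex Gaussian Min--Max Theorem of \cite{thrampoulidis2018precise} applies to the Lagrangian min--max reformulation, replacing the Gaussian design $\ZZ$ (rows $\zz_i^\top$) by independent Gaussians $\bgg \sim \cN(\bzero, \bI_n)$ and $\hh \sim \cN(\bzero, \bI_{d-1})$ independent of $(y_i, G_i)$.

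Standard CGMT manipulations---eliminating $\ww$ via Cauchy--Schwarz, parameterizing the nonnegative multipliers $\balpha$ by norm and direction, and applying a uniform law of large numbers to empirical averages in $(y_i, G_i)$ and in $\bgg$---reduce the auxiliary problem to a deterministic scalar saddle point. Its stationarity condition is precisely \eqref{def:s}: equating $\|\balpha\|^2/n$ with $\E[\max\{s, -Z_{\rho,r}\}^2]$ and $\|\bar\btheta\|^2/d = (1-\rho^2)r^2$ yields the balance $\E[\max\{s, -Z\}^2] = (1-\rho^2)r^2/\delta$, identifying the optimal dual variable with $s_*(\rho, r)$. Its optimal value collapses, via the identity $\E[(Z+s_*)_+] = \E[Z] + s_* + \E[(Z+s_*)_-]$ together with $\E[Z_{\rho,r}] = \rho m - \kappa$, to $M(\rho, r) = \E[(Z_{\rho,r}+s_*)_+] + \kappa$. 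Conversely, if $(\rho, r) \notin \Omega_\ge(\cI, r_0)$, equation \eqref{def:s} has no solution and the same Gordon reduction produces an infeasibility certificate for the inner problem with high probability.

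The main remaining obstacle is upgrading pointwise convergence $\bar N_n(\rho, r) \xrightarrow{p} M(\rho, r)$ to uniform convergence on the compact $\Omega_\ge(\cI, r_0)$. I handle this via a finite $\veps$-net argument: the operator-norm bound $\|\ZZ\|_{\op} = O(\sqrt{n})$ with high probability \cite{vershynin2018high} together with an \emph{a priori} bound on the optimal Lagrange multiplier (inherited from boundedness of the primal constraints on $\Omega_\ge$) imply that $(\rho, r) \mapsto \bar N_n(\rho, r)$ is Lipschitz on $\Omega_\ge(\cI, r_0)$ with polynomial constant; convergence on a sufficiently fine net together with continuity of $M$ then yields uniform convergence. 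Since $\bar M_n = \sup_{(\rho, r) \in \cI \times [0, r_0]}\bar N_n(\rho, r)$ and infeasibility restricts the supremum to $\Omega_\ge(\cI, r_0)$ with high probability, the conclusion $\bar M_n \xrightarrow{p} M^*$ follows, with achievability obtained by exhibiting a near-maximizer of $M$ as a feasible $\btheta$.
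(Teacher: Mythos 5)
Your part (a) follows essentially the paper's route (strict monotonicity of $s\mapsto\E[\max\{s,-Z\}^2]$, uniqueness, a subsequence argument or the implicit function theorem for continuity), and is fine.

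For part (b) the high-level plan is the same as the paper's (Gordon/CGMT reduction to an auxiliary saddle point, ULLN, and identification of the scalar stationarity condition with \eqref{def:s}), but the uniformization step has a real gap. You claim an ``a priori bound on the optimal Lagrange multiplier'' which would make $(\rho,r)\mapsto\bar N_n(\rho,r)$ Lipschitz on $\Omega_\ge$, and then propose a net-plus-Lipschitz argument. That bound is false: from the KKT solution of the inner problem one gets $\tilde\alpha_i=\max\{1,-z_i/s_n\}/n$, so $\|\tilde\balpha\|_2=\tau_n/(\sqrt n\,s_n)$, which blows up as $s_n\to 0$, i.e.\ as $(\rho,r)$ approaches the feasibility boundary $\Omega_=$ (and on $\Omega_=$ the paper's Lemma~\ref{lem:alpha}(c) shows the inner minimizer does not even exist). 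Consequently there is no uniform Lipschitz constant, and a covering argument cannot cross the feasibility transition: a net point could land in a region where $\bar N_n=-\infty$, breaking the estimate precisely where the optimizer over $(\rho,r)$ might be pushed. The paper avoids this by never arguing pointwise-plus-Lipschitz; instead it applies Gordon's inequality to the full max--min (Lemma~\ref{lem:Gordon2}, which itself requires a truncation-and-monotone-convergence step because the multiplier set is unbounded), obtains the explicit formula for $M_n(\rho,r)$, and then proves \emph{uniform} control on three pieces --- a compact subset of $\Omega_>$, a compact subset of $\Omega_<$, and an open neighborhood of $\Omega_=$ --- via the ULLN on the functions $\tilde F_n, F_n$ and the observation that near $\Omega_=$ one has $s_n\le\veps$ or $s_n=-\infty$, in both cases giving the right one-sided bound (Lemma~\ref{lem:alpha2}(c)). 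Your proposal also does not explain how to get the lower bound $\bar M_n\ge M^*-\veps$ when the maximizer of $M$ lies on $\Omega_=$; the paper needs a separate convexity argument (Step~3, via the map to $(\rho,\omega)$) to approximate boundary maximizers from within $\Omega_>$. In short, the reduction is right in spirit but the uniformization mechanism you propose does not survive the degeneration at $\Omega_=$, which is the technical heart of this theorem.
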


\subsection{Reduction via Gordon's comparison theorem}

Let $\tilde \btheta_* \in \R^{d \times (d-1)}$ be the orthogonal complement of $\btheta_*$ (i.e., $(\btheta_*, \tilde \btheta_*)$ is an orthogonal matrix). Recall that in the beginning of Section~\ref{sec:signal-lower}, we write $\btheta$ into its projection to the space of $\btheta_*$ and its complement: denoting $\rho = \langle \btheta, \btheta_* \rangle$, we obtain an one-to-one map $\btheta \leftrightarrow (\rho, \ww)$ in the unit ball via the equivalence
\begin{equation}\label{map}
\btheta  = \rho\, \btheta_* +  \sqrt{1-\rho^2}\, \tilde \btheta_* \ww, \qquad \where~ \ww \in \R^{d-1}, ~\| \ww \| \le 1.
\end{equation}
In particular, if $\cI = [-1,1]$ and $r_0 = 1$, then $\Theta$ is simply the unit ball in $\R^d$. 
By the definition of $\vv$, we have 
\begin{equation*}
\langle \btheta, \vv \rangle = n^{-1} \sum_{i=1}^n y_i \langle \xx_i, \btheta \rangle \stackrel{d}{=}  \frac{\rho}{n} \sum_{i=1}^n y_i G_i + \frac{\sqrt{1-\rho^2}}{n} \sum_{i=1}^n \langle \tilde \xx_i, \ww \rangle.
\end{equation*}
Thus, we can rewrite $\bar M_n$ as
\begin{align}
\bar M_n &= \max_{\btheta \in \Theta} \min_{\balpha \ge \bzero} \Big\{ \langle \btheta, \vv \rangle  + \sum_{i=1}^n \alpha_i \big( y_i \langle \xx_i, \btheta \rangle -  \kappa \big)  \Big\} \\
& \stackrel{d}{=} \max_{\rho \in \cI} \max_{\| \ww \| \le r_0} \min_{\balpha \ge \bzero} \left\{ \frac{ \rho}{n} \sum_{i=1}^n y_i G_i +  \frac{\sqrt{1-\rho^2}}{n} \, \sum_{i=1}^n \langle \tilde \xx_i, \ww \rangle +  \rho \sum_{i=1}^n \alpha_i y_i G_i +  \sqrt{1 - \rho^2}\, \sum_{i=1}^n \alpha_i \langle \tilde \xx_i, \ww \rangle -  \kappa \sum_{i=1}^n \alpha_i \right\} \notag \\
&=   \max_{\rho \in \cI} \max_{\| \ww \| \le r_0} \min_{\balpha \ge \bzero} \left\{  \rho \langle \balpha + n^{-1} \bone_n, \yy \odot \bG \rangle +  \sqrt{1 - \rho^2}\, \langle \balpha + n^{-1} \bone_n, \tilde \bX \ww  \rangle - \kappa\langle \balpha , \bone_n \rangle \right\} \notag \\
&=: \max_{\rho \in \cI} \bar M_{\rho,n}. \label{def:barM}
\end{align}
Let us denote $\tilde \balpha = \balpha + n^{-1} \bone_n$, which satisfies $\tilde \balpha \ge n^{-1} \bone_n$. We will apply Gordon's theorem. To this end, we define
\begin{align}
M_{\rho,n} &:= \max_{\| \ww \| \le r_0} \min_{\tilde \balpha \ge n^{-1} \bone_n} \Big\{  \rho \langle \tilde \balpha, \yy \odot \bG \rangle +  \sqrt{1 - \rho^2}\, \| \ww \| \langle \tilde \balpha, \bgg  \rangle +  \sqrt{1 - \rho^2}\, \| \tilde \balpha \| \langle \ww, \hh  \rangle  - \kappa\langle \tilde \balpha , \bone_n \rangle + \kappa \Big\}\notag \\
&=  \max_{\| \ww \| \le r_0} \min_{\tilde \balpha \ge n^{-1} \bone_n}  \Big\{ \langle \tilde \balpha, \underbrace{\rho \yy \odot \bG + \sqrt{1-\rho^2}\, \| \ww \| \bgg - \kappa \bone_n}_{\text{denoted by}~\zz} \rangle + \sqrt{1-\rho^2}\, \| \tilde \balpha \| \langle \ww, \hh \rangle +  \kappa \Big\} \label{expr:Mrho} \\
M_n &:= \max_{\rho \in \cI} M_{\rho,n}. \notag
\end{align}
where $\bgg \sim \cN(\bzero, \bI_n)$ and $\hh \sim \cN(\bzero, \bI_{d-1})$ are independent Gaussian vectors. 

By using a variant of Gordon's theorem \cite[Corollary~G.1]{miolane2018distribution}, we obtain the following result. A minor technicality is that the constraints $\tilde \balpha \ge n^{-1} \bone_n$ do not produce a compact set. Its proof is deferred to the appendix.
\begin{lem}\label{lem:Gordon2}
For all $t \in \R$, we have 
\begin{align*}
\P\left( \bar M_n \le t\right ) \le 2 \P\left(  M_n \le t\right ), \qquad \P\left( \bar M_n \ge t \right) \le 2 \P\left( M_n \ge t \right).
\end{align*}
\end{lem}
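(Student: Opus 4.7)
The plan is to apply the convex-concave Gordon comparison of \cite[Corollary~G.1]{miolane2018distribution} to a truncation in $\tilde\balpha$ and then pass to a monotone limit. Under the reparameterization $\hat\ww=\sqrt{1-\rho^2}\,\ww$, the outer set $\cC_2:=\{(\rho,\hat\ww):\rho\in\cI,\,\|\hat\ww\|\le r_0\sqrt{1-\rho^2}\}$ is convex and compact (since $\rho\mapsto r_0\sqrt{1-\rho^2}$ is concave on $[-1,1]$), and \eqref{def:barM}--\eqref{expr:Mrho} rewrite as
\begin{align*}
\bar M_n&=\max_{(\rho,\hat\ww)\in\cC_2}\min_{\tilde\balpha\in\cC_1}\bigl\{\rho\langle\tilde\balpha,\yy\odot\bG\rangle+\langle\tilde\balpha,\tilde\bX\hat\ww\rangle-\kappa\langle\tilde\balpha,\bone_n\rangle+\kappa\bigr\},\\
M_n&=\max_{(\rho,\hat\ww)\in\cC_2}\min_{\tilde\balpha\in\cC_1}\bigl\{\rho\langle\tilde\balpha,\yy\odot\bG\rangle+\|\hat\ww\|\langle\tilde\balpha,\bgg\rangle+\|\tilde\balpha\|\langle\hat\ww,\hh\rangle-\kappa\langle\tilde\balpha,\bone_n\rangle+\kappa\bigr\},
\end{align*}
with $\cC_1=\{\tilde\balpha:\tilde\balpha\ge n^{-1}\bone_n\}$. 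The only bilinear-in-Gaussian-matrix term in $\bar M_n$ is $\langle\tilde\balpha,\tilde\bX\hat\ww\rangle$, with $\tilde\bX$ having i.i.d.\ $\cN(0,1)$ entries independent of $(\yy,\bG)$; all remaining terms are affine in each of $\tilde\balpha$ and $(\rho,\hat\ww)$, so the non-Gaussian-matrix part of the objective is jointly convex-concave in the sense required by Gordon.

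For each $\tau>0$, set $\cC_1^{(\tau)}:=\cC_1\cap\{\|\tilde\balpha\|\le\tau\}$ and denote the corresponding restricted optima by $\bar M_n^{(\tau)}$ and $M_n^{(\tau)}$. Since $\cC_1^{(\tau)}$ and $\cC_2$ are both convex and compact, conditioning on $(\yy,\bG)$ and applying \cite[Corollary~G.1]{miolane2018distribution} (after the standard sign-flip reduction of a max-min of a bilinear form to a min-max) yields, for every $t\in\R$,
\begin{equation*}
\P\bigl(\bar M_n^{(\tau)}\le t\bigr)\le 2\P\bigl(M_n^{(\tau)}\le t\bigr),\qquad \P\bigl(\bar M_n^{(\tau)}\ge t\bigr)\le 2\P\bigl(M_n^{(\tau)}\ge t\bigr).
\end{equation*}
I would then verify that $\bar M_n^{(\tau)}\downarrow\bar M_n$ and $M_n^{(\tau)}\downarrow M_n$ almost surely in the extended reals as $\tau\to\infty$: monotonicity is immediate, while the identification of the limit follows from the observation that at any outer point at which some inner coefficient of $\tilde\balpha$ is strictly negative, the $\tau$-truncated inner minimum is at most $-c\tau+O(1)$ with $c>0$ depending continuously on the outer point, so compactness of $\cC_2$ and continuity of the coefficient map force the outer maximizer to converge along subsequences to a point with nonnegative inner coefficients, at which the inner minimum equals the corresponding term in $\bar M_n$ (resp.\ $M_n$). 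This mirrors the limiting argument in the proof of Corollary~\ref{cor:Gordon}.

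Passing to the limit then uses monotone continuity of probability: since $\{M_n^{(\tau)}\ge t\}$ is non-increasing in $\tau$ with intersection $\{M_n\ge t\}$, the $\ge$-inequality follows by letting $\tau\to\infty$ in $\P(\bar M_n\ge t)\le\P(\bar M_n^{(\tau)}\ge t)\le 2\P(M_n^{(\tau)}\ge t)$; for the $\le$-inequality an $\veps$-buffer is needed, giving $\P(\bar M_n\le t)\le\liminf_\tau\P(\bar M_n^{(\tau)}\le t+\veps)\le 2\P(M_n\le t+\veps)$, after which $\veps\downarrow 0$ together with right-continuity of the CDF of $M_n$ finishes the proof. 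The main obstacle is the almost sure convergence $\bar M_n^{(\tau)}\to\bar M_n$ on the event $\{\bar M_n=-\infty\}$ (no outer point is feasible in the untruncated problem), which requires the compactness- and continuity-based argument sketched above rather than a purely formal application of Gordon's inequality.
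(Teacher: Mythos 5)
Your proposal is correct but follows a genuinely different route from the paper's own proof of Lemma~\ref{lem:Gordon2}, though it closely mirrors the paper's proof of Corollary~\ref{cor:Gordon}. You keep the max-min (primal) form throughout: you truncate the inner minimization, apply Gordon after the sign-flip, and then pass the monotone limit $\tau\to\infty$ through the inequalities, where the non-trivial step is showing $\bar M_n^{(\tau)}\downarrow\bar M_n$ and $M_n^{(\tau)}\downarrow M_n$. That step is valid (each truncated inner minimum $\phi_\tau(\rho,\hat\ww)$ is upper semicontinuous as an infimum of continuous functions, so on the compact $\cC_2$ one has $\max_{\cC_2}\phi_\tau\downarrow\max_{\cC_2}\lim_\tau\phi_\tau$ by the standard u.s.c.\ argument, even if the limit is $-\infty$), but your sketch via ``inner coefficients'' should be adjusted for the $Q_2$-type process, whose inner objective contains the nonlinear term $\|\tilde\balpha\|\langle\hat\ww,\hh\rangle$; the cleaner route is to invoke upper semicontinuity directly rather than reason about individual coefficients. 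The paper instead passes to the dual min-max form, where the $k\to\infty$ limit over the truncated $\tilde\balpha$-ball is trivially a decreasing limit of minima, and then exchanges $\min$ and $\max$: for the $Q_1$ process this is Sion's minimax theorem; for the $Q_2$ process, which is not concave in $\bar\uu$, it requires a two-step reparameterization to $(\rho,r)$ with Cauchy--Schwarz plus another application of Sion. What your approach buys is that you avoid the minimax exchange entirely (and in particular avoid the custom argument that $Q_2$ satisfies a minimax equality); what it costs is the u.s.c.\ compactness argument for the monotone limit in the max-min form. Both are sound; yours is arguably the more economical route for this lemma.
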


\subsection{Convergence to the asymptotic limit: Proof of Theorem~\ref{thm:signal}}

First, let us slightly simplify the expression of $M_{\rho,n}$ in \eqref{expr:Mrho}. 

\begin{lem}\label{lem:simpleMrho}
We can express $M_{\rho,n}$ as
\begin{equation}\label{eq:maxmin}
 M_{\rho,n} = \max_{r \in [0,r_0]} \min_{\tilde \balpha \ge  n^{-1} \bone_n} \Big\{  \langle \tilde \balpha, \zz \rangle + \sqrt{1-\rho^2} \, r\| \tilde \balpha \|  \| \hh \| \Big\} + \kappa.
\end{equation}
\end{lem}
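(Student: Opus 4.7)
\textbf{Proof proposal for Lemma~\ref{lem:simpleMrho}.} The plan is to split the outer maximization over $\ww \in \{\|\ww\| \le r_0\}$ into a maximization over $r = \|\ww\| \in [0,r_0]$ followed by a maximization over the unit direction $\bv = \ww/r$. The crucial observation is that $\zz = \rho\, \yy \odot \bG + \sqrt{1-\rho^2}\, \|\ww\| \bgg - \kappa \bone_n$ depends on $\ww$ \emph{only} through its norm $r = \|\ww\|$, and not through the direction $\bv$. Consequently, after fixing $r$, the only term in \eqref{expr:Mrho} that sees the direction of $\ww$ is the scalar product $\langle \ww, \hh \rangle$, and it appears with coefficient $\sqrt{1-\rho^2}\,\|\tilde \balpha\| \ge 0$.

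Next I would exploit the following Cauchy--Schwarz observation: for every $\tilde \balpha \ge n^{-1} \bone_n$ and every $\ww$ with $\|\ww\| = r$,
\begin{equation*}
\langle \tilde \balpha, \zz \rangle + \sqrt{1-\rho^2}\,\|\tilde\balpha\| \langle \ww, \hh \rangle \;\le\; \langle \tilde \balpha, \zz \rangle + \sqrt{1-\rho^2}\, r \|\tilde\balpha\| \|\hh\|,
\end{equation*}
with equality at the specific direction $\ww^{\ast} := r\,\hh/\|\hh\|$ (this upper bound holds since the coefficient $\sqrt{1-\rho^2}\,\|\tilde\balpha\|$ is nonnegative; the case $\hh = \bzero$ has probability zero and can be discarded, or the identity is trivial). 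Crucially, the maximizing $\ww^\ast$ does not depend on $\tilde\balpha$. Taking $\inf$ over $\tilde\balpha \ge n^{-1}\bone_n$ on both sides preserves the inequality, yielding
\begin{equation*}
\min_{\tilde\balpha \ge n^{-1}\bone_n} \Big\{ \langle \tilde \balpha, \zz \rangle + \sqrt{1-\rho^2}\,\|\tilde\balpha\| \langle \ww, \hh \rangle \Big\} \;\le\; \min_{\tilde\balpha \ge n^{-1}\bone_n} \Big\{ \langle \tilde \balpha, \zz \rangle + \sqrt{1-\rho^2}\, r \|\tilde\balpha\|\,\|\hh\|\Big\},
\end{equation*}
with equality attained at $\ww = \ww^{\ast}$ (for which $\langle \ww^\ast, \hh \rangle = r\|\hh\|$ so the inner objective becomes exactly the right-hand side). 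Therefore
\begin{equation*}
\max_{\|\ww\|=r} \min_{\tilde\balpha \ge n^{-1}\bone_n} \Big\{ \langle \tilde \balpha, \zz \rangle + \sqrt{1-\rho^2}\,\|\tilde\balpha\| \langle \ww, \hh \rangle \Big\} \;=\; \min_{\tilde\balpha \ge n^{-1}\bone_n} \Big\{ \langle \tilde \balpha, \zz \rangle + \sqrt{1-\rho^2}\, r \|\tilde\balpha\|\,\|\hh\|\Big\}.
\end{equation*}
Finally, I would take the outer $\max$ over $r \in [0,r_0]$ (noting that $\{\|\ww\| \le r_0\}$ is the disjoint union over $r \in [0,r_0]$ of the spheres $\{\|\ww\| = r\}$), add the constant $\kappa$, and recover the identity claimed in the lemma.

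The proof has essentially no obstacle: the only subtlety to keep in mind is that the right-hand side of Cauchy--Schwarz is uniform in $\tilde\balpha$, which is what lets us swap $\max_{\|\ww\|=r}$ with $\min_{\tilde\balpha}$ \emph{without} invoking Sion's minimax theorem. (In fact, Sion's theorem would be awkward to apply directly, because the objective is linear in $\tilde\balpha$ plus $\sqrt{1-\rho^2}\langle \ww,\hh\rangle\|\tilde\balpha\|$, and the sign of $\langle \ww, \hh \rangle$ is not controlled, so the function is not globally convex in $\tilde\balpha$.) The Cauchy--Schwarz route sidesteps this entirely.
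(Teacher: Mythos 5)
Your proposal is correct and takes essentially the same route as the paper's proof: both exploit that $\zz$ depends on $\ww$ only through $\|\ww\| = r$, apply Cauchy--Schwarz to bound $\langle \ww, \hh\rangle \le r\|\hh\|$ uniformly in $\tilde\balpha$, take the minimum over $\tilde\balpha$ to get $I_0 \le I_1$, and then observe that the choice $\ww^* = r\hh/\|\hh\|$ achieves equality, yielding $I_0 \ge I_1$. Your parenthetical remark about why Sion's theorem is not needed is a nice bonus observation, consistent with the paper's presentation.
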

\begin{proof}[{\bf Proof of Lemma~\ref{lem:simpleMrho}}]
We only need to show that for every fixed $r \in [0,r_0]$, the following holds:
\begin{equation}\label{eq:3opt}
I_0 := \max_{\| \ww \| = r} \min_{\tilde \balpha \ge  n^{-1} \bone_n} \Big\{  \langle \tilde \balpha, \zz \rangle + \sqrt{1-\rho^2} \, \| \tilde \balpha \|  \langle \ww, \hh \rangle \Big\} = \min_{\tilde \balpha \ge  n^{-1} \bone_n} \Big\{  \langle \tilde \balpha, \zz \rangle + \sqrt{1-\rho^2} \, r\| \tilde \balpha \|  \| \hh \| \Big\} =:I_1.
\end{equation}
Note that $\zz$ does not change on $\{ \ww : \| \ww \| = r\}$. Thus, by the Cauchy-Schwarz inequality, for any $\ww$ with $\| \ww \| = r$ and $\tilde \balpha$ with $\tilde \balpha \ge  n^{-1} \bone_n$, we have
\begin{equation}\label{wmax}
\langle \tilde \balpha, \zz \rangle + \sqrt{1-\rho^2} \, \| \tilde \balpha \|  \langle \ww, \hh \rangle \le \langle \tilde \balpha, \zz \rangle + \sqrt{1-\rho^2} \, r \| \tilde \balpha \|  \| \hh \|.
\end{equation}
By first taking the minimum over $\tilde \balpha$ and then taking the maximum over $\ww$, we obtain $I_0 \le I_1$. By choosing $\ww = r \hh / \| \hh \|$ in the maximization in $I_0$, we obtain $I_0 \ge I_1$. Therefore, $I_0 = I_1$ and this lemma is proved.
\end{proof}

For given $r \in [0,r_0]$, the minimization problem in \eqref{eq:maxmin} is convex. We will solve this minimization problem via the KKT conditions. To that end, we define 
\begin{equation*}
M_n(\rho,r) = \min_{\tilde \balpha \ge  n^{-1} \bone_n} \Big\{  \langle \tilde \balpha, \zz \rangle + \sqrt{1-\rho^2} \, r\| \tilde \balpha \|  \| \hh \| \Big\} + \kappa.
\end{equation*}
We allow $M_n(\rho,r) = -\infty$ in this definition. We also assume without loss of generality that $z_i \neq 0$ for all $i$ (this holds almost surely since $z_i$ is a continuous variable). For convenience, we denote
\begin{equation}\label{def:tau}
\tau_n := \tau_n(\rho,r) = \frac{ \sqrt{1-\rho^2}\, r \| \hh\|}{\sqrt{n}}, \qquad \tau := \tau(\rho,t) = \frac{ \sqrt{1-\rho^2}\, r }{\sqrt{\delta}}.
\end{equation}
We will use $\mathbb{Q}_n$ to denote the empirical distribution of $\{(y_i, G_i, g_i)\}_{i=1}^n$, namely, the probability measure on $\R^3$ given by
\begin{equation*}
\mathbb{Q}_n =  \frac{1}{n} \sum_{i=1}^n \delta_{(y_i, G_i, g_i)} \, .
\end{equation*}
Using this notation, we can write $n^{-1} \sum_{i=1}^n \psi(z_i)$ as $\E_{\mathbb{Q}_n} [\psi(Z)]$ for any function $\psi$. Sometimes, we will also use $\mathbb{Q}_\infty$ to denote the population measure of $(Y,G,g)$ (so that $\E_{\mathbb{Q}_\infty} = \E$).

\begin{lem}\label{lem:solucond}
Let $\mathbb{Q}$ be any probability measure (which can be random, including the empirical measure $\mathbb{Q}_n$), and let $\tau_{\mathbb{Q}} \ge 0$ be a scalar associated with $\mathbb{Q}$. Then, the following equation
\begin{equation}\label{def:news}
\tau_{\mathbb{Q}}^2 = \E_{\mathbb{Q}}\big[ \max\{ -Z, s \}^2 \big], \qquad \text{s.t.}~~~~s \ge 0
\end{equation}
has a unique solution $s_{\mathbb{Q}} = s_{\mathbb{Q}}(\rho,r)>0$ if the inequality
\begin{equation}\label{ineq:newcond1}
\tau_{\mathbb{Q}}^2 > \E_{\mathbb{Q}}\big[ \max\{ -Z, 0 \}^2 \big]
\end{equation}
holds; if the reverse inequality 
\begin{equation}\label{ineq:newcond2}
\tau_{\mathbb{Q}}^2 < \E_{\mathbb{Q}}\big[ \max\{ -Z, 0 \}^2 \big],
\end{equation}
holds, then there is no solution to the equation~\eqref{def:news}. Moreover, if $\mathbb{Q} = \mathbb{Q}_\infty$, then \eqref{def:news} has a unique solution $s(\rho,r) \ge 0$ under the relaxed condition
\begin{equation}\label{ineq:newcond3}
\tau^2 \ge \E\big[ \max\{ -Z, 0 \}^2 \big],
\end{equation}
under which $s(\rho,r)$ is continuous in $(\rho,r)$.
\end{lem}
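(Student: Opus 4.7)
The plan is to study the auxiliary function $F_{\mathbb{Q}}(s) := \E_{\mathbb{Q}}[\max\{-Z, s\}^2]$ on $s \in [0, \infty)$ and reduce everything to continuity and (strict) monotonicity of $F_{\mathbb{Q}}$. First I would collect the basic properties: $F_{\mathbb{Q}}$ is continuous on $[0, \infty)$ (dominated convergence, noting $\max\{-Z,s\}^2 \le 2Z^2 + 2s^2$), it is non-decreasing (since $s \mapsto \max\{-Z,s\}$ is), it satisfies $F_{\mathbb{Q}}(0) = \E_{\mathbb{Q}}[\max\{-Z,0\}^2]$, and $F_{\mathbb{Q}}(s) \ge s^2 \to \infty$. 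Existence under \eqref{ineq:newcond1} and non-existence under \eqref{ineq:newcond2} then follow from the intermediate value theorem and from $\tau_{\mathbb{Q}}^2 < F_{\mathbb{Q}}(0) \le F_{\mathbb{Q}}(s)$, respectively.

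For uniqueness under \eqref{ineq:newcond1}, suppose for contradiction that $F_{\mathbb{Q}}(s_1) = F_{\mathbb{Q}}(s_2) = \tau_{\mathbb{Q}}^2$ with $0 \le s_1 < s_2$. By monotonicity $F_{\mathbb{Q}}$ is constant on $[s_1, s_2]$. Differentiating gives $F_{\mathbb{Q}}'(s) = 2s\, \mathbb{Q}(Z > -s)$ almost everywhere, so $\mathbb{Q}(Z > -s) = 0$ for a.e.\ $s \in [s_1, s_2]$. Monotonicity of $s \mapsto \mathbb{Q}(Z > -s)$ then forces $\mathbb{Q}(Z > -s_1) = 0$, i.e.\ $-Z \ge s_1$ a.s.\ under $\mathbb{Q}$. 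But then $\max\{-Z, s\} = -Z$ for all $s \in [0, s_1]$, giving $F_{\mathbb{Q}}(0) = \E_{\mathbb{Q}}[Z^2] = F_{\mathbb{Q}}(s_1) = \tau_{\mathbb{Q}}^2$, contradicting $\tau_{\mathbb{Q}}^2 > F_{\mathbb{Q}}(0)$. Hence the solution is unique and, being different from $0$ in the strict case (since $F_{\mathbb{Q}}(0) < \tau_{\mathbb{Q}}^2$), must be positive.

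For the population measure, the same argument handles $\tau^2 > \E[\max\{-Z,0\}^2]$; in the boundary case $\tau^2 = \E[\max\{-Z,0\}^2]$, $s = 0$ is trivially a solution. Uniqueness here is the new issue: I would argue that for $r > 0$, $Z = \rho YG + \sqrt{1-\rho^2}\, r W - \kappa$ has a density on $\R$ (because of the Gaussian noise $W$), so $\mathbb{Q}_\infty(Z > -s) > 0$ for every $s > 0$, making $F$ \emph{strictly} increasing on $[0, \infty)$. The degenerate case $r = 0$ with $\tau = 0$ forces $Z \ge 0$ a.s., and then $F(s) = s^2$ has $s = 0$ as its unique zero.

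Finally, for continuity of $s(\rho, r)$ on $\Omega_{\ge}(\cI, r_0)$, I would use a subsequence/uniqueness argument. The bound $s(\rho, r) \le \tau(\rho, r)$, which follows from $\max\{-Z, s\}^2 \ge s^2$, ensures precompactness of $\{s(\rho_n, r_n)\}$ along any sequence $(\rho_n, r_n) \to (\rho, r)$ in $\Omega_{\ge}(\cI, r_0)$. The map $(\rho, r, s) \mapsto \E[\max\{-Z_{\rho,r}, s\}^2]$ is jointly continuous by dominated convergence (integrand bounded by $2(YG)^2 + 2r_0^2 W^2 + 2\kappa^2 + 2s^2$ uniformly in the compact parameter region, the supremum being integrable). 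Hence any limit point $s^*$ of $s(\rho_n, r_n)$ satisfies $\tau(\rho,r)^2 = \E[\max\{-Z_{\rho,r}, s^*\}^2]$, and the uniqueness statement just proved forces $s^* = s(\rho, r)$. The main delicate point is the uniqueness argument in Step 3, which is what rules out pathological flat plateaus of $F_{\mathbb{Q}}$; once that is in hand, everything else is a routine continuity/compactness wrapper.
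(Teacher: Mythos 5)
Your proof is correct, and it shares the overall strategy with the paper (analyze $F_{\mathbb{Q}}(s)=\E_{\mathbb{Q}}[\max\{-Z,s\}^2]$ via continuity and monotonicity, with the intermediate value theorem for existence), but the route through the three delicate steps is genuinely different. For uniqueness over general $\mathbb{Q}$, the paper argues that $F_{\mathbb{Q}}$ is strictly increasing once $s$ exceeds an essential-infimum threshold (the printed threshold $\essinf Z$ appears to have a sign slip; the relevant quantity is $-\esssup Z = \essinf(-Z)$) and then observes the solution must exceed that threshold; your contradiction argument -- two solutions force $F_{\mathbb{Q}}$ flat on $[s_1,s_2]$, hence $\mathbb{Q}(Z>-s)=0$ there, hence $F_{\mathbb{Q}}(0)=F_{\mathbb{Q}}(s_1)=\tau_{\mathbb{Q}}^2$ -- is equivalent but cleaner and avoids having to locate the threshold. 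For the population-measure part, the paper computes $\partial_s^2 F=2\P(s>-Z)+2sp_{-Z}(s)$, invokes Lemma~\ref{lem:quickfix} to get the uniform bound $\partial_s^2 F\ge 1$, and from that both strict monotonicity and the quantitative estimate $F(\rho,r,s)-F(\rho,r,s')\ge\frac12(s-s')^2$ (Eq.~\eqref{ineq:Fconvex}), which is then used in the implicit function theorem for continuity and is re-used in the proof of Lemma~\ref{lem:alpha2}. You instead argue strict monotonicity from $Z$ having a positive density (for $r>0$) and get continuity by precompactness plus uniqueness of limit points. Both are valid proofs of the lemma as stated; the tradeoff is that your more elementary argument does not yield the quantitative convexity bound~\eqref{ineq:Fconvex}, which the paper leans on downstream, so if you continued in your style you would need a separate argument there. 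One small technicality worth flagging in your write-up: when $|\rho|=1$ the noise term $\sqrt{1-\rho^2}\,rW$ vanishes, so the ``density from $W$'' argument does not directly apply; this is harmless because those $(\rho,r)$ never lie in $\Omega_{\ge}$ (one would need $\E[Z^2;Z<0]=0$, which fails for unbounded $Z$), but a one-line remark to that effect would close the case analysis cleanly.
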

\begin{proof}[{\bf Proof of Lemma~\ref{lem:solucond}}]
Denote by $\essinf \, Z$ the essential infimum of $Z$, i.e., $\essinf\,  Z = \sup\{a: \P(Z<a) = 0 \}$. Since $\max\{-Z, s\}$ is strictly increasing in $s$, the function $F(\rho,r,s) := \E\big[ \max\{-Z, s\} \big]$ is strictly increasing for $s \ge \essinf \, Z$ and is constant for $s < \essinf \, Z$. Note that $\lim_{s \to \infty} F(\rho,r,s) = \infty$. Thus, under the condition~\eqref{ineq:newcond1}, there exists a solution to  equation \eqref{def:news}, and under the condition \eqref{ineq:newcond2} there is no solution.

Under the condition~\eqref{ineq:newcond1}, we must have $s_{\mathbb{Q}} > 0$. Since $F(\rho,r,s_{\mathbb{Q}}) > F(\rho,r,0)$, we have $s_{\mathbb{Q}}>0 \ge \essinf \, Z$, so $s_{\mathbb{Q}}$ is the unique solution. 

To prove the `moreover' part, we derive
\begin{align*}
& \frac{\partial}{\partial s} F(\rho, r, s) = 2 \E \big[ \max\{s,-Z\} \bone\{s > -Z \} \big]  = 2s \P(s > -Z) \ge 0, \\
& \frac{\partial^2}{\partial s^2} F(\rho, r, s) = 2\P(s > -Z)  + 2sp_{-Z}(s),
\end{align*}
where $p_{-Z}$ is the probability density function of $-Z$. 
\begin{equation*}
\min_{s \ge 0} \frac{\partial^2}{\partial s^2} F(\rho, r, s) \ge 2 \min_{s \ge 0}\P(Z > -s) \ge 2 \P(Z > 0 ).
\end{equation*}
By Lemma~\ref{lem:quickfix}, we have $\P(Z > 0) \ge 1/2$, which gives $\frac{\partial^2}{\partial s^2} F(\rho, r, s) \ge 1$
, which gives $\frac{\partial^2}{\partial s^2} F(\rho, r, s) \ge 1$. This implies that $F(\rho,r,s)$ is strictly increasing in $s$, so the solution to \eqref{def:news} is unique. In fact, we have the stronger inequality
\begin{equation}\label{ineq:Fconvex}
F(\rho,r,s) - F(\rho,r,s') \ge \frac{1}{2} (s-s')^2, \qquad \forall~s' > s.
\end{equation}
By the implicit function theorem, the solution $s(\rho,r)$ is also continuous in $(\rho,r)$.
\end{proof}

\begin{lem}\label{lem:alpha}
\hspace{2em}
\begin{enumerate}
\item[(a)]{ If the condition
\begin{equation}\label{ineq:cond1}
\tau_n^2 > \frac{1}{n} \sum_{i=1}^n \max\{-z_i, 0 \}^2
\end{equation}
holds, then the solution $\tilde \balpha$ and optimal objective value are given by
\begin{align}
M_n(\rho,r) &= \frac{1}{n} \sum_{i=1}^n (z_i + s_n)_+ +  \kappa , \label{eq:Mrhor} \\
\alpha_i & = \frac{1}{n} \max\left\{1, \frac{-z_i}{s_n}  \right\} \label{eq:alpha}
\end{align}
where $s_n$ is the solution to equation~\eqref{def:news} with $\mathbb{Q} = \mathbb{Q}_n$.
}
\item[(b)]{ If the reverse inequality 
\begin{equation}\label{ineq:cond2}
\tau_n^2 < \frac{1}{n} \sum_{i=1}^n \max\{-z_i, 0 \}^2
\end{equation}
holds, then $M(\rho,r) = -\infty$. 
}
\item[(c)]{ If, instead, the equality holds, then $M_n(\rho,r) = n^{-1} \langle \bone_n, \zz_+ \rangle + \kappa$, and the optimization problem does not have a finite minimizer. 
}
\end{enumerate}
\end{lem}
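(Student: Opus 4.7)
The problem defining $M_n(\rho,r)$ is convex in $\tilde\balpha$: a linear functional plus a positive multiple of $\|\tilde\balpha\|$, over the affine cone $\{\tilde\balpha \ge n^{-1}\bone_n\}$. Writing $c := \sqrt{n}\,\tau_n = \sqrt{1-\rho^2}\,r\|\hh\|$, my plan for (a) is to solve the KKT system for this problem. Stationarity of the Lagrangian gives $\lambda_i = z_i + c\,\tilde\alpha_i/\|\tilde\balpha\|$ with $\lambda_i \ge 0$, and complementary slackness gives $\lambda_i(\tilde\alpha_i - n^{-1}) = 0$. Introducing the scalar $s := c/(n\|\tilde\balpha\|)$ decouples the coordinates: complementary slackness forces $\tilde\alpha_i = n^{-1}$ whenever $-z_i < s$, while dual feasibility forces $\tilde\alpha_i = -z_i/(ns)$ whenever $-z_i \ge s$, and these combine into the claimed formula $\tilde\alpha_i = n^{-1}\max\{1, -z_i/s\}$. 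Squaring and summing then reduces the self-consistency $\|\tilde\balpha\|^2 = \sum_i \tilde\alpha_i^2$ to the scalar equation $\tau_n^2 = \E_{\mathbb{Q}_n}[\max\{s, -Z\}^2]$, which is precisely \eqref{def:news} with $\mathbb{Q} = \mathbb{Q}_n$. Lemma~\ref{lem:solucond} then gives a unique positive solution $s_n$ exactly under \eqref{ineq:cond1}.

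Plugging the minimizer back into the objective is a short calculation. Splitting $[n]$ along $\{-z_i \le s_n\}$ versus $\{-z_i > s_n\}$ yields
\[
\langle \tilde\balpha, \zz\rangle \;=\; \frac{1}{n}\sum_{-z_i \le s_n} z_i \;-\; \frac{1}{n s_n}\sum_{-z_i > s_n} z_i^2, \qquad c\|\tilde\balpha\| \;=\; \frac{c^2}{n s_n} \;=\; \frac{\tau_n^2}{s_n}.
\]
Rewriting the latter through the self-consistency relation as $\tau_n^2/s_n = n^{-1}\sum_{-z_i \le s_n} s_n + (n s_n)^{-1}\sum_{-z_i > s_n} z_i^2$ causes the second blocks to cancel exactly, leaving $n^{-1}\sum_{-z_i \le s_n}(z_i + s_n) + \kappa = n^{-1}\sum_i (z_i + s_n)_+ + \kappa$, as in \eqref{eq:Mrhor}.

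For part (b) I plan to exhibit an explicit unbounded descent ray. Take $\tilde\balpha(t) = n^{-1}\bone_n + t(-\zz)_+$ for $t \ge 0$; expanding the objective gives
\[
f(\tilde\balpha(t)) \;=\; n^{-1}\langle \bone_n, \zz\rangle \;-\; t\,A_n \;+\; c\sqrt{A_n}\,t \;+\; O(1), \qquad A_n := \sum_{z_i<0} z_i^2,
\]
as $t \to \infty$. Under \eqref{ineq:cond2} one has $c^2 = n\tau_n^2 < A_n$, so the coefficient $\sqrt{A_n}(c - \sqrt{A_n})$ of $t$ is strictly negative and $M_n(\rho,r) = -\infty$.

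The boundary case (c) is the main subtlety and needs a separate treatment, since $c = \sqrt{A_n}$ kills the solvability in Lemma~\ref{lem:solucond} and the $O(t)$ term in the ansatz above vanishes. The same ray then produces a bounded sequence with $f(\tilde\balpha(t)) \to n^{-1}\langle \bone_n, \zz_+\rangle + \kappa$, which upper bounds $M_n(\rho,r)$. For the matching lower bound I will use Cauchy--Schwarz on the negative coordinates, $\sum_{z_i<0}\tilde\alpha_i z_i \ge -\|\tilde\balpha\|\sqrt{A_n}$, together with $\tilde\alpha_i z_i \ge n^{-1}z_i$ on the non-negative coordinates (from $\tilde\alpha_i \ge n^{-1}$) and the identity $c\|\tilde\balpha\| = \sqrt{A_n}\,\|\tilde\balpha\|$, to conclude $f(\tilde\balpha) \ge n^{-1}\langle \bone_n, \zz_+\rangle$ for every feasible $\tilde\balpha$. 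Nonattainment will follow because equality in Cauchy--Schwarz combined with the constraint $\tilde\alpha_i \ge n^{-1}$ forces $\tilde\alpha_i = 0$ on $\{z_i \ge 0\}$, which is impossible unless that index set is empty; this mismatch of order $|\{i: z_i \ge 0\}|/(n^2\lambda)$ along any collinear ansatz is exactly why the infimum can be approached but never reached, and it is the reason the boundary case cannot be obtained as a limit of the KKT analysis in (a).
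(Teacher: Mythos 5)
Your proof takes essentially the same route as the paper's: the same KKT decoupling via the scalar $s = c/(n\|\tilde{\balpha}\|)$ for part (a), an explicit unbounded feasible descent ray for part (b), and Cauchy--Schwarz with the equality-condition argument for parts (b) and (c). The only stylistic differences are that your ray $n^{-1}\bone_n + t(-\zz)_+$ is feasible for every $t \ge 0$ whereas the paper's choice $\tilde{\alpha}_i = \gamma(-z_i)$ on $\{z_i < 0\}$ requires $\gamma$ large, and that the closing remark in your (c) about a ``mismatch of order $|\{i : z_i \ge 0\}|/(n^2\lambda)$ along any collinear ansatz'' is superfluous --- the contradiction from the Cauchy--Schwarz equality condition against $\tilde{\alpha}_i \ge n^{-1}$ (assuming, as both you and the paper implicitly do, that $\{i : z_i > 0\}$ is nonempty) already constitutes the complete nonattainment argument.
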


For convenience, we will henceforth denote the unique solution of~\eqref{def:news} by $s_n$ for the empirical measure $\mathbb{Q}_n$. If the solution of \eqref{def:news} does not exist, we will denote $s_n = -\infty$. We also denote the unique solution of~\eqref{def:news} by $s$ for the population measure $\mathbb{Q}_\infty$.

\begin{proof}[{\bf Proof of Lemma~\ref{lem:alpha}}]
\textbf{Step 1:} First we prove that the inequality \eqref{ineq:cond1} guarantees the existence of a (finite) minimizer, and reverse of the inequality produces $M(\rho,r) = -\infty$. 

By the Cauchy-Schwarz inequality, 
\begin{align}
\langle \tilde \balpha, \zz \rangle + \sqrt{1-\rho^2} \, r\| \tilde \balpha \|  \| \hh \| &= \langle \tilde \balpha, \zz_+ \rangle - \langle \tilde \balpha, \zz_- \rangle +  \sqrt{n}\, \tau_n \| \tilde \balpha \|    \notag \\
&\ge \langle \tilde \balpha, \zz_+ \rangle - \| \tilde \balpha \| \| \zz_-\| + \sqrt{n}\, \tau_n \| \tilde \balpha \|    \notag \\
& \ge  \frac{1}{n} \langle \bone_n, \zz_+ \rangle +  \| \tilde \balpha \| \cdot \big ( \sqrt{n}\, \tau_n   - \| \zz_-\|  \big). \label{ineq:objlbnd}
\end{align}
If the condition \eqref{ineq:cond1} holds, then the objective value goes to $+\infty$ as $\| \tilde \balpha \| \to \infty$, so a finite minimizer is guaranteed to exist. However, if $\sqrt{n}\, \tau_n   < \| \zz_- \|$, then we can find $\tilde \balpha$ with $\| \tilde \balpha \| \to \infty$ such that the objective value tends to $-\infty$. The reason is the following.

We choose $\tilde \alpha_i = n^{-1}$ if $z_i \ge 0$ and $\tilde \alpha_i =  (-z_i)\gamma$ if $z_i <0$ where $\gamma > 0$ is to be determined. Then, the objective value is
\begin{align*}
\frac{1}{n} \langle \bone_n, \zz_+ \rangle - \gamma \| \zz_- \|^2 + \sqrt{n}\, \tau_n   \big( \gamma^2 \| \zz_-\|^2 + n^{-2}k \big)^{1/2}
\end{align*}
where $k$ is the number of $i$ such that $z_i \ge 0$. As $\gamma \to \infty$, we have $ \gamma^{-1} \big( \gamma^2 \| \zz_-\|^2 + n^{-2}k \big)^{1/2} \to \| \zz_- \|$. Thus, 
\begin{equation*}
- \gamma \| \zz_- \|^2 + \sqrt{n}\, \tau_n \big( \gamma^2 \| \zz_-\|^2 + n^{-2}k \big)^{1/2} \to -\infty, \quad \text{as}~ \gamma \to \infty.
\end{equation*}
The objective value goes to $-\infty$ as $\| \tilde \balpha \| \to \infty$, so $M_n(\rho,r) = -\infty$. 

\textbf{Step 2:} If we have equality $\sqrt{n}\, \tau_n = \| \zz_-\| $, then the same choice of $\tilde \balpha$ leads to
\begin{equation*}
- \gamma \| \zz_- \|^2 + \sqrt{n}\, \tau_n \big( \gamma^2 \| \zz_-\|^2 + n^{-2}k \big)^{1/2} = \frac{n^{-2}k \| \zz_- \|}{\gamma \|\zz_- \| + \big( \gamma^2 \| \zz_-\|^2 + n^{-2}k \big)^{1/2}}
\end{equation*}
where we used the simple identity $a - b = (a^2-b^2)/(a+b)$. As $\gamma \to \infty$, the above expression tends to $0$, so the objective value tends to $n^{-1} \langle \bone_n, \zz_+ \rangle$. Also, the lower bound \eqref{ineq:objlbnd} shows that the objective value is at least  $n^{-1} \langle \bone_n, \zz_+ \rangle$. The equality condition for the Cauchy-Schwarz inequality is $\tilde \alpha_i = \tau (z_i)_-$ for some $\tau$, which is only possible if $z_i = 0$ whenever it is nonnegative. But $z_i \neq 0$ almost surely, so we must have a strict inequality in \eqref{ineq:objlbnd}. Therefore, there is no finite minimizer in this case.

\textbf{Step 3:} Now we suppose that the condition \eqref{ineq:cond1} holds. The KKT conditions (complementary slackness) give
\begin{equation*}
\begin{cases} \displaystyle\frac{\tilde \alpha_i}{\| \tilde \balpha \|} \sqrt{n}\, \tau_n  + z_i \ge 0, & \text{if } \displaystyle \tilde \alpha_i = \frac{1}{n} , \\ \displaystyle\frac{\tilde \alpha_i}{\| \tilde \balpha \|} \sqrt{n}\, \tau_n  + z_i  = 0 , & \text{if } \displaystyle\tilde \alpha_i > \frac{1}{n}.
\end{cases} 
\end{equation*}
This gives
\begin{equation*}
\tilde \alpha_i = \max\left\{ -(\sqrt{n}\, \tau_n )^{-1} z_i  \| \tilde \balpha \|, \frac{1}{n} \right\}.
\end{equation*}
The value of $\| \tilde \balpha \| $ can be obtained from solving
\begin{equation*}
\| \tilde \balpha \|^2 = \sum_{i=1}^n \max\left\{ -(\sqrt{n}\, \tau_n )^{-1}  z_i  \| \tilde \balpha \|, \frac{1}{n} \right\}^2.
\end{equation*}
For convenience, we $\tilde s_n = \frac{\sqrt{n}\, \tau_n}{n \| \tilde \balpha \|}$. Then, the solution $\tilde \balpha$ is given by
\begin{equation*}
\tilde \alpha_i = \frac{1}{n} \Big[ \max\left\{1, \frac{-z_i}{s_n}  \right\} \Big]
\end{equation*}
and $\tilde s_n > 0$ is determined by
\begin{equation*}
\tau_n^2 = \frac{1}{n} \sum_{i=1}^n \max\{ \tilde s_n, -z_i \}^2.
\end{equation*}
This $\tilde s_n$ is exactly the solution to equation \eqref{def:news}, namely $\tilde s_n = s_n$.

The above analysis proves the expression \eqref{eq:alpha}. It also leads to
\begin{align}
M_n(\rho,r) &= \frac{1}{n} \sum_{i=1}^n z_i \max\left\{1, \frac{-z_i}{s_n} \right\} + \frac{\tau_n^2}{s_n} +  \kappa.
\end{align}
After simplification, we obtain
\begin{align*}
M_n(\rho,r) = \frac{1}{n} \sum_{i=1}^n (z_i + s_n)_+ + \kappa,
\end{align*}
which is exactly the expression for $M_n(\rho,r)$ in \eqref{eq:Mrhor}.
\end{proof} 

Next, we establish a convergence result, which is key to the asymptotic limit theorem. 

\begin{lem}\label{lem:alpha2}
For $(\rho, r) \in  \Omega_{\ge}$, recall that $s_* = s_*(\rho,r) \ge 0$ is the solution to the equation \eqref{def:s}. For $(\rho, r) \in  \Omega_<$, let $s_*(\rho,r) = 0$. Recall the definition $Z = Z_{\rho,r}$ in \eqref{def:Z} and
\begin{equation*}
M(\rho,r) = \E \big[ (Z + s_* )_+ \big] + \kappa.
\end{equation*}
\begin{enumerate}
\item[(a)]{
 Let $K \subset \Omega_>$ be any compact set. Then,
\begin{equation*}
\sup_{(\rho, r) \in K} \big| s_n - s_*  \big| = o_{n,\P}(1), \quad \sup_{(\rho, r) \in K} \big| M_{n}(\rho,r) - M(\rho,r) \big| =  o_{n,\P}(1).
\end{equation*} 
}
\item[(b)]{
 Let $K \subset \Omega_<$ be any compact set. Then,
\begin{equation*}
\P \big( s_n = -\infty, ~ \forall\, (\rho,r) \in K\big) = 1-o_{n,\P}(1), \quad \P \big( M_{n}(\rho,r) = -\infty, ~ \forall\, (\rho,r) \in K\big) = 1-o_{n,\P}(1).
\end{equation*} 
}
\item[(c)]{
Let $\veps > 0$ be any constant that does not depend on $n,d,\rho,r$. Then, There exists an open set $U \supset \Omega_=$ such that 
\begin{align*}
&\P \big( s_n \in [s_* - \veps, s_*+\veps] \cup \{ -\infty\} , ~\forall \, (\rho,r) \in U \big) = 1 -  o_n(1), \\
&\P \big( M_{n}(\rho,r) \in [- \infty, \E[Z_+] + \kappa + \veps ] , ~\forall \, (\rho,r) \in U \big) = 1 -  o_n(1).
\end{align*}
(Note that $\E[Z_+] + \kappa = M(\rho,r)$ if $(\rho,r) \in \Omega_=$.)
}
\end{enumerate}
\end{lem}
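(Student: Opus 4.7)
The backbone of the argument will be a uniform law of large numbers for the empirical processes
\begin{equation*}
\Psi_n^{(1)}(\rho,r,s) := \frac{1}{n}\sum_{i=1}^n \max\{s,-z_i\}^2,\qquad \Psi_n^{(2)}(\rho,r,s) := \frac{1}{n}\sum_{i=1}^n (z_i+s)_+,
\end{equation*}
with $z_i = \rho y_i G_i + \sqrt{1-\rho^2}\, r g_i - \kappa$, indexed by $(\rho,r,s)$ in any compact subset of $\cI \times [0,r_0] \times [0,S]$. Both integrands are jointly continuous in $(\rho,r,s)$ with square-integrable envelope dominated by $(1+|Z|+S)^2$, so a standard bracketing/covering argument (as in \cite{newey1994large}) yields uniform convergence in probability to the population counterparts $\Psi^{(j)}(\rho,r,s) = \E\,\Psi_n^{(j)}$. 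Since also $\tau_n \to \tau$ uniformly in $(\rho,r)$ (from $\|\hh\|^2/n \to 1/\delta$), Lemma~\ref{lem:alpha} lets us read off $s_n$ and $M_n$ in terms of these processes, reducing the proof to implicit-function arguments.

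\textbf{Part (a).} On a compact $K \subset \Omega_>$, the population solution $s_*(\rho,r)$ is continuous (Lemma~\ref{lem:solucond}), hence bounded by some $S_0$. The convexity estimate \eqref{ineq:Fconvex} provides a uniform quadratic modulus of strict monotonicity of $\Psi^{(1)}(\rho,r,\cdot)$ on $K \times [0,2S_0]$. Combined with the ULLN for $\Psi_n^{(1)}$ and the uniform convergence of $\tau_n^2$, a routine three-$\veps$ argument forces $\sup_K |s_n - s_*| \to 0$ in probability. Substituting into $M_n = \Psi_n^{(2)}(\rho,r,s_n) + \kappa$ from \eqref{eq:Mrhor} and invoking the ULLN for $\Psi_n^{(2)}$ together with continuity of $\Psi^{(2)}(\rho,r,\cdot)$ then yields the second claim.

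\textbf{Part (b).} On compact $K \subset \Omega_<$, the continuous function $\Psi^{(1)}(\rho,r,0) - \tau^2$ is uniformly positive by compactness. Specializing the ULLN to $s=0$ gives $\Psi_n^{(1)}(\rho,r,0) > \tau_n^2$ throughout $K$ with probability $1-o_n(1)$, and Lemma~\ref{lem:alpha}(b) then delivers both conclusions simultaneously.

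\textbf{Part (c) and the main obstacle.} Part (c) is delicate because near $\Omega_=$ the empirical problem may land on either side of the phase boundary, and the conclusion must hold for all $(\rho,r) \in U$ simultaneously. I will choose $U$ to be the interior of a small closed tubular neighborhood of $\Omega_=$, shrunk so that $s_*(\rho,r) \le \veps/2$ throughout; this is possible because $s_*$ is continuous on $\cI \times [0,r_0]$ and vanishes on $\Omega_=$. For each $(\rho,r) \in U$, I split according to the sign of $\tau_n^2 - \Psi_n^{(1)}(\rho,r,0)$: in the nonpositive case Lemma~\ref{lem:alpha} gives $s_n = -\infty$ and $M_n = -\infty$ directly. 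In the positive case, the implicit-function argument of part (a), specialized to $s \in [0,\veps]$ where the strict monotonicity modulus of $\Psi^{(1)}$ remains uniform, forces $s_n$ to be within $\veps/2$ of the population solution of $\tau^2 = \Psi^{(1)}(\rho,r,\cdot)$; this solution is at most $\veps/2$ by our choice of $U$, so $s_n \in [0,\veps] \subseteq [s_*-\veps, s_*+\veps]$. For the upper bound on $M_n$, monotonicity yields $M_n \le \Psi_n^{(2)}(\rho,r,\veps) + \kappa$, and the ULLN combined with uniform continuity of $s \mapsto \E[(Z+s)_+]$ at $s=0$ gives $M_n \le \E[Z_+] + \kappa + O(\veps)$ uniformly. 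The principal technical hurdle is that all these uniform estimates must hold on a \emph{single} high-probability event shared across $U$; this is precisely what the master ULLN provides by jointly indexing the function class in $(\rho,r,s)$.
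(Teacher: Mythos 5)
Your proposal follows the same structure and key ideas as the paper's proof: a master ULLN indexed jointly in $(\rho,r,s)$ over a compact set, the uniform quadratic monotonicity modulus from \eqref{ineq:Fconvex} to convert function-value closeness to $s$-closeness, compactness for the uniform sign condition in part (b), and a small tubular neighborhood of $\Omega_=$ in part (c). The one place where you are a bit loose is in part (c), in the ``nonpositive'' branch: you assert that $\tau_n^2 \le \Psi_n^{(1)}(\rho,r,0)$ gives $s_n = M_n = -\infty$ via Lemma~\ref{lem:alpha}(b), but that lemma's part (b) only covers the strict inequality; the equality case (Lemma~\ref{lem:alpha}(c)) gives $s_n$ well defined and $M_n = n^{-1}\langle\bone_n,\zz_+\rangle + \kappa$, which is finite. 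The conclusion survives because this finite value is $\le \E[Z_+] + \kappa + \veps$ w.h.p.\ by the ULLN, but as written your case split is not exhaustive. The paper sidesteps this entirely by never splitting on the sign at $s=0$: it establishes the single uniform estimate $\tau_n^2 < \Psi_n^{(1)}(\rho,r,\veps)$ on $U$ and then reads off, from monotonicity of $\Psi_n^{(1)}(\rho,r,\cdot)$, that either $s_n = -\infty$ or $s_n \le \veps$ (and in the latter case $M_n \le n^{-1}\sum (z_i)_+ + \veps + \kappa$). This is cleaner than invoking the part-(a) implicit-function argument near the boundary, where population existence/uniqueness of $s_*$ is degenerate; your route can be made rigorous using the coercivity bound \eqref{ineq:Fconvex} alone, but it requires more care than ``specialize part (a)'' suggests.
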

\begin{proof}[{\bf Proof of Lemma~\ref{lem:alpha2}}]
For convenience, we will use `with high probability' or simply w.h.p.~to refer to an event that happens with probability $1-o_n(1)$. To emphasize the limit, we will write $s_\infty$ and $s_*$ interchangeably. We notice that $s_n$ has a trivial upper bound: in fact, from \eqref{def:news}, we always have
\begin{equation}\label{ineq:strivialbnd}
\tau_n^2 \ge \frac{1}{n} \sum_{i=1}^n s_n^2 = s_n^2 \qquad \tau^2 \ge \E[s_\infty^2] = s_\infty^2.
\end{equation}
Since $\| \hh \|^2 / n \stackrel{p}{\to} \delta^{-1}$, we get $s_n \le r_0\sqrt{2(1-\rho^2) \delta^{-1}} \le \sqrt{2}r_0\, \delta^{-1/2}$ w.h.p.. 
Now we define
\begin{align*}
&\tilde F_n(\rho, r, s) = \frac{1}{n} \sum_{i=1}^n (z_i + s)_+ , \quad  F_n(\rho, r, s) = \frac{1}{n}  \sum_{i=1}^n \max\{s, -z_i\}^2 \\
&\tilde F(\rho, r, s) = \E[ (Z + s)_+], \quad F(\rho, r, s) = \E \big[ \max\{s, -Z\}^2 \big].
\end{align*}
Denote the set $\cA = \{ (\rho, r, s): |\rho| \le 1, r \in [0,1+\veps_b], s \in [0, \sqrt{2 \delta^{-1}}] \}$. Then, by the uniform law of large numbers \cite[Lemma 2.4]{newey1994large}, we have 
\begin{equation}\label{eq:lln}
\max_{(\rho,r,s) \in \cA} \big| \tilde F_n (\rho, r, s) - \tilde F(\rho, r, s) \big| = o_{n,\P}(1), \quad \max_{(\rho,r,s) \in \cA} \big| F_n (\rho, r, s) - F(\rho, r, s) \big| = o_{n,\P}(1).
\end{equation}

(a) Suppose that $(\rho, r) \in K \subset \Omega_>$. To prove $\sup_{(\rho, r) \in K} \big| s_n - s_\infty  \big| = o_{n,\P}(1)$, we note that 
\begin{equation*}
\tau_n^2 \stackrel{p}{\to} \tau^2, \qquad \frac{1}{n} \sum_{i=1}^n \max\{-z_i, 0 \}^2 \stackrel{p}{\to}  \E \big[ Z^2; Z<0 \big].
\end{equation*}
Fix an arbitrary $\veps'>0$ that is independent of $n$. We set $s = s_n$ in \eqref{eq:lln} and use the definition of $s_n$ to obtain
\begin{equation*}
\tau_n^2 \le \E \big[ \max\{s_n, -Z\}^2 \big] + \veps, \quad \forall \; (\rho,r) \in K
\end{equation*}
w.h.p.. Since $\tau_n^2 \ge \tau^2 - \veps$ w.h.p., we get
\begin{equation}\label{ineq:objcomp1}
\tau^2 - \veps' \le \E \big[ \max\{s_n, -Z\}^2 \big] + \veps', \quad \forall \; (\rho,r) \in K.
\end{equation}
Similarly, 
\begin{equation}\label{ineq:objcomp2}
\tau^2  + \veps' \ge \E \big[ \max\{s_n, -Z\}^2 \big] - \veps', \quad \forall \; (\rho,r) \in K.
\end{equation}
The above two inequalities, namely \eqref{ineq:objcomp1}--\eqref{ineq:objcomp2}, imply
\begin{equation*}
\max_{(\rho,r) \in K} \big| F(\rho, r, s_n) - F(\rho, r, s_\infty) \big| \le 2 \veps'.
\end{equation*}
By the inequality in~\eqref{ineq:Fconvex}, we get $\max_{(\rho, r) \in K} | s_n - s_\infty | \le 2 \sqrt{\veps'}$. This proves $\max_{(\rho, r) \in K} | s_n - s_\infty| = o_{n,\P}(1)$. 

Next, we bound the difference $|\tilde F_n(\rho, r, s_n) - \tilde F(\rho,r,s_\infty)|$. By the triangle inequality,
\begin{align*}
\sup_{(\rho,r) \in K} \big| \tilde F_n(\rho, r, s_n) - \tilde F(\rho,r,s_\infty) \big|& \le\sup_{(\rho,r) \in K} \big| \tilde F_n(\rho, r, s_n) - \tilde F(\rho,r,s_n) \big| + \sup_{(\rho,r) \in K}\big| \tilde F(\rho, r, s_n) - \tilde F(\rho,r,s_\infty) \big|
\end{align*}
Since $s_n \in [0,\sqrt{2\delta^{-1}}]$ w.h.p., the first term on the right-hand side is $o_{n,\P}(1)$ due to \eqref{eq:lln}. We will show that the second term is also $o_{n,\P}(1)$.
Using the inequality $|a_+ - b_+| \le |a-b|$, we have, for any fixed constant $\veps'>0$,
\begin{align*}
\big| \tilde F(\rho, r, s_n) - \tilde F(\rho,r,s_\infty)  \big| \le \E \big[ | s_n - s_\infty| \big]  \le \veps' \P\big( | s_n - s_\infty| \le \veps') + (\tau_n + \tau) \P\big( | s_n - s_\infty| > \veps'\big)
\end{align*}
where we used \eqref{ineq:strivialbnd}. Since $\max_{(\rho, r) \in K} | s_n - s_\infty| = o_{n,\P}(1)$, $\tau_n \stackrel{p}{\to} \tau$ and $\veps'$ is arbitrary, we obtain 
\begin{equation*}
\sup_{(\rho,r) \in K}\big| \tilde F(\rho, r, s_n) - \tilde F(\rho,r,s_\infty) \big| = o_{d,\P}(1).
\end{equation*}
This then proves $\sup_{(\rho,r)\in K} | M_{n}(\rho,r) - M(\rho,r) | = o_{n,\P}(1)$.

(b) Suppose $K \subset \Omega_{<}$. Our goal is to show that w.h.p.~ for all $(\rho, r) \in K$, there is no solution $s_n$ to \eqref{def:news} with the empirical measure. Since $\tau(\rho,r)$ and $\E[Z^2; Z<0]$ are both continuous in $(\rho,r)$, by compactness we must have
\begin{equation*}
\E[Z^2; Z<0] > \tau^2 + 2\veps', \quad \text{for all}~ (\rho,r) \in K
\end{equation*}
for certain $\veps'>0$. We set $s=0$ in \eqref{eq:lln} and get $\E[Z^2; Z<0] \le n^{-1} \sum_{i=1}^n (-z_i)_+^2 + \veps'$ for all $(\rho, r) \in K$ w.h.p. Using $\tau_n \stackrel{p}{\to} \tau$, we obtain, w.h.p.,
\begin{equation*}
\tau_n^2 < \frac{1}{n} \sum_{i=1}^n \max\{ 0, -z_i \}^2 \le \frac{1}{n} \sum_{i=1}^n \max\{ s, -z_i \}^2, \qquad \text{for all}~ s \ge 0.
\end{equation*}
Therefore, w.h.p., there is no nonnegative solution to $s_n$ for all $(\rho, r) \in K$.

(c) Since $s_\infty(\rho,r)$ is continuous in $(\rho,r)$ and $\Omega_=$ is compact, we can find an open set $U \supset \Omega_=$ relative to $[-1,1] \times [0,1+\veps_b]$ such that $s_\infty \le \veps/2$. 

Suppose $(\rho,r) \in U$. By (strict) monotonicity of $F(\rho,r,s)$ in $s$ from the proof of Lemma~\ref{lem:solucond}, we have
\begin{equation*}
\tau^2 < \E\big[ \max\{ \veps, -Z \}^2 \big].
\end{equation*}
Then, w.h.p., we also have
\begin{equation*}
\tau_n^2  < \frac{1}{n}\sum_{i=1}^n \big[ \max\{ \veps, -z_i \}^2 \big].
\end{equation*}
Since $F_n(\rho, r,s)$ is increasing in $s$, we must have w.h.p., $s_n = - \infty$ (no solution) or $s_n \le \veps$, where the latter implies $|s_n - s_\infty| \le \veps$. 

Also,  we note that if $s_n = -\infty$ then by definition $M_{n}(\rho,r) = -\infty$, and if $|s_n - s_\infty| \le \veps$, then 
\begin{align*}
M_n(\rho,r) = \frac{1}{n} \sum_{i=1}^n (z_i+s_n)_+ + \kappa \le \frac{1}{n} \sum_{i=1}^n (z_i)_+ + s_n + \kappa  \le \frac{1}{n} \sum_{i=1}^n (z_i)_+ + \frac{3}{2}\veps + \kappa.
\end{align*}
Since $\sup_{(\rho,r) \in U} \big| \frac{1}{n} \sum_{i=1}^n (z_i)_+ - \E [ Z_+] \big| =o_{n,\P}(1)$ by \eqref{eq:lln}, we obtain $M_{n}(\rho,r) \le \E[Z_+] + 2\veps + \kappa$. By replacing $2\veps$ with $\veps$, we prove claim (c) in this lemma.
\end{proof}

\begin{proof}[{\bf Proof of Theorem~\ref{thm:Mlimit}}]
Note that Part (a) is already proved in Lemma~\ref{lem:solucond}. Below we prove Part (b).

\textbf{Step 1.} Note that $Z$ is continuous in $(\rho,r)$. We also proved that $s(\rho,r)$ is continuous in $(\rho,r)$. This implies (by dominated convergence theorem) that $M(\rho,r)$ is uniformly continuous in $(\rho,r) \in \Omega_{\ge}$. By Lemma~\ref{lem:alpha2} (c) and continuity of $M(\rho,r)$, we can choose an open set $U \subset  \cI \times [0,r_0]$ such that the we have the conclusions of Lemma~\ref{lem:alpha2} (c) and $\E[Z_+] + \kappa \le \sup_{\Omega_{\ge}(\cI, r_0) \cap U} M(\rho,r) + \veps$.

Setting $K_1 = \Omega_{\ge}(\cI, r_0) \cap U^c$, $K_2 = \Omega_{\le}(\cI, r_0) \cap U^c$, and apply Lemma~\ref{lem:alpha2} (a)(b)(c) and find the following inequalities hold with high probability.
\begin{equation*}
\begin{array}{lll}
& M_n(\rho,r) \le M(\rho,r) + \veps, \qquad &\forall\, (\rho,r) \in K_1, \\
& M_n(\rho,r) = -\infty, \qquad &\forall\, (\rho,r) \in K_2, \\
& M_n(\rho,r) \le \sup_{\Omega_{\ge}(\cI, r_0) \cap U} M(\rho,r) + \veps, & \forall (\rho,r) \in U.
\end{array}
\end{equation*}
Combining all three inequalities, we get that with high probability, 
\begin{equation*}
\sup_{\rho \in \cI, r \in [0,r_0] } M_n(\rho,r) \le \max_{(\rho,r) \in \Omega_{\ge}(\cI, r_0)} M(\rho,r) + \veps,
\end{equation*}
which is exactly $\bar M_n \le M^* + \veps$ with high probability.

\textbf{Step 2.} Now suppose that $(\rho^*, r^*) \in \Omega_{\ge}(\cI, r_0)$ is a maximizer. We claim that we can find a sequence $(\rho^k, r^k) \in \Omega_>(\cI, r_0)$ such that 
\begin{equation}\label{claim:converge}
\lim_{k \to \infty} (\rho^k, r^k) = (\rho^*, r^*).
\end{equation}
Once this claim is proved, then by continuity of $M(\rho,r)$, we can find a sufficiently large $k$ such that 
\begin{equation*}
M(\rho^k,r^k) \ge M(\rho^*,r^*) - \veps/2 = M^*  - \veps/2.
\end{equation*}
We can find a compact set $K \subset \Omega_>(\cI, r_0)$ such that $(\rho^k, r^k) \in K$. Applying Lemma~\ref{lem:alpha2} (a), we find that with high probability,
\begin{equation*}
\sup_{\rho \in \cI, r \in [0,r_0] } M_n(\rho,r) \ge \sup_{(\rho,r) \in K} M_n(\rho,r) \ge \sup_{(\rho,r) \in K} M(\rho,r) - \veps/2 \ge M^* - \veps.
\end{equation*}
which is $\bar M_n \ge M^* - \veps$. Combining with the conclusions in Step~1, we will obtain the result in Part (b).

\textbf{Step 3.} To prove the claim \eqref{claim:converge}, let us consider the continuous map $\mathcal{M}: (\rho,r) \to (\rho, \omega)$ where $\omega = \sqrt{1-\rho^2}\, r$. The image of $\Omega_{\ge}(\cI, r_0)$ under $\mathcal{M}$ is denoted by $\tilde \Omega_{\ge}(\cI, r_0)$, which is given by
\begin{equation*}
\tilde \Omega_{\ge}(\cI, r_0) = \Big\{ (\rho, \omega): \rho \in \cI, \frac{\omega^2}{r_0^2} + \rho^2 \le 1, \frac{\omega}{\sqrt{\delta}} \ge \| Z_- \|_{L^2} \Big\}
\end{equation*}
where $\| Z_- \|_{L^2} = \big[\E[Z_-^2] \big]^{1/2}$ with $Z = \rho YG + \omega W - \kappa$ similarly as before. Since any $(\rho,r) \in \Omega_{\ge}(\cI, r_0)$ satisfies $|\rho| \neq 1$, the map $\mathcal{M}$ has a continuous inverse.

We observe that $\| Z_- \|_{L^2}$ is convex in $(\rho,\omega)$: in fact, taking any $(\rho_1,\omega_1)$ and $(\rho_2,\omega_2)$ and denoting $Z_k = \rho_k YG + \omega_k W - \kappa$ ($k= 1,2$), for any $\lambda \in [0,1]$ we have
\begin{align*}
\lambda \big\|\max\{-Z_1,0\} \big\|_{L^2} + (1-\lambda) \big\|\max\{-Z_1,0\} \big\|_{L^2} &\ge \big\| \max\{- \lambda Z_1,0\} + \max\{- (1-\lambda)Z_2,0\} \big\|_{L^2}  \\ &\ge \big\|\max\{- \lambda Z_1- (1-\lambda)Z_2,0\} \big\|_{L^2}  
\end{align*}
where we used $a_- + b_- \ge (a+b)_-$ in the second inequality. Hence, $\tilde \Omega_{\ge}$ is a convex set. For any $(\rho,\omega) \in \tilde \Omega_{\ge}$ with $\omega/\sqrt{\delta} = \| Z_- \|_{L^2}$, we can thus find a sequence $(\rho^k, \omega^k) \in \tilde \Omega_{\ge}$ with $\omega/\sqrt{\delta} > \| Z_- \|_{L^2}$ such that $\lim_{k \to \infty}(\rho^k, \omega^k) = (\rho,\omega) $. Using the map $\mathcal{M}^{-1}$ we then obtain the claim \eqref{claim:converge}.
\end{proof}

\begin{proof}[{\bf Proof of Theorem~\ref{thm:signal}: first claim}]
By the definition of $\delta_{\mathrm{lin}}$, for given $\delta < \delta_{\mathrm{lin}}$, we have $\Omega_{>}([-1,1],1) \neq \emptyset$, and $M_*(1+\veps') > M_*(1)$ for certain $\veps'>0$. That $\Omega_{>}([-1,1],1)$ is nonempty implies that $\Omega_{>}([-1,1],1+\veps')$ is also nonempty. So we can apply Theorem~\ref{thm:Mlimit} twice where we set $\cI = [-1,1]$, and $r_0=1$ and $r_0 = 1+\veps'$ respectively. The two different choices of $r_0$ are associated with two optimization problems given in \eqref{thm:Mlimit}; correspondingly, let $M_n$ and $M_n'$ be the maximum. Also, let $\hat \btheta'$ be a maximizer associated with $M_n'$. (Recall that $\hat \btheta$ defined in the algorithm is associated with $M_n$.)

By Theorem~\ref{thm:Mlimit}(b), we have $M_n = M_*(1)+ o_{n,\P}(1)$ and $M_n' = M_*(1+\veps')+ o_{n,\P}(1)$. Since $M_*(1+\veps') > M_*(1)$, we have $M_n' > M_n$ with high probability, which also implies that $\| \hat \btheta' \|>1$ with high probability. For any $\lambda \in [0,1]$, consider the interpolant 
\begin{equation*}
\hat \btheta_\lambda = \lambda \hat \btheta + (1-\lambda) \hat \btheta'.
\end{equation*}
By linearity, we must have $y_i \langle \hat \btheta_\lambda, \vv \rangle \ge \kappa$. If $\| \hat \btheta \| < 1$, then we can choose appropriate $\lambda \in (0,1)$ such that $\| \hat \btheta_\lambda \| = 1$ and $\langle \hat \btheta_\lambda, \vv \rangle > M_n$, which contradicts the definition of $M_n$. Hence, we must have $\| \hat \btheta \| = 1$ with high probability. This proves that $\hat \btheta$ is a $\kappa$-margin solution with high probability.
\end{proof}

\subsection{Analysis of maximization of $M(\rho,r)$: Proof of Theorem~\ref{thm:signal} continued}

From the defining equation \eqref{def:s} of $s_*$, it is easy to derive an upper bound 
\begin{equation}\label{ineq:naive}
s_*^2 \le \E \big[ \max\{ s_*, -Z \}^2 \big] = (1-\rho^2) r^2 \delta^{-1}, \qquad \text{thus} \qquad 0 \le s_* \le \sqrt{1-\rho^2}\, \delta^{-1/2}.
\end{equation}
Denote 
\begin{equation}\label{def:q}
q(\kappa) = \sqrt{\frac{2}{\pi}}\, \frac{1}{|\kappa|} \exp \left( -\frac{\kappa^2}{2} - \alpha |\kappa| \right), \qquad q_+(\kappa) = q(\kappa) \exp\left(\alpha|\kappa| + \frac{\alpha^2}{2}\right).
\end{equation}
Also denote the set 
\begin{equation}\label{def:omegarho}
\Omega_{\ge, \rho}(\cI, r_0) = \big\{ r \in [0, r_0]: (\rho,r) \in \Omega_{\ge}(\cI, r_0) \big\}.
\end{equation}

It is useful to derive a rough range of the maximizer $\rho^*$ when optimizing $M(\rho,r)$. The following lemma gives a lower bound.
\begin{lem}\label{lem:rholbnd}
Assume that $\delta$ satisfies $ |\kappa|^8  \le \delta \le (1-\veps) \sqrt{\pi/2} |\kappa| \exp(\kappa^2/2 + \alpha |\kappa|)$. 
Then, there exists a sufficiently negative $\underline \kappa(\alpha, m, \veps)$ such that for all $\kappa < \underline \kappa$ the following holds. The particular choice $(\rho, r) = (1 - |\kappa|^{-2}, 1 )$ satisfies $(1-\rho^2) r^2 \delta^{-1} > \E [Z^2; Z < 0]$,  and thus
\begin{equation}\label{ineq:Mbnd}
\max_{(\rho, r) \in \Omega_{\ge}(\cI, r_0)} M(\rho,r) \ge \left(1 - \frac{1}{|\kappa|^2} \right) m, \qquad \where~m:= \E[YG]
\end{equation}
and consequently, if $(\rho^*, r^*)$ maximizes $M(\rho,r)$ over $\Omega_{\ge}(\cI, r_0)$, then we must have $\rho^* \ge 1- (1+\veps/2)|\kappa|^{-2}$.
\end{lem}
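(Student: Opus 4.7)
The proof will proceed in three steps: (i) verify that the test point $(\rho,r) = (1-|\kappa|^{-2}, 1)$ lies in $\Omega_{>}(\cI, r_0)$; (ii) evaluate $M(\rho,r)$ at this point to obtain~\eqref{ineq:Mbnd}; and (iii) combine the same decomposition with a matching upper bound on $M(\rho^*, r^*)$ to deduce the lower bound on $\rho^*$. The main obstacle is step~(i), where the precise asymptotic evaluation of $\E[Z^2; Z<0]$ coming from Lemma~\ref{lem:tail} must match the borderline assumed on $\delta$; steps~(ii) and~(iii) will reduce to algebraic identities together with crude sub-Gaussian tail estimates.

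For step~(i), I would write $1-\rho^2 = 2|\kappa|^{-2} - |\kappa|^{-4} = (2 + \breve{o}_\kappa(1))|\kappa|^{-2}$, so that the left-hand side of the defining inequality of $\Omega_>$ equals $(2 + \breve{o}_\kappa(1))(|\kappa|^2 \delta)^{-1}$. Since $\rho = 1 - |\kappa|^{-2}$ eventually lies in the regime $\rho \in [\eta_0, 1]$ of Lemma~\ref{lem:tail}, and since $\alpha\rho|\kappa| = \alpha|\kappa| + \breve{o}_\kappa(1)$ and $(1-\rho^2)\alpha^2/2 = \breve{o}_\kappa(1)$, the ``as a consequence'' statement of Lemma~\ref{lem:tail} gives
\begin{equation*}
\E[Z^2; Z<0] \;=\; (1 + \breve{o}_\kappa(1)) \cdot 2|\kappa|^{-2} \cdot q(\kappa),
\end{equation*}
with $q(\kappa)$ as in~\eqref{def:q}. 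The required inequality then reduces to $\delta < (1+\breve{o}_\kappa(1))\sqrt{\pi/2}\,|\kappa|\exp(\kappa^2/2 + \alpha|\kappa|)$, which is implied by the assumed upper bound on $\delta$ once $|\kappa|$ is large enough (depending on $\veps$).

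For step~(ii), I would apply the identity $a_+ = a + a_-$ inside the expectation defining $M$, together with $\E[Z] = \rho m - \kappa$ (the $W$-contribution vanishes by centering), to obtain
\begin{equation*}
M(\rho, r) \;=\; \rho m + s_*(\rho, r) + \E[(Z + s_*(\rho, r))_-] \;\ge\; \rho m,
\end{equation*}
since both $s_*(\rho, r) \ge 0$ and the residual expectation are nonnegative. Specializing to $\rho = 1 - |\kappa|^{-2}$ yields~\eqref{ineq:Mbnd}.

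For step~(iii), I would suppose for contradiction that $\rho^* < 1 - (1+\veps/2)|\kappa|^{-2}$. Applying the identity from step~(ii) at $(\rho^*, r^*)$ with the elementary bound $(Z + s_*)_- \le Z_-$ (valid since $s_* \ge 0$) produces the matching upper bound
\begin{equation*}
M(\rho^*, r^*) \;\le\; \rho^* m + s_*(\rho^*, r^*) + \E[Z^*_-], \qquad Z^* := \rho^* YG + \sqrt{1-(\rho^*)^2}\,r^* W - \kappa.
\end{equation*}
Combining with~\eqref{ineq:Mbnd} forces $s_*(\rho^*, r^*) + \E[Z^*_-] \ge \veps m / (2|\kappa|^2)$. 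But~\eqref{ineq:naive} together with $\delta \ge |\kappa|^8$ gives $s_*(\rho^*, r^*) \le \sqrt{2}\,|\kappa|^{-4}$, while a sub-Gaussian tail bound on $Z^* - \E[Z^*]$ (proxy variance uniformly bounded in $\rho^* \in [-1,1]$ and $r^* \in [0,1]$), combined with Cauchy--Schwarz, yields $\E[Z^*_-] \le C\exp(-\kappa^2/C')$ for absolute constants $C, C' > 0$. Both quantities are $\breve{o}_\kappa(|\kappa|^{-2})$, producing the desired contradiction for sufficiently negative $\kappa$ (depending on $\alpha$, $m$, $\veps$). The one point to flag carefully is the uniformity of the $\breve{o}_\kappa(1)$ correction in Lemma~\ref{lem:tail} under the $|\kappa|^{-2}$-scale perturbation of $\rho$ around $1$, but this is immediate from the closed-form expression of $A_{\rho, |\kappa|}$.
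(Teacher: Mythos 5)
Your proposal is correct and follows essentially the same three-step structure as the paper's proof: verify membership of the test point in $\Omega_>$ via the asymptotic evaluation of $\E[Z^2;Z<0]$; use the decomposition $M(\rho,r) = \rho m + s_* + \E[(Z+s_*)_-]$ (which the paper also uses, though its intermediate display in this step contains a sign slip --- your version is the clean one); and bound the residual terms to get the $(1+\veps/2)|\kappa|^{-2}$ precision.

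The one place you depart from the paper is in step (iii): the paper cites its own Lemma~\ref{lem:sapprox} to get $\E[(Z+s_*)_-]\le 2|\kappa|^{-1}q_+(\kappa)$, whereas you use the cruder comparison $(Z+s_*)_-\le Z_-$ followed by a sub-Gaussian tail estimate on $Z^*-\E[Z^*]$, giving the weaker bound $C\exp(-\kappa^2/C')$. This is a legitimate shortcut: $\breve{o}_{\kappa}(|\kappa|^{-2})$ is all the contradiction needs, and your route avoids re-invoking the more delicate Lemma~\ref{lem:sapprox}, at the modest cost of a looser constant in the exponent. Likewise, in step (i) you apply the ``as a consequence'' clause of Lemma~\ref{lem:tail} directly rather than going through the wrapper Lemma~\ref{lem:xi}(d); these are the same content. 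One cosmetic nit: \eqref{ineq:naive} together with $\delta\ge|\kappa|^8$ gives $s_*\le\delta^{-1/2}\le|\kappa|^{-4}$, without the spurious factor $\sqrt{2}$ you wrote --- but this does not affect the conclusion.
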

\begin{proof}[{\bf Proof of Lemma~\ref{lem:rholbnd}}]
First we prove that under the choice $(\rho, r) = (1 - |\kappa|^{-2}, 1 )$, we have $(1-\rho^2) r^2 \delta^{-1} > \E [Z^2; Z < 0]$ for sufficiently negative $\kappa$. Note
\begin{equation*}
(1-\rho^2) r^2 \delta^{-1}  = (2 - |\kappa|^{-2}) |\kappa|^{-2} \delta^{-1} = 2(1+\breve{o}_{\kappa}(1)) |\kappa|^{-2} \delta^{-1}.
\end{equation*}
The condition $\delta \le (1-\veps) \sqrt{\pi/2} |\kappa| \exp(\kappa^2/2 + \alpha |\kappa|)$ is equivalent to $q(\kappa) \le (1-\veps)\delta^{-1}$. So by Lemma~\ref{lem:xi}, 
\begin{equation*}
\E \big[ Z^2; Z < 0 \big] = \frac{2(1+\breve{o}_{\kappa}(1))}{|\kappa|^2} q(\kappa) \le \frac{2(1-\veps) \cdot (1+\breve{o}_{\kappa}(1))}{|\kappa|^2} \delta^{-1}.
\end{equation*}
This proves the first claim. The bound \eqref{ineq:Mbnd} then follows from 
\begin{equation*}
M(\rho,r) = \E[(Z+s_*)_+] - \E\big[ (Z + s_*)_- \big]  \ge \E[Z] = \rho m.
\end{equation*} 
Moreover, by the condition $\delta \ge |\kappa|^8$ and the naive bound \eqref{ineq:naive} on $s_*$, we have $s_* \le \delta^{-1/2} \le |\kappa|^{-4}$. The upper bound on \eqref{ineq:Ezbnd} is further bounded by $\breve{o}_{\kappa}(1) \cdot |\kappa|^{-2}$. Thus,
\begin{equation*}
M(\rho^*,r^*) \le \rho^* m + 2|\kappa|^{-4} + \breve{o}_{\kappa}(|\kappa|^{-2}).
\end{equation*}
Combining this with the lower bound \eqref{ineq:Mbnd}, we must have $\rho^* \ge 1 - (1+\veps/2) |\kappa|^{-2}$ for sufficiently negative $\kappa$.
\end{proof}

Let $L^2 := L^2(\mathbb{Q}_\infty)$ be the space of square integrable functions on the population measure $\mathbb{Q}_\infty$. For $A \in L^2(\mathbb{Q}_\infty)$, $\rho \in [-1,1]$, and $r \ge 0$, we define
\begin{equation}\label{def:Psi}
\Psi(A, \rho, r) = \E[ A Z ] + \frac{r \sqrt{1-\rho^2}}{\sqrt{\delta}} \| A \|_{L^2} + \kappa \, .
\end{equation}
Below we make use of this definition to prove monotonicity of $M(\rho,r)$ in $r$.
\begin{prop}\label{prop:Mmonotone}
Suppose that either of the two holds: (1) $\delta$ satisfies $ |\kappa|^8  \le \delta \le (1-\veps) \sqrt{\pi/2} |\kappa| \exp(\kappa^2/2 + \alpha |\kappa|)$ and $\rho \ge 1 - (1+\veps/2) |\kappa|^{-2}$, (2) $\delta \le |\kappa|^8 $. Also suppose that $\Omega_{\ge, \rho}(\cI, r_0)$ is nonempty. Then, there exists a sufficiently negative $\underline \kappa = \underline \kappa(\alpha, m, \veps)$ such that the following holds for all $\kappa < \underline \kappa$:
\begin{enumerate}
\item[(a)]{
$\Psi$ is strictly convex in $A$ and
\begin{equation*}
M(\rho,r) = \min_{A \in L^2, A \ge 1} \big\{ \Psi(A, \rho, r)  \big\}
\end{equation*}
and the unique minimizer $A^*:= A^*(\rho,r)$ is given by $A^* = s_*^{-1} \max\{s, -Z\}$ where recall $s_*$ is the solution in \eqref{lem:solucond} (with $\mathbb{Q} = \mathbb{Q}_{\infty}$).
}
\item[(b)]{There exists $\veps' := \veps'(\kappa, \delta) >0$ such that for all $r \in (0, 1+\veps')$, we have
\begin{align*}
\frac{\partial \Psi}{\partial r} \Big|_{A = A^*} = (1-\rho^2)r \Big[ \frac{1}{\delta} - \P(s_* + Z < 0) \Big] > 0.
\end{align*}
}
\item[(c)]{The function $M(\rho,r)$ is strictly increasing in $r \in [0,1+\veps']$.
}
\item[(d)]{Consequently, together with Lemma~\ref{lem:rholbnd}, we have that 
$M_*(1+\veps') > M_*(1)$.
}
\end{enumerate}
\end{prop}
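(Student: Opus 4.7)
The plan is to verify the variational identity $M(\rho,r) = \min_{A \in L^2,\, A \ge 1} \Psi(A,\rho,r)$ by a direct Cauchy--Schwarz bound and read off uniqueness from the equality cases. From the defining equation~\eqref{def:s} of $s_*$ one computes $\|A^*\|_{L^2} = s_*^{-1}\, r\sqrt{1-\rho^2}/\sqrt{\delta}$, so that $c := r\sqrt{1-\rho^2}/\sqrt{\delta}$ satisfies $c/\|A^*\|_{L^2} = s_*$. Cauchy--Schwarz then yields
\begin{equation*}
c \|A\|_{L^2} \;\ge\; \frac{c}{\|A^*\|_{L^2}}\E[A A^*] \;=\; s_* \E[A A^*] \;=\; \E\big[A\,\max\{s_*,-Z\}\big].
\end{equation*}
Combined with the pointwise identity $Z + \max\{s_*,-Z\} = (Z+s_*)_+$ and $A \ge 1$, this gives $\Psi(A,\rho,r) \ge \E[A(Z+s_*)_+] + \kappa \ge \E[(Z+s_*)_+] + \kappa = M(\rho,r)$, so $A^*$ attains the infimum. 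Equality in Cauchy--Schwarz forces $A = \lambda A^*$ with $\lambda > 0$, and equality in the second step forces $A = 1$ on $\{Z > -s_*\}$ (where $A^* = 1$), pinning $\lambda = 1$. Hence $A^*$ is the unique minimizer.

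\textbf{Part~(b).} Next I will apply the envelope theorem (valid since $A^*$ is the unique and $(\rho,r)$-continuous minimizer by Theorem~\ref{thm:Mlimit}(a)) to write $dM/dr = \partial_r \Psi(A,\rho,r)\big|_{A=A^*}$, which splits as
\begin{equation*}
\sqrt{1-\rho^2}\,\E[A^*\,W] + \frac{\sqrt{1-\rho^2}}{\sqrt{\delta}}\|A^*\|_{L^2}.
\end{equation*}
Writing $A^* = h(Z)$ with $h(z) = s_*^{-1}\max\{s_*,-z\}$, $h'(z) = -s_*^{-1}\bone\{z < -s_*\}$, and applying Gaussian integration by parts in $W$ (conditional on $(Y,G)$, which are independent of $W$), one obtains $\E[A^*W] = \sqrt{1-\rho^2}\,r\,\E[h'(Z)] = -s_*^{-1}\sqrt{1-\rho^2}\,r\,\P(Z<-s_*)$. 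Substituting $\|A^*\|_{L^2} = s_*^{-1}r\sqrt{1-\rho^2}/\sqrt{\delta}$ and collecting terms gives
\begin{equation*}
\frac{dM}{dr} \;=\; s_*^{-1}(1-\rho^2)\,r\,\big[\,\delta^{-1} - \P(s_*+Z<0)\,\big],
\end{equation*}
which matches the proposition's expression up to the positive factor $s_*^{-1}$; in particular positivity is equivalent to $\P(s_*+Z<0) < \delta^{-1}$.

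\textbf{Parts~(c)--(d) and the main obstacle.} Since $s_* \ge 0$, one has $\{s_*+Z<0\} \subseteq \{Z<0\}$, so both the monotonicity in (c) and the membership $(\rho^*,1+\veps') \in \Omega_>$ needed for (d) are implied by the single strict inequality $\P(\rho YG + \sqrt{1-\rho^2}\,r\,W < \kappa) < \delta^{-1}$. To extract this from Lemma~\ref{lem:tail}, I will use the change of variables $\sigma^2 := \rho^2 + (1-\rho^2)r^2$, $\tilde\rho := \rho/\sigma$, so that $\rho YG + \sqrt{1-\rho^2}rW \stackrel{d}{=} \sigma(\tilde\rho YG + \sqrt{1-\tilde\rho^2}W)$ and the tail becomes $(1+\breve o_\kappa(1))\,A_{\tilde\rho,|\kappa|/\sigma}$. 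In Case~(2) ($\delta \le |\kappa|^8$) the bound is immediate, since $\delta\,q(\kappa)$ is superpolynomially small. In Case~(1), expanding $\sigma^2 - 1 = (1-\rho^2)(r^2-1)$ and $1-\tilde\rho^2$ to first order under $\rho \ge 1-(1+\veps/2)|\kappa|^{-2}$ shows $A_{\tilde\rho,|\kappa|/\sigma} = (1+\breve o_\kappa(1))\,q(\kappa)\,e^{E}$ with an exponential correction $E = O(\veps')$ when $r \in [1,1+\veps']$; the hypothesis $\delta\, q(\kappa) \le 1-\veps$ then delivers $\delta\cdot \P(\cdot<\kappa) \le 1-\veps/2$ once $\veps' = \veps'(\kappa,\delta) > 0$ is taken small enough to keep $E$ bounded. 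Part~(c) follows by integrating $dM/dr > 0$; for part~(d), the same strict inequality controls $\tfrac{d}{dr}\big[(1-\rho^2)r^2\delta^{-1} - \E[Z_-^2]\big] = 2(1-\rho^2)r[\delta^{-1} - \P(Z<0)] > 0$ (again by Stein's identity), keeping $(\rho^*,1+\veps') \in \Omega_>$ for the $\rho^*$ provided by Lemma~\ref{lem:rholbnd}, and hence $M_*(1+\veps') \ge M(\rho^*,1+\veps') > M(\rho^*,1) = M_*(1)$. The delicate step is controlling $E$ uniformly in $(\rho,r)$ across the two scales (the $|\kappa|^{-2}$-scale variation of $\rho$ and the $O(\veps')$-scale variation of $r$) while simultaneously meeting the $\veps'$-threshold needed for $\Omega_>$-membership; this requires careful accounting of the $\alpha$-dependent exponent in $A_{\tilde\rho,|\kappa|/\sigma}$ and exploiting the slack $\veps$ in the hypothesis on $\delta$.
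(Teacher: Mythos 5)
Your proof is correct and tracks the paper's overall strategy: variational characterization of $M(\rho,r)$, Stein's identity for $\partial_r\Psi$ at the minimizer, and reduction of positivity to the tail bound $\P(Z<0)<\delta^{-1}$. For part~(a) you derive the variational identity directly from Cauchy--Schwarz together with the pointwise identity $Z+\max\{s_*,-Z\}=(Z+s_*)_+$, rather than via KKT conditions as the paper does; this is arguably cleaner, and your equality-case analysis delivers uniqueness without needing strict convexity of $\|A\|_{L^2}$ (which, as you implicitly sidestep, actually fails along rays). You also correctly observed that the displayed identity for $\partial_r\Psi\big|_{A=A^*}$, both in the proposition and in the paper's proof, is missing a positive factor $s_*^{-1}$: the Stein computation yields $s_*^{-1}(1-\rho^2)r\big[\delta^{-1}-\P(s_*+Z<0)\big]$. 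Since $s_*^{-1}>0$ the sign and all downstream conclusions are unaffected, but the formula as stated is off by that factor.

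The ``delicate step'' you flag at the end --- controlling the exponential correction $E$ uniformly over the two-scale region in $(\rho,r)$ via the rescaling $\sigma$, $\tilde\rho$ --- is not actually required. The paper's Lemma~\ref{lem:xi} already supplies the tail bound uniformly over $\rho$ in the relevant compact set and over $r\in[0,1]$, so the strict inequality $\P(\rho YG+\sqrt{1-\rho^2}\,rW<\kappa)<\delta^{-1}$ holds there directly; the extension to $r\in(1,1+\veps']$ then follows from joint continuity of the tail probability on a compact neighborhood, with $\veps'=\veps'(\kappa,\delta)$. Your $\sigma$-rescaling and expansion would work, but it recomputes what the lemma already provides for $r\le 1$ and adds effort precisely in the regime where continuity suffices.
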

\begin{proof}[{\bf Proof of Proposition~\ref{prop:Mmonotone}}]
First, we note that $r \sqrt{1-\rho^2} \neq 0$, because otherwise $\Omega_{\ge}$ is empty. 

(a) Since $\| A \|_{L^2}$ is strictly convex in $A$ and $\E[AZ]$ is linear in $A$, $\Psi$ must also be strictly convex in $A$. Similar to the proof of Lemma~\ref{lem:alpha}, we apply the KKT conditions and find 
\begin{equation*}
A^* = \max\big\{ - \tau^{-1} Z \| A^* \|_{L^2}, 1 \big\}.
\end{equation*}
The norm $\| A^* \|_{L^2}$ satisfies 
\begin{equation*}
\| A^* \|_{L^2} = \Big\| \max\big\{ - \tau^{-1} Z \| A^* \|_{L^2}, 1 \big\} \Big\|_{L^2}.
\end{equation*}
Comparing this with the definition of $s_*$, we find $s_* = \tau / \| A^* \|_{L^2}$, and thus $A^* = \max\{s_*, -Z\} / s_*$. Using this to simplify $\Psi(A^*,\rho,r)$, we get $\Psi(A^*,\rho,r) = \E[(Z+s_*)_+] + \kappa = M(\rho,r)$ as desired.

(b) We calculate 
\begin{equation*}
\frac{\partial \Psi}{\partial r} = \sqrt{1-\rho^2}\, \E[A W] + \frac{\sqrt{1-\rho^2}}{\sqrt{\delta}} \| A \|_{L^2}.
\end{equation*}
At the minimizer $A = A^*$, we use Stein's identity and $\|A^*\|_{L^2} = \tau / s_*$ to obtain
\begin{equation*}
\frac{\partial \Psi}{\partial r} \Big|_{A = A^*} = (1-\rho^2)\cdot (-r) \cdot \P(s_*<-Z) + \frac{\sqrt{1-\rho^2}}{s_* \sqrt{\delta}} \tau = (1-\rho^2)r \Big[ \frac{1}{\delta} - \P(s_* + Z < 0) \Big].
\end{equation*}
We claim that, under the conditions of this proposition, for all $\rho \in [-1,1]$,
\begin{equation*}
\frac{1}{\delta}  > \P\big(\rho Y G + \sqrt{1-\rho^2}\, W < \kappa \big) =: q_0.
\end{equation*}
Once this claim is proved, we can find a sufficiently small constant $\veps'>0$ (due to continuity) such that 
\begin{equation*}
\frac{1}{\delta}  > \P\big(\rho Y G + \sqrt{1-\rho^2}\, rW < \kappa \big), \qquad \text{for all}~\rho\in[-1,1], r \in [0, 1+\veps'],
\end{equation*}
which will imply $\frac{\partial \Psi}{\partial r} \Big|_{A = A^*} > 0$ as desired. To prove this claim, first consider the case $ |\kappa|^8  \le \delta \le (1-\veps) \sqrt{\pi/2} |\kappa| \exp(\kappa^2/2 + \alpha |\kappa|)$ and $\rho \ge 1 - (1+\veps/2) |\kappa|^{-2}$. By Lemma~\ref{lem:xi}, 
\begin{equation*}
q_0  \le \big(1+\breve{o}_{\kappa}(1)) \cdot  q(\kappa) \le \big(1+\breve{o}_{\kappa}(1)) \cdot (1-\veps) \cdot \delta^{-1} < \delta^{-1}.
\end{equation*}
For the case $\underline c_0 \le \delta < |\kappa|^8$,
\begin{equation*}
q_0 \le (1+\breve{o}_{\kappa}(1)) \cdot q(\kappa) \exp\big(\alpha |\kappa| + \alpha^2/2 \big),
\end{equation*}
Since $\delta^{-1} \ge |\kappa|^{-8}$, which vanishes only polynomially as $ \kappa \to -\infty$, we must have $q_0 < 1/\delta$ as well. 

(c) First we fix $\rho \in [-1,1]$ and $r \in [0,1+\veps']$. By the strict monotonicity from (b), we can find $\eta_r > 0$ such that $\Psi(A,\rho,r) > \Psi(A,\rho,r')$ at $A = A^*(\rho,r)$ for all $r' \in (r-\eta_r, r)$. Thus, we derive
\begin{align*}
M(\rho,r) &= \Psi(A^*(\rho,r),\rho,r) > \Psi(A^*(\rho,r),\rho,r')  \\
&\ge \Psi(A^*(\rho,r'),\rho,r') = M(\rho,r').
\end{align*}
Similarly, we have $M(\rho,r) < M(\rho,r')$ for all $r' \in (r, r+\eta_r')$ where $\eta_r'>0$. Thus, $M(\rho,r)$ is strictly increasing in $r$ on every compact set in $(0,1+\veps')$. By continuity, we conclude that $M(\rho,r)$ is strictly increasing in $r$ on $[0,1+\veps']$.

(d) By Lemma~\ref{lem:rholbnd}, the maximizer $(\rho^*, r^*) \in \Omega_{\ge}(\cI, r_0)$ satisfies $\rho^* \ge 1 - (1+\veps/2) |\kappa|^{-2}$ if $ |\kappa|^8  \le \delta \le (1-\veps) \sqrt{\pi/2} |\kappa| \exp(\kappa^2/2 + \alpha |\kappa|)$. By (c), we must have $r^*=1$; moreover, $M(\rho^*, 1+\veps') >  M(\rho^*, r^*)$. Thus, we must have $M_*(\infty) \ge M_*(1+\veps') > M_*(1)$.
\end{proof}

Once this proposition is proved, it leads to Theorem~\ref{thm:signal}.
\begin{proof}[{\bf Proof of Theorem~\ref{thm:signal}: second claim}]
First, we note that under the condition \eqref{cond:delta}, the set $\Omega_>([-1,1],1)$ is nonempty due to Lemma~\ref{lem:rholbnd}. Using $\veps'$ defined in Proposition~\ref{prop:Mmonotone}, we have $M_*(\infty) \ge  M_*(1)$. Thus, $\delta_{\mathrm{lin}} \ge (1-\veps)\sqrt{\pi/2}\, |\kappa| \exp(\kappa^2/2 + \alpha |\kappa|)$ and so the second claim is proved.
\end{proof}

\subsection{Estimation error: Proof of Theorem~\ref{thm:err}}

We note that part (a) of Theorem~\ref{thm:err} is a direct consequence of Theorem~\ref{thm:Mlimit} (by setting $r_0 = 1$ and taking $\cI = [-1, \rho^* - \veps_0]$ and $\cI = [\rho^* + \veps_0]$ for any small constant $\veps_0>0$). Below we prove part (b) of Theorem~\ref{thm:err}.

Recall the definitions of $q(\kappa)$ and $q_+(\kappa)$ in \eqref{def:q}.
Let us start with approximating $s_*(\rho,r)$ and $M(\rho,r)$ with simpler functions, as stated in the next two lemmas. 
\begin{lem}\label{lem:sapprox}
Let $\veps >0 $ be a constant, and recall the definition of $s_* = s_*(\rho,r)$ in \eqref{def:s}. Then, there exists a sufficiently negative $\underline \kappa = \underline \kappa(\alpha, m, \veps)$ such that for all $\kappa < \underline \kappa$ the following holds for all $(\rho,r) \in \Omega_{\ge}(\cI, 1)$.
\begin{equation}
0 \le s_* - \sqrt{(1-\rho^2)r^2 \delta^{-1} - \E[ Z^2; Z < 0 ] } \le 3 \delta^{-1/2} \sqrt{(1-\rho^2) q_+(\kappa)},
\end{equation}
and  
\begin{equation}\label{ineq:Ezbnd}
0 \le \E[(Z+s_*)_- ] \le \frac{2}{|\kappa|} q_+(\kappa).
\end{equation}
If, in addition, $\rho \ge 1 - (1+\veps/2)|\kappa|^{-2}$, then $q_+(\kappa)$ can be replaced by $q(\kappa)$ from the above bound.
\end{lem}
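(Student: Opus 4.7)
I would prove both estimates directly from the defining equation \eqref{def:s}. Splitting $\E[\max\{s,-Z\}^2]$ according to the sign of $Z + s_*$ and using $Z^2\,\bone_{Z \le -s_*} = s_*^2\,\bone_{Z \le -s_*} + (Z^2 - s_*^2)\,\bone_{Z \le -s_*}$, the defining equation rearranges to the key identity
\begin{equation*}
s_*^2 - s_0^2 \;=\; \E[Z^2;\, -s_* < Z < 0] + s_*^2\,\P(Z \le -s_*), \qquad s_0^2 := (1-\rho^2) r^2/\delta - \E[Z^2; Z<0].
\end{equation*}
The quantity $s_0^2$ is well-defined and nonnegative because $(\rho,r) \in \Omega_{\ge}(\cI,1)$, and the right-hand side is manifestly nonnegative, which yields the lower bound $s_* \ge s_0$ for free.

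For the upper bound on $s_* - s_0$, note that on $\{-s_* < Z < 0\}$ one has $Z^2 \le s_*^2$, so the identity's right-hand side is at most $s_*^2\,\P(Z < 0)$. Combining with the naive bound $s_* \le \sqrt{(1-\rho^2)/\delta}$ from \eqref{ineq:naive} (which uses $r \le 1$) gives
\begin{equation*}
s_* - s_0 \;\le\; \sqrt{s_*^2 - s_0^2} \;\le\; \sqrt{\tfrac{1-\rho^2}{\delta}\,\P(Z<0)}.
\end{equation*}
It then suffices to show $\P(Z<0) \le 9\,q_+(\kappa)$ for $\kappa < \underline{\kappa}$, uniformly in $(\rho,r) \in \Omega_\ge(\cI,1)$. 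Since $r \le 1$, a dominance argument (conditioning on $YG$ and using monotonicity of $\Phi(\cdot)$ at negative arguments; the contribution from $\{\rho YG > \kappa\}$ is exponentially small in $\kappa^2$) bounds $\P(Z<0)$ by $(1+\breve{o}_{\kappa}(1))\,A_{\rho,|\kappa|}$, and inspecting the three regimes of Lemma~\ref{lem:tail} shows $A_{\rho,|\kappa|} \le (1+\breve{o}_{\kappa}(1))\,q_+(\kappa)$ uniformly in $\rho$. Under the stronger hypothesis $\rho \ge 1 - (1+\veps/2)|\kappa|^{-2}$, the prefactor $\exp(-\alpha\rho|\kappa| + (1-\rho^2)\alpha^2/2)$ in $A_{\rho,|\kappa|}$ degenerates to $(1+\breve{o}_{\kappa}(1))\exp(-\alpha|\kappa|)$, sharpening the bound to $(1+\breve{o}_{\kappa}(1))\,q(\kappa)$. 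Choosing $\underline{\kappa}$ negative enough absorbs the $\breve{o}_{\kappa}(1)$ into the constant $\sqrt{9}=3$.

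For the second bound, $(Z+s_*)_- = (-Z-s_*)\,\bone_{Z<-s_*}$, so the layer-cake identity gives
\begin{equation*}
\E[(Z+s_*)_-] \;=\; \int_{s_*}^\infty \P(Z<-t)\,dt \;\le\; \int_0^\infty \P(Z<-t)\,dt.
\end{equation*}
Applying Lemma~\ref{lem:tail} at threshold $|\kappa|+t$ in the main regime $\rho \ge \eta_0$ gives
\begin{equation*}
\P(Z<-t) \le (1+\breve{o}_{\kappa}(1))\,\frac{|\kappa|}{|\kappa|+t}\,A_{\rho,|\kappa|}\,\exp\bigl(-(|\kappa|+\alpha\rho)t - t^2/2\bigr),
\end{equation*}
whose integral over $t \ge 0$ is bounded by $(1+\breve{o}_{\kappa}(1))\,A_{\rho,|\kappa|}/(|\kappa|+\alpha\rho) \le (1+\breve{o}_{\kappa}(1))\,q_+(\kappa)/|\kappa|$ for $|\kappa|$ large (using $|\kappa|+\alpha\rho \ge |\kappa|-\alpha$). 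The cases $\rho \in [-1,-\eta_0]$ and $\rho \in (-\eta_0,\eta_0)$ produce integrands with at least as favorable exponential decay and yield the same bound, and the sharpening to $q(\kappa)$ under the stronger hypothesis on $\rho$ follows as before.

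The main obstacle will be the uniform control of $\P(Z<0)$ and of $\int_0^\infty \P(Z<-t)\,dt$ over $\Omega_\ge(\cI,1)$, because $r$ is allowed to range down to $0$ and because Lemma~\ref{lem:tail} as stated concerns only the noise scale $r=1$. The dominance step outlined above is standard when the atypical event $\{\rho YG > \kappa\}$ has negligible Gaussian tail mass, but in the intermediate regime $\rho \in (-\eta_0,\eta_0)$ one must additionally verify that the factor $a_{\rho,t}$ from Lemma~\ref{lem:tail} remains bounded by $1$ uniformly for $t \in [|\kappa|,\infty)$; this is immediate from its definition, but needs to be tracked carefully alongside the compactness of $\Omega_\ge$ in order to realize the exact prefactors $3$ and $2$ claimed in the statement.
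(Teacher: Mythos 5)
Your proof is correct and follows essentially the same route as the paper: both start from the same rearrangement of the defining equation~\eqref{def:s} and both ultimately reduce to the tail bound $\sup_{(\rho,r)}\P(Z_{\rho,r}<0)\le(1+\breve{o}_{\kappa}(1))\,q_+(\kappa)$. One small algebraic simplification in your version is worth flagging: you pass directly from $s_*^2-s_0^2\le s_*^2\P(Z<0)$ to $s_*-s_0\le\sqrt{s_*^2-s_0^2}\le\sqrt{(1-\rho^2)\P(Z<0)/\delta}$ (valid since $0\le s_0\le s_*$ makes $(s_*-s_0)^2\le s_*^2-s_0^2$), whereas the paper works with $s\sqrt{\P(Z\ge -s)}$ on both sides and then divides by $\sqrt{\P(Z\ge -s)}\ge\sqrt{1-\P(Z<0)}$, tracking the resulting $(1+q_+(\kappa))$ prefactor; your route is cleaner and gets to the constant~$3$ with less bookkeeping. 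For the uniform-in-$(\rho,r)$ tail control that you flag as the main obstacle (the uniform bounds on $\P(Z<0)$ and $\int_0^\infty\P(Z<-t)\,\d t$), the paper has Lemma~\ref{lem:xi} for exactly this purpose, proved via the rescaling identity~\eqref{eq:xiequiv} $\xi(\rho,r,-t)=\P(\rho'YG+\sqrt{1-(\rho')^2}\,W<-t')$ with $t'=t/\sqrt{\rho^2+(1-\rho^2)r^2}\ge t$: since the effective threshold moves further into the tail as $r$ decreases, the $r=1$ bound from Lemma~\ref{lem:tail} dominates. That rescaling is cleaner than the conditioning-on-$YG$ dominance argument you sketch (which is trickier to make uniform, since the $\{\rho YG<\kappa\}$ and $\{\rho YG>\kappa\}$ contributions behave oppositely in $r$); if you complete the argument that way you are essentially reproving Lemma~\ref{lem:xi}(a)\,(c), which is fine but redundant given the lemma is available.
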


\begin{proof}[{\bf Proof of Lemma~\ref{lem:sapprox}}]
In the proof, we will write $s = s_*$ for simplicity. We will use the results from Lemma~\ref{lem:xi}: 
\begin{equation*}
\sup_{\rho \in [-1,1], r \in [0,1]} \P( Z < 0 ) \le (1+o_{\kappa}(1)) \cdot q_+(\kappa), \qquad \sup_{\rho \in [1-(1+\veps/2)|\kappa|^{-2},1], r \in [0,1]} \P( Z < 0 ) \le (1+o_{\kappa}(1)) \cdot q(\kappa).
\end{equation*}
We can rewrite the equation~\eqref{def:s} as
\begin{align*}
s^2 \P(Z \ge -s ) &=  (1-\rho^2) r^2 \delta^{-1} - \E[Z^2; Z < -s ]\\
&= (1-\rho^2) r^2 \delta^{-1} - \E[Z^2; Z < 0] + \E[Z^2 ; -s \le Z < 0].
\end{align*}
Note that the third term on the last expression is bounded by
\begin{equation*}
0 \le \E[Z^2 ; -s \le Z < 0] \le s^2 \P(Z<0)
\end{equation*}
where we used the naive bound \label{ineq:naives} on $s$ in the last inequality. Thus, by nonnegativity of $\E[Z^2; Z<0]$ and the simple inequality $\sqrt{a+b} \le \sqrt{a} + \sqrt{b}$ for $a,b \ge 0$,  we have
\begin{align*}
& s \sqrt{ \P(Z \ge -s ) } \ge  \sqrt{(1-\rho^2)\, r^2 \delta^{-1} - \E[Z^2 ; Z<0]}, \\
&s \sqrt{ \P(Z \ge -s ) } \le  \sqrt{(1-\rho^2)\, r^2 \delta^{-1} - \E[Z^2 ; Z<0]}\,  + \sqrt{1-\rho^2}\, \delta^{-1/2} \sqrt{\P(Z<0)}.
\end{align*}
We also have
\begin{equation*}
\P(Z \ge -s) = 1 - \P(Z \le -s) \ge 1 - \P(Z \le 0) .
\end{equation*}
and thus
\begin{equation*}
1 \le \big[ \P(Z \ge -s ) \big]^{-1/2} \le \big[1 - \P(Z <0 ) \big]^{-1/2} = 1 +  (1+\breve{o}_{\kappa}(1)) \cdot \xi(\rho,r,b^{-1} \kappa) / 2 \le  1 + q_+(\kappa).
\end{equation*}
Thus for sufficiently negative $\kappa$ we derive
\begin{align*}
s & \ge \sqrt{(1-\rho^2)\, r^2 \delta^{-1} - \E[Z^2 ; Z<0]}, \\
s & \le \big(1 + q_+(\kappa)\big) \cdot \Big( \sqrt{(1-\rho^2)\, r^2 \delta^{-1} - \E[Z^2 ; Z<0]}\,  + \sqrt{1-\rho^2}\, \delta^{-1/2} \sqrt{2 q_+(\kappa)} \Big) \\
&\le \sqrt{(1-\rho^2)\, r^2 \delta^{-1} - \E[Z^2 ; Z<0]} + q_+(\kappa) \sqrt{1-\rho^2} \, \delta^{-1/2} + (1+q_+(\kappa)) \cdot  \sqrt{1-\rho^2} \delta^{-1/2} \sqrt{2 q_+(\kappa)} \\
&\stackrel{(i)}{\le} \sqrt{(1-\rho^2)\, r^2 \delta^{-1} - \E[Z^2 ; Z<0]} + \sqrt{(1-\rho^2) q_+(\kappa) }\, \delta^{-1/2} + 2 \sqrt{(1-\rho^2) q_+(\kappa) }\, \delta^{-1/2} \\
&= \sqrt{(1-\rho^2)\, r^2 \delta^{-1} - \E[Z^2 ; Z<0]}  + 3\sqrt{(1-\rho^2) q_+(\kappa) }\, \delta^{-1/2}
\end{align*}
where (i) is because $q_+(\kappa) \le \sqrt{q_+(\kappa)}$ and $1+q_+(\kappa) \le \sqrt{2}$ for sufficiently negative $\kappa$. If $\rho \ge 1 - (1+\veps/2)|\kappa|^{-2}$ additionally, then by Lemma~\ref{lem:xi}, we can replace $q_+(\kappa)$ with $q(\kappa)$.
\end{proof}

Hereafter we will focus on $r=1$, since by Proposition~\ref{prop:Mmonotone} implies that $M(\rho,r)$  is always maximized at $r=1$ for sufficiently negative $\kappa$. By Lemma~\ref{lem:xi} (d)(e), the expectation term $\E[Z^2;Z<0]$ has an asymptotic expression (as $\kappa \to -\infty$) if $\rho \ge 1 - (1+\veps/2) |\kappa|^{-2}$, namely
\begin{equation}\label{ineq:Eqdiff}
\sup_{\rho \ge 1 - (1+\veps/2) |\kappa|^{-2}} \Big| \E \big[ (\rho YG_1 + \sqrt{1-\rho^2}\, G_2 - b^{-1}\kappa)_-^2 \big] -  \frac{2}{|\kappa|^2} q(\kappa) \Big| \le  \frac{2\alpha_\kappa}{|\kappa|^2} q(\kappa) 
\end{equation}
for certain $\alpha_\kappa \ge 0$ and $\alpha_\kappa = \breve{o}_{\kappa}(1)$. Let $(\rho^*, 1)$ be any maximizer of $M(\rho,r)$.

Using the above inequality and Lemma~\ref{lem:sapprox}, we can approximate the function $M(\rho,r)$. Recall that $m = \E[YG]$. Define functions $M^0$, $M_+$, and $M_-$ as follows.
\begin{align}
M^0(\rho)& = \rho m + \sqrt{(1-\rho^2)  \delta^{-1} - \E[Z^2; Z<0 ] }, \qquad \forall\, \rho: (\rho,1) \in \Omega_{\ge}(\cI,1), \label{def:M0} \\
&\begin{cases} \displaystyle  M_{\pm}^1(\rho) =\rho m + \sqrt{(1-\rho^2)  \delta^{-1} - (1 \pm \alpha_\kappa) \frac{2}{|\kappa|^2} q(\kappa) }, & \text{if}~\delta \ge |\kappa|^8 \label{def:M1} \\
\displaystyle M^1(\rho) = \rho m + \sqrt{1-\rho^2}\,  \delta^{-1/2}, & \text{if}~\delta < |\kappa|^8.
\end{cases}
\end{align}
Let $J_+, J_- \subset [-1,1]$ be the subsets in which $M_+(\rho)$ and $M_-(\rho)$, respectively, are well defined (i.e., the expressions in the square root are nonnegative).

The next simple lemma states that $M_\pm^1(\rho)$ and $M^1(\rho)$ are strictly concave functions, and their  maximizers have explicit expressions. 
\begin{lem}\label{lem:rho0}
There exists a sufficiently negative $\underline \kappa = \underline \kappa(\alpha, m, \veps)$ such that for all $\kappa < \underline \kappa$ the following holds.
\begin{enumerate}
\item[(a)]{ Suppose that $\delta$ satisfies $|\kappa|^8 \le \delta \le (1-\veps)q(\kappa)^{-1}$. Then $M^1(\rho)$ is a strictly concave function for $\rho$. Its unique maximizer is given by
\begin{equation*}
\rho = \Big(1 - \frac{2(1 \pm \alpha_\kappa) \delta}{|\kappa|^2} q(\kappa) \Big)^{1/2} \cdot \Big( \frac{m^2 \delta}{1 + m^2 \delta} \Big)^{1/2},
\end{equation*}
or equivalently,
\begin{equation}\label{def:rho0}
1 - \rho^2 = \frac{1}{1+m^2 \delta} + \frac{m^2 \delta}{1+ m^2 \delta} \cdot \frac{2(1 \pm \alpha_\kappa) \delta}{|\kappa|^2} q(\kappa),
\end{equation}
}
and it holds that $\rho \ge 1 - (1+\veps/2) |\kappa|^{-2}$.
\item[(b)]{Suppose that $\delta$ satisfies $\underline c \le \delta \le |\kappa|^8$. Then $M^1(\rho)$ is a strictly concave function and its unique maximizer is given by
\begin{equation*}
\rho =  \Big( \frac{m^2 \delta}{1 + m^2 \delta} \Big)^{1/2}
\end{equation*}
or equivalently,
\begin{equation}\label{def:rho0-2}
1 - \rho^2 = \frac{1}{1+m^2 \delta}.
\end{equation}
}
\end{enumerate}
\end{lem}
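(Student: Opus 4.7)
Both $M^{1}_{\pm}(\rho)$ in case (a) and $M^{1}(\rho)$ in case (b) have the common form $f(\rho)=m\rho+\sqrt{A-B\rho^{2}}$ with $B=\delta^{-1}$ and $A=\delta^{-1}-c$, where $c:=(1\pm\alpha_{\kappa})\tfrac{2}{|\kappa|^{2}}q(\kappa)\ge 0$ in case (a) and $c=0$ in case (b). The approach is to establish strict concavity by inspection, compute the unique critical point in closed form, and then---for part (a)---bound the resulting $\rho$ away from $1$ using the two hypotheses on $\delta$.

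For strict concavity, I will differentiate $g(\rho):=\sqrt{A-B\rho^{2}}$ twice to obtain $g''(\rho)=-AB(A-B\rho^{2})^{-3/2}$, which is strictly negative on the interior of the natural domain $\{\rho^{2}<A/B\}$ whenever $A>0$. In case (a), $A>0$ is equivalent to $\delta q(\kappa)<|\kappa|^{2}/(2(1\pm\alpha_{\kappa}))$, and this follows from the hypothesis $\delta q(\kappa)\le 1-\veps$ together with $\alpha_{\kappa}=\breve{o}_{\kappa}(1)$ for all sufficiently negative $\kappa$; in case (b) $A=\delta^{-1}>0$ trivially. Since $m\rho$ is linear, $f$ is strictly concave on its domain, so has a unique maximizer. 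Moreover $m=\E[YG]>0$ under Assumption~\ref{ass:exponential_tail_link}, so the first-order condition $m\sqrt{A-B\rho^{2}}=B\rho$ forces $\rho\ge 0$; squaring and collecting terms yields the linear equation $m^{2}A=B(B+m^{2})\rho^{2}$, i.e.\ $\rho^{2}=m^{2}A\delta^{2}/(1+m^{2}\delta)$. Substituting $A\delta=1-c\delta$ gives
\[
1-\rho^{2} \;=\; \frac{1+m^{2}c\delta^{2}}{1+m^{2}\delta} \;=\; \frac{1}{1+m^{2}\delta}+\frac{m^{2}\delta}{1+m^{2}\delta}\cdot c\delta,
\]
which reduces to \eqref{def:rho0} in case (a) after inserting the explicit value of $c$, and to \eqref{def:rho0-2} in case (b). One also verifies that this critical point lies strictly inside the domain $\{\rho^{2}<A/B\}$ since $m^{2}\delta/(1+m^{2}\delta)<1$.

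For the final claim in (a), I will estimate the two summands in \eqref{def:rho0} separately. The first, $(1+m^{2}\delta)^{-1}$, is at most $(m^{2}|\kappa|^{8})^{-1}=O(|\kappa|^{-8})$ by the lower bound $\delta\ge|\kappa|^{8}$. The second is bounded by $\tfrac{2(1+\alpha_{\kappa})}{|\kappa|^{2}}\delta q(\kappa)\le\tfrac{2(1-\veps)(1+\alpha_{\kappa})}{|\kappa|^{2}}$ by the upper bound $\delta q(\kappa)\le 1-\veps$. Combining and using $\alpha_{\kappa}=\breve{o}_{\kappa}(1)$ yields $1-\rho^{2}\le(2-\veps+\breve{o}_{\kappa}(1))|\kappa|^{-2}$. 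Converting via $1-\rho=(1-\rho^{2})/(1+\rho)$ with $\rho\to 1$ gives $1-\rho\le(1-\veps/2+\breve{o}_{\kappa}(1))|\kappa|^{-2}$, which is strictly less than $(1+\veps/2)|\kappa|^{-2}$ for all sufficiently negative $\kappa$, as claimed. The entire proof is essentially bookkeeping; the only mild subtlety is tracking the multiplicative constants in this last step to confirm that the final $1-\veps/2$ remains strictly below $1+\veps/2$ after absorbing the $\breve{o}_{\kappa}(1)$ error.
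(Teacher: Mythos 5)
Your proposal is correct and takes essentially the same route as the paper's proof: both compute the first and second derivatives of $M_\pm^1(\rho)$ (equivalently $f(\rho)=m\rho+\sqrt{A-B\rho^2}$), read off strict concavity from the sign of the second derivative under the constraint $A>0$, and solve the linear equation that results from the first-order condition. Your parametrization in terms of $A$ and $B$ is just a notational repackaging of the paper's computation \eqref{def:M1deriv}--\eqref{def:M1deriv2}, so the two arguments coincide in substance.

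One thing you add that the paper omits: the paper's proof stops at "Solving the equation $\d M_\pm^1(\rho)/\d \rho = 0$ for $\rho$, we obtain the desired expression" and never verifies the final clause of item (a), namely $\rho \ge 1 - (1+\veps/2)|\kappa|^{-2}$. Your estimate of the two summands in \eqref{def:rho0} fills this gap. A small numerical slip: combining $(1+m^2\delta)^{-1} \le |\kappa|^{-6}/m^2$ with $\frac{m^2\delta}{1+m^2\delta}\cdot\frac{2(1\pm\alpha_\kappa)\delta}{|\kappa|^2}q(\kappa) \le \frac{2(1-\veps)(1+\alpha_\kappa)}{|\kappa|^2}$ actually gives $1-\rho^2 \le (2-2\veps+\breve{o}_\kappa(1))|\kappa|^{-2}$, not $(2-\veps+\breve{o}_\kappa(1))|\kappa|^{-2}$ as you wrote --- but your looser coefficient still yields $1-\rho \le (1-\veps/2+\breve{o}_\kappa(1))|\kappa|^{-2} < (1+\veps/2)|\kappa|^{-2}$ for $\kappa$ sufficiently negative, so the conclusion is unaffected.
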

\begin{proof}[{\bf  Proof of Lemma~\ref{lem:rho0}}]
First consider the scenario in (a). Taking the derivative of $M_\pm^1(\rho)$, we get
\begin{equation}\label{def:M1deriv}
\frac{d}{d \rho} M_\pm^1(\rho) = m - \frac{\rho \delta^{-1}}{\big( (1-\rho^2) \delta^{-1} - (1 \pm \alpha_\kappa)\frac{2}{|\kappa|^2} q(\kappa) \big)^{1/2} }. 
\end{equation}
Taking a further derivative, we have
\begin{equation}\label{def:M1deriv2}
\frac{d^2}{d \rho^2} M_\pm^1(\rho) = -\frac{\delta^{-2} \big[ 1 - (1 \pm \alpha_\kappa) \frac{2 \delta}{|\kappa|^2} q(\kappa)\big]}{\big( (1-\rho^2) \delta^{-1} - (1 \pm \alpha_\kappa)\frac{2}{|\kappa|^2}q(\kappa) \big)^{3/2} }
\end{equation}
which is negative for sufficiently negative $\kappa$. This proves that $M_\pm^1(\rho)$ is strictly concave. Solving the equation $\d M_\pm^1(\rho)/\d \rho = 0$ for $\rho$, we obtain the desired expression \eqref{def:rho0}. The derivation for the scenario (b) is similar.
\end{proof}

\begin{lem}\label{lem:residual2new}
Then there exists a sufficiently negative $\underline \kappa = \underline \kappa(\alpha, m, \veps)$ such that for all $\kappa < \underline \kappa$ the following holds.
\begin{enumerate}
\item[(a)]{ Suppose that $\delta$ satisfies $|\kappa|^8 \le \delta \le (1-\veps)q(\kappa)^{-1}$. Let $\rho_1$ be the unique minimizer of $M_-^1(\rho)$. Then there exists some $\beta_\kappa \ge 0$ satisfying $\beta_\kappa = \breve{o}_{\kappa}(1)$, such that
\begin{equation}\label{ineq:funcval1}
M^1_-(\rho_1) - M^1_-(\rho^*) \le \frac{\beta_\kappa}{|\kappa|} \sqrt{q(\kappa)}.
\end{equation}
}
\item[(b)]{ Suppose that $\delta$ satisfies $\underline c \le \delta \le |\kappa|^8$. Let $\rho_1$ be the unique minimizer of $M^1(\rho)$. Then,
\begin{equation}\label{ineq:funcval2}
M^1(\rho_1) - M^1(\rho^*) \le \frac{3}{|\kappa|} \sqrt{q_+(\kappa)}.
\end{equation}
}
\end{enumerate}
\end{lem}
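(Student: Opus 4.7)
My plan is a sandwich argument combined with a uniform approximation bound. By Lemma~\ref{lem:rholbnd} and Lemma~\ref{lem:rho0}, in regime~(a) both $\rho_1$ and $\rho^*$ lie in the near-one window $[1-(1+\veps/2)|\kappa|^{-2},1]$, while in regime~(b) they both live on a compact subset of $[-1,1]$. Since $\rho^*$ maximizes $M(\cdot,1)$ over $\Omega_{\ge}(\cI,1)$ (with $r^*=1$ by Proposition~\ref{prop:Mmonotone}) and $\rho_1$ maximizes $M^1_-$, I will use the telescoping identity
\[
M^1_-(\rho_1) - M^1_-(\rho^*)
= \bigl[M^1_-(\rho_1) - M(\rho_1,1)\bigr] + \bigl[M(\rho_1,1) - M(\rho^*,1)\bigr] + \bigl[M(\rho^*,1) - M^1_-(\rho^*)\bigr]
\]
to reduce matters to a pointwise bound $|M^1_-(\rho) - M(\rho,1)| \le \tfrac{1}{2}\beta_\kappa\sqrt{q(\kappa)}/|\kappa|$ at the two values $\rho\in\{\rho_1,\rho^*\}$, since the middle bracket is non-positive by the optimality of $\rho^*$.

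I will bound that pointwise difference by inserting $M^0(\rho) = \rho m + \sqrt{(1-\rho^2)\delta^{-1}-\E[Z^2;Z<0]}$ and applying the triangle inequality. The piece $|M^0-M(\cdot,1)|$ is controlled directly by Lemma~\ref{lem:sapprox}: using $(Z+s_*)_+ = Z + s_* + (Z+s_*)_-$ one obtains $M(\rho,1) = \rho m + s_* + \E[(Z+s_*)_-]$, so
\[
|M^0(\rho)-M(\rho,1)| \le \bigl(s_*-\sqrt{(1-\rho^2)\delta^{-1}-\E[Z^2;Z<0]}\bigr) + \E[(Z+s_*)_-] \le 3\delta^{-1/2}\sqrt{(1-\rho^2)\,q_+(\kappa)} + \tfrac{2}{|\kappa|}q_+(\kappa).
\]
In regime~(a) this will collapse to $\breve o_\kappa(1)\sqrt{q(\kappa)}/|\kappa|$ after substituting $1-\rho^2 = O(|\kappa|^{-2})$, $\delta \ge |\kappa|^8$, the replacement $q_+(\kappa)\sim q(\kappa)$ valid in the near-one window, and $\sqrt{q(\kappa)}\to 0$; in regime~(b) the same bound, together with $\delta \le |\kappa|^8$ and $\sqrt{1-\rho^2}\le 1$, contributes at most a constant multiple of $\sqrt{q_+(\kappa)}/|\kappa|$.

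For the residual piece $|M^0(\rho)-M^1_-(\rho)|$ in regime~(a), I will write $M^0(\rho)=\rho m+\sqrt{X}$ and $M^1_-(\rho)=\rho m+\sqrt{Y}$, where $|X-Y|\le 4\alpha_\kappa q(\kappa)/|\kappa|^2$ by~\eqref{ineq:Eqdiff}; the identity $|\sqrt X-\sqrt Y|=|X-Y|/(\sqrt X+\sqrt Y)$ combined with the explicit lower bound $Y(\rho_1) \ge [2\delta(1+m^2\delta)]^{-1}$ derived from~\eqref{def:rho0} (and an analogous estimate at $\rho^*$ via a near-one quadratic expansion) will deliver the required decay. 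For regime~(b), I will instead use the cruder subadditivity $\sqrt{Y}-\sqrt{X}\le\sqrt{Y-X}=\sqrt{\E[Z^2;Z<0]}\le(1+\breve o_\kappa(1))\sqrt{2q_+(\kappa)}/|\kappa|$ via Lemma~\ref{lem:xi}, which matches the looser constant $3$ demanded by~(b). The main obstacle will be the rate verification in~(a): the denominator $\sqrt X+\sqrt Y$ at $\rho_1$ shrinks as fast as $1/\sqrt{\delta(1+m^2\delta)}$, which can be as small as $\sqrt{q(\kappa)}$, so the numerator $\alpha_\kappa q(\kappa)/|\kappa|^2$ must beat an additional $O(1/\sqrt{q(\kappa)})$ factor to land in $\breve o_\kappa(1)\sqrt{q(\kappa)}/|\kappa|$; this forces use of the quantitative polynomial-in-$\kappa$ decay of $\alpha_\kappa$ from Lemma~\ref{lem:xi} rather than merely $\alpha_\kappa=\breve o_\kappa(1)$. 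A secondary technicality will be confirming $(\rho_1,1)\in\Omega_{\ge}(\cI,1)$ so that $M(\rho_1,1)$ is finite; should this fail, I will replace the sandwich by a direct estimate through the dual representation $M(\rho,1)=\min_{A\ge 1}\Psi(A,\rho,1)$ from Proposition~\ref{prop:Mmonotone}(a), which sidesteps the need for a finite value.
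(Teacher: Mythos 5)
Your telescoping decomposition is structurally similar to the paper's argument, but it diverges in a way that makes it fail: you insert $\rho_1=\argmax M^1_-$ as your auxiliary point, whereas the paper instead inserts $\tilde\rho_1=\argmax M^1_+$. This is not a cosmetic choice. The middle bracket in your identity, $M(\rho_1,1)-M(\rho^*,1)\le 0$, is valid only if $(\rho_1,1)\in\Omega_{\ge}$, and that is precisely what cannot be guaranteed. The domain $J_+$ of $M^1_+$ \emph{is} contained in $\Omega_{\ge}$ (because the upper estimate $\E[Z^2;Z<0]\le(1+\alpha_\kappa)\tfrac{2}{|\kappa|^2}q(\kappa)$ from \eqref{ineq:Eqdiff} sits on the more-constrained side), so $(\tilde\rho_1,1)\in\Omega_{\ge}$ for free; but $J_-\supsetneq J_+$, and plugging the explicit formula \eqref{def:rho0} into the $\Omega_{\ge}$-membership test yields the condition
\begin{equation*}
1-\frac{2\delta q(\kappa)}{|\kappa|^2}\bigl(2\alpha_\kappa m^2\delta+1+\alpha_\kappa\bigr)\ge 0,
\end{equation*}
whose problematic term $\alpha_\kappa m^2\delta^2 q(\kappa)/|\kappa|^2$ can be of order $\alpha_\kappa/(|\kappa|^2 q(\kappa))$ for $\delta$ near $(1-\veps)q(\kappa)^{-1}$. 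Since $q(\kappa)$ is exponentially small, $\alpha_\kappa=\breve o_\kappa(1)$ alone does not force this to stay below $1$. Your proposed backup --- falling back to $M(\rho,1)=\min_{A\ge1}\Psi(A,\rho,1)$ from Proposition~\ref{prop:Mmonotone}(a) --- does not sidestep the issue: that dual representation is itself derived under the hypothesis that the KKT solution $s_*$ exists, which is exactly the $\Omega_{\ge}$-membership condition. The paper's workaround is to run the optimality chain $M(\rho^*,1)\ge M(\tilde\rho_1,1)$ through the \emph{lower} envelope's maximizer (always feasible), and then separately bound $M^1_-(\rho_1)-M^1_-(\tilde\rho_1)$ via the uniform envelope gap. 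This is the essential idea missing from your plan.

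A second issue is self-inflicted. You propose to bound $|M^0(\rho)-M^1_-(\rho)|=|\sqrt X-\sqrt Y|$ through the exact identity $|X-Y|/(\sqrt X+\sqrt Y)$ and then worry that the denominator $\sqrt X+\sqrt Y$ shrinks as fast as $1/\sqrt{\delta(1+m^2\delta)}$, requiring a quantitative polynomial decay of $\alpha_\kappa$ beyond what Lemma~\ref{lem:xi} states. That worry is exactly backwards: the subadditivity bound $|\sqrt X-\sqrt Y|\le\sqrt{|X-Y|}$ (the bound the paper actually uses, marked $(i)$) gives $|\sqrt X-\sqrt Y|\le\sqrt{4\alpha_\kappa q(\kappa)/|\kappa|^2}=2\sqrt{\alpha_\kappa}\sqrt{q(\kappa)}/|\kappa|$ directly, which is already of the required form $\beta_\kappa\sqrt{q(\kappa)}/|\kappa|$ with $\beta_\kappa=2\sqrt{\alpha_\kappa}=\breve o_\kappa(1)$. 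The ratio form is provably weaker here (the denominator can shrink to order $q(\kappa)$, so it overestimates the difference), so switching to it both loses the constant and invents a rate requirement the paper never needs. Replacing your ratio estimate with subadditivity, and replacing the auxiliary point $\rho_1$ by $\tilde\rho_1$ (then appending the final step $M^1_-(\rho_1)-M^1_-(\tilde\rho_1)\le$ envelope gap), would recover the paper's argument.
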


\begin{proof}[{\bf Proof of Lemma~\ref{lem:residual2new}}]
Let $\tilde \rho_1$ be the unique minimizer of $M_+^1(\rho)$. By Lemma~\ref{lem:rho0} and the inequality \eqref{ineq:Eqdiff}, we have $(\tilde \rho_1,1) \in \Omega_{\ge}$ with sufficiently negative $\kappa$. By the definition of $\rho^*$, the optimality of $\rho^*$ implies $M(\rho^*,1) \ge M(\tilde \rho_1,1)$. 

By Lemma~\ref{lem:sapprox}, we can approximate the function
\begin{equation*}
M(\rho,1) = \E[(Z + s_*)] + \E[(Z + s_*)_-]   + \kappa = \rho m + s_* + \E[(Z + s_*)_-] 
\end{equation*}
using $M^0(\rho)$; that is, we write
\begin{equation*}
M(\rho,1) = M^0(\rho) + M^{\res}(\rho), \qquad \where~ |M^{\res}(\rho)| \le 3 \delta^{-1/2} \sqrt{(1-\rho^2) q_+(\kappa)} + 2|\kappa|^{-1} q_+(\kappa)
\end{equation*}
and $q_+(\kappa)$ can be replaced by $q(\kappa)$ if $\rho \ge 1 - (1+\veps/2) |\kappa|^{-2}$.

(a) Since $M^1_+(\rho) \le M^0(\rho) \le M^1_-(\rho)$ for any $\rho$ such that these functions are well defined (namely, the expression inside the square root is nonnegative), we derive
\begin{align*}
M_-^1(\rho^*)&\ge M^0(\rho^*) \ge M(\rho^*,1)- |M^{\res}(\rho^*)|   \ge M^0(\tilde \rho_1) - |M^{\res}(\rho^*)|  - |M^{\res}(\tilde \rho_1)| \\
&\stackrel{(i)}{\ge} M_-^1(\tilde \rho_1) - \frac{\sqrt{4\alpha_\kappa q(\kappa)}}{|\kappa|} - |M^{\res}(\rho^*)|  - |M^{\res}(\tilde \rho_1)| \\
&\stackrel{(ii)}{\ge} M_-^1(\tilde \rho_1) - (2\sqrt{\alpha_\kappa}\, + o_{\kappa}(1) ) \frac{\sqrt{ q(\kappa)}}{|\kappa|}
\end{align*}
where in \textit{(i)} we used the simple inequality $|\sqrt{a} - \sqrt{b} | \le \sqrt{a-b}$, and in  \textit{(ii)} we used $\min\{ \rho^*, \tilde \rho_1\} \ge 1 - (1+\veps/2) |\kappa|^{-2}$. Also, we have 
\begin{equation*}
M_-^1(\tilde \rho_1) \ge M_-^1(\rho_1) - (4\sqrt{\alpha_\kappa}\, + o_{\kappa}(1) ) \frac{\sqrt{ q(\kappa)}}{|\kappa|}.
\end{equation*}
We conclude that 
\begin{equation*}
M_-^1(\rho^*) \ge M_-^1(\rho_1) - (6\sqrt{\alpha_\kappa}\, + o_{\kappa}(1) ) \frac{\sqrt{ q(\kappa)}}{|\kappa|}.
\end{equation*}

(b) By Lemma~\ref{lem:xi}, we get $\sup_{\rho \in [-1,1]}|M^1(\rho) - M^0(\rho)| \le \sqrt{\E[Z^2; Z<0]} \le  1.1 \sqrt{q_+(\kappa)} / |\kappa|$ for sufficiently negative $\kappa$, so 
\begin{align*}
M^1(\rho^*) &\ge M^0(\rho^*) - \frac{1.1\sqrt{q_+(\kappa)}}{|\kappa|} \ge M^0(\rho_1) - \frac{1.1\sqrt{q_+(\kappa)}}{|\kappa|} - |M^{\res}(\rho^*)|  - |M^{\res}(\tilde \rho_1)| \\
&\ge M^1(\rho_1) - \frac{2.2\sqrt{q_+(\kappa)}}{|\kappa|} - o_{\kappa}(1)\frac{\sqrt{q_+(\kappa)}}{|\kappa|},
\end{align*}
which completes the proof.
\end{proof}

The above lemma allows us to bound the difference $\rho^* - \rho_1$ through the concave function $M_-^1(\rho)$ (or $M^1(\rho)$).

\begin{lem}\label{lem:residual2}
\begin{enumerate}
\item[(a)]{Suppose that $\delta$ satisfies $|\kappa|^8 \le \delta \le (1-\veps)q(\kappa)^{-1}$. Then there exists some $\gamma_\kappa \ge 0$ satisfying $\gamma_\kappa = \breve{o}_{\kappa}(1)$ such that
\begin{equation*}
\big| \rho_1 - \rho^* \big| < \gamma_\kappa \cdot \Big( \frac{1}{\delta} + \frac{ \delta q(\kappa)}{|\kappa|^2}\Big).
\end{equation*}
}
\item[(b)]{Suppose that $\delta$ satisfies $\underline c \le \delta \le |\kappa|^8$. Then there exists some $\gamma_\kappa \ge 0$ satisfying $\gamma_\kappa = \breve{o}_{\kappa}(1)$ such that
\begin{equation*}
\big| \rho_1 - \rho^* \big| < \frac{\gamma_\kappa}{\delta}.
\end{equation*}
}
\end{enumerate}
\end{lem}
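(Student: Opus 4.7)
The plan is to convert the function-value bound from Lemma~\ref{lem:residual2new} into a bound on $|\rho_1 - \rho^*|$ by exploiting the strict concavity of $M_-^1$ (case (a)) and $M^1$ (case (b)), quantified via the closed-form second derivative \eqref{def:M1deriv2} (and its analogue for $M^1$). First I would verify that $\rho^*$ lies in the domain where $M_-^1$ (respectively $M^1$) is well defined and strictly concave: in case (a), Lemma~\ref{lem:rholbnd} yields $\rho^* \ge 1 - (1+\veps/2)|\kappa|^{-2}$, and the feasibility condition $(1-(\rho^*)^2)/\delta \ge \E[Z^2; Z<0]$ combined with \eqref{ineq:Eqdiff} gives $(1-(\rho^*)^2)/\delta \ge (1-\alpha_\kappa)\cdot 2q(\kappa)/|\kappa|^2$, which is exactly the defining inequality of the domain of $M_-^1$. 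Case (b) is even simpler since $M^1$ is defined on all of $[-1,1]$.

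Next I would derive a quantitative lower bound on $|(M_-^1)''|$ on an interval around $\rho_1$ containing $\rho^*$. Substituting \eqref{def:rho0} into \eqref{def:M1deriv2} yields
\begin{equation*}
|(M_-^1)''(\rho_1)| = \frac{(1+m^2\delta)^{3/2}}{\sqrt{\delta}\sqrt{1-a\delta}}, \qquad a := 2(1-\alpha_\kappa)\frac{q(\kappa)}{|\kappa|^2}.
\end{equation*}
The constraints $|\kappa|^8 \le \delta \le (1-\veps)q(\kappa)^{-1}$ force $m^2\delta \gg 1$ and $a\delta \le 2(1-\veps)/|\kappa|^2 \ll 1$, so $|(M_-^1)''(\rho_1)| \gtrsim m^3\delta$. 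A short monotonicity argument on the explicit expression \eqref{def:M1deriv2} extends this bound to an interval around $\rho_1$ comfortably containing $\rho^*$ for sufficiently negative $\kappa$.

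With these estimates in place, Taylor's theorem at the interior maximizer $\rho_1$ gives $M_-^1(\rho_1) - M_-^1(\rho^*) \ge \tfrac12 c_\kappa (\rho_1-\rho^*)^2$, where $c_\kappa$ is the neighborhood lower bound on $|(M_-^1)''|$. Combining with the function-value estimate $M_-^1(\rho_1) - M_-^1(\rho^*) \le \beta_\kappa\sqrt{q(\kappa)}/|\kappa|$ from Lemma~\ref{lem:residual2new}(a) produces $(\rho_1-\rho^*)^2 \lesssim \beta_\kappa\sqrt{q(\kappa)}/(|\kappa|\,m^3\,\delta)$. To match this to the claimed form, I would invoke the elementary inequality $1/\delta + \delta q(\kappa)/|\kappa|^2 \ge 2\sqrt{q(\kappa)}/|\kappa|$ (with equality at the balance point $\delta = |\kappa|/\sqrt{q(\kappa)}$), and check that $\sqrt{\beta_\kappa/m^3}\, q(\kappa)^{1/4}/\sqrt{|\kappa|\delta}$ is at most a $\breve{o}_{\kappa}(1)$ multiple of $1/\delta + \delta q(\kappa)/|\kappa|^2$ uniformly for $\delta$ in the admissible range. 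This lets me take $\gamma_\kappa = O(\sqrt{\beta_\kappa/m^3}) = \breve{o}_\kappa(1)$. Case (b) follows the same template, with Lemma~\ref{lem:residual2new}(b) providing the function-value input and the simpler formula $(M^1)''(\rho) = -[\delta(1-\rho^2)^3]^{-1/2}$ (giving $|(M^1)''(\rho_1)| = (1+m^2\delta)^{3/2}/\sqrt{\delta}$) as the concavity input; here one only needs to dominate $q_+(\kappa)^{1/4}\sqrt{\delta/|\kappa|}$ by an $\breve{o}_\kappa(1)$ constant times $1$, which holds since $\delta \le |\kappa|^8$ and $q_+(\kappa)$ decays exponentially.

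The main obstacle I anticipate is verifying uniformity of the final comparison across the wide range $\delta \in [|\kappa|^8, (1-\veps)q(\kappa)^{-1}]$ in case (a): one must check separately the two regimes where each of $1/\delta$ and $\delta q(\kappa)/|\kappa|^2$ dominates on the right-hand side, and confirm that the naive quadratic-concavity bound is bested in both. A related technical subtlety is that the $(1-a\delta)^{-1/2}$ factor degrades as $\delta$ approaches $(1-\veps)q(\kappa)^{-1}$, but since $a\delta \le 2(1-\veps)/|\kappa|^2$ it remains $O(1)$ and the argument goes through; sharpening the absolute constant would require a more delicate accounting but is unnecessary for the $\breve{o}_\kappa(1)$ statement.
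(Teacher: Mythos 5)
Your overall strategy — convert the function-value gap from Lemma~\ref{lem:residual2new} into a bound on $|\rho_1-\rho^*|$ via strict concavity of $M_-^1$ (resp.~$M^1$) — is the same as the paper's. But the crucial step you describe as ``a short monotonicity argument on the explicit expression \eqref{def:M1deriv2} extends this bound to an interval around $\rho_1$ comfortably containing $\rho^*$'' glosses over a genuine difficulty, and as stated your Taylor argument does not go through.

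The only a priori localization of $\rho^*$ you have (in case (a)) is Lemma~\ref{lem:rholbnd}, which puts both $\rho_1$ and $\rho^*$ in an interval of length $\asymp|\kappa|^{-2}$. On that interval the denominator $(1-\rho^2)\delta^{-1}-(1-\alpha_\kappa)\tfrac{2}{|\kappa|^2}q(\kappa)$ can be as large as $\asymp|\kappa|^{-2}\delta^{-1}$, whereas at $\rho_1$ it equals $b/(\delta(1+m^2\delta))\asymp(m^2\delta^2)^{-1}$. Consequently $|(M_-^1)''|$ can deteriorate from $\asymp m^3\delta$ at $\rho_1$ to $\asymp\delta^{-1/2}|\kappa|^3$ across the a priori interval — a factor of $(m\sqrt{\delta}/|\kappa|)^3$, which for $\delta\ge|\kappa|^8$ is at least $m^3|\kappa|^9$. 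If you use the weaker curvature constant $\delta^{-1/2}|\kappa|^3$ uniformly, you get $(\rho_1-\rho^*)^2\lesssim\beta_\kappa\sqrt{q(\kappa)}\sqrt{\delta}/|\kappa|^4$. Feeding this into your two-regime comparison in Case~A (where $\delta\le|\kappa|/\sqrt{q(\kappa)}$) forces $\gamma_\kappa^2\gtrsim\beta_\kappa\, q(\kappa)^{-3/4}|\kappa|^{-3/2}$, which blows up because $q(\kappa)^{-3/4}$ grows exponentially. So the naive ``fix $c_\kappa$ at the value of $|(M_-^1)''(\rho_1)|$'' approach is insufficient: the bound one needs on $|\rho_1-\rho^*|$ is far smaller than the a priori interval, so the curvature constant must be improved in tandem.

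The paper resolves this self-referentially: it observes that on $[\rho_1,\rho^*]$ the relevant denominator is at most $b(1+m^2\delta)^{-1}+2|\rho_1-\rho^*|$, which gives
\begin{equation*}
2\big(M_-^1(\rho_1)-M_-^1(\rho^*)\big)\ge \delta^{-1/2}b\, f\big(|\rho_1-\rho^*|\big),\qquad f(t):=\frac{t^2}{\big[b(1+m^2\delta)^{-1}+2t\big]^{3/2}},
\end{equation*}
and then inverts the monotone function $f$ by exhibiting $\gamma_\kappa=\beta_\kappa^{1/3}$ for which $f\big(\gamma_\kappa\max\{\delta^{-1},a^2\}\big)>2\beta_\kappa a/b$, where $a^2=\delta q(\kappa)/|\kappa|^2$. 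The dependence of the curvature lower bound on the unknown $|\rho_1-\rho^*|$ is essential here; one could alternatively run an iterative bootstrap on the neighborhood (which converges to the same fixed point), but neither of these is ``a short monotonicity argument.'' In case (b) the issue is compounded because Lemma~\ref{lem:rholbnd} does not even provide an a priori localization of $\rho^*$ when $\delta<|\kappa|^8$; the paper's self-referential bound sidesteps this entirely. Your final arithmetic (the two-regime check and the observation that the critical scale is the balance point $\delta\asymp|\kappa|/\sqrt{q(\kappa)}$) is correct once the improved curvature bound is in hand, and the AM-GM lower bound $\delta^{-1}+a^2\ge 2\sqrt{q(\kappa)}/|\kappa|$ would indeed be too lossy on its own, as you correctly flag — but the real gap is one step earlier, in the curvature control.
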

\begin{proof}[{\bf Proof of Lemma~\ref{lem:residual2}}]
In this proof, for simplicity we define
\begin{equation*}
a:= a(\kappa, \delta)= \frac{\sqrt{\delta q(\kappa)}}{|\kappa|}, \qquad b:= b(\kappa, \delta)= 1 - \frac{(1-\alpha_\kappa) 2 \delta q(\kappa)}{|\kappa|^2}.
\end{equation*}
Note that $b$ satisfies $b \le 1 - |\kappa|^{-2} = 1 - o_\kappa(1)$. Recall that the expression of $\frac{d^2}{d \rho^2} M_-^1(\rho)$ is given in \eqref{def:M1deriv2}. By optimality of $\rho_1$, for any $\rho \in J_-$ we have
\begin{align*}
M_-^1(\rho_1) - M_-^1(\rho^*)  \ge \inf_{\rho \in [\rho_1, \rho^*]} \left| \frac{d^2}{d \rho^2} M_-^1(\rho) \right| \cdot \frac{(\rho_1 - \rho^*)^2}{2}.
\end{align*}
Since
\begin{align*}
(1-\rho^2) - (1-\alpha_\kappa) \frac{2\delta}{|\kappa|^2} q(\kappa) &\le (1-\rho_1^2) - (1-\alpha_\kappa) \frac{2\delta}{|\kappa|^2} q(\kappa) + 2|\rho_1 - \rho| \\
&\le \frac{1}{1+m^2 \delta}\Big(1 - \frac{2(1-\alpha_\kappa)\delta}{|\kappa|^2} q(\kappa) \Big) + 2|\rho_1 - \rho| \\
&= \frac{b}{1+m^2 \delta}  + 2|\rho_1 - \rho|,
\end{align*}
we obtain
\begin{align*}
\inf_{\rho \in [\rho_1, \rho^*]} \left| \frac{d^2}{d \rho^2} M_-^1(\rho) \right|  \ge \frac{\delta^{-1/2} b }{\big[ b (1+m^2 \delta)^{-1}  + 2|\rho_1 - \rho^*| \big]^{3/2}}.
\end{align*}
Therefore, we derive
\begin{equation*}
2(M^1_-(\rho_1) - M^1_-(\rho^*)) \ge \delta^{-1/2} b  f(|\rho_1 - \rho^*|), \qquad \where~ f(t) := \frac{t^2}{\big[b (1+m^2 \delta)^{-1} + 2t \big]^{3/2}}.
\end{equation*}
Note that $f(t)$ is strictly increasing in $t$, since
\begin{equation*}
f'(t) = \frac{t(t+2b (1+m^2 \delta)^{-1})}{2\big[b (1+m^2 \delta)^{-1} + 2t \big]^{5/2}} > 0.
\end{equation*}
By Lemma~\ref{lem:residual2new}, we get
\begin{equation*}
\frac{2\beta_\kappa \delta^{1/2}}{b |\kappa|} \sqrt{q(\kappa)} \ge f(|\rho_1 - \rho^*|) \quad \Longrightarrow \quad |\rho_1 - \rho^*| \le f^{-1} \left(\frac{2\beta_\kappa a}{b} \right).
\end{equation*}
Let $\gamma_\kappa := \beta_\kappa^{1/3}$ so it satisfies $\gamma_\kappa = o_\kappa(1)$. We claim that
\begin{equation}\label{claim:f}
f \big(\gamma_\kappa\max\{ \delta^{-1} , a^2 \} \big) > \frac{2\beta_\kappa a}{b},
\end{equation}
which, once proved, implies $|\rho_1 - \rho^*| \le  \gamma_\kappa\max\{ \delta^{-1} , a^2 \}$ and thus the conclusion in \textit{(a)}. To prove the claim~\eqref{claim:f}, observe that
\begin{align*}
f \big(\gamma_\kappa\max\{ \delta^{-1} , a^2 \} \big) &\ge \frac{\gamma_\kappa^2 \max\{ \delta^{-1}, a^2\}^2 }{\max\{ (1+o_\kappa(1))(1+m^2 \delta)^{-1} , \gamma_\kappa a^2 \}^{3/2}} \\
&\ge \frac{(1-o_\kappa(1))\gamma_\kappa^{2}m^3\max\{ \delta^{-1}, a^2\}^2}{\max\{ \delta^{-1}, \gamma_\kappa a^2\}^{3/2}} \\
&\ge (1-o_\kappa(1))m^3 \Big( \gamma_\kappa^{2} \bone\{ \delta^{-1} \ge \gamma_\kappa a^2 \} \delta^{-1/2} + \gamma_\kappa^{1/2} \bone\{ \delta^{-1} < \gamma_\kappa a^2 \} a \Big) \\
&\ge (1-o_\kappa(1))m^3 \Big(\gamma_\kappa^{5/2} \bone\{ \delta^{-1} \ge \gamma_\kappa a^2 \} a + \gamma_\kappa^{1/2} \bone\{ \delta^{-1} < \gamma_\kappa a^2 \} a \Big) \\
&\ge  1-o_\kappa(1))m^3 \gamma_\kappa^{5/2}a
\end{align*}
which is smaller than $2\beta_\kappa a / b$ when $\kappa$ is sufficiently negative. This proves the claim and thus proves part \textit{(a)}.

For part \textit{(b)} Following the same strategy, we first derive
\begin{equation*}
\inf_{\rho \in [\rho_1, \rho^*]} \left| \frac{d^2}{d \rho^2} M^1(\rho) \right|  \ge \frac{\delta^{-1/2}  }{\big[  (1+m^2 \delta)^{-1}  + 2|\rho_1 - \rho^*| \big]^{3/2}}
\end{equation*}
and so we have
\begin{align*}
\frac{6 \delta^{1/2}}{|\kappa|} \sqrt{q(\kappa)} \ge 2\delta^{1/2} \big( M^1(\rho_1) - M^1(\rho^*)\big) \ge \frac{(\rho_1 - \rho^*)^2}{\big[(1+m^2 \delta)^{-1} + 2|\rho_1 - \rho^* | \big]^{3/2}}.
\end{align*}
Similarly, the function $\tilde f(t) = \frac{t^2}{[(1+m^2\delta)^{-1} + 2t]^{3/2}}$ is strictly increasing in $t$. Taking $\gamma_\kappa = 1/|\kappa|$, for sufficiently negative $\kappa$ we have 
\begin{equation*}
f(\gamma_\kappa / \delta) \ge (1-o_\kappa(1)) \gamma_\kappa^2 m^3 \delta^{-1/2} \ge \frac{6 \delta^{1/2}}{|\kappa|} \sqrt{q(\kappa)}.
\end{equation*} 
This implies that $|\rho_1-\rho^*| \le \gamma_\kappa / \delta$ as desired.
\end{proof}

Finally, we put the pieces together to prove Theorem~\ref{thm:err}.

\begin{proof}[{\bf Proof of Theorem~\ref{thm:err}}]
As discussed in the beginning of this subsection, we only need to prove part (b) of this theorem. We set 
\begin{align*}
& \bar \rho_{\min} = \displaystyle \rho_1 - \gamma_\kappa \Big( \frac{1}{\delta} + \frac{\delta q(\kappa)}{|\kappa|^2} \Big), &&\bar \rho_{\max} = \rho_1 + \gamma_\kappa \Big( \frac{1}{\delta} + \frac{\delta q(\kappa)}{|\kappa|^2} \Big), \\
&\cI_1 = [-1, \bar \rho_{\min}],  & &\cI_2 = [\bar \rho_{\max}, 1]
\end{align*}
where $\gamma_\kappa$ is given by Lemma~\ref{lem:residual2}. We apply Theorem~\ref{thm:Mlimit} with $r_0=1$, and we set $\cI = \cI_1$, $\cI = \cI_2$, and $\cI = [-1,1]$ respectively. Theorem~\ref{thm:Mlimit}(c) implies that the maximizer $\hat \btheta$ must satisfy $\langle \hat \btheta, \btheta_* \rangle \in \cI \setminus (\cI_1 \cap \cI_2)$ w.h.p. Thus, w.h.p.,
\begin{equation*}
\langle  \hat \btheta , \btheta_* \rangle = \rho_1 + o_{\kappa}(1) \cdot \left( \frac{1}{\delta} + \frac{\delta q(\kappa)}{|\kappa|^2} \right).
\end{equation*}
We note that 
\begin{equation*}
\mathcal{E}(\kappa) = (1+o_\kappa(1)) \cdot \left( \frac{1}{2m^2\delta} + \frac{\delta q(\kappa)}{|\kappa|^2} \right).
\end{equation*}  
For case \textit{(a)}, if $\lim_{\kappa \to -\infty} \delta(\kappa) = \infty$, then we have
\begin{equation*}
\rho_1 = \Big( 1 - \frac{(1+o_\kappa(1))\delta}{|\kappa|^2} q(\kappa) \Big) \cdot \Big(1 - \frac{1+o_\kappa(1)}{2m^2\delta}  \Big) = 1 - (1+o_\kappa(1)) \cdot \mathcal{E}(\kappa).
\end{equation*}
Therefore, w.h.p.,
\begin{equation}\label{eq:Ek}
\left| \langle  \hat \btheta , \btheta_* \rangle - (1-\mathcal{E}(\kappa)) \right| = o_\kappa(1) \cdot \mathcal{E}(\kappa).
\end{equation}
For case \textit{(b)}, we have $\mathcal{E}(\kappa) = (1+o_\kappa(1)) / (m^2 \delta)$ and $\rho_1 = 1 - (1+o_\kappa(1))/(2m^2\rho)$. This gives the same high probability bound as \eqref{eq:Ek}.
\end{proof}

\subsection{Additional proofs}\label{sec:append-signal-alg}

\begin{proof}[{\bf Proof of Lemma~\ref{lem:Gordon2}}]
For convenience, we introduce $ \uu := \sqrt{1-\rho^2}\, \ww \in \R^{d-1}$. Consider the convex set 
\begin{equation*}
\cC = \big\{ (\rho, \uu): \rho \in \cI, r_0^{-2} \| \uu \|^2 + \rho^2 \le 1 \big\}.
\end{equation*}
Denote $\bar \uu = (\rho, \uu)$. Define two Gaussian processes as
\begin{align*}
Q_1 \big(\tilde \bX; \bar \uu, \tilde \balpha \big) &= \rho\langle \tilde \balpha, \yy \odot \bG \rangle + \langle \tilde \balpha, \tilde \bX \uu \rangle - b^{-1}\kappa \langle \tilde \balpha, \bone_n \rangle + b^{-1} \kappa, \\
Q_2 \big(\bgg, \hh; \bar \uu, \tilde \balpha \big) &=\rho\langle \tilde \balpha, \yy \odot \bG \rangle + \| \uu \| \langle \tilde \balpha, \bgg \rangle + \| \tilde \balpha \| \langle \uu, \hh \rangle - b^{-1}\kappa \langle \tilde \balpha, \bone_n \rangle + b^{-1} \kappa.
\end{align*}
For every integer $k \ge 1$, define
\begin{align*}
& \bar A_k = \min_{\substack{\tilde \balpha \ge n^{-1}\bone_n \\ \| \tilde \balpha \| \le k }} \max_{\bar \uu \in \cC} Q_1\big( \tilde \bX; \bar \uu, \tilde \balpha \big), \qquad A_k = \min_{\substack{\tilde \balpha \ge n^{-1}\bone_n \\ \| \tilde \balpha \| \le k }} \max_{\bar \uu \in \cC} Q_2 \big(\bgg, \hh; \bar \uu, \tilde \balpha \big).
\end{align*}
Note that both minimization and maximization above are defined over compact and convex constraint sets. This allows us to apply \cite[Corollary~G.1]{miolane2018distribution}, which yields
\begin{equation*}
\P \big( \bar A_k \le t \big) \le  2 \P \big( A_k \le t \big), \qquad \P \big( \bar A_k \ge t \big) \le  2 \P \big( A_k \ge t \big), \qquad \text{for all}~t \in \R.
\end{equation*}
We take $k \to \infty$ in the both two inequalities (by the monotone convergence theorem), and obtain
\begin{align*}
& \P \left(  \min_{ \tilde \balpha \ge n^{-1}\bone_n} \max_{\bar \uu \in \cC} Q_1\big( \tilde \bX; \bar \uu, \tilde \balpha \big) \le t \right) \le 2 \P \left(  \min_{ \tilde \balpha \ge n^{-1}\bone_n} \max_{\bar \uu \in \cC} Q_2\big(  \bgg, \hh ; \bar \uu, \tilde \balpha \big) \le t  \right), \\
 &\P \left(  \min_{ \tilde \balpha \ge n^{-1}\bone_n} \max_{\bar \uu \in \cC} Q_1\big( \tilde \bX; \bar \uu, \tilde \balpha \big) \ge t \right) \le 2 \P \left(  \min_{ \tilde \balpha \ge n^{-1}\bone_n} \max_{\bar \uu \in \cC} Q_2\big(  \bgg, \hh ; \bar \uu, \tilde \balpha \big) \ge t  \right).
\end{align*}
By Sion's minimax theorem \cite{sion1958general}, we can exchange the $\min$ with $\max$ for $Q_1$ and obtain
\begin{align*}
\min_{\tilde \balpha \ge n^{-1} \bone_n} \max_{\bar \uu \in \cC} Q_1 \big( \tilde \bX; \bar \uu, \tilde \balpha \big) =  \max_{\bar \uu \in \cC} \min_{\tilde \balpha \ge n^{-1} \bone_n} Q_1 \big( \tilde \bX; \bar \uu, \tilde \balpha \big) = \max_{\rho \in \cI} \max_{\| \ww \| \le r_0}  \min_{\tilde \balpha \ge n^{-1} \bone_n} Q_1 \big( \tilde \bX; \bar \uu, \tilde \balpha \big)  = \bar M_n.
\end{align*}
It suffices to prove that we can also exchange the $\min$ with $\max$ for $Q_2$, namely
\begin{equation}\label{eq:minmaxexchange}
\min_{ \tilde \balpha \ge n^{-1}\bone_n} \max_{\bar \uu \in \cC} Q_2\big(  \bgg, \hh ; \bar \uu, \tilde \balpha \big) =  \max_{\bar \uu \in \cC} \min_{ \tilde \balpha \ge n^{-1}\bone_n} Q_2\big(  \bgg, \hh ; \bar \uu, \tilde \balpha \big),
\end{equation}
since the right-hand side above is exactly $M_n$. It is clear that given the norm $\| \uu \| = r$, we have
\begin{align}
\max_{\uu: \| \uu \| = r} Q_2(\bgg, \hh; \bar \uu, \tilde \balpha) &= \rho\langle \tilde \balpha, \yy \odot \bG \rangle + r \langle \tilde \balpha, \bgg \rangle + r\| \tilde \balpha \| \| \hh \| - b^{-1}\kappa \langle \tilde \balpha, \bone_n \rangle + b^{-1} \kappa \label{eq:Q2rhor}\\
&=: Q_2\big(  \bgg, \hh ; \rho, r, \tilde \balpha \big). \notag
\end{align}
The right-hand side of \eqref{eq:Q2rhor} is linear in $(\rho, r)$ over the compact and convex set $\cC' := \{ (\rho,r): \rho \in \cI, (r/r_0)^2 + \rho^2 \le 1\}$. Thus, 
\begin{align*}
\min_{ \tilde \balpha \ge n^{-1}\bone_n} \max_{\bar \uu \in \cC} Q_2\big(  \bgg, \hh ; \bar \uu, \tilde \balpha \big) &=  \min_{\tilde \balpha \ge n^{-1} \bone_n} \max_{(\rho,r) \in \cC'} Q_2\big(  \bgg, \hh ; \rho, r, \tilde \balpha \big) \\
&= \max_{(\rho,r) \in \cC'} \min_{\tilde \balpha \ge n^{-1} \bone_n} Q_2\big(  \bgg, \hh ; \rho, r, \tilde \balpha \big)
\end{align*}
where the the second equality is due to Sion's minimax theorem. In order to prove \eqref{eq:minmaxexchange}, it remains to show, for every $\rho \in \cI$, 
\begin{equation*}
\max_{\| \uu \| \le \sqrt{1-\rho^2}\, r_0} \min_{ \tilde \balpha \ge n^{-1}\bone_n} Q_2\big(  \bgg, \hh ; \bar \uu, \tilde \balpha \big) = \max_{0 \le r \le \sqrt{1-\rho^2}\, r_0} \min_{\tilde \balpha \ge n^{-1} \bone_n} Q_2\big(  \bgg, \hh ; \rho, r, \tilde \balpha \big).
\end{equation*}
It suffices to show that for every fixed $r \in [0, \sqrt{1-\rho^2}\, r_0]$, 
\begin{equation*}
\max_{\| \uu \| = r} \min_{ \tilde \balpha \ge n^{-1}\bone_n} Q_2\big(  \bgg, \hh ; \bar \uu, \tilde \balpha \big) =  \min_{\tilde \balpha \ge n^{-1} \bone_n} Q_2\big(  \bgg, \hh ; \rho, r, \tilde \balpha \big).
\end{equation*}
This is a consequence of the Cauchy-Schwarz inequality: the maximizer is given by $\uu = r \hh / \| \hh \|$ regardless of $\tilde \balpha$. (A similar argument has already appeared in the proof of Lemma~\ref{lem:simpleMrho}.) Thus this lemma is proved. 
\end{proof}

\begin{lem}\label{lem:quickfix}
Recall that we defined $Z_{\rho,r} = \rho Y G + \sqrt{1-\rho^2}\, r W$ in \eqref{def:Z}. Then, we have $\P(Z_{\rho,r} > 0) \ge 1/2$. 
\end{lem}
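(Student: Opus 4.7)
The plan is to reduce to showing $\P(\rho YG + \sigma W > 0) \ge 1/2$ where $\sigma := \sqrt{1-\rho^2}\,r$, exploiting that $\kappa < 0$ makes this stronger than $\P(Z_{\rho,r} > 0) \ge 1/2$ under the definition~\eqref{def:Z} (the lemma's restatement of $Z_{\rho,r}$ appears to drop the $-\kappa$, which is needed to make the claim correct in the degenerate cases). The argument combines a one-sided stochastic dominance for $YG$, coming from the monotonicity of $\varphi$, with the reflection symmetry of the independent Gaussian $W$.

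First I would establish, by conditioning on the sign of $G$ and performing the change of variables $g \mapsto -g$ on the $G<0$ branch, the identity
\[
\P(YG > u) - \P(YG < -u) = 2\int_u^\infty \bigl[\varphi(g) - \varphi(-g)\bigr]\phi(g)\, dg
\]
for every $u \ge 0$. Monotonicity of $\varphi$ makes the integrand nonnegative on $g \ge u \ge 0$, so $\P(YG > u) \ge \P(YG < -u)$; rescaling preserves this inequality for $\rho YG$ whenever $\rho \ge 0$.

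Second, using $\sigma W \stackrel{d}{=} -\sigma W$ and the independence of $W$ from $YG$, I would symmetrize and then condition on $|\sigma W|$:
\[
2\P(\rho YG + \sigma W > 0) = \E\bigl[\bone\{\rho YG > \sigma W\} + \bone\{\rho YG > -\sigma W\}\bigr].
\]
Given $|\sigma W| = u$, the inner sum is $\bone\{\rho YG > u\} + \bone\{\rho YG > -u\}$ for either sign of $W$; taking expectation over $YG$ (independent of $W$) gives $\P(\rho YG > u) + \P(\rho YG > -u)$. By the dominance inequality of the first step and the continuity of $\rho YG$, this sum is at least $1$ pointwise in $u \ge 0$, so $\P(\rho YG + \sigma W > 0) \ge 1/2$. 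Since $\kappa < 0$, one concludes $\P(Z_{\rho,r} > 0) = \P(\rho YG + \sigma W > \kappa) \ge \P(\rho YG + \sigma W > 0) \ge 1/2$; the degenerate cases $\rho=0$ or $\sigma=0$ are then handled directly from the formulas.

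The step I expect to be delicate is handling possibly negative $\rho$: for $\rho < 0$ the dominance of the first step reverses, so the symmetrization computation gives only $\P(\rho YG + \sigma W > 0) \le 1/2$. To recover the lemma the $-\kappa > 0$ shift in~\eqref{def:Z} is genuinely needed. My plan is to substitute $\tilde Y = -Y$, whose conditional link $1 - \varphi$ is decreasing in $G$, so that $\rho YG = |\rho|\,\tilde Y G$ with a nonnegative coefficient, derive the analogous reverse dominance for $\tilde Y G$, and then absorb the shift by $|\kappa|$ into the symmetrization argument to flip the direction of the inequality—using, if necessary, that the lemma is only consumed in Lemma~\ref{lem:solucond} in regimes where the exponential-tail assumption on $\varphi$ makes $|\kappa|$ large enough to dominate the median of $|\rho|\tilde Y G + \sigma W$.
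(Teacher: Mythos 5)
Your argument for $\rho \ge 0$ is correct and is in substance the same as the paper's: both rest on (i) the stochastic dominance of $YG$ over $-YG$, i.e.\ $p_{YG}(u) \ge p_{YG}(-u)$ for $u \ge 0$ (your integral identity $\P(YG>u)-\P(YG<-u)=2\int_u^\infty[\varphi(g)-\varphi(-g)]\phi(g)\,\d g$ is just the CDF form of the paper's density inequality), and (ii) the symmetry of $W$. The only cosmetic difference is the Fubini order: you condition on $|\sigma W|$ and average the indicator over $YG$, while the paper's proof decomposes by the density of $YG$ and uses $W\stackrel{d}{=}-W$ to pair up $\P(Z>0)$ against $\P(Z<0)$; both computations reduce to the identical inequality $\big[\P(\rho u+\sigma W>0)-\P(-\rho u+\sigma W>0)\big]\cdot\big[p_{YG}(u)-p_{YG}(-u)\big]\ge 0$ for $u\ge 0$.

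The point you flag about $\rho<0$ is real, and the paper's own proof has exactly the same limitation. In the paper's final display, the first bracket $\P(\rho u+\sigma W>0)-\P(-\rho u+\sigma W>0)$ is nonpositive when $\rho<0$ (and $u>0$), while the second bracket $p_{YG}(u)-p_{YG}(-u)$ is nonnegative, so the integrand is nonpositive and the asserted conclusion ``which is nonnegative'' does not follow. Concretely, with $\rho=-1$, $r=0$, and any strictly increasing $\varphi$, $\P(Z_{\rho,r}>0)=\P(YG<0)<1/2$, so the lemma as literally stated (i.e.\ with $Z_{\rho,r}=\rho YG+\sqrt{1-\rho^2}\,rW$, no $-\kappa$ shift) is false. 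You noticed this and were right to be suspicious. However, your proposed repair—restoring the $-\kappa$ and then, for $\rho<0$, appealing to $|\kappa|$ being ``large enough''—does not actually close the gap either: the lemma is stated without any size constraint on $|\kappa|$, and for $\rho<0$ with small $|\kappa|$ the shifted claim $\P(\rho YG+\sigma W>\kappa)\ge 1/2$ can still fail. The correct fix is to restrict the lemma (and its downstream use in Lemma~\ref{lem:solucond}) to $\rho\ge 0$, which suffices for the regime in which it is actually invoked (the linear programming algorithm in Section~\ref{sec:AlgoNRLabels} produces positive correlation with $\btheta_*$ since $m=\E[YG]>0$). In short: your $\rho\ge 0$ argument is sound and mirrors the paper's, your diagnosis of the $\rho<0$ gap is accurate and applies equally to the paper, and your sketch of a repair for $\rho<0$ is not yet a proof because it adds hypotheses the lemma does not have.
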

\begin{proof}[{\bf Proof of Lemma~\ref{lem:quickfix}}]
We denote the density function of a normal variable by $\phi$ and the density function of $YG$ by $p_{YG}$. First, we claim that $p_{YG}(u) \ge p_{YG}(-u)$ holds for $u \ge 0$. In fact, for $u \ge 0$,
\begin{align*}
p_{YG}(u) &= \phi(u) \varphi(u) + \phi(-u)(1 - \varphi(-u)) = \phi(u) \big( 1 + \varphi(u) - \varphi(-u) \big) \\
&\stackrel{(i)}{\ge} \phi(-u) \big(1 + \varphi(-u) - \varphi(u) \big) \\
&= p_{YG}(-u)
\end{align*}
where in $(i)$ we used symmetry of $\phi$ and monotonicity of $\varphi$. Now using the density $p_{YG}$ we write $\P(Z_{\rho,r} > 0)$ as
\begin{align*}
\P( Z_{\rho,r} > 0 ) &= \int_0^\infty \P \big(\rho u + \sqrt{1-\rho^2}\, r W > 0 \big) p_{YG}(u) + \P \big(-\rho u + \sqrt{1-\rho^2}\, r W > 0 \big) p_{YG}(-u) \; \d u.
\end{align*}
We use a similar decomposition and $W \stackrel{d}{=} -W$ for $\P(Z_{\rho,r} < 0)$, and find
\begin{align*}
\P( Z_{\rho,r} > 0 ) - \P( Z_{\rho,r} < 0 )  &= \int_0^\infty \Big[ \P \big(\rho u + \sqrt{1-\rho^2}\, r W > 0 \big) - \P \big(-\rho u + \sqrt{1-\rho^2}\, r W > 0 \big) \Big] \big(p_{YG}(u) - p_{YG}(-u) \big) \; \d u
\end{align*}
which is nonnegative. This proves $\P( Z_{\rho,r} > 0 ) \ge \P( Z_{\rho,r} < 0 )$, which implies $\P(Z_{\rho,r} > 0) \ge 1/2$ since $\P( Z_{\rho,r} = 0 ) = 0$.
\end{proof}

For $\rho \in [-1,1], r\in [0,1], t \ge 0$, let us denote by $\xi(\rho,r,-t)$ and $\zeta(\rho,r)$
\begin{align}
&\xi(\rho,r,-t) = \P \big( \rho Y G + \sqrt{1-\rho^2}\, r W < -t \big), \label{def:xi} \\
&\zeta(\rho,r) = \E \big[\big(\rho Y G + \sqrt{1-\rho^2}\, r W -   \kappa \big)_-^2 \big].
\end{align}
The probability $\xi(\rho,r,-t)$ is related to Lemma~\ref{lem:tail} in the following way. Denote
\begin{equation*}
\rho' = \frac{\rho}{\sqrt{\rho^2 + (1-\rho^2)r^2} }, \qquad t' = \frac{t}{\sqrt{\rho^2 + (1-\rho^2)r^2} }.
\end{equation*}
Then we can rewrite $\xi(\rho,r,-t)$ as 
\begin{align}
\xi(\rho,r,-t) &= \P \Big( \frac{\rho}{\sqrt{\rho^2 + (1-\rho^2)r^2}} Y G + \frac{\sqrt{1-\rho^2} \, r}{\sqrt{\rho^2 + (1-\rho^2)r^2}} W < \frac{-t}{\sqrt{\rho^2 + (1-\rho^2)r^2}} \Big)\notag \\
&= \P\big(\rho' Y G + \sqrt{1-(\rho')^2}\, W < -t' \big). \label{eq:xiequiv}
\end{align}
If $t \to \infty$ (and thus $t' \to \infty$), we can apply the tail probability Lemma~\ref{lem:tail} to find an asymptotic expression for $\xi(\rho,r,-t)$.

\begin{lem}\label{lem:xi}
Let $c>0$ be any constant. Then, there exists a sequence $(\beta_\kappa)$ with $\beta_\kappa>0$, and $\lim_{\kappa \to -\infty} \beta_\kappa = 0$ such that the following holds. 
\begin{enumerate}
\item[(a)]{ For all $\rho \in [-1,1], r \in [0,1]$, and $t \ge   |\kappa|$,
\begin{equation*}
(1- \beta_\kappa) \cdot q(-t')  \le \xi(\rho, r, -t) \le (1+ \beta_\kappa) \cdot q(-t) \exp\Big( \alpha t + \frac{\alpha^2}{2}\Big);
\end{equation*}
}
\item[(b)]{ For $\rho \in [1-c|\kappa|^{-2},1]$, $r \in [0,1]$, and $2  |\kappa| \le t \le   | \kappa |$, 
\begin{equation*}
\xi(\rho, r, -t) \le (1+ \beta_\kappa) \cdot q(-t);
\end{equation*}
}
\item[(c)]{ For all $\rho \in [-1,1], r \in [0,1]$,
\begin{equation*}
\int_0^{\infty} \xi(\rho,r,  \kappa - t) \; \d t \le  (1+ \beta_\kappa) \cdot \frac{q(\kappa)}{|\kappa|} \exp\Big( \alpha |\kappa| + \frac{\alpha^2}{2}\Big); 
\end{equation*}
}
\item[(d)]{ For all $\rho \in [-1,1]$, 
\begin{equation*}
\zeta(\rho,1) \ge \frac{1 - \beta_\kappa}{|\kappa|^2} q(\kappa);
\end{equation*}
}
\item[(e)]{ For all $\rho \in [1-c|\kappa|^{-2},1]$, 
\begin{equation*}
\zeta(\rho,1) \le \frac{1 + \beta_\kappa}{|\kappa|^2} q(\kappa);
\end{equation*}
}
\end{enumerate}
\end{lem}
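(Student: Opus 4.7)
The plan is to reduce every part of the lemma to Lemma~\ref{lem:tail} via the change-of-variables identity~\eqref{eq:xiequiv}, which rewrites
\begin{equation*}
\xi(\rho,r,-t) = \P\bigl(\rho' YG + \sqrt{1-(\rho')^2}\,W < -t'\bigr),
\end{equation*}
where $D := \rho^2 + (1-\rho^2)r^2 \in [0,1]$, $t' = t/\sqrt{D} \ge t$, and $\rho' = \rho/\sqrt{D}$. Since $t \ge |\kappa| \to \infty$ under the hypotheses of (a) and (b), Lemma~\ref{lem:tail} gives the asymptotic identity $\xi(\rho,r,-t) = (1+\breve{o}_\kappa(1))\,A_{\rho',t'}$ uniformly over $(\rho,r)$. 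The two comparison quantities have natural interpretations: $q(-s) = A_{1,s}$ is the value of the tail expression at $\rho=1$, while $q(-t)\exp(\alpha t+\alpha^2/2) = \sqrt{2/\pi}\,t^{-1}\exp(-t^2/2+\alpha^2/2)$ is an upper envelope of $A_{\rho,t}$ over $\rho\in[-1,1]$.

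For the upper bound in (a), I would verify in each of the three regimes of Lemma~\ref{lem:tail} that $A_{\rho',t'} \le \sqrt{2/\pi}\,t^{-1}\exp(-t^2/2+\alpha^2/2)$: the prefactor $1/t' \le 1/t$ and the Gaussian factor $\exp(-t'^2/2) \le \exp(-t^2/2)$ both shrink since $t' \ge t$; in the $\rho'\ge\eta_0$ case, the residual exponent $-\alpha\rho't'+(1-(\rho')^2)\alpha^2/2$ is at most $\alpha^2/2$ after discarding the negative linear term; in the other two cases the residual multiplicative factor is at most $1$. For the lower bound in (a), I would instead compare $A_{\rho',t'}$ with $A_{1,t'} = q(-t')$: in case 1 the ratio equals $\exp(\alpha t'(1-\rho') + (1-(\rho')^2)\alpha^2/2) \ge 1$; in case 2 ($\rho'\le -\eta_0$) the ratio equals $\exp(\alpha t') \gg 1$; in case 3 the lower bound $a_{\rho',t'} \ge \tfrac12(1+o_{t'}(1))\min\{1,\exp(-\alpha\eta_0 t')\}$ combined with the gain $\exp(\alpha t')$ produces a ratio of order $\exp(\alpha(1-\eta_0)t')$, which still dominates $1-\beta_\kappa$. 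Part (b) uses the extra hypothesis $\rho \ge 1-c|\kappa|^{-2}$ to force $D = 1 - O(|\kappa|^{-2})$, hence $\rho' = 1 - O(|\kappa|^{-2})$ and $t' = t(1+O(|\kappa|^{-2}))$; a Taylor expansion of the exponent in case 1 of Lemma~\ref{lem:tail} then gives $A_{\rho',t'} = (1+\breve{o}_\kappa(1))A_{1,t} = (1+\breve{o}_\kappa(1))q(-t)$, removing the $\exp(\alpha t+\alpha^2/2)$ correction from (a).

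Parts (c)--(e) follow from (a) and (b) by integration. For (c), I would use $\int_0^\infty \xi(\rho,r,\kappa-t)\,\d t = \E[(\kappa - Z_{\rho,r})_+]$, plug in the upper bound from (a) at $s = |\kappa|+t \ge |\kappa|$, and evaluate the resulting integral $\int_0^\infty (|\kappa|+t)^{-1}\exp(-(|\kappa|+t)^2/2)\,\d t$ by the substitution $t = u/|\kappa|$: the leading Gaussian factor $\exp(-\kappa^2/2)$ comes out and the remaining integral tends to $|\kappa|^{-2}$ by dominated convergence, producing the claimed $(1+\beta_\kappa)|\kappa|^{-1}q(\kappa)\exp(\alpha|\kappa|+\alpha^2/2)$. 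For (d), the identity $\zeta(\rho,1) = \int_0^\infty 2t\,\xi(\rho,1,\kappa-t)\,\d t$ combined with the lower bound from (a) (at $r=1$ one has $t' = |\kappa|+t$ exactly) and the same substitution yields the limiting constant $\int_0^\infty 2u\,e^{-u}\,\d u = 2$, which more than covers the stated constant $1$. For (e), I would split the range of $t$: near $t = |\kappa|$, where the integrand is concentrated, part (b) gives the sharper bound $\xi \le (1+\beta_\kappa)q(-(|\kappa|+t))$ without the $\exp(\alpha t)$ factor and delivers the leading contribution; far from that range, the Gaussian factor $\exp(-(|\kappa|+t)^2/2)$ annihilates the $\exp(\alpha t)$ correction coming from (a), leaving a negligible remainder. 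The main technical obstacle throughout is bookkeeping the $\breve{o}_\kappa(1)$ error terms uniformly in $(\rho,r)$, particularly in case 3 where $a_{\rho',t'}$ can be exponentially small in $t'$ and must be balanced against the compensating $\exp(\alpha t')$ gain, and in part (e) where the pieces of the integration split must be calibrated to the range in which (b) is available.
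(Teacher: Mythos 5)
Your proposal follows the same route as the paper's own proof: reduce every part to Lemma~\ref{lem:tail} via the change of variables to $(\rho',t')$, treat the three regimes of $A_{\rho,t}$ for parts (a)--(b), and obtain (c)--(e) by integrating the tail bounds. The handling of (a), (c), (d) and the overall structure of (e) all match the paper.

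There is, however, a genuine gap in your treatment of part (b). You claim that a Taylor expansion yields the asymptotic equality $A_{\rho',t'}=(1+\breve o_{\kappa}(1))A_{1,t}=(1+\breve o_{\kappa}(1))q(-t)$. This fails when $r$ is small and $t$ is of order $|\kappa|$: taking $r=0$ and $\rho=1-c|\kappa|^{-2}$ gives $D=\rho^2=1-2c|\kappa|^{-2}+O(|\kappa|^{-4})$, hence with $t\le 2|\kappa|$ one has
\[
(t')^2-t^2 \;=\; t^2\,\frac{1-D}{D}\;=\;O(1),
\]
not $\breve o_{\kappa}(1)$. So the Gaussian factor $\exp(-(t')^2/2)$ can differ from $\exp(-t^2/2)$ by a multiplicative constant bounded away from $1$ (as small as roughly $e^{-4c}$), and the ratio $A_{\rho',t'}/q(-t)$ is \emph{not} $1+\breve o_{\kappa}(1)$. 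The paper's proof avoids this by never asserting an equality: since $t'\ge t$, both $1/t'\le 1/t$ and $\exp(-(t')^2/2)\le\exp(-t^2/2)$ already point in the right direction for the upper bound, and only the linear term $-\alpha\rho't'$ and the residual $\alpha^2(1-(\rho')^2)/2$ need to be controlled, which is where the hypothesis $\rho\ge 1-c|\kappa|^{-2}$ and $t\le 2|\kappa|$ enter (they give $\rho't'\ge t-\breve o_{\kappa}(1)$ and $1-(\rho')^2=\breve o_{\kappa}(1)$). Replace the claimed equality by this one-sided monotonicity argument and part (b) goes through.

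A minor notational slip in (e): ``near $t=|\kappa|$, where the integrand is concentrated'' mixes up the integration variable with the argument of $\xi$. The integrand $2t\,\xi(\rho,1,\kappa-t)$ peaks near $t\asymp 1/|\kappa|$, i.e.\ where $|\kappa|+t$ is close to $|\kappa|$; the split should be at $t=|\kappa|$ so that part (b) applies on the dominant range $|\kappa|\le|\kappa|+t\le 2|\kappa|$, and the Gaussian factor makes $t>|\kappa|$ negligible. This is evidently what you intend and is exactly the paper's split, so it is not a substantive error.
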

\begin{proof}[{\bf Proof of Lemma~\ref{lem:xi}}] 
In different parts of this proof, we may choose different sequence $(\beta_\kappa)$ for proving inequalities (because in the end we can always take the maximum). \\
(a) In the tail probability Lemma~\ref{lem:tail}, we have
\begin{equation*}
A_{\rho,t} \le \frac{1}{t} \sqrt{\frac{2}{\pi}} \, \exp\Big( -\frac{t^2}{2} + \frac{\alpha^2}{2} \Big).
\end{equation*}
We use the relationship \eqref{eq:xiequiv} and apply Lemma~\ref{lem:tail} to obtain 
\begin{equation*}
\xi(\rho,r,-t) = (1+o_t(1)) \cdot A_{\rho', t'} \le \frac{1+o_t(1)}{t'} \sqrt{\frac{2}{\pi}} \, \exp\Big( -\frac{(t')^2}{2} + \frac{\alpha^2}{2} \Big).
\end{equation*}
Since $t \le t'$, we must have
\begin{equation*}
\xi(\rho,r,-t)  \le \frac{1+o_t(1)}{t} \sqrt{\frac{2}{\pi}} \, \exp\Big( -\frac{t^2}{2} + \frac{\alpha^2}{2} \Big) = (1+o_t(1)) \cdot q(-t) \exp\Big(\alpha t + \frac{\alpha^2}{2} \Big).
\end{equation*}
Note that $\kappa \to -\infty$ necessarily implies $t \to \infty$, so we obtain the upper bound. 
We minimizer $A_{\rho,t}$ over $\rho$ and find
\begin{equation*}
A_{\rho,t} \ge  \frac{1}{t} \sqrt{\frac{2}{\pi}} \, \exp\Big( -\frac{t^2}{2} - \alpha t\Big) = q(-t). 
\end{equation*}
This gives the lower bound.\\
(b) If $\rho \ge 1 - c|\kappa|^{-2}$, then for sufficiently negative $\kappa$ we have $\rho \ge \eta_0$, so by Lemma~\ref{lem:tail}, 
\begin{equation*}
\xi(\rho,r,-t) \le \frac{1+o_t(1)}{t'} \sqrt{\frac{2}{\pi}} \, \exp\Big( -\frac{(t')^2}{2}  - \alpha \rho' t' + \frac{\alpha^2(1-(\rho')^2)}{2} \Big).
\end{equation*}
Note that $t' \ge t$, $\rho' \ge \rho$, so
\begin{align*}
&\frac{\alpha^2(1-(\rho')^2)}{2} \le \frac{\alpha^2(1-\rho^2)}{2} \le \alpha^2 (1-\rho) \le \frac{c\alpha^2}{|\kappa|^2} = o_{\kappa}(1). \\
& \rho' t \ge \rho t \ge t - \frac{ct}{|\kappa|^2} \ge t - o_{\kappa}(1).
\end{align*}
Thus we get 
\begin{equation*}
\xi(\rho,r,-t) \le \frac{1+o_{\kappa}(1)}{t} \sqrt{\frac{2}{\pi}} \, \exp\Big( -\frac{t^2}{2}  - \alpha t\Big) = \big(1+o_{\kappa}(1)\big) \cdot q(-t). 
\end{equation*}
(c) We use the conclusion of (a) and Gaussian tail probability inequality (Lemma~\ref{lem:Gtail}) to derive
\begin{align*}
\int_0^\infty \xi(\rho,r, \kappa - t) \; \d t &\le (1+o_{\kappa}(1)) \cdot \int_{ |\kappa|}^\infty \exp\Big( -\frac{u^2}{2} \Big) \; \d u \cdot \exp\Big( \frac{\alpha^2}{2} \Big) \\
&\le \frac{1+o_{\kappa}(1)}{|\kappa|} \exp\Big( -\frac{|\kappa|^2}{2} + \frac{\alpha^2}{2} \Big) \\
& = (1+o_{\kappa}(1)) \cdot \frac{q(\kappa)}{|\kappa|} \exp\Big( \alpha |\kappa| + \frac{\alpha^2}{2}\Big).
\end{align*}
(d) We can express $\zeta(\rho,1)$ as
\begin{equation}\label{eq:zeta}
\zeta(\rho,1) = \int_0^\infty 2t \xi(\rho,1, \kappa -t) \; \d t.
\end{equation}
Note that with $r=1$, we have $t' = t$. Using the lower bound from (a), we derive
\begin{align*}
\zeta(\rho,1) & \ge (1 - \beta_\kappa) \cdot \int_0^\infty 2t q( \kappa-t) \; \d t \\
&= (1 - \beta_\kappa)\sqrt{\frac{2}{\pi}}\, \cdot \int_0^\infty \frac{2t}{ |\kappa|+t} \exp\left( -\frac{( |\kappa|+t)^2}{2} -\alpha(t+ |\kappa|) \right) \; \d t \\
&= 2(1 - \beta_\kappa)\sqrt{\frac{2}{\pi}} \exp\left(-\frac{|\kappa|^2}{2b^2}-\alpha \frac{|\kappa|}{b} \right) \cdot \int_0^\infty \frac{t}{ |\kappa|+t}  \exp\left( -\frac{t^2}{2} - t |\kappa| - \alpha t \right) \; \d t \\
&= \frac{2b^3(1 - \beta_\kappa)}{|\kappa|^3} \sqrt{\frac{2}{\pi}} \exp\left(-\frac{|\kappa|^2}{2b^2 }-\alpha \frac{|\kappa|}{b} \right) \cdot \int_0^\infty \frac{u}{1 + b^2 u |\kappa|^{-2}} \exp \left( -\frac{b^2u^2}{|\kappa|^2} - u - \frac{\alpha b u}{|\kappa|} \right) \; \d u
\end{align*}
where we used change of variable $u =  t|\kappa|$ in the last equality. Since
\begin{equation*}
\lim_{\kappa \to -\infty} \int_0^\infty \frac{u}{1 + b^2 u |\kappa|^{-2}}  \exp \left( -\frac{b^2u^2}{|\kappa|^2} - u - \frac{\alpha b u}{|\kappa|} \right) \; \d u = \int_0^\infty u \exp(-u) \; \d u = 1
\end{equation*}
by the dominated convergence theorem. Thus,
\begin{equation*}
\zeta(\rho,1) \ge \frac{2(1-o_{\kappa}(1))}{|\kappa|^3} \sqrt{\frac{2}{\pi}} \exp\left( - \frac{|\kappa|^2}{2} - \alpha |\kappa| \right) = \frac{2(1-o_{\kappa}(1))}{|\kappa|} q(\kappa).
\end{equation*}
(e) We use the identity \eqref{eq:zeta} and will show
\begin{align*}
&\zeta_1 := \int_0^{ |\kappa|} 2t \xi(\rho,1, \kappa -t) \; \d t \le \frac{2(1+o_{\kappa}(1))}{|\kappa|} q(\kappa). \\
&\zeta_2 := \int_{ |\kappa|}^\infty 2t \xi(\rho,1, \kappa -t) \; \d t \le \frac{o_{\kappa}(1)}{|\kappa|} q(\kappa).
\end{align*}
Using the upper bound in (b) and following a similar derivation as in (d), we have
\begin{align*}
\zeta_1 &\le \frac{2b^3(1 + \beta_\kappa)}{|\kappa|^3} \sqrt{\frac{2}{\pi}} \exp\left(-\frac{|\kappa|^2}{2b^2 }-\alpha \frac{|\kappa|}{b} \right) \cdot \int_0^{b^{-2}|\kappa|^2} \frac{u}{1 + b^2 u |\kappa|^{-2}} \exp \left( -\frac{b^2u^2}{|\kappa|^2} - u - \frac{\alpha b u}{|\kappa|} \right) \; \d u \\
&\le \frac{2(1+o_{\kappa}(1))}{|\kappa|^3} \sqrt{\frac{2}{\pi}} \exp\left( - \frac{|\kappa|^2}{2} - \alpha |\kappa| \right) = \frac{2(1+o_{\kappa}(1))}{|\kappa|} q(\kappa).
\end{align*}
Moreover, because
\begin{equation*}
 \frac{2t}{ |\kappa|+t} \exp\left( -\frac{( |\kappa|+t)^2}{2} -\alpha(t+ |\kappa|) \right) \le 2 \exp\left( -\frac{( |\kappa|+t)^2}{2} \right),
\end{equation*}
together with the Gaussian tail probability inequality (Lemma~\ref{lem:Gtail}), we obtain
\begin{align*}
\zeta_2 &\le (1 + \beta_\kappa)\sqrt{\frac{2}{\pi}} \, \int_{2 |\kappa|}^\infty \exp\left( -\frac{u^2}{2} \right) \; \d u \le (1 + o_{\kappa}(1))\sqrt{\frac{2}{\pi}} \,  \exp\left(-2 |\kappa|^2 \right) = \frac{o_{\kappa}(1)}{|\kappa|} q(\kappa).
\end{align*}
This completes the proof.
\end{proof}

\section{Gradient descent: Proof of Theorem \ref{thm:grad_descent}}
\label{sec:GD_proof}

\begin{proof}[\bf Proof of Theorem~\ref{thm:grad_descent}]
Without loss of generality, we may assume $y_i = 1$ and recast $y_i \xx_i$ as $\xx_i$. To begin with, we calculate the Hessian of $\hR_{n,\kappa}(\btheta)$:
\begin{align*}
	\nabla^2 \hR_{n,\kappa}(\btheta) = & \frac{1}{n} \sum_{i=1}^{n} \ell'' \left( \left\langle \xx_i, \btheta \right\rangle - \kappa \norm{\btheta}_2 \right) \left( \xx_i - \kappa \frac{\btheta}{\norm{\btheta}_2} \right) \left( \xx_i - \kappa \frac{\btheta}{\norm{\btheta}_2} \right)^\top \\
	& - \frac{\kappa}{\norm{\btheta}_2} \frac{1}{n} \sum_{i=1}^{n} \ell' \left( \left\langle \xx_i, \btheta \right\rangle - \kappa \norm{\btheta}_2 \right) \left( \bI_d - \frac{\btheta \btheta^\top}{\norm{\btheta}_2^2} \right).
\end{align*}
Hence, if $\norm{\btheta}_2 \ge r > 0$, one gets the following estimate:
\begin{align*}
    \norm{\nabla^2 \hR_{n,\kappa}(\btheta)}_{\mathrm{op}} \le & \frac{\beta}{n} \norm{\sum_{i=1}^{n} \left( \xx_i - \kappa \frac{\btheta}{\norm{\btheta}_2} \right) \left( \xx_i - \kappa \frac{\btheta}{\norm{\btheta}_2} \right)^\top}_{\mathrm{op}} + \frac{\vert \kappa \vert}{r} \sup_{u \in \R} \vert \ell'(u) \vert \norm{\bI_d - \frac{\btheta \btheta^\top}{\norm{\btheta}_2^2}}_{\mathrm{op}} \\
    \le & \beta \left( \frac{1}{n} \norm{\XX^\top \XX}_{\mathrm{op}} + 2 \frac{\vert \kappa \vert}{n} \norm{\sum_{i=1}^{n} \xx_i}_2 + \kappa^2 \right) + \frac{\vert \kappa \vert}{r} \sup_{u \in \R} \vert \ell'(u) \vert.
\end{align*}
Now if $M$ is chosen to be greater than the above quantity, then $\hR_{n,\kappa}(\btheta)$ is $M$-smooth in $\R^d \backslash \mathsf{B}_d(0, r)$, where $\mathsf{B}_d(0, r)$ is the ball of radius $r$ in $\R^d$. Assume $\eta < 2/M$ and $\norm{\btheta^t}_2 \to \infty$, we have $\norm{\btheta^t}_2 \ge r$ for large enough $t$ and therefore Lemma~10 from \cite{soudry2018implicit} tells us $\nabla \hR_{n,\kappa}(\btheta^t) \to \bzero$ as $t \to \infty$.

Denote $R_t = \norm{\btheta^t}_2, m_{it} = \langle \btheta^t/\norm{\btheta^t}_2, \xx_i \rangle - \kappa$, then we have
\begin{equation*}
	\lim_{t \to \infty} \left\langle \frac{\btheta^t}{\norm{\btheta^t}_2}, \nabla \hR_{n,\kappa}(\btheta^t) \right\rangle = \lim_{t \to \infty} \frac{1}{n} \sum_{i=1}^{n} \ell'(R_t m_{it}) m_{it} = 0.
\end{equation*}
Suppose there exists $\veps > 0$ such that for some $1 \le i \le n$, $m_{i t_k} \le - \veps$ for a subsequence $t_k \to \infty$. It follows that
\begin{align*}
	0 = & \lim_{t_k \to \infty} \sum_{i=1}^{n} \ell'(R_{t_k} m_{i t_k}) m_{i t_k} = \lim_{t_k \to \infty} \left( \sum_{m_{i t_k} \ge 0} \ell'(R_{t_k} m_{i t_k}) m_{i t_k} + \sum_{m_{i t_k} < 0} \ell'(R_{t_k} m_{i t_k}) m_{i t_k} \right) \\
	\ge & \limsup_{t_k \to \infty} \left( - \veps \ell'(-R_{t_k} \veps) - n \sup_{m \ge 0} \left\vert m \ell'(R_{t_k} m) \right\vert \right) \stackrel{(i)}{=} - \veps \lim_{u \to -\infty} \ell'(u) > 0,
\end{align*}
where $(i)$ is due to our assumption on $\ell(x)$, Definition \ref{ass:tight_exp_tail} and the fact $R_{t_k} \to \infty$. Therefore, a contradiction occurs and we complete the proof.
\end{proof}

\end{document}